\documentclass[aos]{imsart}

\RequirePackage{amsthm,amsmath,amsfonts,amssymb}
\RequirePackage[numbers]{natbib}
\RequirePackage[colorlinks,citecolor=blue,urlcolor=blue]{hyperref}
\RequirePackage{graphicx}
\usepackage{verbatim}
\startlocaldefs
\theoremstyle{plain}

\newtheorem{theorem}{Theorem}[section]
\newtheorem{lemma}[theorem]{Lemma}

\newtheorem{corollary}[theorem]{Corollary}

\newtheorem{proposition}[theorem]{Proposition}

\theoremstyle{remark}
\newtheorem{definition}{Definition}

\newtheorem{assumption}{Assumption}

\newtheorem{remark}{Remark}


\endlocaldefs

\newcommand*{\abs}[1]{\lvert #1 \rvert}

\newcommand*{\bm}{\boldsymbol}

\newcommand*{\dx}{\mathrm{d}}
\newcommand*{\vect}{\operatorname{vec}}
\newcommand*{\unif}{\operatorname{unif}}
\newcommand*{\tran}{^{\mathsf{T}}}
\newcommand*{\NTK}{\mathtt{NTK}}
\newcommand*{\LI}{\mathtt{LI}}
\newcommand*{\Prob}{\mathbf{P}}
\newcommand*{\Expc}{\mathbf{E}}
\newcommand*{\comp}{^{\mathsf{c}}}

\newcommand*{\poprisk}{\mathcal{L}}
\newcommand*{\emprisk}{\hat{\mathcal{L}}_{n}}
\newcommand*{\excrisk}{\mathcal{E}}

\usepackage{setspace}
\begin{document}

\begin{frontmatter}
\title{Generalization Ability of Wide Neural Networks on $\mathbb{R}$}

\runtitle{Neural Networks on $\mathbb{R}$}

\begin{aug}

\author[A]{\fnms{Jianfa} \snm{Lai}\ead[label=e1]
{jianfalai@mail.tsinghua.edu.cn}},
\author[A]{\fnms{Manyun} \snm{Xu}\ead[label=e2]
{xumy20@mails.tsinghua.edu.cn}}\footnote{Co-first author},
\author[A]{\fnms{Rui} \snm{Chen}\ead[label=e3]
{chenrui@mail.tsinghua.edu.cn}}
\and
\author[A,B]{\fnms{Qian} \snm{Lin} }\thanks{Corresponding author}

\address[A]{
Center for Statistical Science, Department of Industrial Engineering Tsinghua University\\
\printead{e1,e2,e3}
}

\address[B]{
Beijing Academy of Artificial Intelligence, Beijing, 100084, China, \href{qianlin@tsinghua.edu.cn}{qianlin@tsinghua.edu.cn}
}
\end{aug}

\begin{abstract}
We perform a study on the generalization ability of the wide two-layer ReLU neural network on $\mathbb{R}$.
We first establish some spectral properties of the neural tangent kernel (NTK): $a)$ $K_{d}$, the NTK defined on $\mathbb{R}^{d}$, is positive definite; $b)$ $\lambda_{i}(K_{1})$, the $i$-th largest eigenvalue of $K_{1}$, is proportional to $i^{-2}$.
We then show that: $i)$ when the width $m\rightarrow\infty$, the neural network kernel (NNK) uniformly converges to the NTK;
$ii)$ the minimax rate of regression over the RKHS associated to $K_{1}$ is $n^{-2/3}$;
$iii)$ if one adopts the early stopping strategy in training a wide neural network, the resulting neural network achieves the minimax rate;
$iv)$ if one trains the neural network till it overfits the data, the resulting neural network can not generalize well.
Finally, we provide an explanation to reconcile our theory and the widely observed ``benign overfitting phenomenon''.
\end{abstract}

\begin{keyword}[class=MSC2020]
\kwd[Primary ]{62G08}
\kwd[; secondary ]{68T07}
\kwd{46E22}
\end{keyword}

\begin{keyword}
\kwd{Early stopping}
\kwd{linear Interpolation}
\kwd{neural networks}
\kwd{neural tangent kernel}
\end{keyword}

\end{frontmatter}

    
\section{Introduction}\label{intro}
Deep neural networks have been successfully applied in various fields such as image analysis, natural language processing, protein structure prediction, etc.\cite{krizhevsky2012imagenet, devlin2019bert, jumper2021highly}. 
Since the number of parameters appeared in deep neural networks is often ten times or hundred times larger than the sample size of data, the successes of neural network methods have challenged the traditional bias variances trade-off principle, one of the primary doctrines in the classical statistical learning theories \cite{vapnik1995nature}. 
For example, many influential experiments  \cite{belkin2019does, zhang2016understanding,belkin2018understand, nakkiran2021deep,belkin2021fit} suggested that if one trains a neural network till it overfits the data, the resulting network can still generalize well. 
This observation, often referred to as the ``{\it benign overfitting phenomenon}'' \cite{bartlett2020benign, sanyal2020benign,frei2022benign, mallinar2022benign}, actually reshaped the landscape of the studies in neural networks. 
For example, some researchers built giant neural networks in practice which can easily achieve nearly zero training error and possess the state-of-the-art performances \cite{huang2019gpipe, radford2019language, devlin2018bert}. 
Inspired by these experiments and observations, researchers proposed various new theories to explain why overfitted neural networks do generalize well on certain data \cite{belkin2019does, liang2020just, frei2022benign, montanari2022interpolation}.

Several groups of statisticians tried to explain the generalization ability of neural networks from statistical decision theory with various carefully designed nonparametric regression frameworks. 
For example, assuming that the regression function belongs to a carefully designed sub-class of the H\"{o}lder continuous functions,  \cite{bauer2019deep} proved  that there exists a neural network with sigmoid activation function achieving the corresponding minimax rate;
\cite{schmidt-hieber2020nonparametric} further established similar results for ReLU neural networks based on the approximation theory from \cite{yarotsky2017error};
\cite{suzuki2019adaptivity} then extended these results to regression functions in Besov space and its variants. 
Most of these works \cite{kohler2005adaptive, petersen2018optimal, schmidt-hieber2020nonparametric, bauer2019deep, imaizumi2019deep, suzuki2019adaptivity, hayakawa2020minimax, suzuki2021deep,kim2021fast} first proposed  a carefully chosen candidate class of functions/models;   they then showed that the empirical risk minimizer (ERM) of some loss function over the sets of neural networks  can achieve the corresponding minimax rate.  
However, besides  the unrealistic assumptions on the underlying models, these (static) ERMs approaches are hard to apply in practice,  because the corresponding optimization problems are highly non-linear and non-convex. Therefore these static non-parametric explanations are far from a satisfactory theory.

Since training a neural network is a highly non-convex optimization problem, researchers put lots of effort to argue whether the gradient descent (GD) or stochastic gradient descent (SGD) can find the global minimal points.  At first glance, 
it is unlikely that GD/SGD can find the global minimum on a highly non-convex problem. 
Since the number of parameters of an implemented neural network in practice is often ten times or hundred times larger than the sample size, wide neural networks are of the top priority to investigate. 
After assuming that the width of the neural network is large enough, \cite{du2018gradient} first analyzed a wide two-layer neural network with random initialization and showed that under some positiveness conditions on the Gram matrix, the GD can find one of the global minimal points with high probabilities. 
\cite{allen2019convergence} further proved that GD/SGD can find the global minima 
of wide multi-layer neural networks in polynomial time with high probabilities. 
These analyses focus on the so-called ``lazy training regime'' where the width $m$ is sufficiently large such that the weight parameters stay in a small neighborhood of their initialization during the training process.
Though these works showed that GD/SGD can find one of the many global minimum points in the lazy training regime,  they lack the analyses of the generalization ability of the selected neural network. 

In \cite{jacot2018neural}, Jacot et al. proposed a framework to understand the gradient flow appeared in training wide neural networks through the gradient flow of a kernel regression. 
To be more precise, they interpreted the gradient flow of a loss function defined on the set of neural networks as a gradient flow associated to a kernel regression problem where the kernel, often referred to as the {\it neural network kernel} (NNK), is varying during the training process. 
Moreover, by allowing the width $m\to \infty$, they further showed that the NNK stays invariant during the training process. This time-independent kernel, which they called the {\it neural tangent kernel} (NTK), plays an indispensable role in the current research of neural networks. On the one hand, the studies of neural networks in the ``lazy training regime'' can resort to the studies of kernel regression with respect to the NTK. 
\cite{arora2019exact,lee2019wide} concluded that as the width $m\to \infty$, the wide neural network trained by GD converges to the kernel regression predictor with respect to the NTK. \cite{hu2021regularization, suh2022nonparametric} showed that with a proper regularization parameter, the kernel ridge regression with respect to the NTK can reach the minimax-optimal rate.
On the other hand, whether the NTK possesses some remarkable properties inspired a renaissance of the studies in kernel regression from various aspects. 
For example, \cite{rakhlin2019consistency,beaglehole2022kernel, buchholz2022kernel, mallinar2022benign} and \cite{liang2020just} considered the generalization performance of the kernel ridgeless regression in low dimensional and  high dimensional data respectively; 
\cite{jacot2020kernel, bordelon2020spectrum, canatar2021spectral, simon2021neural} reinvestigated the generalization error of kernel ridge regression through the eigenlearning framework.

Though the aforementioned inspirational works shed us some light on understanding  the superior performance of neural networks, they have not formed a comprehensive explanation on why neural networks can generalize, even in the ``lazy training regime''. 
In this paper, we perform a study on the generalization ability of the wide neural network on $\mathbb{R}$.
We first show in Section \ref{sec:main_result} that the NNK converges to the NTK uniformly as the width $m\to \infty$, therefore the gradient flow of the wide two-layer ReLU neural network uniformly converges to the gradient flow of the corresponding NTK regression. 
With these uniform convergences, we then show in Section \ref{subsec:generalization} that:
1. the neural network produced by an early stopping  strategy is minimax rate optimal; 2. the overfitted neural network can not generalize well. It is clear that the ``benign overfitting phenomenon'' violates the latter statement.  
To reconcile this contradiction, we further proposed a hypothesis on the role played by the signal strength in the ``benign overfitting phenomenon'' in Section \ref{sec:explanation}.

\subsection{Contributions} 

In this paper, we focus on training a wide two-layer ReLU neural network in the so-called ``lazy training regime''. That is, the width $m$ of the neural network is sufficiently large so that the parameters of the neural network stay in a small neighbourhood of the initialization.

{\it $\bullet$ Spectral  properties of the NTK.} 
We first show that the NTK is positive definite on $\mathbb{R}^{d}$, filling a long-standing gap in the literature.
We then provide an optimal bound on the minimum eigenvalue of the gram matrix $(K(\bm{x}_{i},\bm{x}_{j}))_{1\leq i,j\leq n}$ for one-dimensional data. 
Finally, we determine the decay rate of the eigenvalues of the NTK defined on $[0,1]$.
To the best of our knowledge, our work is the first result about the spectral properties of the NTK defined on a domain other than sphere \cite{bietti2019inductive, bietti2020deep, cao2019towards}.
Though the eigenvalue decay rate of the NTK is obtained only for a one-dimensional interval in this paper, it sheds light on  obtaining similar results for the NTK defined on $\mathbb{R}^{d}$. We believe this problem would be of great interest to researchers.

{\it $\bullet$ NNK converges to NTK uniformly.} Though many works have claimed that the dynamic of training the wide neural network can be well approximated by that of the NTK regression, all of them only proved this claim pointwisely \cite{arora2019exact, lee2019wide}.
In this paper, we first show that the NNK converges to the NTK uniformly and that the dynamic of training the wide two-layer ReLU neural network converges to that of the NTK regression uniformly. Thus, the generalization performance of the wide neural networks can be approximated well by that of the NTK regression.

{\it $\bullet$ Generalization performance of neural networks on $\mathbb{R}$.}
With the assumption that the regression function $f_{\star}\in\mathcal{H}_{1}$, the RKHS associated to the NTK $K_{1}$ defined on $\mathbb{R}$, 
we prove that training a wide neural network with a properly early stopping strategy can produce a neural network achieving the minimax-optimal rate $n^{-2/3}$, i.e., the early stopped neural network can generalize. On the other hand, we can show that if one trains a wide neural network till it overfits the equally-distanced one-dimensional data, the resulting neural network is essentially a linear interpolation and thus can not generalize. To the best of our knowledge,
it is the first time that we have a concrete understanding on what an overfitted neural network looks like. 

{\it $\bullet$ Implicitly early stopping caused the ``benign overfitting phenomenon''. } Most reported experiments on the ``benign overfitting phenomenon'' in neural networks might ignore a subtle difference between the 100\% training accuracy of labels and the (nearly) zero training loss.
This difference actually leads the training process being stopped earlier than the time needed to overfit the data. 
We call the strategy stopping the training process with near 100\% training accuracy the implicit early stopping rule and find that the occurrence of it depends on the signal strength of the data. 
We further illustrate through several experiments how the signal strength affects the implicitly early stopping rule and the generalization ability of the resulting neural networks.

\subsection{Related works}
Whether the overfitted neural network can generalize is arguably one of the most intriguing questions in explaining the superior performance of the neural network methods in practice.
Inspired by the experiments reported in \cite{zhang2016understanding},
lots of effort  tried to explain that overfitted models/neural networks can generalize well \cite{belkin2019does, liang2020just}. For example, \cite{belkin2019does} exhibited the singular Nadaraya-Watson estimator that interpolates the data can achieve the corresponding minimax optimal rate;
\cite{liang2020just} illustrated that the Kernel ``Ridgeless'' Regression can perfectly fit the high dimensional data and still generalize well.
Though these interpolations possess some generalization ability, we still need more work to explain the ``benign overfitting phenomenon'' for neural networks.
On the other hand, there are few results claiming that kernel interpolations can not generalize well \cite{rakhlin2019consistency, buchholz2022kernel}. 
For example, \cite{rakhlin2019consistency} showed that for fixed dimension, the Laplace kernel interpolation cannot have vanishing error for noisy data as $n\to \infty$, even with bandwidth adaptive to the training set; \cite{buchholz2022kernel} further extended the result to the kernels whose associated reproducing kernel Hilbert space (RKHS) is a Sobolev space $H^{s}$, where $d/2< s< 3d/4$. However, these results can not conclude the inconsistency of the neural network interpolation. 

Besides the aforementioned non-parametric static ERMs approaches, there are few works studying the generalization ability of neural networks through the dynamic of gradient descent or stochastic gradient descent \cite{zhong2017recovery, zhang2019learning, lei2022stability}. 
Most of them assumed that the data live in a sphere since the NTK is an inner product kernel on the sphere and the spectral properties of the NTK are well understood \cite{bietti2019inductive,cao2019towards, geifman2020similarity, bietti2020deep}. 
For example, Hu et al. \cite{hu2021regularization}, one of the most relevant works, considered the generalization performance of a two-layer ReLU neural network defined on a sphere $\mathbb{S}^{d}$ trained by the gradient descent with or without a $L^{2}$ penalized term.
They claimed that: 1) the overfitted neural network does not generalize well; 2) the properly early stopped trained neural network can achieve the optimal rate.
Unfortunately, their first claim relies on an unproved result (the second statement of the Corollary 3 in \cite{raskutti2014early}) essentially; their second claim secretly utilizes another unproved fact: the NNK convergence to the NTK uniformly, one of the major technical contributions in our current work.

\paragraph*{Notation} For every positive integer $n\in\mathbb{N}^{+}$, denote $\{1,\dots,n\}$ by $[n]$. For a real number $x\in\mathbb{R}$, denote by $\lceil x\rceil$ the smallest integer that is greater or equal to $x$ and by $\lfloor x\rfloor$ the greatest integer that is greater or equal to $x$. For $\bm{v}\in\mathbb{R}^{d}$, denote by $\bm{v}_{(j)}$ the $j$-th component of $\bm{v}$ and denote the $\ell_{2}$ norm and supreme norm of $\bm{v}$ by $\|\bm{v}\|_{2}=(\sum_{j\in[d]}\bm{v}_{(j)}^{2})^{1/2}$ and $\|\bm{v}\|_{\infty}=\max_{j\in [d]}|\bm{v}_{(j)}|$ respectively. For a matrix $\bm{A}\in\mathbb{R}^{m\times n}$, denote by $a_{ij}$ the $(i,j)$-th component of $\bm{A}$ and denote the operator norm and the Frobenius norm of $\bm{A}$ by $\|\bm{A}\|_{2}=\sup_{\bm{v}\in\mathbb{R}^{n}}\|\bm{A}\bm{v}\|_{2}/\|\bm{v}\|_{2}$ and $\|\bm{A}\|_{\mathrm{F}}=(\sum_{i\in[m],j\in[n]}a_{ij}^{2})^{1/2}$ respectively. For a set $A$, denote by $|A|$ the number of elements $A$ contains. Let $\mu_{\mathcal{X}}$ be a positive measure on $\mathcal{X} \subseteq \mathbb{R}^d$. We define the space $L_{2}(\mathcal{X},\mu_{\mathcal{X}}) = \{f:\mathcal{X} \to \mathbb{R} : \int_{\mathcal{X}} |f(\bm{x})|^2 \dx \mu_{\mathcal{X}} <\infty \}$. We use the notation $a_{m}=o_{m}(1)$, meaning the sequence $\{a_{m}\}_{m=1}^{\infty}$ converges to zero as $m\to\infty$.

\vspace{3mm}
Let $f_{\star}$ be a continuous function defined on a compact subset $\mathcal{X} \subseteq [-B,B]^{d} \subseteq \mathbb{R}^{d}$ for some $B>0$ and $\mu_{\mathcal{X}}$ be a distribution supported on $\mathcal{X}$. Suppose that we have observed $n$ i.i.d. samples $\{(\bm{x}_{i},y_{i}), i \in [n]\}$ sampling form the model:
\begin{equation}\label{equation:true_model}
    y_i=f_{\star}(\boldsymbol{x}_i)+\varepsilon_{i}, \quad i=1,\dots,n,
\end{equation}
where $\boldsymbol{x}_{i}$'s are sampled from  $\mu_{\mathcal{X}}$ and $\varepsilon_{i} \sim \mathcal{N}(0,\sigma^{2})$ for some fixed $\sigma>0$. 
We are interested in finding $\hat{f}_{n}$  based on these $n$ samples, which can minimize the excess risk, i.e., the difference between $\poprisk(\hat{f}_{n})=\Expc_{(\bm{x},y)}\left[(\hat{f}_{n}(\bm{x})-y)^2\right]$ and $\poprisk(f_{\star})=\Expc_{(\bm{x},y)}\left[(f_{\star}(\bm{x})-y)^2\right]$.
One can easily verify the following formula about the excess risk:
\begin{equation}\label{eq:excrisk}
\excrisk(\hat{f}_{n})=\poprisk(\hat{f}_{n})-\poprisk(f_{\star})=\int_{\mathcal{X}}(\hat{f}_{n}(\bm{x})-f_{\star}(\bm{x}))^{2} \dx \mu_{\mathcal{X}}(\bm{x}).
\end{equation}
It is clear that the excess risk is an equivalent evaluation of the generalization performance of $\hat{f}_{n}$. When $\bm{x}_{i}$ is assumed to be fixed, the excess risk can be taken as measuring the $L^{2}(\mathcal{X},\nu)$ distance between $\hat{f}$ and $f_{\star}$, where $\nu$ is the Lebesgue measure.

\section{Neural tangent kernel and its spectral properties}\label{sec:ntk}

Given the data $\{(\bm{x}_{i},y_{i})\in \mathbb{R}^{d}\times\mathbb{R}, i\in[n]\}$, we are interested in analyzing the gradient flow of the empirical loss function 
\begin{equation*}
    \emprisk(f_{\bm{\theta}}^{m})=\frac{1}{2n}\sum_{i=1}^{n}\left(y_{i}-f_{\bm{\theta}}^{m}(\bm{x}_{i})\right)^{2}=\frac{1}{2n}\|\bm{y}-f_{\bm{\theta}}^{m}(\bm{X})\|_{2}^{2},
\end{equation*}
where $f^{m}_{\bm{\theta}}$ is a two-layer ReLU neural network with width $m$. More precisely, 
\begin{equation*}
f_{\bm{\theta}}^{m}(\bm{x}) = \sqrt{\frac{2}{m}}\sum_{r=1}^{m} \left(a_{r}\sigma\left(\langle \bm{w}_{r},\bm{x}\rangle+b_{r}\right)\right) + b,
\end{equation*}
where $a_{r}(0),\bm{w}_{r,j}(0),b_{r}(0), b \sim \mathcal{N}(0,1)$ for $r\in [m]$ and $j\in [d]$, $\bm{\theta}=\vect(\{a_{r},\bm{w}_{r},b_{r},b,r\in [m]\})$ and the activation function $\sigma(z)=\max\{z,0\}$.

Since training neural networks is a highly non-linear problem, Jacot et al. \cite{jacot2018neural} proposed to utilize a time-invariant kernel, the neural tangent kernel (NTK), to investigate the training process of neural networks when the width $m\rightarrow\infty$. 
Let us denote the neural tangent kernel of the two-layer neural network by $K_{d}(\bm{x},\bm{x}')$.  
Thanks to the results in \cite{cho2009kernel}  and \cite{jacot2018neural}, we can get the following  explicit expression:
\begin{equation}\label{eq: NTK_formular}
    K_{d}(\bm{x},\bm{x}')=\frac{2}{\pi}\left(\pi-\psi(\bm{x},\bm{x}')\right)(\langle\bm{x},\bm{x}'\rangle +1) +\frac{1}{\pi}\sqrt{\|\bm{x} -\bm{x}'\|_{2}^{2}+\|\bm{x} \|_{2}^{2}\|\bm{x}'\|_{2}^{2}-\langle\bm{x},\bm{x}'\rangle^{2}}+1
\end{equation}
where $\psi(\bm{x},\bm{x}')=\arccos$\resizebox{!}{0.3cm}{$\frac{\langle \bm{x},\bm{x}' \rangle+1}{\sqrt{ (\|\bm{x}\|_{2}^{2} +1)(\|\bm{x}'\|_{2}^{2}+1)}}$}.

\vspace{1mm}
We first show that $K_{d}$ is a positive definite kernel. To avoid the potential confusion between positive definiteness and positive semi-definiteness of a kernel function, we explicitly adopt the following definition of positive definiteness. 
\begin{definition}
       A kernel function $K$ is positive definite (semi-definite) over domain $\mathcal{X}\subseteq \mathbb{R}^{d}$ if for any positive integer $n$ and any $n$ different points $\bm{x}_{1},\dots,\bm{x}_{n}\in \mathcal{X}$, the smallest eigenvalue $\lambda_{\min}$ of the matrix $K(\bm{X},\bm{X})=(K(\bm{x}_{i},\bm{x}_{j}))_{1\leq i,j\leq n}$ is positive (non-negative).
\end{definition}

Positive definiteness of the neural tangent kernel is widely assumed in literature \cite{du2018gradient, du2019gradient, arora2019fine}. To the best of our knowledge, the positive definiteness of the NTK has been only proved when it is defined on sphere $\mathbb{S}^{d-1} \subseteq \mathbb{R}^{d}$ \cite{jacot2018neural}. The following proposition states that the NTK is positive definite on a compact  set $ \mathbb{R}^{d}$. 

\begin{proposition}[Positive definiteness of $K_{d}$]
	\label{PD}
	For any $d\geq 1$, the neural tangent kernel $K_{d}(\bm{x},\bm{x}')$ is  positive definite on any compact subset $\mathcal{X} \subseteq \mathbb{R}^{d}$. 
\end{proposition}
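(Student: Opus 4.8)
The plan is to reduce the positive definiteness of $K_{d}$ to the linear independence of a finite family of ReLU ridge functions. Concretely, I would (i) rewrite $K_{d}$ as a sum of three manifestly positive semi-definite kernels, each an inner product of an explicit random feature map; (ii) note that the associated quadratic form is then a sum of three squared norms, which can vanish only if a certain linear combination of ReLU ridge functions vanishes identically; and (iii) show that the ridge functions in question are linearly independent.

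\emph{Step 1: a feature-map decomposition of $K_{d}$.} Write $\tilde{\bm x}=(\bm x\tran,1)\tran\in\mathbb R^{d+1}$ and let $\sigma'(s)=\mathbf{1}\{s>0\}$ be the a.e.\ derivative of $\sigma$. Organizing the coordinates of $\bm\theta$ in the gradient inner product that defines the NTK into the three groups $\{a_{r}\}$, $\{(\bm w_{r},b_{r})\}$ and the output bias $b$, and letting $m\to\infty$, one gets
\[
K_{d}(\bm x,\bm x')=2\,\Expc_{\bm v}\!\left[\sigma(\langle\bm v,\tilde{\bm x}\rangle)\,\sigma(\langle\bm v,\tilde{\bm x}'\rangle)\right]
+2\,\langle\tilde{\bm x},\tilde{\bm x}'\rangle\,\Expc_{\bm v}\!\left[\sigma'(\langle\bm v,\tilde{\bm x}\rangle)\,\sigma'(\langle\bm v,\tilde{\bm x}'\rangle)\right]+1 ,
\]
with $\bm v\sim\mathcal N(0,I_{d+1})$. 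I would verify that this agrees with \eqref{eq: NTK_formular} using the arc-cosine kernel evaluations of \cite{cho2009kernel}, namely $\Expc_{\bm v}[\sigma'(\langle\bm v,\tilde{\bm x}\rangle)\sigma'(\langle\bm v,\tilde{\bm x}'\rangle)]=\tfrac{\pi-\psi}{2\pi}$ and $\Expc_{\bm v}[\sigma(\langle\bm v,\tilde{\bm x}\rangle)\sigma(\langle\bm v,\tilde{\bm x}'\rangle)]=\tfrac{1}{2\pi}\big(\|\tilde{\bm x}\|_{2}\|\tilde{\bm x}'\|_{2}\sin\psi+(\pi-\psi)\langle\tilde{\bm x},\tilde{\bm x}'\rangle\big)$, together with $\langle\tilde{\bm x},\tilde{\bm x}'\rangle=\langle\bm x,\bm x'\rangle+1$ and $\|\tilde{\bm x}\|_{2}^{2}\|\tilde{\bm x}'\|_{2}^{2}-\langle\tilde{\bm x},\tilde{\bm x}'\rangle^{2}=\|\bm x-\bm x'\|_{2}^{2}+\|\bm x\|_{2}^{2}\|\bm x'\|_{2}^{2}-\langle\bm x,\bm x'\rangle^{2}$. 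Each summand is positive semi-definite: the first two are $L^{2}(\mathcal N(0,I_{d+1}))$ inner products of the feature maps $\bm x\mapsto\sigma(\langle\,\cdot\,,\tilde{\bm x}\rangle)$ and $\bm x\mapsto\tilde{\bm x}\,\sigma'(\langle\,\cdot\,,\tilde{\bm x}\rangle)$ respectively, and the last is the constant kernel.

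\emph{Step 2: reduction to linear independence.} Fix $n$ distinct points $\bm x_{1},\dots,\bm x_{n}\in\mathcal X$ and $\bm c\in\mathbb R^{n}$. Substituting the decomposition and collecting terms,
\[
\bm c\tran K_{d}(\bm X,\bm X)\,\bm c
=2\,\Expc_{\bm v}\Big[\Big(\textstyle\sum_{i}c_{i}\,\sigma(\langle\bm v,\tilde{\bm x}_{i}\rangle)\Big)^{2}\Big]
+2\,\Expc_{\bm v}\Big[\Big\|\textstyle\sum_{i}c_{i}\,\tilde{\bm x}_{i}\,\sigma'(\langle\bm v,\tilde{\bm x}_{i}\rangle)\Big\|_{2}^{2}\Big]
+\Big(\textstyle\sum_{i}c_{i}\Big)^{2},
\]
a sum of three nonnegative terms. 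If it is zero then, in particular, the continuous function $\bm v\mapsto\sum_{i}c_{i}\sigma(\langle\bm v,\tilde{\bm x}_{i}\rangle)$ vanishes almost everywhere, hence identically. Since the $\bm x_{i}$ are pairwise distinct, the lifted vectors $\tilde{\bm x}_{1},\dots,\tilde{\bm x}_{n}$ are nonzero and pairwise non-proportional (equality of last coordinates rules out any nontrivial rescaling). It therefore suffices to prove that the ReLU ridge functions $\{\bm v\mapsto\sigma(\langle\bm v,\tilde{\bm x}_{i}\rangle)\}_{i=1}^{n}$ are linearly independent: this forces $\bm c=0$, i.e.\ $\lambda_{\min}\big(K_{d}(\bm X,\bm X)\big)>0$.

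\emph{Step 3: linear independence of ReLU ridge functions, and the main obstacle.} I would prove this by restriction to a generic affine line $\bm v=\bm v_{0}+t\bm w$: the function $t\mapsto\sigma(\langle\bm v_{0},\tilde{\bm x}_{i}\rangle+t\langle\bm w,\tilde{\bm x}_{i}\rangle)$ is an affine hinge whose one-sided slopes differ by $|\langle\bm w,\tilde{\bm x}_{i}\rangle|$ at the single break-point $t_{i}=-\langle\bm v_{0},\tilde{\bm x}_{i}\rangle/\langle\bm w,\tilde{\bm x}_{i}\rangle$. Because the $\tilde{\bm x}_{i}$ are pairwise non-proportional, one may choose $\bm w$ outside finitely many hyperplanes so that every $\langle\bm w,\tilde{\bm x}_{i}\rangle\neq0$, and then $\bm v_{0}$ outside finitely many hyperplanes so that $t_{1},\dots,t_{n}$ are pairwise distinct; reading off the slope jump of $t\mapsto\sum_{i}c_{i}\sigma(\langle\bm v_{0}+t\bm w,\tilde{\bm x}_{i}\rangle)$ at each $t_{i}$ then gives $c_{i}=0$. (Equivalently, one can differentiate twice in the distributional sense: $\mathrm{Hess}_{\bm v}\,\sigma(\langle\bm v,\tilde{\bm x}_{i}\rangle)$ is a nonzero multiple of $\tilde{\bm x}_{i}\tilde{\bm x}_{i}\tran$ times surface measure on $\{\langle\bm v,\tilde{\bm x}_{i}\rangle=0\}$, and these $n$ hyperplanes are distinct, so a vanishing combination localizes to $c_{i}\,\tilde{\bm x}_{i}\tilde{\bm x}_{i}\tran=0$.) I expect Step 3 to be the only genuinely delicate point — choosing the line and handling the hyperplane arrangement without circularity; Steps 1 and 2 are bookkeeping once the identity of Step 1 is in hand. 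Note that compactness of $\mathcal X$ plays no role: the argument yields positive definiteness of $K_{d}$ on all of $\mathbb R^{d}$.
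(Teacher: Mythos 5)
Your proof is correct, but it takes a genuinely different route from the paper's. The paper lifts each point to $\hat{\bm z}=(\bm x\tran,1)\tran/\|(\bm x\tran,1)\|_{2}$ on the open upper hemisphere $\mathbb{S}^{d}_{+}$, writes the arc-cosine kernels $\kappa_{0},\kappa_{1}$ as inner-product kernels $f_{0}(\langle\hat{\bm z},\hat{\bm z}'\rangle)$, $f_{1}(\langle\hat{\bm z},\hat{\bm z}'\rangle)$ whose power series have nonnegative coefficients with infinitely many strictly positive ones, and then invokes a general lemma (proved via Gershgorin: the matrices $\left(\langle\hat{\bm z}_{i},\hat{\bm z}_{j}\rangle^{k}\right)_{i,j}$ become diagonally dominant for large $k$ because $|\langle\hat{\bm z}_{i},\hat{\bm z}_{j}\rangle|<1$ strictly on the open hemisphere) to conclude strict positive definiteness of each piece. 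You instead keep the integral (random-feature) representation, reduce $\bm c\tran K_{d}(\bm X,\bm X)\bm c=0$ to the identical vanishing of $\bm v\mapsto\sum_{i}c_{i}\sigma(\langle\bm v,\tilde{\bm x}_{i}\rangle)$, and kill the coefficients by reading off the slope jumps at the distinct break-points along a generic line. Your argument is more elementary and self-contained: it isolates exactly which feature of the ReLU does the work (the kink on a distinct hyperplane for each non-proportional $\tilde{\bm x}_{i}$), and it sidesteps the need to verify the positivity of infinitely many Taylor coefficients of $f_{0}$ and $f_{1}$, a fact the paper asserts without computation. The paper's approach buys a reusable strict-positive-definiteness criterion for general dot-product kernels on the sphere, which is independent of the ReLU structure. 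Both arguments, as you note, make no use of compactness and give positive definiteness on all of $\mathbb{R}^{d}$; the only point to be careful about in yours is that $\sigma'$ is undefined on the Gaussian-null sets $\{\langle\bm v,\tilde{\bm x}_{i}\rangle=0\}$, which does not affect the expectations.
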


Once we know the positive definiteness of $K_{d}$ on $\mathbb{R}^{d}$,  the spectral properties of $K_{d}$ are of our further interest. For example,  the minimum eigenvalue of $K_{d}(\bm{X},\bm{X})$ is of particular interest in analyzing the dynamic of training a wide neural network,  because most analyses are implicitly or explicitly assumed that it is positive (e.g.,\cite{hu2021regularization, suh2022nonparametric}). 
Furthermore,  the celebrated  Mercer's decomposition theorem asserts (loosely speaking) that there exist non-negative numbers $\lambda_{1} \geq \lambda_{2} \geq \cdots$ and functions $\phi_{1}, \phi_{2}, \cdots \in L_{2}(\mathcal{X}, \mu_{\mathcal{X}})$ such that $\left<\phi_{i},\phi_{j}\right>_{L_{2}(\mathcal{X}, \mu_{\mathcal{X}})}=\delta_{ij}$ and 
 \begin{align}
K_{d}(\boldsymbol{x},\boldsymbol{x}')=\sum_{j=1}^{\infty}\lambda_{j}\phi_{j}(\boldsymbol{x})\phi_{j}(\boldsymbol{x}'),
 \end{align}
 where the series on RHS converges in $L_{2}(\mathcal{X},\mu_{\mathcal{X}})$ (please see Appendix \ref{sec:RKHS}
 for more rigorous statements). The numbers $\{ \lambda_{j}, j \geq 1\}$ and the functions $\{\phi_{j},j \geq 1\}$ are often referred to as the eigenvalues and the eigenfunctions associated to the kernel $K_{d}$ respectively. The decay rate of $\{\lambda_{j},j \geq 1\}$ is of great interest in determining the metric entropy of the RKHS associated to the kernel $K_{d}$.

In this paper, we will focus on the performance of two-layer neural networks on one-dimensional data, i.e., we are more interested in $d=1$. The following theorem summarizes  the spectral properties of $K_{1}$ needed in this paper whose proof is deferred to Appendix \ref{app:ntk properties}.

\begin{theorem}[Spectral properties of $K_{1}$]\label{thm:spectral:d=1:L=1}
	The following properties hold for $K_1$.
	
	$i)$ Let $\bm{X}=\{x_{1},...,x_{n}\} \subseteq [0,\pi]$ and $d_{\min}=\min_{i\neq j}|x_{i}-x_{j}|$. The minimum eigenvalue $\lambda_{\min}$  of $K_{1}(\bm{X},\bm{X})$ satisfies that
	\begin{equation}
		c_1 d_{\min} \leq \lambda_{\min}\leq C_1 d_{\min}
	\end{equation}
	for some absolute constants $c_1$ and $C_1$.
 
	$ii)$     Let $\{\lambda_{j}, j \geq 1\}$ be the eigenvalues associated to the NTK $K_{1}$ defined on $[0,1]$. Then we have 
	\begin{equation}
		 \frac{c_2}{j^{2}}\leq \lambda_j \leq \frac{C_2}{j^{2}},~~ j \geq 1
	\end{equation}
	for some absolute constants $c_{2}$ and $C_{2}$.
\end{theorem}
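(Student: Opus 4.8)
The plan is to reduce everything, via the lift $x=\tan\alpha$ and the dual-activation identity for the NTK, to one-dimensional (almost) translation-invariant kernels whose spectra are classical. Writing $\tilde x=(x,1)\in\mathbb R^2$ and $w\sim\mathcal N(0,I_2)$, one has the exact decomposition of $K_1$ into positive semi-definite pieces
\begin{equation*}
K_1(x,x')=2A_0(x,x')+2\,(xx'+1)\,G(x,x')+1,\qquad G(x,x')=\Prob_w\!\left[\langle w,\tilde x\rangle>0,\ \langle w,\tilde x'\rangle>0\right],
\end{equation*}
with $A_0(x,x')=\Expc_w[\sigma(\langle w,\tilde x\rangle)\sigma(\langle w,\tilde x'\rangle)]$ and $G(x,x')=\tfrac1{2\pi}(\pi-\psi(x,x'))$. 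Setting $\alpha=\arctan x$ gives $\psi(x,x')=|\alpha-\alpha'|$, $xx'+1=\cos(\alpha-\alpha')/(\cos\alpha\cos\alpha')$ and $|x-x'|=\sin|\alpha-\alpha'|/(\cos\alpha\cos\alpha')$, hence $K_1(x,x')=\kappa(\alpha-\alpha')/(\cos\alpha\cos\alpha')+1$ with $\kappa(t)=\tfrac2\pi(\pi-|t|)\cos t+\tfrac1\pi\sin|t|=2-\tfrac1\pi|t|+O(t^2)$ near $0$. The decisive structural point is that $\kappa$ has exactly a $-\tfrac1\pi|t|$ corner on the diagonal — which comes entirely from the $G$ term and is the $H^1$-signature responsible for the rate $j^{-2}$ — whereas the $A_0$ term corresponds to $\kappa_1(t):=\sin|t|+(\pi-|t|)\cos t$, which is $C^2$ with only its third derivative jumping and is therefore two orders smoother.

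For part $i)$, the upper bound comes from testing $K_1(\bm X,\bm X)$ against $v=\tfrac1{\sqrt2}(e_i-e_j)$ on the two closest points: $\lambda_{\min}\le\tfrac12\bigl(K_1(x_i,x_i)-2K_1(x_i,x_j)+K_1(x_j,x_j)\bigr)$, which by the local expansion of $\kappa$ and the smoothness and boundedness of $1/\cos\alpha$ on $[0,\arctan\pi]$ equals $\tfrac1\pi|x_i-x_j|+O(d_{\min}^2)\le C_1d_{\min}$. For the lower bound I would discard the two smoother summands and use $K_1(\bm X,\bm X)\succeq2(x_ix_j+1)G(\bm X,\bm X)\succeq G(\bm X,\bm X)$, so that $\lambda_{\min}(K_1(\bm X,\bm X))\ge\min_{\|v\|=1}\Expc_w[(\sum_i v_i\mathbf 1(\langle w,\tilde x_i\rangle>0))^2]$. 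Ordering $x_1<\dots<x_n$, the half-line $\{x:\langle(w_1,b),(x,1)\rangle>0\}$ contains exactly a suffix $\{x_k,\dots,x_n\}$ (when $w_1>0$) or a prefix (when $w_1<0$) of the sample, according to where the threshold $-b/w_1$ falls; by rotational invariance of $(w_1,b)$ the cone producing a given suffix $S_k=\sum_{i\ge k}v_i$ has probability $\tfrac1{2\pi}(\arctan x_k-\arctan x_{k-1})\ge\tfrac{d_{\min}}{2\pi(1+\pi^2)}$, and a telescoping identity shows the two resulting families of partial sums jointly satisfy $\sum_k(\cdot)^2\ge\tfrac14\|v\|^2$. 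Hence $\lambda_{\min}(K_1(\bm X,\bm X))\ge c_1d_{\min}$.

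For part $ii)$ the same inequalities hold at the level of integral operators on $L^2[0,1]$ (the argument being insensitive to replacing Lebesgue measure by any density bounded above and below): since $(xx')\odot G\succeq0$ (Schur product of positive semi-definite kernels) one gets $T_{K_1}\succeq2T_G$, and since $A_0$ and the constant are positive semi-definite, Weyl's inequality gives $\lambda_{2j}(T_{K_1})\le2\lambda_j(T_{A_0})+2\lambda_j(T_{(xx'+1)G})$. It therefore suffices to prove (a) $\lambda_j(T_G)\asymp j^{-2}$, (b) $\lambda_j(T_{A_0})=O(j^{-4})$, and (c) $\lambda_j(T_{(xx'+1)G})\lesssim\lambda_{\lceil j/2\rceil}(T_G)$. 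For (a), $G(x,x')=\tfrac12-\tfrac1{2\pi}|\arctan x-\arctan x'|$ together with $-|\alpha-\alpha'|=2\min(\alpha,\alpha')-\alpha-\alpha'$ exhibits $T_G$, after the bi-Lipschitz change of variable $\alpha=\arctan x$, as $\tfrac1\pi M_\rho T_{\min}M_\rho$ plus a rank-$\le2$ term, where $T_{\min}$ is the Brownian-motion covariance on $[0,\tfrac\pi4]$ — whose eigenvalues are $\asymp j^{-2}$, even explicitly — and $M_\rho$ is multiplication by a factor bounded above and below; operator comparison and Weyl then give $\lambda_j(T_G)\asymp j^{-2}$. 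For (b), the same substitution turns $A_0$ into a smooth bounded modulation of the translation-invariant kernel $\kappa_1(\alpha-\alpha')$, whose $C^2$ regularity places its RKHS inside a Sobolev space of order $2$, so $\lambda_j(T_{A_0})=O(j^{-4})$ by the standard smoothness–decay correspondence. For (c), $(xx'+1)\odot G$ equals $M_xT_GM_x+T_G$ with $|x|\le1$ on $[0,1]$, so it is controlled by $T_G$ up to an index shift. Combining (a)–(c) gives $\lambda_j(T_{K_1})\asymp j^{-2}$.

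The main obstacle is the lower bound in part $i)$: one needs a bound on $\lambda_{\min}(G(\bm X,\bm X))$ that is uniform over $n$ and over all point configurations with separation $d_{\min}$, and the sweeping-half-line/partial-sum argument is exactly what both yields this and pins down the linear-in-$d_{\min}$ scaling (through the Cauchy distribution of the threshold $-b/w_1$). For part $ii)$ the delicacy is transporting the $j^{-2}$ rate through the non-translation-invariant modulations and through indefinite remainder terms; this is handled by always splitting into a positive semi-definite part plus a finite-rank part — so that operator monotonicity and Weyl's inequality apply cleanly — and by invoking two classical facts, the explicit $\asymp j^{-2}$ spectrum of the Brownian covariance on an interval and the fast ($O(j^{-4})$) eigenvalue decay of the smoother kernel $A_0$. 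Obtaining matching upper and lower rates, rather than only one side, is precisely what makes the explicit comparison with the $\min$ kernel (rather than a soft smoothness estimate) necessary.
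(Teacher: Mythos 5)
Your proposal is correct in substance but follows a genuinely different route from the paper at every step, so a comparison is worth recording. For part $i)$, the paper does not touch the arc-cosine/probabilistic representation at all: it sandwiches the Gram matrix between two distance kernels, $G_{1}(\bm X,\bm X)\preceq K_{1}(\bm X,\bm X)\preceq 7G_{9/7}(\bm X,\bm X)$ with $G_{\alpha}(x,x')=\alpha-|x-x'|/\pi$ (the upper bound uses a change of variable $z=2x-\psi(0,x)$ to show $\Pi_{0}\preceq 2G_{1}$), and then computes $\lambda_{\min}(G_{\alpha}(\bm X,\bm X))\asymp d_{\min}$ from the explicit tridiagonal-plus-corners inverse of $G_{\alpha}(\bm X,\bm X)$ via Gershgorin. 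Your two-point test vector for the upper bound and, especially, your sweeping-half-plane/partial-sum argument for the lower bound (which exploits that in $d=1$ every half-plane picks out a prefix or suffix of the sorted sample, occurring with probability $\gtrsim d_{\min}$, combined with $\sum_k S_k^2\ge\tfrac14\|v\|^2$) replace both lemmas and are, in my view, more conceptual and more likely to suggest what happens for $d>1$, at the cost of less explicit constants. For part $ii)$, the paper again runs everything through $G_{\alpha}$: it transfers the matrix sandwich to the operator level by a Koltchinskii--Gin\'e limit of empirical eigenvalues and then solves the eigenproblem of $G_{\alpha}$ exactly (the integral equation reduces to $\lambda f''=-\tfrac2\pi f$ plus a transcendental equation locating the frequencies $\omega_j$), whereas you work directly with integral operators, using Schur products, conjugation by bounded multipliers, finite-rank Weyl shifts, and the known $j^{-2}$ spectrum of the Brownian covariance $\min(\alpha,\alpha')$. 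Your route avoids both the limit argument and the ODE, at the price of losing the exact eigenvalue locations the paper obtains. The one place where you assert more than you justify is the claim $\lambda_j(T_{A_0})=O(j^{-4})$ from $C^2$ regularity of $\kappa_1$; pinning down that exact rate would require a precise smoothness-to-decay theorem (e.g.\ an embedding of the RKHS of $\kappa_1$ restricted to the interval into $H^2$), but your argument only needs $\lambda_j(T_{A_0})=O(j^{-2})$, which follows from much softer facts (e.g.\ Reade's theorem for continuous positive definite $C^1$ kernels, or another Schur-product comparison of $A_0$ with $G$ and the distance kernel), so this is a removable over-claim rather than a gap.
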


Theorem \ref{thm:spectral:d=1:L=1} $i)$ shows that the minimum eigenvalue of the gram matrix $K_{1}(\bm{X},\bm{X})$ depends on the minimum distance between samples. In particular, for the equally distanced one-dimensional data with $x_{i}=\frac{i-1}{n-1}$ for $i \in [n]$, the minimum distance $d_{\min}=\frac{1}{n-1}$ and the minimum eigenvalue is $\propto \frac{1}{n}$.
Theorem \ref{thm:spectral:d=1:L=1} $ii)$ states that the eigenvalue decay rate (EDR) is 2, an important quantity in performing the kernel regression. This theorem not only provides us with necessary results for this paper, but it also provides us some guidance to make reasonable conjectures for the spectral properties of $K_{d,L}$, the NTK associated to the $L$-layer neural network defined on $\mathbb{R}^{d}$.

\section{Neural network kernel converges to neural tangent kernel uniformly}\label{sec:main_result}

When the loss function $\hat{\mathcal{L}}_{n}$ is viewed as a function defined on the parameter space $\Theta$, it induces a gradient flow in $\Theta$ given by
\begin{equation}\label{nn:theta:flow}
\begin{aligned}
      \dot{\bm{\theta}}(t) &=\frac{\dx}{\dx t}\bm{\theta}(t)=-\nabla_{\bm{\theta}}\emprisk(f_{\bm{\theta}(t)}^{m})= - \frac{1}{n}\nabla_{\bm{\theta}} f_{\bm{\theta}(t)}^{m}(\bm{X})\tran (f_{\bm{\theta}(t)}^{m}(\bm{X})-\bm{y})
\end{aligned}
\end{equation}
where we emphasize that $\nabla_{\bm{\theta}} f_{\bm{\theta}(t)}^{m}(\bm{X})$ is an $n\times ((d+2)m+1)$ matrix.
When the loss function $\hat{\mathcal{L}}_{n}$ is viewed as a function defined on $\mathcal{F}^{m}$, the space consisting of all two-layer neural networks $f_{\bm{\theta}}^{m}$, it induces a gradient flow in $\mathcal{F}^{m}$ given by
\begin{equation}\label{nn:f:flow}
\begin{aligned} 
\dot{f}_{\bm{\theta}(t)}^{m}(\bm{x}) &=\frac{\dx }{\dx t}f_{\bm{\theta}(t)}^{m}(\bm{x})=\nabla_{\bm{\theta}} f_{\bm{\theta}(t)}^{m}(\bm{x})\dot{\bm{\theta}}(t)= -\frac{1}{n} K_{\bm{\theta}(t)}^{m}(\bm{x},\bm{X}) (f_{\bm{\theta}(t)}^{m}(\bm{X})-\bm{y}),
\end{aligned}
\end{equation}
where we emphasize that $K_{\bm{\theta}(t)}^{m}(\bm{x},\bm{X}) =\nabla_{\bm{\theta}} f_{\bm{\theta}(t)}^{m}(\bm{x}) \nabla_{\bm{\theta}} f_{\bm{\theta}(t)}^{m}(\bm{X})\tran$ is a $1\times n$ vector. We further introduce a time-varying kernel function
\begin{align*}
K_{\bm{\theta}(t)}^{m}(\bm{x},\bm{x}')=\langle\nabla_{\bm{\theta}}f_{\bm{\theta}(t)}^{m}(\bm{x}),\nabla_{\bm{\theta}}f_{\bm{\theta}(t)}^{m}(\bm{x}')\rangle.
\end{align*}
To avoid potential confusion with the NTK, we refer to this time-varying kernel $K_{\bm{\theta}(t)}^{m}$ as the NNK.

It is clear from the gradient flow equations \eqref{nn:theta:flow} and \eqref{nn:f:flow} that the training process of neural networks is  determined by the random initialization of $\bm{\theta}(0)$.
To avoid unnecessary digression, we adopt a special initialization widely used in literature so that $f_{\bm{\theta}(0)}^{m}(\bm{x})=0$
\cite{hu2019simple, chizat2019lazy}.
More precisely, for a two-layer neural network with width $2m$, we assume that $a_{r}(0)=-a_{r+m}(0),\bm{w}_{r,(j)}(0)=\bm{w}_{r+m,(j)}(0),b_{r}(0)=b_{r+m}(0) \sim \mathcal{N}(0,1)$ for $r\in [m]$, $j\in [d]$ and $b =0$. 

Since it is hard to find an explicit solution of the highly non-linear equations \eqref{nn:theta:flow} and  \eqref{nn:f:flow} , researchers looked for approximated solutions characterizing the asymptotic behavior of the exact solution of these equations (see e.g.,\cite{mei2019mean, karakida2019universal, sirignano2022mean, eldan2021non}).
When the width $m\rightarrow\infty$,  
Jacot et al. \cite{jacot2018neural} observed that the NTK is the limit of NNK.
The time-independent kernel NTK offered us a simplified version of the equations \eqref{nn:theta:flow} and \eqref{nn:f:flow}:
\begin{equation}\label{ntk:f:flow}
    \begin{aligned}
    \dot{f}^{\NTK}_{t}(\bm{x})=\frac{\dx}{\dx t}f^{\NTK}_{t}(\bm{x})=-\frac{1}{n}K_d(\bm{x},\bm{X})(f^{\NTK}_{t}(\bm{X})-\bm{y})
    \end{aligned}
\end{equation}
where  $K_d(\bm{x},\bm{X}) = (K_d(\bm{x},\bm{x}_1),\dots,K_d(\bm{x},\bm{x}_n))\in \mathbb{R}^{1\times n}$.
This equation is defined on the space $\mathcal{H}_{d}$, the RKHS associated to the kernel $K_{d}$. 
The equation \eqref{ntk:f:flow} is called the gradient flow associated to the kernel regression with respect to the kernel $K_{d}$. 
Similar to the special initialization of the  neural network function, we assume that the initial function $f^{\NTK}_{0}(\bm{x})=0$.

Though it is hard to solve the equations \eqref{nn:theta:flow} and \eqref{nn:f:flow}, the equation \eqref{ntk:f:flow}  can be solved explicitly:
\begin{equation}\label{ntk:solution}
    f_t^{\NTK}(\bm{x})=K_d(\bm{x},\bm{X})K_d(\bm{X},\bm{X})^{-1}(\bm{I}-e^{-\frac{1}{n}K_d(\bm{X},\bm{X})t})\bm{y},
\end{equation}
which we refer to as NTK regression function at time $t$ in this paper. Researchers proved that for any given $\delta\in(0,1)$, for every $\bm{x}\in\mathcal{X}$, the neural network $f_{\bm{\theta}(t)}^{m}(\bm{x})$ can be well approximated by the NTK regression function $f_{t}^{\NTK}(\bm{x})$ when $m$ is sufficiently large, i.e., one has the pointwise convergence (see e.g., \cite{lee2019wide, arora2019exact}):
\begin{equation}\label{eq:solution:convergence}
\sup_{t\geq 0}|f^{m}_{\bm{\theta}(t)}(\bm{x})-f_{t}^{\NTK}(\bm{x})|=o_{m}(1)
\end{equation} 
holds with probability at least $1-\delta$.

One of our main technical contributions is that the above convergence is uniform with respect to all $t\geq0$ and all $\bm{x}\in\mathcal{X}$. Thus,  the excess risk $\excrisk(f_{\bm{\theta}(t)}^{m})$ of the wide two-layer ReLU neural network $f_{\bm{\theta}(t)}^{m}$ could be well approximated by the excess risk $\excrisk(f_{t}^{\NTK})$ of the NTK regression function $f_{t}^{\NTK}$.

\begin{theorem}\label{thm:risk:approx}
    Given the training data $\{(\bm{x}_{i},y_{i}),i\in[n]\}$, for any $\epsilon>0$, when the width $m$ of the two-layer ReLU neural network is sufficiently large, we have 
    \begin{equation*}
        \sup_{t\geq 0}|\excrisk(f_{\bm{\theta}(t)}^{m})-\excrisk(f_{t}^{\NTK})|\leq \epsilon 
    \end{equation*}
    holds with probability at least $1-o_{m}(1)$ where the randomness comes from the initialization of the parameters.
\end{theorem}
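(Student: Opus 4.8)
The plan is to derive Theorem~\ref{thm:risk:approx} from a stronger function-space statement, namely that the trained network converges to the NTK regression predictor \eqref{ntk:solution} \emph{uniformly} in both $t\ge 0$ and $\bm{x}\in\mathcal{X}$:
\begin{equation}\label{eq:plan:sup}
\sup_{t\ge 0}\ \sup_{\bm{x}\in\mathcal{X}}\abs{f^{m}_{\bm{\theta}(t)}(\bm{x})-f^{\NTK}_{t}(\bm{x})}=o_{m}(1)
\end{equation}
with probability $1-o_{m}(1)$, which strengthens the pointwise estimate \eqref{eq:solution:convergence}. Granting \eqref{eq:plan:sup}, the theorem follows from an elementary computation: by \eqref{eq:excrisk}, the identity $a^{2}-b^{2}=(a-b)(a+b)$, and the fact that $\mu_{\mathcal{X}}$ is a probability measure,
\begin{equation*}
\abs{\excrisk(f^{m}_{\bm{\theta}(t)})-\excrisk(f^{\NTK}_{t})}\le\Bigl(\sup_{\bm{x}\in\mathcal{X}}\abs{f^{m}_{\bm{\theta}(t)}(\bm{x})-f^{\NTK}_{t}(\bm{x})}\Bigr)\cdot\sup_{\bm{x}\in\mathcal{X}}\abs{f^{m}_{\bm{\theta}(t)}(\bm{x})+f^{\NTK}_{t}(\bm{x})-2f_{\star}(\bm{x})}.
\end{equation*}
The first factor is $o_{m}(1)$ by \eqref{eq:plan:sup}. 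For the second, $\sup_{\bm{x}\in\mathcal{X}}\abs{f_{\star}(\bm{x})}<\infty$ since $f_{\star}$ is continuous on the compact set $\mathcal{X}$; from the explicit solution \eqref{ntk:solution}, Proposition~\ref{PD}, and the fact that $\bm{I}-e^{-K_{d}(\bm{X},\bm{X})t/n}$ has operator norm at most $1$ (being a function of the symmetric positive definite matrix $K_{d}(\bm{X},\bm{X})$ with eigenvalues in $[0,1)$), together with the continuity of $K_{d}$ from \eqref{eq: NTK_formular}, one gets $\sup_{t\ge0}\sup_{\bm{x}\in\mathcal{X}}\abs{f^{\NTK}_{t}(\bm{x})}\le\lambda_{\min}(K_{d}(\bm{X},\bm{X}))^{-1}\norm{\bm{y}}_{2}\sup_{\bm{x}\in\mathcal{X}}\norm{K_{d}(\bm{x},\bm{X})}_{2}<\infty$; then $\sup_{\bm{x}}\abs{f^{m}_{\bm{\theta}(t)}(\bm{x})}$ inherits this bound up to $o_{m}(1)$ by \eqref{eq:plan:sup}. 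Hence the product is $o_{m}(1)$, which is $\le\epsilon$ for $m$ large enough.

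The essential task is thus to prove \eqref{eq:plan:sup}. I would follow the standard NTK route, upgrading each step to hold uniformly over $\mathcal{X}$. First, show the NNK is close to the NTK at initialization, $\sup_{\bm{x},\bm{x}'\in\mathcal{X}}\abs{K^{m}_{\bm{\theta}(0)}(\bm{x},\bm{x}')-K_{d}(\bm{x},\bm{x}')}=o_{m}(1)$ with high probability, by combining the pointwise concentration of $K^{m}_{\bm{\theta}(0)}$ (an average over $m$ i.i.d.\ neurons) around $K_{d}$ with a covering-number argument over $\mathcal{X}\times\mathcal{X}$ and control of the (piecewise) Lipschitz behaviour of $(\bm{x},\bm{x}')\mapsto K^{m}_{\bm{\theta}(0)}(\bm{x},\bm{x}')$. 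Second, Proposition~\ref{PD} gives $\lambda_{\min}\bigl(K^{m}_{\bm{\theta}(t)}(\bm{X},\bm{X})\bigr)\ge\tfrac12\lambda_{\min}\bigl(K_{d}(\bm{X},\bm{X})\bigr)>0$ for $m$ large (via the initialization estimate and a continuity/bootstrap argument along the flow), so the empirical residual $f^{m}_{\bm{\theta}(t)}(\bm{X})-\bm{y}$ decays exponentially in $t$; integrating the gradient flow \eqref{nn:theta:flow} then shows that each neuron's parameters stay within an $O(m^{-1/2})$ ball of their initialization, uniformly in $t\ge0$, and since the NNK is an average of $m$ neuron-wise contributions this yields $\sup_{t\ge0}\sup_{\bm{x},\bm{x}'\in\mathcal{X}}\abs{K^{m}_{\bm{\theta}(t)}(\bm{x},\bm{x}')-K^{m}_{\bm{\theta}(0)}(\bm{x},\bm{x}')}=o_{m}(1)$ once one controls the contribution of the neurons whose ReLU activation sign changes during training. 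Combining the two steps, $\sup_{t\ge0}\sup_{\bm{x},\bm{x}'\in\mathcal{X}}\abs{K^{m}_{\bm{\theta}(t)}(\bm{x},\bm{x}')-K_{d}(\bm{x},\bm{x}')}=o_{m}(1)$.

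Finally, I would compare the two gradient flows. Subtracting \eqref{ntk:f:flow} from \eqref{nn:f:flow} and integrating from $0$ (both flows start at $0$),
\begin{equation*}
f^{m}_{\bm{\theta}(t)}(\bm{x})-f^{\NTK}_{t}(\bm{x})=-\frac{1}{n}\int_{0}^{t}\Bigl[K^{m}_{\bm{\theta}(s)}(\bm{x},\bm{X})\bigl(f^{m}_{\bm{\theta}(s)}(\bm{X})-\bm{y}\bigr)-K_{d}(\bm{x},\bm{X})\bigl(f^{\NTK}_{s}(\bm{X})-\bm{y}\bigr)\Bigr]\dx s,
\end{equation*}
and split the bracket as (kernel difference)$\,\times\,$(network residual) plus $K_{d}(\bm{x},\bm{X})\,\times\,$(residual difference on $\bm{X}$). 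The first piece is controlled by the uniform kernel closeness above and the exponential decay of the residual, so its time-integral is $o_{m}(1)$ uniformly in $t$ and $\bm{x}$. For the second, specialize first to $\bm{x}\in\bm{X}$: the vector $f^{m}_{\bm{\theta}(t)}(\bm{X})-f^{\NTK}_{t}(\bm{X})$ then solves a stable linear ODE forced by the first piece, and a Gr\"onwall / variation-of-parameters argument with the spectral gap from Proposition~\ref{PD} gives $\sup_{t\ge0}\norm{f^{m}_{\bm{\theta}(t)}(\bm{X})-f^{\NTK}_{t}(\bm{X})}_{2}=o_{m}(1)$; plugging this back and using $\sup_{\bm{x}\in\mathcal{X}}\norm{K_{d}(\bm{x},\bm{X})}_{2}<\infty$ together with integrability in $s$ yields \eqref{eq:plan:sup}.

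The hard part will be the uniform control of the NNK in the second step above: because ReLU is not differentiable, the Lipschitz-in-$\bm{\theta}$ estimate has to be replaced by a careful count of the neurons whose activation pattern changes during training, and this---together with the passage from pointwise concentration to uniformity over the continuum $\mathcal{X}$ via covering numbers and Lipschitz-in-$\bm{x}$ estimates---is where essentially all the technical effort lies. I also note that the constants appearing above (the spectral gap $\lambda_{\min}(K_{d}(\bm{X},\bm{X}))$, $\norm{\bm{y}}_{2}$, and $\sup_{\bm{x}\in\mathcal{X}}\norm{K_{d}(\bm{x},\bm{X})}_{2}$) depend on the realized sample, so the whole argument is read conditionally on the training data, as in the statement.
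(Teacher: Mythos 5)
Your proposal is correct and follows essentially the same route as the paper: reduce the excess-risk comparison to the uniform (in $t$ and $\bm{x}$) sup-norm convergence of $f^{m}_{\bm{\theta}(t)}$ to $f^{\NTK}_{t}$ (the paper's Proposition \ref{prop:funct:approx}), combine it with the uniform boundedness of $f^{\NTK}_{t}$ and $f_{\star}$ via the elementary difference-of-squares identity, and establish the uniform convergence itself by the same epsilon-net concentration for the NNK, lazy-training control of activation-pattern changes, and a Gr\"onwall comparison of the two gradient flows that the paper carries out in the Supplementary Material. The only (immaterial) difference is cosmetic: you factor $a^{2}-b^{2}=(a-b)(a+b-2c)$ where the paper writes $(a-b)^{2}+(a-b)(b-c)$.
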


\begin{proof}
According to the formula of $f_{t}^{\NTK}(\bm{x})$ in \eqref{ntk:f:flow}, we have 
 \begin{equation*}
    \begin{aligned}
     |f_{t}^{\NTK}(\bm{x})|
     \leq&\|K_{d}(\bm{x},\bm{X})\tran\|_{2}\|K_{d}(\bm{X},\bm{X})^{-1}\|_{2}\|\bm{I}-e^{-\frac{1}{n}K_{d}(\bm{X},\bm{X})t}\|_{2}\|\bm{y}\|_{2}\\
     \leq & C\sqrt{n}(\lambda_{\min}(K_{d}(\bm{X},\bm{X})))^{-1}\|\bm{y}\|_{2}
     \end{aligned}
 \end{equation*}
 for some constant $C$ depending only on $B$.
Since $f_{\star}$ is continuous on $\mathcal{X}$, it is bounded. 
Thus, we know that
\begin{equation*}
    \Delta_{\star}:=\sup_{t\geq 0}\sup_{\bm{x}\in\mathcal{X}}|f_{t}^{\NTK}(\bm{x})-f_{\star}(\bm{x})|\leq C\max\{1,\sqrt{n}(\lambda_{\min}(K_{d}(\bm{X},\bm{X})))^{-1}\|\bm{y}\|_{2}\}   
\end{equation*}
for some constant $C$ depending only on $B$ and $f_{\star}$.
Let $\Delta(t,\bm{x})=f_{\bm{\theta}(t)}^{m}(\bm{x})-f_{t}^{\NTK}(\bm{x})$.  
   For any $\epsilon>0$, we know that for sufficiently large $m$,
   \begin{align*}
       \left|\excrisk(f_{\bm{\theta}(t)}^{m})-\excrisk(f_{t}^{\NTK})\right|=&\left|\int_{\mathcal{X}}\Delta(t,\bm{x})^{2}\dx\mu_{\mathcal{X}}(\bm{x})+\int_{\mathcal{X}}\Delta(t,\bm{x})(f_{t}^{\NTK}(\bm{x})-f_{\star}(\bm{x}))\dx \mu_{\mathcal{X}}(\bm{x})\right|\\
       \leq & \int_{\mathcal{X}} |\Delta(t,\bm{x})\left(\Delta(t,\bm{x})+(f_{t}^{\NTK}(\bm{x})-f_{\star}(\bm{x}))\right|\dx \mu_{\mathcal{X}}(\bm{x})\leq \epsilon
   \end{align*}
 where the last line follows from  Proposition \ref{prop:funct:approx} that for sufficiently large $m$, we have 
\begin{equation*}
     \sup_{t\geq 0}\sup_{\bm{x}\in\mathcal{X}}|\Delta(t,\bm{x})|\leq \min\left\{\Delta_{\star},\epsilon/\Delta_{\star}\right\}
\end{equation*}
 with probability at least $1-o_{m}(1)$. 
\end{proof}

Though many works tried to study the generalization performance of the neural network through that of the NTK regression \cite{jacot2018neural, vyas2022limitations, arora2019exact, arora2019fine, hu2021regularization, suh2022nonparametric, montanari2022interpolation}, to the best of our knowledge, most of them took the convergence $\excrisk(f_{\bm{\theta}(t)}^{m})\xrightarrow{m\rightarrow \infty}\excrisk(f_{t}^{\NTK})$ for granted. 
Theorem \ref{thm:risk:approx}
fills this long-standing gap in the literature and ensures the validity of focusing on the generalization properties of NTK regression. The following two propositions are not only of technical interests but also serve as a cornerstone in the future studies of the ``lazy training regime''.

\begin{proposition}\label{prop:funct:approx}
    Given the training data $\{(\bm{x}_{i},y_{i}),i\in[n]\}$, for any $\epsilon>0$, if the width $m$ of the two-layer ReLU neural network is sufficiently large, then  
    \begin{equation*}
        \sup_{t\geq 0}\sup_{\bm{x}\in\mathcal{X}}| f^{m}_{\bm{\theta}(t)}(\bm{x})-f_{t}^{\NTK}(\bm{x})|\leq \epsilon
    \end{equation*}
    holds with probability at least $1-o_{m}(1)$ where the randomness comes from the initialization of the parameters.
\end{proposition}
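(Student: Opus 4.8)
The plan is to carry out a ``lazy training'' analysis that controls the neural network kernel $K^{m}_{\bm{\theta}(t)}$ \emph{uniformly over} $\mathcal{X}\times\mathcal{X}$ (and over $t$), and then to transfer this control to the function values by a Gr\"{o}nwall-type estimate applied to the flow equations \eqref{nn:f:flow} and \eqref{ntk:f:flow}. All probabilistic statements below are with respect to the random initialization and hold with probability $1-o_{m}(1)$. Write $\lambda_{0}:=\lambda_{\min}(K_{d}(\bm{X},\bm{X}))$, which is strictly positive by Proposition \ref{PD} because the $\bm{x}_{i}$ are distinct, and set $c:=\lambda_{0}/(2n)$. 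The first step is to show that the NNK at initialization is uniformly close to the NTK, i.e.
\[
\sup_{\bm{x},\bm{x}'\in\mathcal{X}}\abs{K^{m}_{\bm{\theta}(0)}(\bm{x},\bm{x}')-K_{d}(\bm{x},\bm{x}')}=o_{m}(1).
\]
Each neuron contributes an i.i.d.\ summand to $K^{m}_{\bm{\theta}(0)}(\bm{x},\bm{x}')$ that is a product of sub-Gaussian functions of the Gaussian weights, hence sub-exponential with mean $K_{d}(\bm{x},\bm{x}')/m$; a Bernstein bound gives pointwise concentration at rate $\tilde{O}(m^{-1/2})$. To pass to the supremum I would take an $\epsilon$-net of the compact set $\mathcal{X}\times\mathcal{X}$, union-bound over it, and control the oscillation of $K^{m}_{\bm{\theta}(0)}$ between net points: its only non-smooth ingredients are the indicators $\bm{1}\{\langle\bm{w}_{r},\bm{x}\rangle+b_{r}>0\}$, and the fraction of neurons whose indicator changes when $\bm{x}$ moves by $\epsilon$ is $O(\epsilon)$ (with high probability, uniformly) because $\langle\bm{w}_{r},\bm{x}\rangle+b_{r}$ has a bounded density near $0$. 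Choosing $\epsilon\asymp m^{-1/2}$ finishes this step.

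Next I would run the standard bootstrap argument that keeps training ``lazy''. Using the chosen initialization with $f^{m}_{\bm{\theta}(0)}\equiv 0$, on the maximal interval on which $\lambda_{\min}(K^{m}_{\bm{\theta}(t)}(\bm{X},\bm{X}))\geq\lambda_{0}/2$ the empirical residual satisfies $\norm{f^{m}_{\bm{\theta}(t)}(\bm{X})-\bm{y}}_{2}\leq\norm{\bm{y}}_{2}e^{-ct}$; integrating $\dot{\bm{\theta}}$ from \eqref{nn:theta:flow} and using $\norm{\nabla_{\bm{\theta}}f^{m}_{\bm{\theta}(t)}(\bm{X})}_{2}=O(\sqrt{n})$ then shows every coordinate of $\bm{\theta}$ moves by $O(m^{-1/2})$, for all $t$ in that interval. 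Feeding this displacement back, anti-concentration again shows the activation pattern of only $O(\sqrt{m})$ neurons flips between $\bm{\theta}(t)$ and $\bm{\theta}(0)$, uniformly in $\bm{x}$, so $\sup_{t}\sup_{\bm{x}}\norm{\nabla_{\bm{\theta}}f^{m}_{\bm{\theta}(t)}(\bm{x})-\nabla_{\bm{\theta}}f^{m}_{\bm{\theta}(0)}(\bm{x})}_{2}=o_{m}(1)$ and hence $\sup_{t}\sup_{\bm{x},\bm{x}'}\abs{K^{m}_{\bm{\theta}(t)}(\bm{x},\bm{x}')-K^{m}_{\bm{\theta}(0)}(\bm{x},\bm{x}')}=o_{m}(1)$. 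Combined with the first step this forces $\lambda_{\min}(K^{m}_{\bm{\theta}(t)}(\bm{X},\bm{X}))\geq\lambda_{0}-o_{m}(1)>\lambda_{0}/2$ for $m$ large, so the bootstrap interval is all of $[0,\infty)$ and we obtain
\[
\sup_{t\geq 0}\ \sup_{\bm{x},\bm{x}'\in\mathcal{X}}\abs{K^{m}_{\bm{\theta}(t)}(\bm{x},\bm{x}')-K_{d}(\bm{x},\bm{x}')}=o_{m}(1).
\]

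With this uniform kernel control I can transfer to the function values. Let $u(t)=f^{m}_{\bm{\theta}(t)}(\bm{X})-\bm{y}$ and $v(t)=f^{\NTK}_{t}(\bm{X})-\bm{y}$, both starting at $-\bm{y}$; subtracting their linear ODEs and applying variation of constants against the stable part $-\tfrac1n K_{d}(\bm{X},\bm{X})$ (eigenvalues $\leq-2c$) with the inhomogeneity bounded by $\tfrac1n\norm{K^{m}_{\bm{\theta}(t)}(\bm{X},\bm{X})-K_{d}(\bm{X},\bm{X})}_{2}\norm{u(t)}_{2}=o_{m}(1)e^{-ct}$ yields $\norm{u(t)-v(t)}_{2}\leq o_{m}(1)e^{-ct}$ for all $t$. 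Now fix $\bm{x}\in\mathcal{X}$, set $g(t,\bm{x})=f^{m}_{\bm{\theta}(t)}(\bm{x})-f^{\NTK}_{t}(\bm{x})$, and subtract \eqref{ntk:f:flow} from \eqref{nn:f:flow}:
\[
\dot{g}(t,\bm{x})=-\tfrac1n K^{m}_{\bm{\theta}(t)}(\bm{x},\bm{X})\bigl(u(t)-v(t)\bigr)-\tfrac1n\bigl(K^{m}_{\bm{\theta}(t)}(\bm{x},\bm{X})-K_{d}(\bm{x},\bm{X})\bigr)v(t).
\]
Since $\norm{K^{m}_{\bm{\theta}(t)}(\bm{x},\bm{X})}_{2}=O(\sqrt{n})$ uniformly (by the previous display and boundedness of $K_{d}$ on $\mathcal{X}$), $\norm{v(t)}_{2}\leq\norm{\bm{y}}_{2}e^{-ct}$, and both $\norm{u(t)-v(t)}_{2}$ and $\sup_{\bm{x}}\norm{K^{m}_{\bm{\theta}(t)}(\bm{x},\bm{X})-K_{d}(\bm{x},\bm{X})}_{2}$ are $o_{m}(1)$ (the former also carrying a factor $e^{-ct}$), we get $\sup_{\bm{x}\in\mathcal{X}}\abs{\dot{g}(t,\bm{x})}\leq o_{m}(1)e^{-ct}$; integrating over $t\in[0,\infty)$ and using $g(0,\bm{x})=0$ gives $\sup_{t\geq0}\sup_{\bm{x}\in\mathcal{X}}\abs{g(t,\bm{x})}\leq o_{m}(1)/c=o_{m}(1)$, which is the claim.

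The hard part will be the two uniform kernel estimates (the first step and the flip-count bound in the second step): because the ReLU derivative is discontinuous, a naive Lipschitz-plus-net argument does not apply, and one must instead bound the number of neurons whose activation pattern changes --- both between nearby inputs and between $\bm{\theta}(t)$ and $\bm{\theta}(0)$ --- via anti-concentration of $\langle\bm{w}_{r},\bm{x}\rangle+b_{r}$, and then make that bound uniform over all inputs through an additional covering / uniform-law-of-large-numbers argument for halfspace indicators. Everything downstream is a deterministic Gr\"{o}nwall computation with the fixed rate $c=\lambda_{0}/(2n)$, so once the uniform NNK-to-NTK convergence is in place the proposition follows.
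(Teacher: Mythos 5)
Your proposal is correct and follows essentially the same route as the paper's proof: uniform NNK-to-NTK convergence at initialization via Bernstein concentration on a net plus a count of activation-pattern flips, a lazy-regime bootstrap (exponential residual decay $\Rightarrow$ $O(\sqrt{\log m/m})$ parameter displacement $\Rightarrow$ few flipped neurons $\Rightarrow$ kernel and minimum eigenvalue preserved for all $t$), and then a variation-of-constants/Gr\"{o}nwall transfer from the kernel bound to $\sup_{t}\sup_{\bm{x}}|f^{m}_{\bm{\theta}(t)}(\bm{x})-f^{\NTK}_{t}(\bm{x})|$ by integrating the difference of the two flow equations from the common zero initialization. The only (immaterial) difference is that you bound the inhomogeneous term using the decay of $\|\bm{u}^{m}(s)\|_{2}$ directly from the bootstrap, whereas the paper's Lemma \ref{lem: u-u_NTK} applies Gr\"{o}nwall to $\|\bm{u}^{m}-\bm{u}^{\NTK}\|_{2}$ using the decay of $\|\bm{u}^{\NTK}(s)\|_{2}$, yielding an extra factor of $t$ that is still integrable.
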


\begin{proposition}\label{prop:kernel:approx}
     Given the training data $\{(\bm{x}_{i},y_{i}),i\in[n]\}$, for any $\epsilon>0$, if the width $m$ of the two-layer ReLU neural network is sufficiently large, then
    \begin{equation*}
    \sup_{t\geq 0}\sup_{\bm{x},\bm{x}'\in\mathcal{X}}| K_{\bm{\theta}(t)}^{m}(\bm{x},\bm{x}')-K_d(\bm{x},\bm{x}')|\leq\epsilon
    \end{equation*}
    holds with probability at least $1-o_{m}(1)$ where the randomness comes from the initialization of the parameters.
\end{proposition}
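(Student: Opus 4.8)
The plan is to write $K_{\bm\theta(t)}^{m} - K_d = \bigl(K_{\bm\theta(t)}^{m} - K_{\bm\theta(0)}^{m}\bigr) + \bigl(K_{\bm\theta(0)}^{m} - K_d\bigr)$ and control the two pieces separately, uniformly in $(t,\bm x,\bm x')$: (i) a concentration-plus-covering argument showing that \emph{at initialization} the NNK is uniformly close to the NTK on the compact set $\mathcal X\times\mathcal X$; (ii) a ``lazy training'' estimate showing that, with probability $1-o_m(1)$, \emph{every} neuron's parameters stay within a radius $o_m(1)$ of their initial values for \emph{all} $t\ge 0$; and (iii) a perturbation estimate showing that $K_{\bm\theta}^{m}$, as a function on $\mathcal X\times\mathcal X$, is insensitive to such small parameter changes. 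Steps (ii) and (iii) are intertwined through a bootstrap (continuity) argument in $t$.

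For step (i): at initialization $K_{\bm\theta(0)}^{m}(\bm x,\bm x')=\tfrac{2}{m}\sum_{r=1}^{m} g_r(\bm x,\bm x')$ is an average of i.i.d.\ terms that are bounded on $\mathcal X$ and whose common expectation is exactly $K_d(\bm x,\bm x')$ --- this is precisely the arccosine-kernel computation underlying \eqref{eq: NTK_formular}. A Bernstein/Hoeffding bound gives pointwise concentration; to make it uniform over the compact set $\mathcal X\times\mathcal X$ I would take a finite $\epsilon$-net and bound the oscillation of $K_{\bm\theta(0)}^{m}$ between neighbouring net points. The delicate point is that the factor $\sigma'(\langle\bm w_r,\bm x\rangle+b_r)=\mathbf{1}\{\langle\bm w_r,\bm x\rangle+b_r>0\}$ is discontinuous in $\bm x$; here I would invoke Gaussian anti-concentration to show that for two nearby points only an $O(\text{net radius})$-fraction of the $m$ neurons have differing activation patterns, and each such neuron perturbs the average by $O(1/m)$ after the $2/m$ normalization, which bounds the modulus of continuity uniformly off a low-probability event. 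Applied to the $n^{2}$ pairs $(\bm x_i,\bm x_j)$ this also yields $\lambda_{\min}\bigl(K_{\bm\theta(0)}^{m}(\bm X,\bm X)\bigr)\ge \lambda_0/2$ with probability $1-o_m(1)$, where $\lambda_0:=\lambda_{\min}(K_d(\bm X,\bm X))>0$ by Proposition \ref{PD} (and, when $d=1$, $\lambda_0\asymp d_{\min}$ by Theorem \ref{thm:spectral:d=1:L=1}$(i)$).

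For steps (ii)--(iii): the zero-initialization convention gives $f_{\bm\theta(0)}^{m}\equiv 0$, hence $\emprisk(f_{\bm\theta(0)}^{m})=\tfrac{1}{2n}\|\bm y\|_2^2=O(1)$. Let $T$ be the first time at which either some neuron has moved by more than a threshold $R=o_m(1)$ (chosen below) or $\lambda_{\min}\bigl(K_{\bm\theta(t)}^{m}(\bm X,\bm X)\bigr)$ drops below $\lambda_0/2$. On $[0,T]$, \eqref{nn:f:flow} gives $\tfrac{\dx}{\dx t}\|f_{\bm\theta(t)}^{m}(\bm X)-\bm y\|_2^2\le -\tfrac{\lambda_0}{n}\|f_{\bm\theta(t)}^{m}(\bm X)-\bm y\|_2^2$, so the residual decays like $e^{-\lambda_0 t/(2n)}$; plugging this into \eqref{nn:theta:flow} and using that each $\nabla_{\bm w_r}f_{\bm\theta}^{m}$, $\partial_{a_r}f_{\bm\theta}^{m}$, $\partial_{b_r}f_{\bm\theta}^{m}$ carries a factor $\sqrt{2/m}$ shows that $\int_0^\infty\|\dot{\bm w}_r(t)\|_2\,\dx t$, $\int_0^\infty|\dot a_r(t)|\,\dx t$ and $\int_0^\infty|\dot b_r(t)|\,\dx t$ are all $O\!\bigl(m^{-1/2}\,\mathrm{poly}(n,\log m)/\lambda_0\bigr)$, which is $<R$ once $m$ is large (with $n$ fixed these $\mathrm{poly}(n)$ factors are harmless constants). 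For step (iii) I would then show that moving every neuron by at most $R$ changes $K_{\bm\theta}^{m}(\bm x,\bm x')$ by $O(R)+O(m^{-1/2})$ uniformly in $\bm x,\bm x'$: the $a_r,b_r,b$ contributions depend smoothly on the parameters and are handled directly, while the $\bm w_r$ contributions split into the neurons whose activation pattern is unchanged (change $O(R)$ after normalization) and the $O(\sqrt m)$ neurons --- in expectation, by anti-concentration on the initial Gaussians --- whose pattern flips, each affecting $K_{\bm\theta}^{m}$ by $O(1/m)$. Hence on $[0,T]$ the NNK stays within $o_m(1)$ of $K_{\bm\theta(0)}^{m}$, so $\lambda_{\min}\bigl(K_{\bm\theta(t)}^{m}(\bm X,\bm X)\bigr)$ never drops below $\lambda_0/2$ and no neuron reaches displacement $R$; therefore $T=\infty$. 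Combining with step (i) gives $\sup_{t\ge 0}\sup_{\bm x,\bm x'\in\mathcal X}|K_{\bm\theta(t)}^{m}(\bm x,\bm x')-K_d(\bm x,\bm x')|\le \epsilon$ with probability $1-o_m(1)$.

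The step I expect to be the main obstacle is making the uniformity rigorous in the presence of the non-differentiable ReLU: there is no Lipschitz-in-$\bm\theta$ bound on $\nabla_{\bm\theta}f_{\bm\theta}^{m}$, so both the $\bm x$-net argument in step (i) and the $\bm\theta$-perturbation argument in step (iii) must instead be driven by quantitative Gaussian anti-concentration controlling the number of neurons with a borderline activation pattern, and this control must hold \emph{simultaneously} for all $\bm x\in\mathcal X$ and all $t\ge 0$ --- which is exactly where the covering of $\mathcal X$ and the bootstrap in $t$ have to be interlocked carefully.
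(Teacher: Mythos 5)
Your proposal is correct and follows essentially the same route as the paper: the same decomposition through $K_{\bm\theta(0)}^{m}$, an $\epsilon$-net plus Bernstein concentration for the initialization term, a count of neurons with borderline pre-activations (the paper's event $\mathcal{R}$) to handle the ReLU discontinuity both across the net and under parameter perturbations, and a stopping-time bootstrap showing the parameters never leave an $o_m(1)$-ball so that the lazy-regime estimates hold for all $t\ge 0$. The only quantitative difference is bookkeeping: the paper takes the displacement radius $R\asymp m^{-\alpha}$ with $\alpha<1/2$ and bounds the borderline count by $m^{\gamma}$ with $\gamma>1-\alpha$, and each flipped neuron contributes $O(\log m/m)$ rather than $O(1/m)$ because of the $\sqrt{\log m}$ bound on the initial weights, but this does not change the conclusion.
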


The proofs of Proposition \ref{prop:funct:approx} and Proposition \ref{prop:kernel:approx} are deferred to Supplementary Material. The existing results only showed that as $m \to\infty$, the time-varying NNK $K_{\bm{\theta}(t)}^{m}$ and the neural network  $f_{\bm{\theta}(t)}^{m}$ converge pointwise to the time-invariant NTK $K_{d}$ and the NTK regression function $f_{t}^{\NTK}$ (\cite{jacot2018neural, du2018gradient,lee2019wide, arora2019exact}). 
The Proposition \ref{prop:funct:approx}  and \ref{prop:kernel:approx} proved a much stronger uniformly convergence statement.

\section{The generalization performance of wide neural networks on $\mathbb{R}$}\label{subsec:generalization}

In order to get a meaningful discussion about the generalization performance of a neural network, we have to specify a class of functions to which $f_{\star}$ belongs. 
In this paper, we make the following assumption:
\begin{assumption}\label{assump:f_star}
    The regression function $f_{\star}\in \mathcal{H}_{1}$ and $\| f_{\star}\|_{\mathcal{H}_{1}}\leq R$ for some constant $R$, where $\mathcal{H}_{1}$ is the RKHS associated to the kernel $K_{1}$.
\end{assumption}

Proposition \ref{prop:funct:approx} and Theorem \ref{thm:risk:approx} shows that $f^{m}_{\bm{\theta}(t)}$ uniformly converges to $f^{\NTK}_{t}$ and $\excrisk(f_{\bm{\theta}(t)}^{m})$ is well approximate by $\excrisk(f_{t}^{\NTK})$, thus we can focus on studying the generalization ability of the NTK regression function $f^{\NTK}_{t}$. 
It would be easier to stick with the usual assumptions appeared in the kernel regression literature (see e.g., \cite{caponnetto2007optimal, yao2007early, raskutti2014early, blanchard2018optimal, lin2020optimal}). This is exactly the Assumption \ref{assump:f_star}.

\subsection{Wide neural networks with early stopping achieve the minimax rate}

Early stopping, an implicit regularization strategy, is widely applied in training various models such as kernel ridgeless regression, neural networks, etc. 
Lots of solid research has provided theoretical guarantees for early stopping (see e.g. \cite{yao2007early, raskutti2014early, blanchard2018optimal, lin2020optimal}), where the optimal stopping time is depending on the decay rate of eigenvalue associated to the kernel. 
Note that Theorem \ref{thm:spectral:d=1:L=1} gives us the eigenvalue decay rate of $K_{1}$ and Theorem \ref{thm:risk:approx} guarantees the excess risk of the NTK regression function $f_{t}^{\NTK}$ is an accurate alternative of the excess risk of the neural network $f^{m}_{\bm{\theta}(t)}$, thus we have the following Theorem \ref{thm:nn:early:stopping:d=1}.

\begin{theorem}\label{thm:nn:early:stopping:d=1} Suppose Assumption \ref{assump:f_star} holds and we observed $n$ i.i.d. samples $\{(\bm{x}_{i},y_{i}),i\in[n]\}$  from the model \eqref{equation:true_model}. 
    For any given $\delta\in(0,1)$, if one trains a two-layer neural network with width $m$ that is sufficiently large and stops the gradient flow at time $t_{\star}\propto n^{2/3}$, then for sufficiently large $n$, there exists a constant $C$ independent of $\delta$ and $n$, such that   
    \begin{equation}\label{eq:excrisk nn}
\excrisk(f_{\bm{\theta}(t_{\star})}^{m})\leq Cn^{-\frac{2}{3}}\log^{2}\frac{6}{\delta}
    \end{equation}
    holds with probability at least $(1-\delta)(1-o_{m}(1))$ where the randomness comes from the joint distribution of the random samples and the random initialization of parameters in the  neural network $f_{\bm{\theta}(0)}^{m}$.
\end{theorem}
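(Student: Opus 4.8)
The plan is to first invoke Theorem~\ref{thm:risk:approx} to replace the neural-network excess risk by that of the NTK regression function, and then run a bias--variance analysis of the kernel gradient flow, balancing the two contributions using the spectral rates of Theorem~\ref{thm:spectral:d=1:L=1}. Concretely, I would fix the target rate $r_{n}:=Cn^{-2/3}\log^{2}(6/\delta)$, choose the width $m$ large enough (using Theorem~\ref{thm:risk:approx} with $\epsilon=\tfrac12 Cn^{-2/3}$) that $\sup_{t\ge 0}|\excrisk(f^{m}_{\bm\theta(t)})-\excrisk(f^{\NTK}_{t})|\le\tfrac12 Cn^{-2/3}$ holds with probability $1-o_{m}(1)$ over the initialization, and then reduce the claim to showing that the NTK regression function obeys $\excrisk(f^{\NTK}_{t_{\star}})\le\tfrac12 Cn^{-2/3}\log^{2}(6/\delta)$ with probability at least $1-\delta$ over the samples, for a suitable $t_{\star}\asymp n^{2/3}$. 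Since the two failure events concern disjoint sources of randomness (initialization vs.\ samples), conditioning gives the stated probability $(1-\delta)(1-o_{m}(1))$, and the triangle inequality finishes the reduction.

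Next I would work from the closed form \eqref{ntk:solution}. Writing $\bm y=f_{\star}(\bm X)+\bm\varepsilon$ and recognizing that the map $\bm z\mapsto K_{1}(\cdot,\bm X)K_{1}(\bm X,\bm X)^{-1}(\bm I-e^{-K_{1}(\bm X,\bm X)t/n})\bm z$ is the spectral filter $g_{t}(\lambda)=(1-e^{-t\lambda})/\lambda$ (the continuous-time gradient-flow, a.k.a.\ Showalter, regularization, which has infinite qualification and hence no saturation) applied to the empirical integral operator $T_{\bm X}$ of $K_{1}$ on $L^{2}(\mathcal X,\mu_{\mathcal X})$, I would decompose $f^{\NTK}_{t}-f_{\star}=B_{t}+V_{t}$ into a deterministic \emph{bias} term $B_{t}$ driven by $f_{\star}(\bm X)$ and a mean-zero \emph{variance} term $V_{t}$ driven by $\bm\varepsilon$. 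The plan is to bound $\|B_{t}\|_{L^{2}}$ and $\|V_{t}\|_{L^{2}}$ by the integral-operator technique, comparing $T_{\bm X}$ with the population operator $T$ through Bernstein-type inequalities for self-adjoint operators (legitimate since $\mathcal X$ is compact and $K_{1}$ is bounded, so $\|T\|$ and $\sup_{\bm x}K_{1}(\bm x,\bm x)$ are finite), using the elementary residual estimate $\sup_{\lambda\ge 0}\lambda^{a}e^{-t\lambda}\le(a/t)^{a}$ for the filter's residual $r_{t}(\lambda)=e^{-t\lambda}$, and a sub-Gaussian quadratic-form tail bound for the Gaussian noise.

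To obtain the rate I would then insert the spectral information. Assumption~\ref{assump:f_star} says $f_{\star}\in\mathcal H_{1}$, equivalently $f_{\star}=T^{1/2}h$ with $\|h\|_{L^{2}}=\|f_{\star}\|_{\mathcal H_{1}}\le R$, i.e.\ a source condition of order $\tfrac12$; with the residual estimate this yields $\|B_{t}\|_{L^{2}}^{2}=O(R^{2}/t)$ up to operator-concentration corrections. Theorem~\ref{thm:spectral:d=1:L=1}$\,ii)$ gives $\lambda_{j}\asymp j^{-2}$, hence effective dimension $\mathcal N(1/t):=\mathrm{tr}\big(T(T+t^{-1})^{-1}\big)=\sum_{j\ge 1}\frac{1}{1+(t\lambda_{j})^{-1}}=O(\sqrt t)$, so the high-probability variance bound reads $\|V_{t}\|_{L^{2}}\le C\sqrt{\sigma^{2}\mathcal N(1/t)/n}\,\log(6/\delta)=O\big(\tfrac{\sigma}{\sqrt n}t^{1/4}\log(6/\delta)\big)$ with probability $\ge 1-\delta$. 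Choosing $t_{\star}\asymp(nR^{2}/\sigma^{2})^{2/3}\asymp n^{2/3}$ equates bias $O(R^{2}/t_{\star})=O(n^{-2/3})$ with variance$^{2}$ $O(\tfrac{\sigma^{2}}{n}\sqrt{t_{\star}}\log^{2}(6/\delta))=O(n^{-2/3}\log^{2}(6/\delta))$, and adding back the $\tfrac12 Cn^{-2/3}$ from the first step proves \eqref{eq:excrisk nn}.

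The hard part will be the self-contained proof of the bias and variance bounds for the \emph{continuous-time} gradient-flow filter in the EDR-$2$/source-$\tfrac12$ regime, in particular the operator concentration $\|(T+t^{-1})^{-1/2}(T-T_{\bm X})(T+t^{-1})^{-1/2}\|$ and the control of the effective-dimension term: this is exactly where the analogous argument of \cite{hu2021regularization} leaned on the (flagged as unproven) Corollary~3 of \cite{raskutti2014early}, so it must be built from scratch. A point worth checking here is that the tiny minimal eigenvalue of $K_{1}(\bm X,\bm X)$ from Theorem~\ref{thm:spectral:d=1:L=1}$\,i)$ is harmless, because in $g_{t}(K_{1}(\bm X,\bm X)/n)$ the factor $K_{1}(\bm X,\bm X)^{-1}$ is always multiplied by $(\bm I-e^{-K_{1}(\bm X,\bm X)t/n})$, whose small-eigenvalue directions are damped by a factor of order $t\lambda/n$, so $g_{t}$ has operator norm at most $t/n$ and the flow is well behaved. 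A secondary bookkeeping issue is to track the logarithmic factors --- each of the noise tail and the Bernstein operator bound contributes a $\log(6/\delta)$, whose square gives the $\log^{2}(6/\delta)$ --- and to verify that the analysis is valid once $n$ is large enough that $\mathcal N(1/t_{\star})/n=O(n^{-2/3})\to 0$.
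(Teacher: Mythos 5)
Your proposal is correct and its skeleton coincides with the paper's: first use Theorem~\ref{thm:risk:approx} (with $\epsilon$ of order $n^{-2/3}$) to transfer the bound from $\excrisk(f^{m}_{\bm\theta(t_\star)})$ to $\excrisk(f^{\NTK}_{t_\star})$, with the two sources of randomness handled exactly as you describe, and then establish the rate $n^{-2/3}\log^{2}(6/\delta)$ for the NTK gradient flow stopped at $t_\star\propto n^{2/3}$ using the source condition $f_\star=T^{1/2}h$ implied by Assumption~\ref{assump:f_star} and the EDR $\lambda_j\asymp j^{-2}$ from Theorem~\ref{thm:spectral:d=1:L=1}. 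The one real difference is in how the kernel-regression step is discharged: you propose to carry out the full integral-operator bias--variance analysis (operator Bernstein concentration, effective dimension $\mathcal N(1/t)=O(\sqrt t)$, residual bound $\sup_\lambda\lambda^a e^{-t\lambda}\le(a/t)^a$) from scratch, and you flag this as the ``hard part'' because \cite{hu2021regularization} relied on an unproven statement. The paper instead observes that the continuous-time gradient flow is a spectral algorithm with filter $\mathcal G_\lambda(u)=(1-e^{-u/\lambda})/u$, verifies the filter-function conditions (qualification arbitrary, $E=1$, $F_\tau=(\tau/e)^\tau$), and then directly invokes Corollary~4.4 of \cite{lin2020optimal}, which already yields $\excrisk(f^{\NTK}_{t_\star})\le Cn^{-2/3}\log^{2}(6/\delta)$ with probability $1-\delta$; so the rebuilding you anticipate is unnecessary --- the needed result is a proven, citable theorem, and the gap in the literature is only the one already closed by Theorem~\ref{thm:risk:approx} and the EDR computation. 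Your version would be more self-contained but substantially longer; the computations you sketch (effective dimension, bias/variance balance, the harmlessness of the small eigenvalue of $K_1(\bm X,\bm X)$ because $g_t$ has operator norm at most $t/n$) are all consistent with what the cited result delivers.
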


Researchers have established (\cite{blanchard2018optimal}) the following minimax rate of regression over the RKHS $\mathcal{H}_{1}$ associated to $K_{1}$:
    \begin{equation}    \inf_{\hat{f}_{n}}\sup_{f_{\star}\in\mathcal{H}_{1},\| f_{\star}\|_{\mathcal{H}_{1}}\leq R}\Expc\excrisk(\hat{f}_{n})=\Omega(n^{-\frac{2}{3}}).
    \end{equation}
Thus, we have proved that training a wide neural two-layer neural network with the early stopping strategy achieves the optimal rate.

The proof of Theorem \ref{thm:nn:early:stopping:d=1} can be found in Supplementary Material. Theorem \ref{thm:nn:early:stopping:d=1} rigorously shows the fully trained wide two-layer ReLU neural network with early stopping is minimax rate optimal.

\subsection{Overfitted Neural Networks generalize poorly}

In this subsection, we are more interested in the generalization performance of $f_{\bm{\theta}(t)}^{m}(x)$ for sufficiently large $t$ such that $f_{\bm{\theta}(t)}^{m}(x)$ can (nearly) fit the given data. 

To be more concrete, suppose that we observed  $n$ equally-distanced one-dimensional data $\{(x_{i},y_{i}) \mid x_{i}=\frac{i-1}{n-1}, i \in [n]\}$. The following theorem shows that  $f_{\bm{\theta}(t)}^{m}(x)$ almost linearly interpolates these data points when $t$ is sufficiently large, therefore it can not generalize well.
We remind that a linear interpolation of the equally-distanced  data is given by:
\begin{align}
    f_{\LI}(x)=y_{i}+\frac{y_{i+1}-y_{i}}{x_{i+1}-x_{i}}(x-x_{i}), \mbox{ when } x\in [x_{i},x_{i+1}].
\end{align}

\begin{theorem}[Overfitted networks generalize poorly]\label{thm:bad_gen}
    Suppose that we have observed $n$ data $\{(x_{i},y_{i}), i\in [n]\}$ from the model \eqref{equation:true_model} where $x_i=\frac{i-1}{n-1}$, $i\in [n]$. When the width $m$ is sufficiently large, the following statements hold.

        $i)$  There exist some absolute constants $C_{1}$, $C_{2}$ and $C_{3}$ such that for any $t>C_{1}n^{2}\log n$, we have
        \begin{align}
 \sup_{x\in [0,1]}|f_{\bm{\theta}(t)}^{m}(x)-f_{\LI}(x)|\leq C_3 \sqrt{\log n} /(n-1)^{2}
\end{align}
        holds with probability at least $1-\frac{C_{2}}{n}$.

        \vspace{1mm}
        $ii)$ There exist some positive constant $C_{4}$ depending only on $\sigma$ and absolute constant $C_{5}$ 
        such that for any $t>C_{1}n^{2}\log n$, we have 
        $\excrisk(f_{\bm{\theta}(t)}^{m}) \geq C_4$
        holds with probability at least $1-\frac{C_{5}}{n}$.
\end{theorem}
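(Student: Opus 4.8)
The plan is to establish part $(i)$ first and deduce part $(ii)$ from it. The two preliminary reductions are routine. By Proposition~\ref{prop:funct:approx}, for the target accuracy $\epsilon=\tfrac12 C_3\sqrt{\log n}/(n-1)^2$ we may take $m$ large enough that $\sup_{t\ge 0}\sup_{x\in[0,1]}|f^{m}_{\bm\theta(t)}(x)-f^{\NTK}_t(x)|\le\epsilon$ with probability $1-o_m(1)$, so it suffices to control the NTK regression function $f^{\NTK}_t$. Let $\hat f(x)=K_1(x,\bm X)K_1(\bm X,\bm X)^{-1}\bm y$ be the $t=\infty$ limit of \eqref{ntk:solution}; from $f^{\NTK}_t(x)-\hat f(x)=-K_1(x,\bm X)K_1(\bm X,\bm X)^{-1}e^{-K_1(\bm X,\bm X)t/n}\bm y$, the crude bounds $\|K_1(x,\bm X)\|_2\le C\sqrt n$, $\|K_1(\bm X,\bm X)^{-1}\|_2=\lambda_{\min}^{-1}$, $\|e^{-K_1(\bm X,\bm X)t/n}\|_2=e^{-\lambda_{\min}t/n}$, $\|\bm y\|_2\le C\sqrt n$ (the last with probability $\ge1-n^{-1}$ by Gaussian concentration), together with $\lambda_{\min}=\lambda_{\min}(K_1(\bm X,\bm X))\ge c/(n-1)$ from Theorem~\ref{thm:spectral:d=1:L=1}$(i)$ (here $d_{\min}=1/(n-1)$), give $|f^{\NTK}_t(x)-\hat f(x)|\le C n^{2}e^{-ct/n^2}\le C(n-1)^{-2}$ for all $t>C_1 n^2\log n$, once $C_1$ is chosen with $cC_1\ge 4$. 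Thus everything comes down to proving $\sup_{x\in[0,1]}|\hat f(x)-f_{\LI}(x)|\le C\sqrt{\log n}/(n-1)^2$.

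The key observation is that $\hat f$ solves a linear ODE between data points. Differentiating \eqref{eq: NTK_formular} with $d=1$ twice in $x$ yields, in the distributional sense, $\partial_x^2 K_1(x,x')=-\tfrac2\pi\delta(x-x')+\tfrac{4}{\pi}\,\tfrac{|x-x'|}{(1+x^2)^2}$. Writing $\hat f=\sum_i\alpha_i K_1(\cdot,x_i)$ with $\bm\alpha=K_1(\bm X,\bm X)^{-1}\bm y$, it follows that on each open interval $(x_k,x_{k+1})$ the function $\hat f$ is $C^2$ with $\hat f''(x)=\tfrac{4}{\pi(1+x^2)^2}\,g(x)$, where $g(x):=\sum_i\alpha_i|x-x_i|$ is continuous, piecewise linear, with breakpoints only at the $x_i$. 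Since $f_{\LI}$ is linear on $(x_k,x_{k+1})$ and $\hat f(x_j)=f_{\LI}(x_j)=y_j$ for all $j$, the difference $h:=\hat f-f_{\LI}$ satisfies the two-point problem $h''=\hat f''$ on $(x_k,x_{k+1})$, $h(x_k)=h(x_{k+1})=0$, whence $\|h\|_{L^\infty[x_k,x_{k+1}]}\le\tfrac{(x_{k+1}-x_k)^2}{8}\|\hat f''\|_{L^\infty(x_k,x_{k+1})}\le\tfrac{1}{2\pi(n-1)^2}\max_{[x_k,x_{k+1}]}|g|$. Because $g$ is affine on $[x_k,x_{k+1}]$, $\max_{[x_k,x_{k+1}]}|g|\le\max_{j\in[n]}|g(x_j)|$, so part $(i)$ reduces to the single estimate $\max_{j\in[n]}|g(x_j)|=\max_{j\in[n]}\bigl|\bm q_j\tran K_1(\bm X,\bm X)^{-1}\bm y\bigr|\le C\sqrt{\log n}$, where $\bm q_j:=(|x_j-x_i|)_{i\in[n]}$.

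To prove this I would split $\bm y=f_\star(\bm X)+\bm\varepsilon$. Writing $\bm c=K_1(\bm X,\bm X)^{-1}f_\star(\bm X)$ for the coefficient vector of the minimum-norm interpolant $\hat f_{\mathrm{sig}}$ of $f_\star$ (so $\bm c\tran K_1(\bm X,\bm X)\bm c=\|\hat f_{\mathrm{sig}}\|_{\mathcal H_1}^2\le R^2$ by Assumption~\ref{assump:f_star}), the signal contribution is $\bm q_j\tran\bm c$, which by Cauchy--Schwarz in the $K_1(\bm X,\bm X)$-inner product is $\le\sqrt{\bm q_j\tran K_1(\bm X,\bm X)^{-1}\bm q_j}\cdot R$; the noise contribution $\bm q_j\tran K_1(\bm X,\bm X)^{-1}\bm\varepsilon$ is, conditionally on the deterministic design, a centred Gaussian of variance $\sigma^2\,\bm q_j\tran K_1(\bm X,\bm X)^{-2}\bm q_j$. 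Both hinge on one spectral fact: the ``corner'' vector $\bm q_j$ is concentrated at the top of the spectrum of $K_1(\bm X,\bm X)$ — its squared coefficients against the eigenvectors $\bm u_\ell$ decay like $n/\ell^4$ — so that against the eigenvalues $\mu_\ell\asymp n/\ell^2$ (the discrete form of the decay rate in Theorem~\ref{thm:spectral:d=1:L=1}$(ii)$) one has $\bm q_j\tran K_1(\bm X,\bm X)^{-1}\bm q_j\asymp\sum_{\ell\le n}\ell^{-2}$ and $\bm q_j\tran K_1(\bm X,\bm X)^{-2}\bm q_j\asymp\sum_{\ell\le n}n^{-1}$, both $O(1)$ uniformly in $j$. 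A Gaussian tail bound and a union bound over $j\in[n]$ then yield $\max_j|g(x_j)|\le C\sqrt{\log n}$ with probability $\ge1-n^{-1}$, finishing part $(i)$. For part $(ii)$: the above gives $\|f^{m}_{\bm\theta(t)}-f_{\LI}\|_{L^\infty[0,1]}=o(1)$ with probability $\ge(1-C/n)(1-o_m(1))$, hence $\|f^{m}_{\bm\theta(t)}-f_{\LI}\|_{L^2}=o(1)$, while $f_{\LI}-f_\star$ is, up to an $o(1)$ term controlled by the smoothness of $f_\star$, the piecewise-linear interpolant of $\bm\varepsilon$, whose squared $L^2$ norm concentrates around $\tfrac23\sigma^2$, so $\|f_{\LI}-f_\star\|_{L^2}^2\ge c\sigma^2$ with probability $\ge1-n^{-1}$; the triangle inequality then gives $\excrisk(f^{m}_{\bm\theta(t)})=\|f^{m}_{\bm\theta(t)}-f_\star\|_{L^2}^2\ge(\|f_{\LI}-f_\star\|_{L^2}-\|f^{m}_{\bm\theta(t)}-f_{\LI}\|_{L^2})^2\ge C_4$.

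The hard part is the spectral estimate in the third paragraph. Although the minimum-norm interpolant $\hat f$ of noisy data has a \emph{large} RKHS norm (of order $\sigma n$), its pairing with the corner functions $|x_j-\cdot|$ must nevertheless be controlled at the much smaller scale $\sqrt{\log n}$; making this rigorous requires a sharp, non-asymptotic understanding of how functions with a single corner decompose in the eigenbasis of the Gram matrix $K_1(\bm X,\bm X)$ — precisely the quantitative content behind the Mercer decomposition and the eigenvalue rate established for Theorem~\ref{thm:spectral:d=1:L=1}. By contrast, the identity $\partial_x^2 K_1(x,x')=-\tfrac2\pi\delta(x-x')+\tfrac4\pi|x-x'|/(1+x^2)^2$, which converts ``training to overfit'' into a per-interval linear ODE and is the conceptual core of the argument, is an elementary computation.
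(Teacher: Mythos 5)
Your reductions are sound and essentially parallel the paper's: Proposition \ref{prop:funct:approx} handles the network-to-NTK step, the spectral-gap bound with $\lambda_{\min}\gtrsim 1/(n-1)$ from Theorem \ref{thm:spectral:d=1:L=1}$(i)$ handles $f^{\NTK}_t\to f^{\NTK}_\infty$ for $t\gtrsim n^2\log n$, and the distributional identity $\partial_x^2K_1(x,x')=-\tfrac2\pi\delta(x-x')+\tfrac4\pi|x-x'|/(1+x^2)^2$ together with the two-point boundary-value bound is a clean repackaging of the paper's Taylor-expansion argument (Lemma \ref{LI} and Lemma \ref{lemma:K_derivative}). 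Part $(ii)$ also matches the paper's Lemma \ref{LI_not_good}. But the entire theorem then hangs on the single estimate $\max_j|\bm{q}_j\tran K_1(\bm{X},\bm{X})^{-1}\bm{y}|\leq C\sqrt{\log n}$, and here your argument has a genuine gap: the ``spectral fact'' that the corner vectors $\bm{q}_j$ have squared eigenvector coefficients decaying like $n/\ell^4$ against Gram-matrix eigenvalues $\mu_\ell\asymp n/\ell^2$ is asserted, not proved, and you acknowledge as much. Nothing in the paper supplies it: Theorem \ref{thm:spectral:d=1:L=1}$(ii)$ concerns the Mercer eigenvalues of the integral operator, not the eigenvectors of the finite Gram matrix, and even the transfer $\mu_\ell\asymp n/\ell^2$ uniformly over $\ell\leq n$ is unavailable --- Lemma \ref{lemma:sample_eigen_to_total_eigen} is only an asymptotic statement for each fixed $\ell$. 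Since both the Cauchy--Schwarz bound on the signal term and the variance of the Gaussian noise term rest on this unproved decomposition, the crux of part $(i)$ is missing.

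The paper closes exactly this hole by a completely different, elementary (if intricate) route in Lemma \ref{prop:bound_second_derivative}: it splits $K=G+\Pi$, exploits the fact that the inverse of the distance-kernel Gram matrix $G^{-1}$ is explicitly tridiagonal-with-corners (so that $K''(\xi_i,\bm{X})G^{-1}$ has at most four nonzero entries, at positions $\{1,i,i+1,n\}$), and then adds the rank-one pieces of $\Pi$ one at a time via Sherman--Morrison, tracking entrywise bounds (Lemmas \ref{lemma:B_C_k}, \ref{lemma:G_inv_property}, \ref{lem:bound:C}) to conclude that the full row vector $K''(\xi_i,\bm{X})K^{-1}$ has four $O(1)$ entries and the rest of size $O(1/(n-1))$. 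Pairing with $\bm{y}$ then costs only $\|\bm{y}\|_\infty\lesssim\sqrt{\log n}$ (Lemma \ref{lem:bound_y}), with no appeal to the eigenstructure of $K$ and no need for $f_\star\in\mathcal{H}_1$ (boundedness of $f_\star$ suffices, which is all the theorem assumes). If you want to salvage your spectral route, you would need to actually establish the eigenvector-coefficient decay for $K_1(\bm{X},\bm{X})$ on the equispaced grid --- a perturbation analysis of comparable difficulty to the paper's Sherman--Morrison computation.
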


Theorem \ref{thm:bad_gen} $i)$ shows that the overfitted neural network is nearly a linear interpolation (e.g., shown in Figure \ref{fig:LI}(a)). To the best of our knowledge, this is the first result explicitly showing how the overfitted neural network interpolates the data. 
We have to emphasize that not every kernel interpolation  (kernel ridgeless regression) is nearly linear interpolation. For example, it is clear that the radial basis function(RBF) kernel interpolation interpolates the data nonlinearly, shown in Figure \ref{fig:LI}(b).
Figure \ref{fig:LI}(c) shows that the maximum gap between the overfitted neural network and linear interpolation is exactly $O(\frac{1}{n^2})$, which is in line with the Theorem \ref{thm:bad_gen} $i)$. 
 
Theorem \ref{thm:bad_gen} $ii)$ shows that the generalization error of overfitted neural networks has a constant lower bound at least for the equally-distanced data. It strongly suggests that overfitted neural networks can not generalize well, which contradicts the ``benign overfitting phenomenon''. So in the next section, we will provide an explanation to reconcile our theoretical result and the ``benign overfitting phenomenon''.

\begin{figure}[htbp]
    \begin{minipage}[t]{0.33\linewidth}
      \centering
      \includegraphics[width=\textwidth]{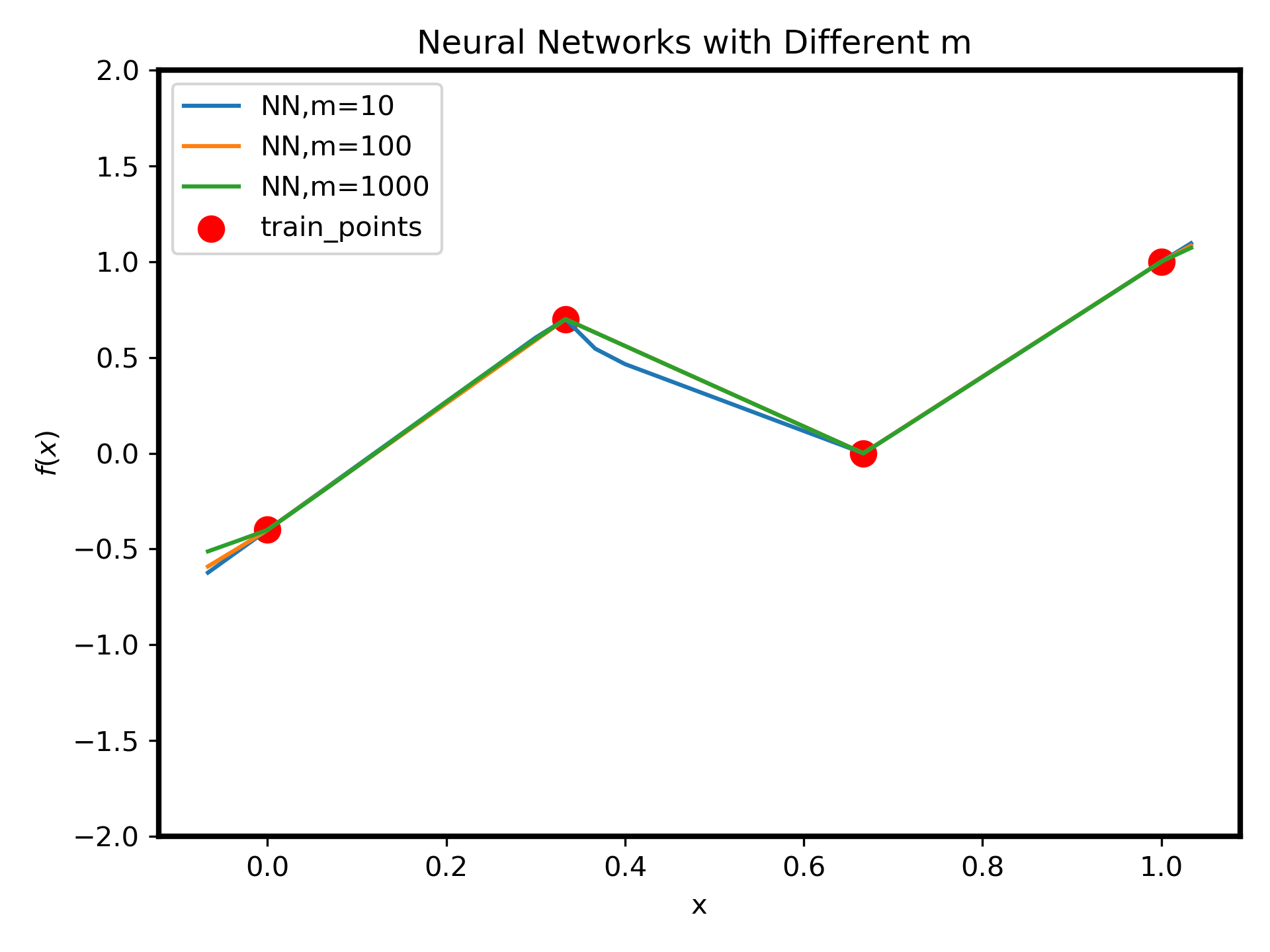}
      \centerline{(a)}
  \end{minipage}%
  \begin{minipage}[t]{0.33\linewidth}
      \centering
      \includegraphics[width=\textwidth]{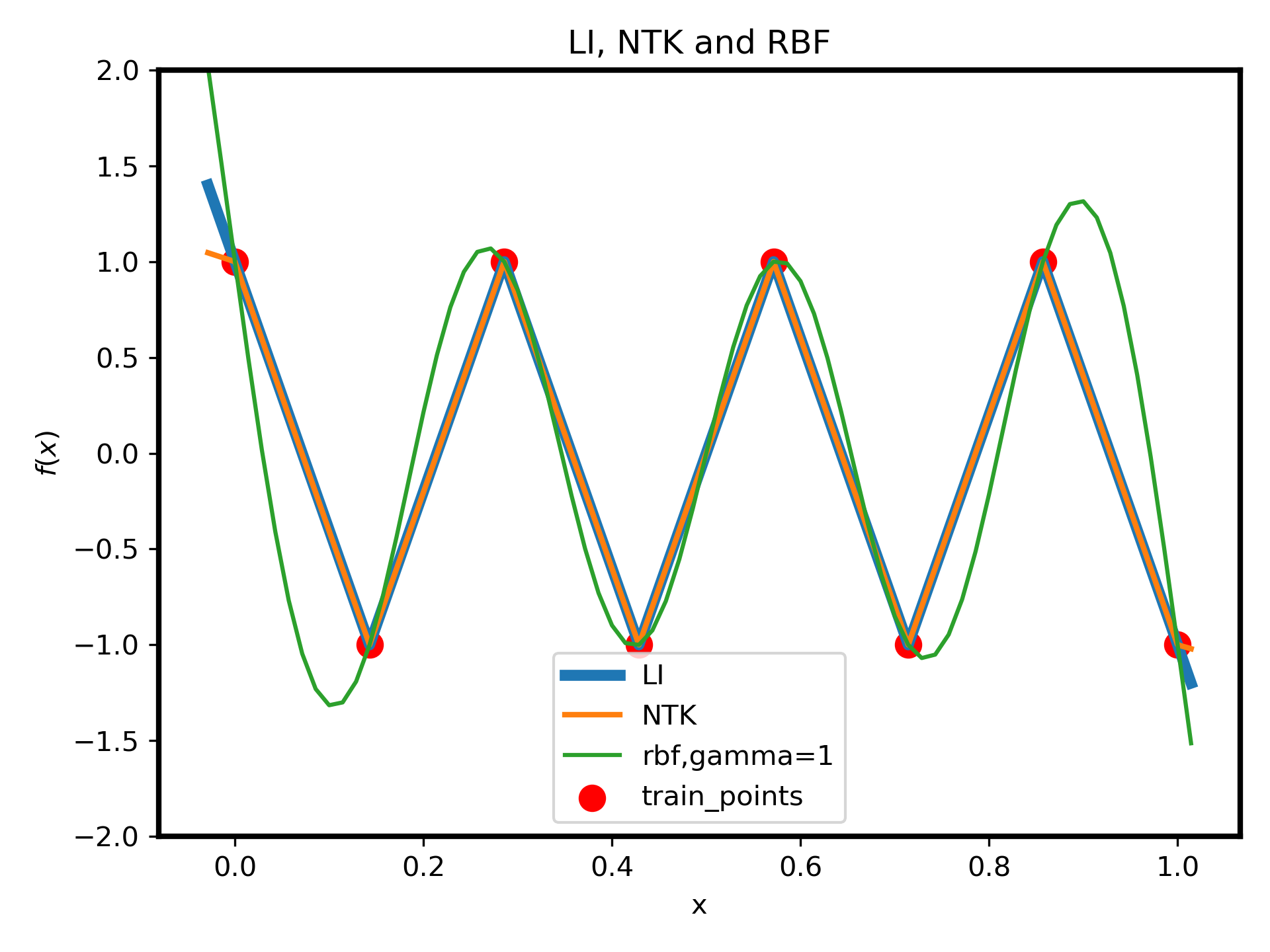}
      \centerline{(b)}
  \end{minipage}%
  \begin{minipage}[t]{0.33\linewidth}
      \centering
      \includegraphics[width=\textwidth]{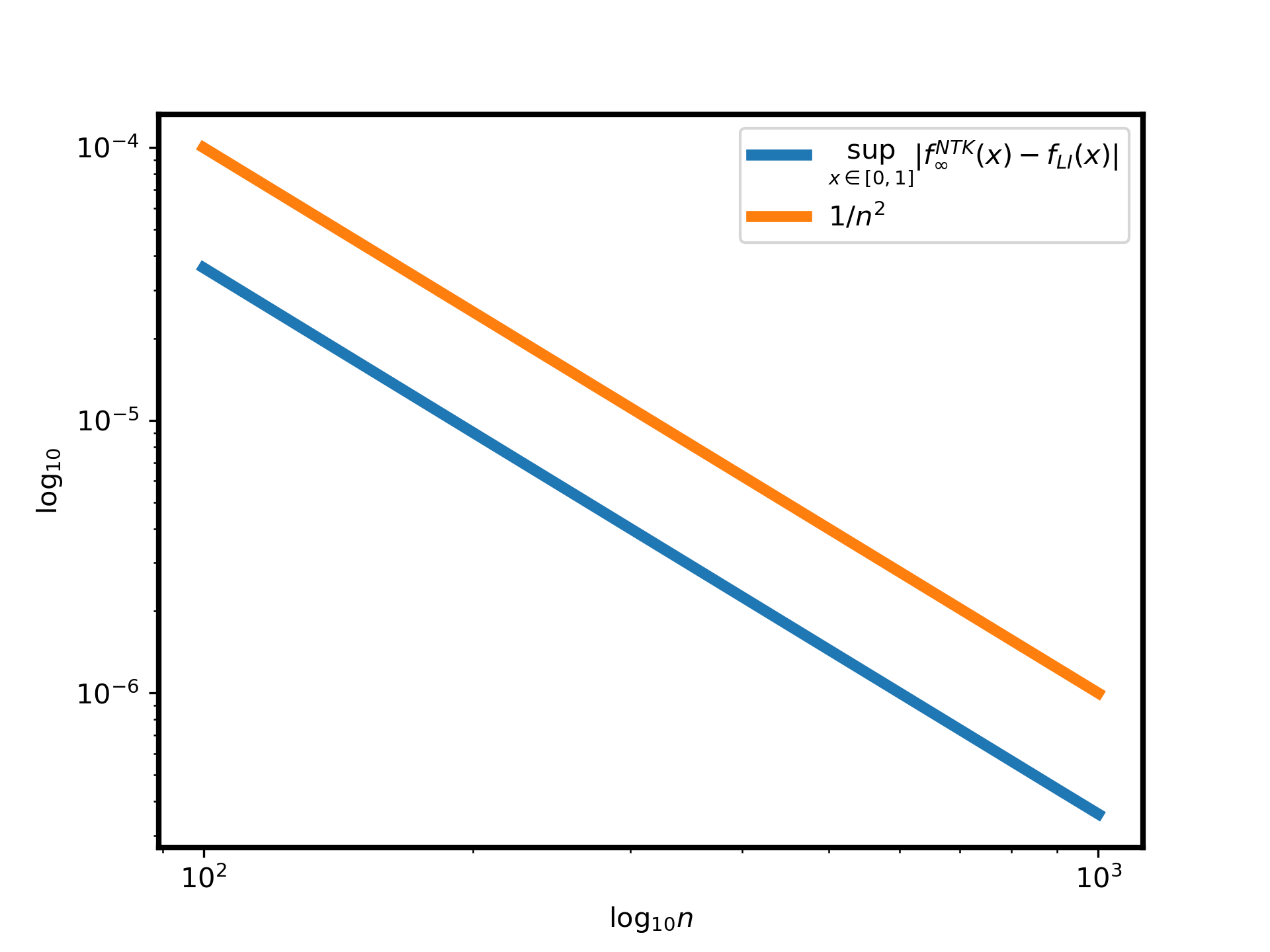}
      \centerline{(c)}
  \end{minipage}
   \caption{(a): The interpolation ways of two-layer neural networks with different widths; (b): The interpolation ways of linear interpolation, $f^{\NTK}_{\infty}(x)$ and RBF kernel regression with the bandwidth $\gamma=1$; (c): The maximum gap between $f^{\NTK}_{\infty}$ and linear interpolation.
   The input of the training data is the equally-distanced one-dimensional data $\{x_{i}=\frac{i-1}{n-1}, i \in [n]\}$ with randomly selected labels.
   The sample size $n=100,200,\dots,1000$}\label{fig:LI}
\end{figure}

\section{Why overfitted neural networks generalize}\label{sec:explanation}
In Section \ref{sec:main_result},  we have shown that the generalization ability of a wide neural network depends on when the training process is stopped.  $i)$ If the training process stops at a properly chosen time, the generalization ability of the resulting neural network can achieve the minimax rate; $ii)$ If the training process stops when the loss is near zero (or overfitting the data), the resulting neural network can not generalize well. The latter statement, however, contradicts to the reported ``benign overfitting phenomenon'' where overfitted neural networks do generalize well in certain situations. 
To resolve this annoying contradiction, 
we scrutinize the reported observations again and propose a hypothesis on the role of signal strength played in these observations.
We believe this explanation reconciles the conflict between our theory and the widely observed ``benign overfitting phenomenon''.

\subsection{Three stopping rules}
We first emphasize that a subtle difference between the classification problem and the regression problem might be ignored in the reported experiments. To be more concrete, 
we  have three choices of stopping times in the classification problem:
$i)$ the stopping time $t_{\text{opt}}$  where the training process stopped at the time suggested by our theory ; 
$ii)$ the stopping time $t_{\text{loss}}$ where the training process stopped till the value of the loss function nears zero;  
$iii)$ the stopping time $t_{\text{label}}$ where the training process stopped till the label error rate nears zero. 
 Most of the reported experiments in ``benign overfitting phenomenon'' utilize the stopping time $t_{\text{label}}$ and claim that the resulting neural network can overfit the data and generalize well \cite{zhang2016understanding}. 
 
Our theoretical results suggested that the neural network at the stopping time $t_{\text{opt}}$ has the best generalization ability and the neural network at the stopping time $t_{\text{loss}}$ can not generalize. 
Thus, there might be a significant difference between the stopping time $t_{\text{label}}$ and $t_{\text{loss}}$. 
This difference can clearly be seen from a toy example consisting of 4 data points $\{(0,0),(\frac{1}{3},1),(\frac{2}{3},0), (1,1) \}$.  We fit the data with a two-layer neural network with width $m=1000$ with respect to the square loss ( regression ) and cross-entropy loss (classification ) separately. 
The results are reported in figure \ref{fig: interpolation vs acc}. 
It is clear that for both loss functions, the stopping time $t_{\text{label}}$ is much earlier than $t_{\text{loss}}$. 
The fact that the stopping time $t_{\text{label}}$ may be far earlier than $t_{\text{loss}}$  partially explained why the training stopped at time $t_{\text{label}}$ produces a neural network with some generalization ability; if the stopping time $t_{\text{label}}$ is close to $t_{\text{opt}}$, then the training process stopped at time $t_{\text{label}}$ produces a neural network with the optimal generalization ability.

\begin{figure}

    \centering
    \begin{minipage}[b]{0.8\textwidth}
    \includegraphics[width=1\textwidth]{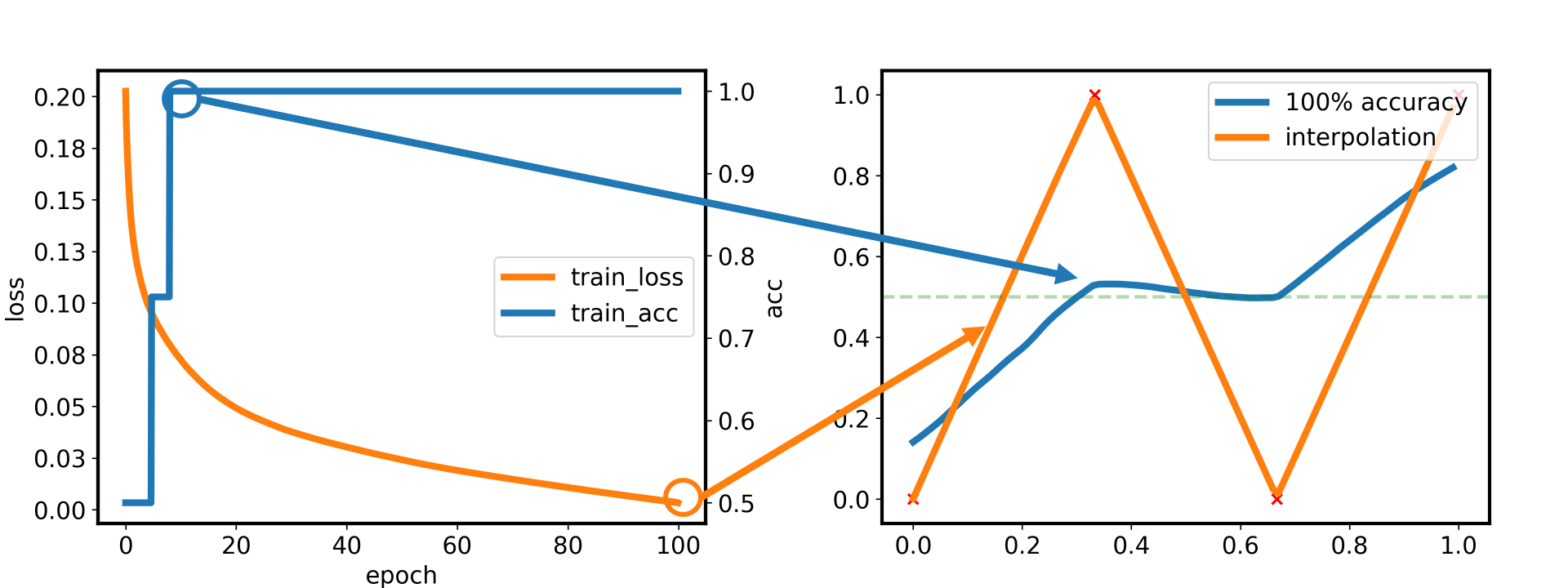}
    \end{minipage}
    \begin{minipage}[b]{0.8\textwidth}
    \includegraphics[width=1\textwidth]{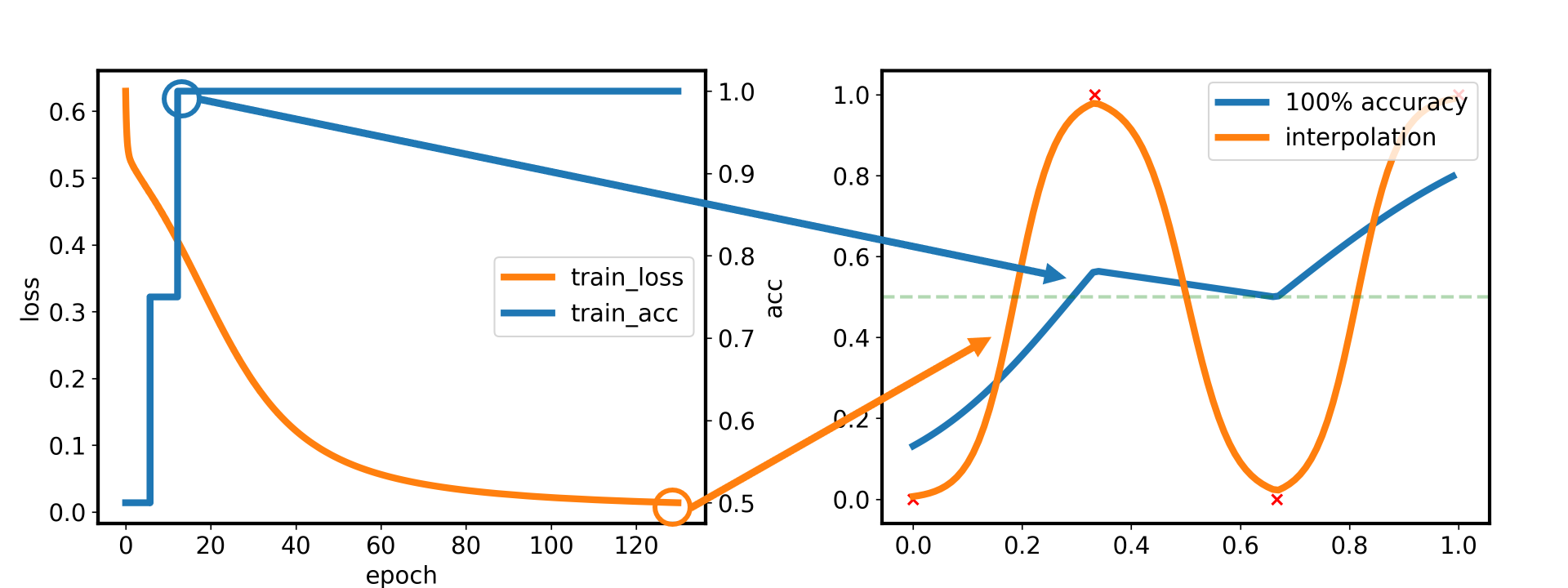}
    \end{minipage}
    
     \caption{Overfitting vs. 100 \% label accuracy for regression and classification problems: The upper figures are from the regression setup and the lower ones are from the classification setup. The left figures present the loss and the accuracy of 4 training data points. The right figures show the function in the epoch with 100 \% label accuracy and in the interpolation regime.} \label{fig: interpolation vs acc}
    \end{figure}

\subsection{The effects of signal strength}\label{sec:experments}
We have seen the three different stopping rules and the subtle difference between $t_{\mathrm{label}}$ and $t_{\mathrm{loss}}$.
What makes the stopping time $t_{\mathrm{label}}$ far from $t_{\mathrm{loss}
}$ or not?  We hypothesize that: $i)$ if the signal strength is strong, then $t_{\mathrm{label}}$ nears $t_{\mathrm{opt}}$; $ii)$ if the signal strength is weak, then $t_{\mathrm{label}}$ nears $t_{\mathrm{loss}}$. 
We justified this hypothesis through various experiments.

\vspace{3mm}
\noindent{\it  $\bullet$ Synthetic Data:} 
Suppose that $\bm{x}_i , 1\leq i \leq 100$ are i.i.d. sampled from $\unif((0,1)^3)$ and 
\begin{equation*}
    y_{i}=f_{\star}(x_{i})=\lfloor2 \bm{x}_{i,(1)}\rfloor+2\lfloor2 \bm{x}_{i,(2)}\rfloor+4\lfloor2 \bm{x}_{i,(3)}\rfloor\in \{0,1,\cdots,7\}, 1\leq i \leq 100.
\end{equation*} 
For a given $p\in [0,1]$, we corrupt every label $y_{i}$ of the data with probability $p$ by a uniform random integer from $\{0,1,\cdots,7\}$.

For corrupted data with $p\in \{0,0.3,0.6\}$, 
we train a two-layer neural network (width $m=10000$) with the squared loss and collect the testing accuracy and loss based on 1000 testing data points. The results are reported in Figure \ref{fig: MLP_diff_noise}(a).
We also execute the same experiment with the cross-entropy loss and report the results in Figure \ref{fig: MLP_diff_noise}(b). 

\begin{figure}[htbp]
  \begin{minipage}[t]{0.5\linewidth}
      \centering
      \includegraphics[width=\textwidth]{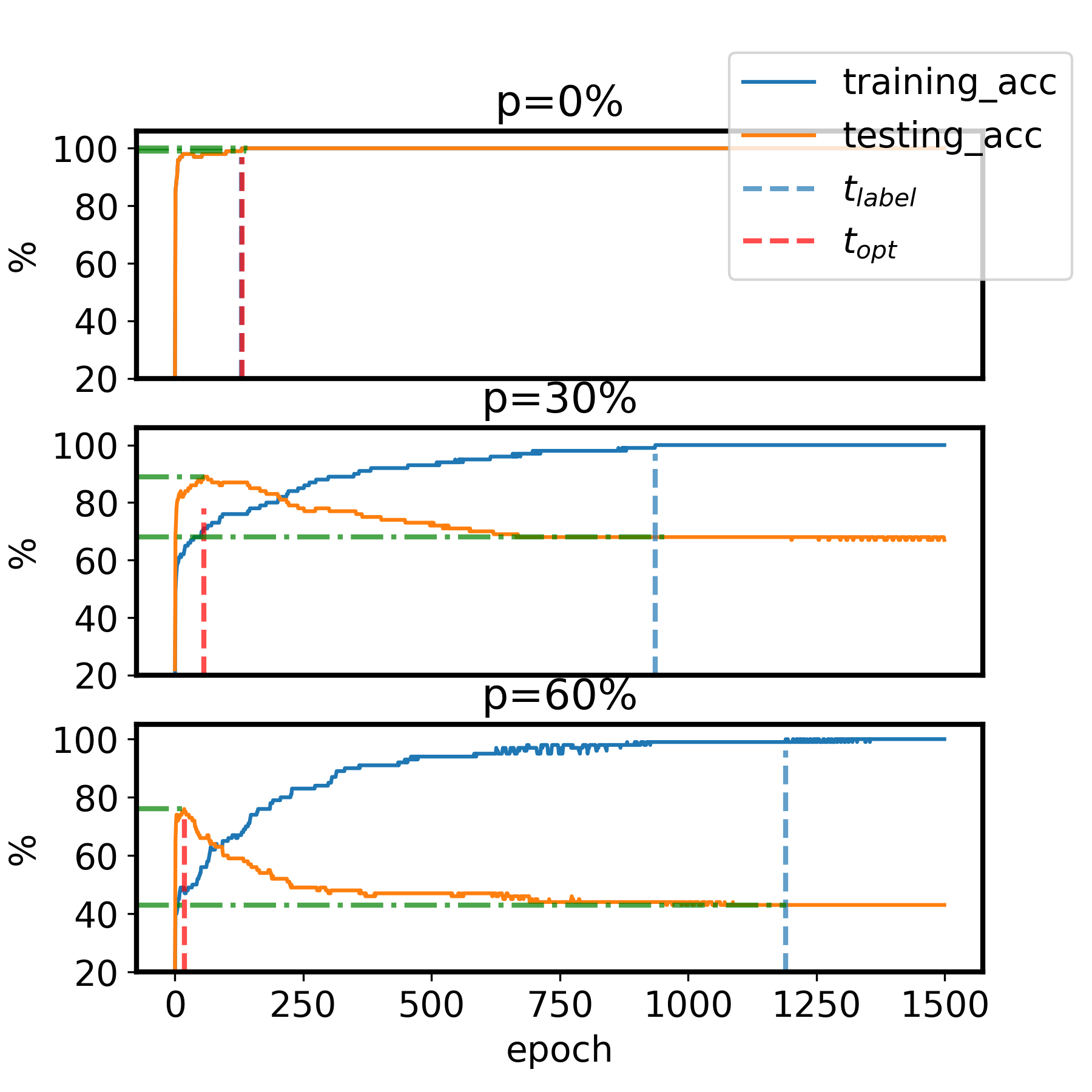}
      \centerline{(a) the results of the squared loss}
  \end{minipage}%
  \begin{minipage}[t]{0.5\linewidth}
      \centering
      \includegraphics[width=\textwidth]{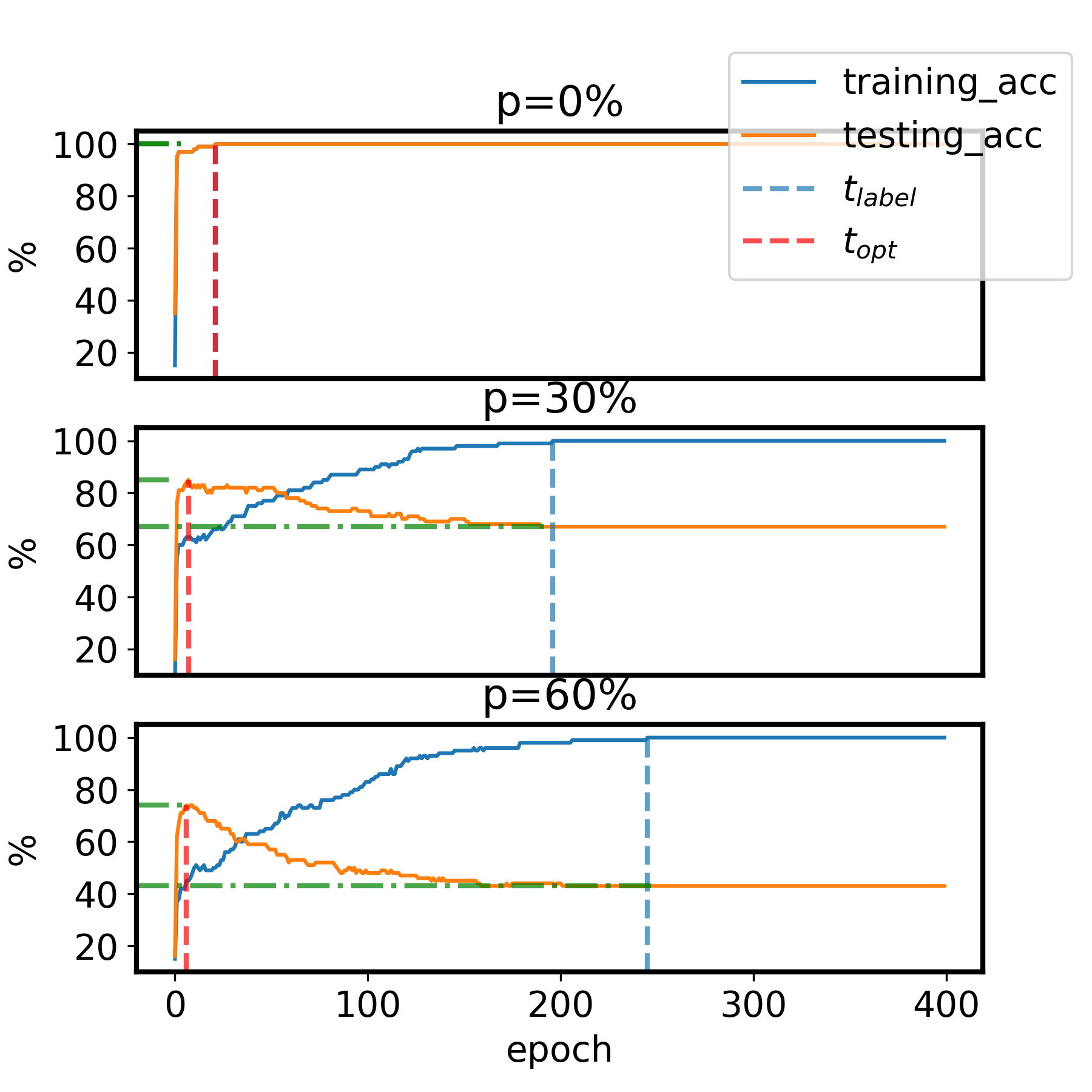}
      \centerline{(b) the results of the cross-entropy loss}
  \end{minipage}
  \caption{Synthetic Data: the gap between $t_{\mathrm{label}}$ and $t_{\mathrm{opt}}$ is increasing when the label corruption ratio $p$ is increasing. When $p=0$, the gap between $t_{\mathrm{label}}$ and $t_{\mathrm{opt}}$ and the gap between the corresponding testing accuracies are extremely small, i.e., we observed the ``benign overfitting''.}
  \label{fig: MLP_diff_noise}
\end{figure}

\vspace{3mm}
{\it \noindent $\bullet$ Real Data:} Inspired by the numerical studies (the classification setup) in \cite{zhang2016understanding}, we perform the experiments on CIFAR-10 with AlexNet.
Again, we corrupt the data with $p=\{0,0.3,0.6\}$ and apply the SGD to training Alex with the momentum parameter of 0.9, the initial learning rate of 0.01 and the decay factor 0.95 per training epoch. The results are reported in Figure \ref{fig: Alexnet_cifar10_diff_noise_ce}. 

  \begin{figure}[htbp]
        \centering
        \includegraphics[width=0.6\textwidth]{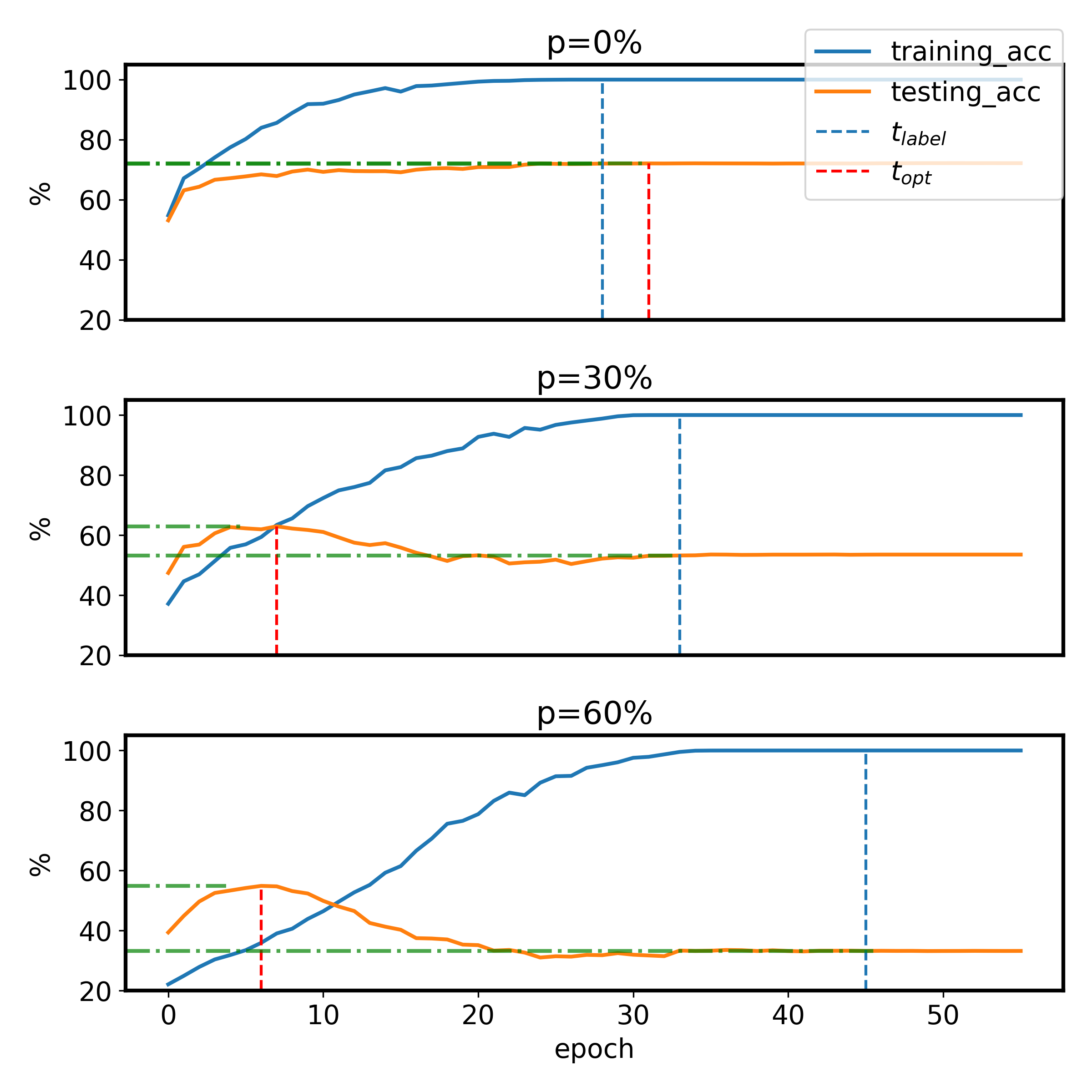}
        \centerline{(b) Generalization gap with different label corruption ratios $p$}
    \caption{AlexNet on CIFAR-10 (cross-entropy loss): the time gap between $t_{label}$ and $t_{opt}$ is increasing when the label corruption ratio is increasing.}
    \label{fig: Alexnet_cifar10_diff_noise_ce}
  \end{figure}
The above experiments support our hypothesis and reconcile the conflict between the ``benign overfitting phenomenon'' and our theory: if the signal strength is strong, ``benign overfitting'' holds and our theoretical results still work; if the signal strength is weak, ``benign overfitting'' can not hold anymore and our theoretical results explain the reason for the failure of ``benign overfitting''.


\section{Discussion and conclusion}
In this paper, we first showed the positive definiteness of the NTK $K_d$ defined on $\mathbb{R}^d$, filling a long-standing gap in the literature. 
We then proved that the NNK uniformly converges to the NTK, which implies that the excess risk of wide neural networks is well approximated by that of the NTK regression function. 
Thus, for two-layer neural networks and one-dimensional data, we could prove that: $i)$ if one stops the training process of wide neural networks at a proper time, the excess risk of the resulting neural network achieves the minimax optimality; $ii)$ an overfitted neural network can not generalize well. 
Finally, we proposed an explanation to reconcile the contradiction between our theoretical result and the widely observed ``benign overfitting phenomenon''.

Though the current work only dealt with the two-layer neural network and one-dimensional data, it is clear that the strategy works for more general neural networks and more complicated data. 
To be more precise, one may try to first show that neural network kernels of CNN, ResNet, etc. uniformly converge to the corresponding neural tangent kernels, then study the spectral properties of NTK such as positive definiteness and eigenvalue decay rate. Thus, we can expect that training a wide deep neural network with the early stopping strategy can produce a neural network with optimal generalization ability.

\begin{acks}[Acknowledgments]
The authors would like to thank the anonymous referees, the Associate Editor and the Editor for their constructive comments that improved the quality of this paper. The corresponding author was supported in part by the National Natural Science Foundation of China (Grant 11971257), Beijing Natural Science Foundation (Grant Z190001), National Key R\&D Program of China (2020AAA0105200), and Beijing Academy of Artificial Intelligence.
\end{acks}

\vspace{4mm}

\begin{appendix}
\section{Reproducing Kernel Hilbert Space} \label{sec:RKHS}
In this section, we recollect some essential concepts and theorems in the reproducing kernel Hilbert space (RKHS). For simplicity, we assume that $\mathcal{H}$ is a separable Hilbert space.
\begin{definition}[RKHS and reproducing kernel]\label{def:RKHS}
Let $\mathcal{H}$  be a Hilbert space of functions defined on a non-empty $\mathcal{X}$. It is an RKHS if for all $x\in \mathcal{X}$, there exists a positive constant $M_{x}$, such that
\begin{align}
    |f(x)|\leq M_{x}\|f\|_{\mathcal{H}}, \quad \forall f\in \mathcal{H}.
\end{align}
By Riesz representation theory, for any $x$, there is an element $K(\cdot,x) \in \mathcal{H}$ such that
\begin{align}
    f(x)=\langle f, K(\cdot,x) \rangle_{\mathcal{H}}.
\end{align}
The function $K:\mathcal{X}\times\mathcal{X} \rightarrow \mathbb{R}$ such that 
\begin{align}
    K(x,y) = \langle K(\cdot,x), K(\cdot,y)\rangle_{\mathcal{H}},
\end{align}
is referred to as the reproducing kernel associated with $\mathcal{H}$. It is clear that $K$ is a positive semi-definite kernel on $\mathcal{X}$.
\end{definition}

\begin{lemma}
    Suppose that $\{e_{j},j \geq 1\}$ is an orthonormal basis of $\mathcal{H}$.  Then
    \begin{align}
        K(x,y)=\sum_{j=1}^{\infty}e_{j}(x)e_{j}(y)
    \end{align}
    where the sum on RHS converges in $\mathcal{H}$.
    \proof
    Since $\mathcal{H}$ is an RKHS, the Plancherel theorem shows that
    \begin{align}
        K(x,y) = \sum_{j=1}^{\infty}\langle K(\cdot,x),e_{j}\rangle e_{j}(y)
    \end{align}
    where the sum on the RHS converges in $\mathcal{H}$.
    \qed
\end{lemma}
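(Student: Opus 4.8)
The plan is to derive the identity from the reproducing property together with the Parseval expansion of the kernel section $K(\cdot,x)$ in the orthonormal basis $\{e_j, j\geq 1\}$; the assertion that the series ``converges in $\mathcal{H}$'' refers precisely to this expansion.

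First I would fix $x\in\mathcal{X}$. Since $K(\cdot,x)\in\mathcal{H}$ by Definition~\ref{def:RKHS} and $\{e_j, j\geq 1\}$ is an orthonormal basis, Parseval's identity gives $K(\cdot,x)=\sum_{j=1}^{\infty}\langle K(\cdot,x),e_j\rangle_{\mathcal{H}}\,e_j$, with the series converging in the $\mathcal{H}$-norm. Next I would identify the coefficients: applying the reproducing property $f(x)=\langle f,K(\cdot,x)\rangle_{\mathcal{H}}$ with $f=e_j$ yields $\langle K(\cdot,x),e_j\rangle_{\mathcal{H}}=e_j(x)$ (using that $\mathcal{H}$ is a real Hilbert space). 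Hence $K(\cdot,x)=\sum_{j=1}^{\infty}e_j(x)\,e_j$ in $\mathcal{H}$, which is exactly the asserted convergence. Finally, to recover the pointwise identity $K(x,y)=\sum_{j=1}^{\infty}e_j(x)e_j(y)$, I would invoke the defining RKHS property: for each fixed $y$ the evaluation functional $f\mapsto f(y)$ is bounded on $\mathcal{H}$, with $|f(y)|\leq M_y\|f\|_{\mathcal{H}}$, so it is continuous and may be interchanged with the $\mathcal{H}$-convergent sum, giving $K(x,y)=K(\cdot,x)(y)=\sum_{j=1}^{\infty}e_j(x)e_j(y)$.

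I do not expect a serious obstacle: the argument is essentially Parseval plus continuity of point evaluation. The only point requiring a little care is keeping the two modes of convergence straight — the series $\sum_j e_j(x)e_j(\cdot)$ converges in $\mathcal{H}$-norm (which is what the lemma states), while the scalar series $\sum_j e_j(x)e_j(y)$ then converges for each fixed $y$ as a consequence of the continuity of the evaluation functional. As a consistency check, setting $y=x$ recovers $\sum_{j=1}^{\infty}e_j(x)^2=\|K(\cdot,x)\|_{\mathcal{H}}^2=K(x,x)<\infty$, so the coefficient sequence $(e_j(x))_j$ lies in $\ell^2$, as Parseval requires.
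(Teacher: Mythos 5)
Your proposal is correct and follows essentially the same route as the paper: expand the kernel section $K(\cdot,x)$ in the orthonormal basis via Parseval/Plancherel and identify the coefficients $\langle K(\cdot,x),e_j\rangle_{\mathcal{H}}=e_j(x)$ through the reproducing property. You simply make explicit two steps the paper leaves implicit (the coefficient identification and the use of continuity of point evaluation to pass from $\mathcal{H}$-convergence to the pointwise identity), which is a welcome clarification but not a different argument.
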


Suppose that there is a topological structure and a Borel measure (or its completion) $\mu_{\mathcal{X}}$ on $\mathcal{X}$ with $\operatorname{supp}(\mu_{\mathcal{X}})=\mathcal{X}$ such that $\mathcal{X}$ is compact and $K$ is continuous. One then can easily verify that the natural embedding inclusion operator $I_K:\mathcal{H} \to L^2(\mathcal{X},\mu_{\mathcal{X}})$ is a compact operator and the adjoint operator $I_K^*: L^2(\mathcal{X},\mu_{\mathcal{X}}) \to \mathcal{H}$ of  $I_{K}$ is given by: 
\begin{align*}
    I_K^* f(x) =\int_{\mathcal{X}} K(x,x')f(x') \dx \mu_{\mathcal{X}}(x').
\end{align*}
Thus we can define an integral operator
\begin{align}
        T_K = I_K \circ I_K^*: L^2(\mathcal{X},\mu_{\mathcal{X}}) \to L^2(\mathcal{X},\mu_{\mathcal{X}})
\end{align}
which is a positive semi-definite, self-adjoint, compact operator. The spectral theorem of the positive semi-definite, self-adjoint, compact operator shows that  
there exists a set of non-negative numbers $\lambda_{1} \geq \lambda_{2} \geq \cdots$ and an orthonormal basis \ $\{\phi_{j},j \geq 1\}$ of $L^{2} (\mathcal{X},\mu_{\mathcal{X}})$ such that 
\begin{align}
    T_{K}f=\sum_{j=1}^{\infty}\lambda_{j}\left<f,\phi_{j}\right>_{L^{2}}\phi_{j},\quad \forall f\in L^{2}(\mathcal{X},\mu_{\mathcal{X}}).
\end{align}
where the sum on the RHS converges in $L^{2}(\mathcal{X},\mu_{\mathcal{X}})$.
In addition, if the operator $I_{K}$ is injective, then $\{\sqrt{\lambda_{j}}I_{K}^{*}\phi_{j},j \geq 1\}$ is an orthonormal basis of $\mathcal{H}$. Thus, we have
\begin{align}\label{Mercer's decomposition}
    K(x,x')=\sum_{j=1}^{\infty}\lambda_{j}I_{K}^{*}\phi_{j}(x)I_{K}^{*}\phi_{j}(x')
\end{align}
where the sum on the RHS converges in $\mathcal{H}$. Note that for any $f,g\in \mathcal{H}$ and for any $x\in \mathcal{X}$, we have
\begin{align}
    |f(x)-g(x)|=|\left<f-g,K_{x}\right>|\leq M_{x}\|f-g\|_{\mathcal{H}}.
\end{align}
Thus the equation \eqref{Mercer's decomposition} holds pointwise.
This is the celebrated Mercer's decomposition theorem.

The numbers $\{ \lambda_{j},j \geq 1\}$  and the functions $\{I_{K}^{*}\phi_{j},j \geq 1\} \subseteq \mathcal{H}$ are often referred to as the eigenvalues and the eigenfunctions associated to the kernel $K$ respectively. With these eigenvalues and the eigenfunctions, $\mathcal{H}$ can be formulated as 
\begin{equation}
    \mathcal{H} = \left\{\sum_{j=1}^{\infty} c_j I_{K}^{*}\phi_j ~\middle|~ \sum_{j=1}^{\infty}c_j^2/\lambda_j <\infty \right\}.
\end{equation}

\section{Proof of Section \ref{sec:ntk}}\label{app:ntk properties}
\begin{lemma}\label{lem:strict:positive}
 Let $K$ be an inner product kernel on $\mathbb{S}^{d}$, i.e., $K(\bm{x},\bm{x}')=f(\langle \bm{x},\bm{x}'\rangle)$ for some function 
 $f(t):[-1,1]\to \mathbb{R}$ such that $f(t)=\sum^{\infty}_{k=0}a_{k}t^{k}$ where $a_{k}\geq 0$ for any $k\geq 0$. If there are infinity number $k$ such that $a_{k}>0$, then $K$ is positive definite on $\mathbb{S}^{d}_{+}:=\{\bm{x}=(x_{1},\ldots,x_{d+1})\in \mathbb{S}^{d}\mid x_{d+1}>0\}$.
 \end{lemma}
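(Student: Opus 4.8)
The plan is to exploit two things: that the hypothesis $a_{k}\ge 0$ lets one ``decouple'' the powers of the inner product appearing in $K$, and that the hemisphere hypothesis is exactly what is needed to turn the resulting condition into the nonsingularity of a generalized Vandermonde matrix. First I would record the trivial half: since $\langle\bm{x},\bm{x}'\rangle^{k}=\langle\bm{x}^{\otimes k},\bm{x}'^{\otimes k}\rangle$ is positive semi-definite for every $k$, and $\sum_{k}a_{k}t^{k}$ converges absolutely on $[-1,1]$ (because $a_{k}\ge0$ and $\sum_k a_k=f(1)<\infty$), the kernel $K$ is positive semi-definite; hence it suffices to show that for any distinct $\bm{x}_{1},\dots,\bm{x}_{n}\in\mathbb{S}^{d}_{+}$ the only $\bm{c}\in\mathbb{R}^{n}$ with $\bm{c}\tran K(\bm{X},\bm{X})\bm{c}=0$ is $\bm{c}=\bm{0}$.

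The next step is the decoupling computation. Interchanging the finite sum over $i,j$ with the series gives
\begin{equation*}
0=\bm{c}\tran K(\bm{X},\bm{X})\bm{c}=\sum_{k=0}^{\infty}a_{k}\sum_{i,j}c_{i}c_{j}\langle\bm{x}_{i},\bm{x}_{j}\rangle^{k}=\sum_{k=0}^{\infty}a_{k}\,\Bigl\|\sum_{i}c_{i}\bm{x}_{i}^{\otimes k}\Bigr\|^{2},
\end{equation*}
and since every summand is nonnegative, I get $\sum_{i}c_{i}\bm{x}_{i}^{\otimes k}=0$ in $(\mathbb{R}^{d+1})^{\otimes k}$ for every $k$ in the infinite set $S:=\{k:a_{k}>0\}$; pairing with $\bm{y}^{\otimes k}$ this yields the polynomial identity $\sum_{i}c_{i}\langle\bm{x}_{i},\bm{y}\rangle^{k}=0$ for all $\bm{y}\in\mathbb{R}^{d+1}$ and all $k\in S$.

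Now I would bring in the hemisphere. The set $\{\bm{y}:\langle\bm{x}_{i},\bm{y}\rangle>0\ \text{for all }i\}$ is open and nonempty — it contains $(0,\dots,0,1)$ since each $\bm{x}_{i}$ has positive last coordinate — while for each pair $i\ne j$ the set $\{\bm{y}:\langle\bm{x}_{i}-\bm{x}_{j},\bm{y}\rangle=0\}$ is a proper hyperplane; hence I can choose $\bm{y}_{0}$ in the former set avoiding these finitely many hyperplanes, so that $t_{i}:=\langle\bm{x}_{i},\bm{y}_{0}\rangle$ are positive and pairwise distinct, say $0<t_{1}<\dots<t_{n}$ after relabelling. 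Picking $k_{1}<\dots<k_{n}$ in $S$ (possible as $S$ is infinite), the identities above give $V\bm{c}=\bm{0}$ with $V=(t_{i}^{k_{\ell}})_{1\le i,\ell\le n}$, a generalized Vandermonde matrix with distinct positive nodes and distinct nonnegative exponents, hence nonsingular (its determinant equals $\prod_{i<j}(t_{j}-t_{i})$ times a Schur polynomial evaluated at positive arguments, so it is positive; equivalently, a nontrivial real exponential sum $x\mapsto\sum_{i}c_{i}e^{x\log t_{i}}$ with distinct exponents has at most $n-1$ real zeros). Therefore $\bm{c}=\bm{0}$.

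The crux — and the only place the hemisphere is used — is the existence of $\bm{y}_{0}$ with all $\langle\bm{x}_{i},\bm{y}_{0}\rangle$ positive: positivity of the nodes $t_{i}$ is essential both for the Schur-polynomial sign argument and for the exponential-sum version, and on the full sphere antipodal points obstruct this (consistent with the conclusion failing there). Everything else is routine, the only nontrivial classical input being nonsingularity of the generalized Vandermonde matrix.
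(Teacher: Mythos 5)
Your proof is correct, but it takes a genuinely different route from the paper's. The paper argues directly on the Gram matrix: writing $K(\bm{X},\bm{X})=\sum_{k}a_{k}M_{k}$ with $M_{k}=(\langle\bm{x}_{i},\bm{x}_{j}\rangle^{k})_{i,j}$, it observes that each $M_{k}$ is positive semi-definite with unit diagonal, and that for distinct points on the open hemisphere $|\langle\bm{x}_{i},\bm{x}_{j}\rangle|<1$ (antipodal pairs being excluded), so the off-diagonal entries of $M_{k}$ decay geometrically and Gershgorin's theorem makes $M_{k}$ strictly diagonally dominant, hence positive definite, for all large $k$; since infinitely many $a_{k}$ are positive, at least one such term is present and the whole sum is positive definite. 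You instead characterize the null space: from $\bm{c}\tran K\bm{c}=0$ you extract $\sum_{i}c_{i}\bm{x}_{i}^{\otimes k}=0$ for every $k$ with $a_{k}>0$, pick a separating direction $\bm{y}_{0}$ (which is where you use the hemisphere, whereas the paper uses it to exclude $\langle\bm{x}_{i},\bm{x}_{j}\rangle=-1$), and finish with nonsingularity of a generalized Vandermonde matrix. The paper's argument is shorter and needs only Gershgorin; yours is heavier (it invokes the Schur-polynomial/exponential-sum fact) but is more structural — it identifies exactly which linear relations among the feature maps $\bm{x}_{i}^{\otimes k}$ would have to hold for degeneracy, and it isolates cleanly that the only obstruction on the full sphere is the antipodal one. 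Both uses of the hemisphere are essential and both proofs are sound; one small cosmetic point is that your system is $V\tran\bm{c}=\bm{0}$ rather than $V\bm{c}=\bm{0}$ with your indexing of $V=(t_{i}^{k_{\ell}})$, which of course changes nothing.
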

\begin{proof}
 For any different $n$ points $\bm{x}_{1}$, $\ldots$, $\bm{x}_{n}\in \mathbb{S}^{d}_{+}$, the Gram matrix $(K(\bm{x}_{i},\bm{x}_{j}))_{1\leq i,j\leq n}$ has an explicit formula:
 \[
 (K(\bm{x}_{i},\bm{x}_{j}))_{1\leq i,j\leq n}=\sum_{k\geq 0}a_{k}M_{k},
 \]
 where $M_{k}=\left(\langle \bm{x}_{i},\bm{x}_{j}\rangle^{k}\right)_{1\leq i,j\leq n}$.
It is obvious that each $M_{k}$ is positive semi-definite. Since the diagonal elements of $M_{k}$ equal to 1
and $\langle \bm{x}_{i},\bm{x}_{j}\rangle<1$ for  $\bm{x}_{i}\neq \bm{x}_{j}\in \mathbb{S}^{d}_{+}$. By Gershgorin circle theorem, we know that for sufficiently large $k$, $M_{k}$ is positive definite. Since there are infinitely many positive $a_{k}$'s, we know that $(K(\bm{x}_{i},\bm{x}_{j}))_{1\leq i,j\leq n}$ is positive definite.
\end{proof}

\begin{proof}[Proof of Proposition \ref{PD}]
 
Let $\bm{z}\tran=(\bm{x}\tran,1)$. By the definition of the NTK, we have
\begin{align*}
        K(\bm{x},\bm{x}') &= (1+\left<\bm{x},\bm{x}'\right>) \kappa_0(\bm{z},\bm{z}') + \| \bm{z}\|_2 \| \bm{z}' \|_2 \kappa_1(\bm{z},\bm{z}')+ 1\\
    &=\kappa_{0}(\bm{z},\bm{z}') + \left<\bm{x},\bm{x}'\right> \kappa_0(\bm{z},\bm{z}') + \| \bm{z}\|_2 \| \bm{z}' \|_2 \kappa_1(\bm{z},\bm{z}')+ 1 
\end{align*}
where $\kappa_{n}(\bm{z},\bm{z}^\prime):=2\Expc_{\omega\sim \mathcal{N}(0,\bm{I})}[\sigma'(\langle\omega, \bm{z}\rangle)\sigma'(\langle\omega, \bm{z}^\prime\rangle)(\langle\omega, \bm{z}\rangle)^{n}(\langle\omega, \bm{z}^\prime\rangle)^{n}]$ is the arc-cosine kernels of degree $n$ \cite{cho2009kernel} and $\sigma(x):=\max\{x,0\}$. 

The arc-cosine kernels $\kappa_0$ and $\kappa_{1}$ of degree $0$ and $1$, have the explicitly form (see e.g., \cite{cho2009kernel}):
\begin{align*}
    \kappa_0(\bm{z},\bm{z}') & = \frac{1}{\pi}(\pi-\psi(\bm{z},\bm{z}'))\\
    \kappa_1(\bm{z},\bm{z}') &= \frac{1}{\pi}\left ( \langle \frac{\bm{z}}{\lVert \bm{z} \rVert_2},\frac{\bm{z}'}{\lVert \bm{z}' \rVert_2} \rangle (\pi-\psi(\bm{z},\bm{z}')) + \sin (\psi(\bm{z},\bm{z}')) \right),
\end{align*}
where $\psi(\bm{z},\bm{z}') = \arccos  \left(\langle \frac{\bm{z}}{\lVert \bm{z} \rVert_2},\frac{\bm{z}'}{\lVert \bm{z}' \rVert_2} \rangle \right)$. We can see that $\kappa_0(\bm{z},\bm{z}')$ and $\kappa_1(\bm{z},\bm{z}')$ can be considered inner product kernels on $\mathbb{S}^{d}_{+}$, i.e., $\kappa_0(\bm{z},\bm{z}')=f_0(\langle\hat{\bm{z}},\hat{\bm{z}}'\rangle)$ and $\kappa_1(\bm{z},\bm{z}')=f_1(\langle\hat{\bm{z}},\hat{\bm{z}}'\rangle)$ for some functions $f_0$ and $f_1$ satisfying the conditions of Lemma \ref{lem:strict:positive} and $\hat{\bm{z}}=\frac{\bm{z}}{\lVert \bm{z} \rVert_2}$ and $\hat{\bm{z}'}=\frac{\bm{z}'}{\lVert \bm{z}' \rVert_2}$ defined on $\mathbb{S}^{d}_{+}$. By Lemma \ref{lem:strict:positive}, $\kappa_{0}(\bm{z},\bm{z}')$ and $\kappa_{1}(\bm{z},\bm{z}')$ are positive definite, meaning that $K(\bm{x},\bm{x}')$ is also positive definite.
\end{proof}


In the following section, we consider the spectral properties of the NTK over data of dimensional one assuming that $\mu_{\mathcal{X}}$ is the uniform distribution on $[0,1]$. Through Equation \eqref{eq: NTK_formular} with $d=1$, the NTK $K(x,x^{\prime})$ can be presented as followed:
\begin{align}\label{NTK:d=1:explicit}
    K(x,x^{\prime}) = \frac{2}{\pi}(\pi-\psi(x,x^{\prime}))(1+xx^{\prime}) + \frac{1}{\pi}|x-x^{\prime}| +1.
\end{align}
where $\psi(x,x^{\prime}) = \arccos \frac{1+xx^{\prime}}{\sqrt{(1+x^2)(1+(x^{\prime})^2)}}$. Define $\Pi_0$ and $\Pi_{1}$ as
\begin{align}
    \Pi_0(x,x^{\prime}) &= \frac{1}{\pi}(\pi-\psi(x,x^{\prime}))\\
    \Pi_1(x,x^{\prime}) &= \frac{1}{\pi}\left ( (1+xx^{\prime})(\pi-\psi(x,x^{\prime})) + |x-x^{\prime}| \right).
\end{align}
The following lemma shows the positive definiteness of $\Pi_0$ and $\Pi_{1}$:
\begin{lemma}\label{lem: Pi_positive_definite}
    $\Pi_0(x,x')$ and $\Pi_{1}(x,x')$ are positive definite on $[0,1]$.
\end{lemma}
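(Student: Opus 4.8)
The plan is to realize $\Pi_0$ and $\Pi_1$ as the arc-cosine kernels $\kappa_0$ and (a rescaling of) $\kappa_1$ pulled back to $[0,1]$ through the same lifting used in the proof of Proposition \ref{PD}, and then to invoke Lemma \ref{lem:strict:positive}. Concretely, for $x \in [0,1]$ I would set $\bm{z} = (x,1)\tran \in \mathbb{R}^{2}$ and $\hat{\bm{z}} = \bm{z}/\lVert \bm{z}\rVert_{2} \in \mathbb{S}^{1}_{+}$. The second coordinate of $\hat{\bm{z}}(x)$ is $(1+x^{2})^{-1/2}$ and the first is $x(1+x^{2})^{-1/2}$; the former is strictly decreasing in $|x|$ and the latter strictly increasing in $x$, so $x \mapsto \hat{\bm{z}}(x)$ is injective on $[0,1]$ (in fact on all of $\mathbb{R}$). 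By construction $\langle \hat{\bm{z}},\hat{\bm{z}}'\rangle = \frac{1+xx'}{\sqrt{(1+x^{2})(1+(x')^{2})}} = \cos\psi(x,x')$, so $\psi(x,x')$ is exactly the geodesic angle between $\hat{\bm{z}}$ and $\hat{\bm{z}}'$.

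Next I would record the two identities
\[
\Pi_0(x,x') = \kappa_0(\hat{\bm{z}},\hat{\bm{z}}'), \qquad \Pi_1(x,x') = \lVert \bm{z}\rVert_{2}\,\lVert \bm{z}'\rVert_{2}\,\kappa_1(\hat{\bm{z}},\hat{\bm{z}}').
\]
The first is immediate from the closed form $\kappa_0(\bm{z},\bm{z}') = \tfrac{1}{\pi}(\pi - \psi)$ recalled in the proof of Proposition \ref{PD}. For the second, start from $\kappa_1(\bm{z},\bm{z}') = \tfrac{1}{\pi}\big(\langle\hat{\bm{z}},\hat{\bm{z}}'\rangle(\pi-\psi) + \sin\psi\big)$, multiply by $\lVert \bm{z}\rVert_{2}\lVert \bm{z}'\rVert_{2} = \sqrt{(1+x^{2})(1+(x')^{2})}$, and use $\sin\psi = \sqrt{1-\cos^{2}\psi}$ to obtain $\lVert\bm{z}\rVert_{2}\lVert\bm{z}'\rVert_{2}\sin\psi = \sqrt{(1+x^{2})(1+(x')^{2}) - (1+xx')^{2}}$. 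The elementary identity $(1+x^{2})(1+(x')^{2}) - (1+xx')^{2} = (x-x')^{2}$ turns this square root into $|x-x'|$, and comparing with the definition of $\Pi_1$ yields the claimed formula.

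Finally, fix $n$ distinct points $x_1,\dots,x_n \in [0,1]$; by injectivity the images $\hat{\bm{z}}_1,\dots,\hat{\bm{z}}_n \in \mathbb{S}^{1}_{+}$ are distinct. In the proof of Proposition \ref{PD} it is shown that $\kappa_0$ and $\kappa_1$, viewed as inner-product kernels $f_0(\langle\cdot,\cdot\rangle)$ and $f_1(\langle\cdot,\cdot\rangle)$ on $\mathbb{S}^{1}_{+}$, satisfy the hypotheses of Lemma \ref{lem:strict:positive}, so the Gram matrices $(\kappa_0(\hat{\bm{z}}_i,\hat{\bm{z}}_j))_{i,j}$ and $(\kappa_1(\hat{\bm{z}}_i,\hat{\bm{z}}_j))_{i,j}$ are positive definite. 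Hence $(\Pi_0(x_i,x_j))_{i,j}$ is positive definite, and $(\Pi_1(x_i,x_j))_{i,j} = D\,(\kappa_1(\hat{\bm{z}}_i,\hat{\bm{z}}_j))_{i,j}\,D$ with $D = \operatorname{diag}(\lVert\bm{z}_1\rVert_{2},\dots,\lVert\bm{z}_n\rVert_{2})$ an invertible diagonal matrix, so it is positive definite too. The only step that is not pure bookkeeping is the computation identifying $\Pi_1$ with $\lVert\bm{z}\rVert_{2}\lVert\bm{z}'\rVert_{2}\kappa_1$; its only delicate point is the algebraic identity for $(1+x^{2})(1+(x')^{2})-(1+xx')^{2}$, and the positive-definiteness input on the sphere is already available from Proposition \ref{PD}, so no genuinely new estimate is required.
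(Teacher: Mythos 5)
Your proposal is correct and follows essentially the same route as the paper: lift $x$ to $\bm{z}=(x,1)$, identify $\Pi_0(\bm{X},\bm{X})=\kappa_0(\bm{Z},\bm{Z})$ and $\Pi_1(\bm{X},\bm{X})=D_z\kappa_1(\bm{Z},\bm{Z})D_z$, and invoke the positive definiteness of the arc-cosine kernels from Lemma \ref{lem:strict:positive}. You are in fact slightly more careful than the paper, explicitly checking the injectivity of the lift (so distinct $x_i$ give distinct points on $\mathbb{S}^1_+$) and the algebraic identity $(1+x^2)(1+(x')^2)-(1+xx')^2=(x-x')^2$ behind the identification of $\Pi_1$.
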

\begin{proof}[Proof of Lemma \ref{lem: Pi_positive_definite}]
    Suppose that $\bm{X}=\{x_{1},\dots,x_{n}\} \subseteq [0,1]$ and $\bm{Z}=\{\bm{z}_{1},\cdots,\bm{z}_{n}\}$ where $\bm{z}_i=(x_i,1)$. Denote $D_{z} = \operatorname{diag} \{\|\bm{z}_i\|_2 \}_{i\in[n]}$ where $\|\bm{z}_i\|_2\geq 1$. $\Pi_0(\bm{X},\bm{X}) = \kappa_0(\bm{Z},\bm{Z})$ and $\Pi_1(\bm{X},\bm{X}) = D_{z}\kappa_1(\bm{Z},\bm{Z}) D_{z}$. Since $\kappa_0(\bm{Z},\bm{Z})$, $\kappa_1(\bm{Z},\bm{Z})$ are positive definite and $D_{z}$ is invertible, $\Pi_0(\bm{X},\bm{X})$ and $\Pi_1(\bm{X},\bm{X})$ are  positive definite.
\end{proof}

\begin{proof}[Proof of Theorem \ref{thm:spectral:d=1:L=1} $i)$]
Suppose that  $\bm{X}=\{x_{1},...,x_{n}\}\subseteq  [0,\pi]$. Let $G_{\alpha}(x,x')=\alpha-\frac{|x-x'|}{\pi}$ where $\alpha\geq 1$. Since $\bm{X}$ is one-dimensional data, we have the following lemmas.
\begin{lemma}\label{lem:min:eigenvalue} Let $d_{\min}=\min\{|x_{i}-x_{j}|\}$. We then have
\begin{align}
      \frac{d_{\min}}{2 \pi } \leq  \lambda_{\min}(G_{\alpha}(\bm{X},\bm{X}))\leq \frac{2d_{\min}}{ \pi }
\end{align}

\end{lemma}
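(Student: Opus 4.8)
\textbf{Proof proposal for Lemma \ref{lem:min:eigenvalue}.}
The plan is to diagonalise the quadratic form $\bm{v}\tran G_{\alpha}(\bm{X},\bm{X})\bm{v}$ by a summation‑by‑parts trick that is available precisely because the data lie on a line. Relabel so that $x_{1}<x_{2}<\cdots<x_{n}$, write $\delta_{k}=x_{k+1}-x_{k}>0$ for $k\in[n-1]$, so that $d_{\min}=\min_{k}\delta_{k}$ and $\sum_{k}\delta_{k}=x_{n}-x_{1}\le\pi$. For $\bm{v}\in\mathbb{R}^{n}$ set $S_{0}=0$, $S_{k}=v_{1}+\cdots+v_{k}$ and $S:=S_{n}$. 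Starting from $|x_{i}-x_{j}|=\sum_{k}\delta_{k}\mathbf{1}[\,i\le k<j\text{ or }j\le k<i\,]$ and interchanging the order of summation one obtains
\[
\bm{v}\tran\bigl(|x_{i}-x_{j}|\bigr)_{1\le i,j\le n}\bm{v}=2\sum_{k=1}^{n-1}\delta_{k}S_{k}(S-S_{k}).
\]
Completing the square in $S_{k}$, using $\sum_{k}\delta_{k}=x_{n}-x_{1}$ and $G_{\alpha}=\alpha\mathbf{1}\mathbf{1}\tran-\tfrac1\pi(|x_{i}-x_{j}|)_{ij}$, this turns into the key identity
\[
\bm{v}\tran G_{\alpha}(\bm{X},\bm{X})\bm{v}=\Bigl(\alpha-\tfrac{x_{n}-x_{1}}{2\pi}\Bigr)S^{2}+\frac{2}{\pi}\sum_{k=1}^{n-1}\delta_{k}\Bigl(S_{k}-\tfrac{S}{2}\Bigr)^{2}.
\]
Because $\alpha\ge1$ and $x_{n}-x_{1}\le\pi$, the coefficient $\alpha-\tfrac{x_{n}-x_{1}}{2\pi}\ge\tfrac12$, so both terms on the right are non‑negative; this already re‑establishes positive definiteness.

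For the upper bound I would evaluate the Rayleigh quotient at $\bm{v}=\bm{e}_{k_{0}}-\bm{e}_{k_{0}+1}$, where $k_{0}$ realises $\delta_{k_{0}}=d_{\min}$. Then $S=0$ and $S_{k}$ equals $1$ at $k=k_{0}$ and $0$ otherwise, so the identity collapses to $\bm{v}\tran G_{\alpha}(\bm{X},\bm{X})\bm{v}=\tfrac{2}{\pi}d_{\min}$, while $\|\bm{v}\|_{2}^{2}=2$; hence $\lambda_{\min}\le d_{\min}/\pi\le 2d_{\min}/\pi$.

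For the lower bound I would compare $\|\bm{v}\|_{2}^{2}$ with the quantities in the identity. Putting $T_{k}=S_{k}-\tfrac{S}{2}$ (so $T_{0}=-\tfrac S2$, $T_{n}=\tfrac S2$, and $v_{k}=T_{k}-T_{k-1}$), summing the elementary bound $(T_{k}-T_{k-1})^{2}\le 2T_{k}^{2}+2T_{k-1}^{2}$ over $k\in[n]$ gives $\|\bm{v}\|_{2}^{2}\le S^{2}+4\sum_{k=1}^{n-1}T_{k}^{2}$. Using $\delta_{k}\ge d_{\min}$ and $\alpha-\tfrac{x_{n}-x_{1}}{2\pi}\ge\tfrac12\ge\tfrac{d_{\min}}{2\pi}$ (again valid since $d_{\min}\le x_{n}-x_{1}\le\pi$), the identity yields
\begin{align*}
\bm{v}\tran G_{\alpha}(\bm{X},\bm{X})\bm{v}
&\ge \frac{d_{\min}}{2\pi}S^{2}+\frac{2d_{\min}}{\pi}\sum_{k=1}^{n-1}T_{k}^{2}
= \frac{d_{\min}}{2\pi}\Bigl(S^{2}+4\sum_{k=1}^{n-1}T_{k}^{2}\Bigr)
\ge \frac{d_{\min}}{2\pi}\|\bm{v}\|_{2}^{2},
\end{align*}
i.e. $\lambda_{\min}(G_{\alpha}(\bm{X},\bm{X}))\ge d_{\min}/(2\pi)$.

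The only genuinely delicate step is the first one: deriving the summation‑by‑parts identity for the distance matrix and performing the completion of the square without index errors (tracking that the $\delta_{k}$ run over $k\in[n-1]$ and that the boundary sum telescopes to $x_{n}-x_{1}$). After that ``discrete integration by parts'' is in place, both inequalities are one‑line estimates; in fact the argument produces the slightly sharper interval $d_{\min}/(2\pi)\le\lambda_{\min}\le d_{\min}/\pi$, of which the stated bounds are an immediate weakening.
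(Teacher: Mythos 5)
Your proof is correct, but it takes a genuinely different route from the paper's. The paper writes down the explicit inverse of $G_{\alpha}(\bm{X},\bm{X})$ (a tridiagonal matrix with two corner entries, asserted without derivation), then bounds $\lambda_{\max}(G_{\alpha}^{-1})$ from above by the Gershgorin circle theorem and from below by testing the quadratic form of $G_{\alpha}^{-1}$ on a standard basis vector $e_{k}$ with $x_{k+1}-x_{k}=d_{\min}$. You instead never touch the inverse: your summation-by-parts identity
\[
\bm{v}\tran G_{\alpha}(\bm{X},\bm{X})\bm{v}=\Bigl(\alpha-\tfrac{x_{n}-x_{1}}{2\pi}\Bigr)S^{2}+\tfrac{2}{\pi}\sum_{k=1}^{n-1}\delta_{k}\bigl(S_{k}-\tfrac{S}{2}\bigr)^{2}
\]
diagonalises the quadratic form directly (I verified the algebra, including the completion of the square and the bound $\|\bm{v}\|_{2}^{2}\le S^{2}+4\sum_{k}T_{k}^{2}$), and both inequalities follow by Rayleigh-quotient arguments on $G_{\alpha}$ itself. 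What your approach buys: it is self-contained (the paper's explicit inverse is a nontrivial claim that the paper does not verify), it gives positive definiteness of $G_{\alpha}$ as a by-product rather than presupposing invertibility, and it yields the slightly sharper upper bound $\lambda_{\min}\le d_{\min}/\pi$. What the paper's approach buys: the explicit inverse is reused conceptually elsewhere (the tridiagonal structure underlies the linear-interpolation analysis in Theorem \ref{thm:bad_gen}), and once the inverse formula is accepted, both bounds are immediate. The only thing to make explicit in your write-up is that the test vector $\bm{e}_{k_{0}}-\bm{e}_{k_{0}+1}$ requires $n\ge 2$, which is harmless since $d_{\min}$ is only defined then.
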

\begin{lemma}\label{lem:G_leq_K_leq_7G}  
Suppose that $A$ and $B$ are two symmetric matrices. We use that notation $A \geq B$ if $A-B$ is a positive semi-definite matrix. Then we have
\begin{equation}
    G_{1}(\bm{X},\bm{X}) \leq K(\bm{X},\bm{X})\leq 7G_{9/7}(\bm{X},\bm{X}). 
\end{equation}
 
\end{lemma}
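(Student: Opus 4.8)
The plan is to prove both Loewner inequalities by exhibiting the relevant matrix differences as positive semidefinite matrices, using the closed form \eqref{NTK:d=1:explicit} of $K$ and Lemma~\ref{lem: Pi_positive_definite}. Everything rests on the elementary pointwise identity
\begin{equation*}
K(x,x') \;=\; 2\,\Pi_1(x,x') + G_1(x,x'),
\end{equation*}
which follows at once from the definitions, since $\tfrac1\pi|x-x'|+1 = G_1(x,x') + \tfrac2\pi|x-x'|$ and $\tfrac2\pi(\pi-\psi(x,x'))(1+xx') + \tfrac2\pi|x-x'| = 2\Pi_1(x,x')$. Passing to Gram matrices, $K(\bm X,\bm X) - G_1(\bm X,\bm X) = 2\Pi_1(\bm X,\bm X)$, which is positive semidefinite by Lemma~\ref{lem: Pi_positive_definite}; this is precisely the left inequality.

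For the right inequality I would subtract once more: using $7G_{9/7}(x,x') - G_1(x,x') = 6G_1(x,x') + 2$, the identity above gives
\begin{equation*}
7G_{9/7}(\bm X,\bm X) - K(\bm X,\bm X) \;=\; 6\,G_1(\bm X,\bm X) + 2\,\mathbf{1}\mathbf{1}\tran - 2\,\Pi_1(\bm X,\bm X).
\end{equation*}
Now expand $\Pi_1(x,x') = (1+xx')\Pi_0(x,x') + \tfrac1\pi|x-x'|$, use $\tfrac1\pi|x_i-x_j| = 1 - G_1(x_i,x_j)$, and note that $\big((1+x_ix_j)\Pi_0(x_i,x_j)\big)_{ij} = \Pi_0(\bm X,\bm X) + D_X\Pi_0(\bm X,\bm X)D_X$ with $D_X = \operatorname{diag}(x_1,\dots,x_n)$. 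Substituting, the rank-one terms $\mathbf{1}\mathbf{1}\tran$ cancel and the right inequality reduces to the single claim
\begin{equation*}
\Pi_0(\bm X,\bm X) + D_X\,\Pi_0(\bm X,\bm X)\,D_X \;\preceq\; 4\,G_1(\bm X,\bm X).
\end{equation*}

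To establish this last inequality I would use the one-dimensionality of $\bm X$. The point is that $\Pi_0(x,x') = 1 - \tfrac1\pi|\arctan x - \arctan x'|$ is a triangular (``tent'') kernel in the monotone variable $u = \arctan x$, so that, after sorting $x_1 < \dots < x_n$, the quadratic forms $v\tran G_\alpha(\bm X,\bm X)v$ and $v\tran\Pi_0(\bm X,\bm X)v$ have explicit expressions in terms of the partial sums $S_k = \sum_{i\le k}v_i$ and the gaps $x_{i+1}-x_i$ (respectively $\arctan x_{i+1}-\arctan x_i$); the twisted term $D_X\Pi_0(\bm X,\bm X)D_X$ contributes the same expression with $\sum_{i\le k}x_iv_i$ in place of $S_k$, which Abel summation relates back to the $S_k$ and the gaps. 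Feeding in the two-sided bound $c_0|x-x'| \le |\arctan x - \arctan x'| \le |x-x'|$ on the bounded interval (with an explicit $c_0>0$) together with the boundedness of the $x_i$, the matrix inequality becomes an elementary comparison of weighted partial sums in which the constant $4$ leaves exactly the needed slack.

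The crux — and the only non-routine part — is this last comparison: because of the conjugation by $D_X$ one cannot argue entrywise, nor tent-kernel by tent-kernel, since neither $D_X\Pi_0(\bm X,\bm X)D_X\preceq\Pi_0(\bm X,\bm X)$ nor $D_X G_1(\bm X,\bm X)D_X\preceq G_1(\bm X,\bm X)$ holds; the two summands $\Pi_0$ and $D_X\Pi_0 D_X$ must therefore be controlled jointly, exploiting simultaneously the surplus in the constant $4$ and the strictly positive lower Lipschitz constant $c_0$ of $\arctan$. The two algebraic identities and the appeals to Lemma~\ref{lem: Pi_positive_definite}, by contrast, are routine. Once Lemma~\ref{lem:G_leq_K_leq_7G} is in hand, combining it with Lemma~\ref{lem:min:eigenvalue} and the monotonicity of $\lambda_{\min}$ under the Loewner order yields $\lambda_{\min}(K(\bm X,\bm X)) \asymp d_{\min}$, which is the use made of this lemma in the proof of Theorem~\ref{thm:spectral:d=1:L=1}$(i)$.
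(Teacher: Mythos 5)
Your setup coincides with the paper's: the identity $K=G_{1}+2\Pi_{1}$ plus Lemma \ref{lem: Pi_positive_definite} gives the left inequality exactly as in the paper, and your algebra for the right inequality (correct throughout) lands on the same target the paper's proof reduces to, namely $\Pi_0(\bm X,\bm X)+D_X\Pi_0(\bm X,\bm X)D_X\preceq 4\,G_{1}(\bm X,\bm X)$. The gap is that you never prove this target. What you offer for it is a programme — partial sums, Abel summation, the two-sided Lipschitz bound on $\arctan$ — terminating in the unverified assertion that ``the constant $4$ leaves exactly the needed slack.'' That assertion is precisely the content of the lemma: in the tent-kernel representation the summands $(S_n-S_k)S_k$ change sign, and the twisted partial sums $\sum_{i\le k}x_iv_i$ are not controlled by the $S_k$ in any sign-definite way, so the comparison cannot be waved through. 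As written, the proposal stops exactly where the proof must begin.

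For comparison, the paper handles the target by splitting it into $D_X\Pi_0 D_X\preceq \Pi_0$ and $\Pi_0\preceq 2G_{1}$; the second is obtained by a substitution you do not have: setting $z(x)=2x-\psi(0,x)$ one checks $G_{1}(\bm Z,\bm Z)=2G_{1}(\bm X,\bm X)-\Pi_0(\bm X,\bm X)$, and positive definiteness of $G_1$ on $[0,\pi]$ (Corollary \ref{cor:distance:positive}) finishes. Your skepticism about the first step is in fact well founded: for $\bm X=\{0,1\}$ one has $\Pi_0-D_X\Pi_0 D_X=\left(\begin{smallmatrix}1 & 3/4\\ 3/4 & 0\end{smallmatrix}\right)$, which is indefinite, so the two summands do have to be treated jointly, as you say — but identifying the right difficulty is not the same as resolving it. To complete your route you would either need to actually carry out the joint partial-sum estimate, or import the paper's substitution argument to get $\Pi_0\preceq 2G_1$ and then still supply a correct bound on $\Pi_0+D_X\Pi_0D_X$ (e.g.\ via $D_X\Pi_0D_X\preceq \lambda_{\max}$-type bounds or a different decomposition), neither of which is in the proposal.
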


Its clear that Lemma \ref{lem:G_leq_K_leq_7G} implies that
\begin{align}
    \lambda_{\min}(G_{1}(\bm{X},\bm{X})) \leq \lambda_{\min}(K(\bm{X},\bm{X})) \leq 7 \lambda_{\min} (G_{9/7}(\bm{X},\bm{X}))
\end{align}
and Lemma \ref{lem:min:eigenvalue} implies Theorem \ref{thm:spectral:d=1:L=1} $i)$.

\end{proof}

\begin{proof}[Proof of Lemma \ref{lem:min:eigenvalue}]
    
Note that $G_{\alpha}^{-1}(\bm{X},\bm{X}) =$

\begin{equation}
    \resizebox{0.9\hsize}{!}{$
    \frac{\pi}{2} \begin{pmatrix}
		 \frac{1}{x_2-x_1} + \frac{1}{2\alpha\pi-x_n+x_1} & -\frac{1}{x_2-x_1}& 0            & \dots & 0           & \frac{1}{2\alpha\pi-x_n+x_1}        \\
		 -\frac{1}{x_2-x_1}    & \frac{1}{x_2-x_1} + \frac{1}{x_3-x_2}           & -\frac{1}{x_3-x_2} & \dots & 0           & 0          \\
		 0                & \ddots      & \ddots       & \ddots&             & \vdots       \\
		 \vdots           &             &              &       &             &             \\
		 0                &             &              &       &\frac{1}{x_{n-1}-x_{n-2}} + \frac{1}{x_{n}-x_{n-1}}              & -\frac{1}{x_{n}-x_{n-1}}  \\
		  \frac{1}{2\alpha\pi-x_n+x_1}              & 0           &  \dots       &  0    &-\frac{1}{x_{n}-x_{n-1}}  &\frac{1}{x_{n}-x_{n-1}}  + \frac{1}{2\alpha\pi-x_n+x_1}
		\end{pmatrix}.
    $}
\end{equation}

By Gershgorin circle theorem, every eigenvalue of $G_{\alpha}^{-1} = G_{\alpha}^{-1}(\bm{X},\bm{X})$ lies in one of the Gershgorin discs $D_i = \left\{\lambda~\big\vert~|\lambda-(G_{\alpha}^{-1})_{i,i}| \leq \sum_{j\neq i} |(G_{\alpha}^{-1})_{i,j}|\right\}$. In particular, we have  
\begin{equation}
    \lambda_{\max}(G_{\alpha}^{-1}) \leq \max_{i\in [n]} \max_{\lambda} D_i \leq \max_{i\in [n]} \left\{\sum_{j\in [n]} |G^{-1}_{i,j}|\right\}\leq \frac{2\pi}{d_{\min}},
\end{equation}
which means $\lambda_{\min}(G_{\alpha})\geq \frac{d_{\min}}{2\pi}$.

One the other hand, assume that $x_{k+1}-x_k=d_{\min}$ for some $k$. Since $\lambda_{\max}(G_{\alpha}^{-1})\geq u\tran G_{\alpha}^{-1}u$ for $\forall u$ with $\|u\|_{2}=1$, let $u$ be the vector that only has 1 in the $k$-th entry and the rest are zero. Thus, we have $\lambda_{\max}(G_{\alpha}^{-1})\geq \frac{\pi}{2d_{\min}}$, which means $\lambda_{\min}(G_{\alpha})\leq \frac{2d_{\min}}{\pi}$. To sum up, we have
\begin{equation}\label{eq:min_eigen_G}
    \frac{d_{\min}}{2\pi} \leq \lambda_{\min}(G_{\alpha})\leq \frac{2d_{\min}}{\pi}.
\end{equation}
\end{proof}

\begin{corollary}\label{cor:distance:positive}
The kernel function $G_{\alpha}$ is positive definite on $[0,\pi]$.
\end{corollary}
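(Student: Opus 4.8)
The plan is to obtain this corollary as an immediate consequence of Lemma~\ref{lem:min:eigenvalue}. Fix any positive integer $n$ and any $n$ distinct points $\bm{X}=\{x_{1},\dots,x_{n}\}\subseteq[0,\pi]$, and set $d_{\min}=\min_{i\neq j}|x_{i}-x_{j}|$, which is strictly positive since the points are distinct. The proof of Lemma~\ref{lem:min:eigenvalue} already writes down an explicit inverse of $G_{\alpha}(\bm{X},\bm{X})$, so that matrix is in particular nonsingular, and the lemma gives $\lambda_{\min}(G_{\alpha}(\bm{X},\bm{X}))\geq d_{\min}/(2\pi)>0$. By the definition of positive definiteness adopted in the paper, this is exactly the assertion that $G_{\alpha}$ is positive definite on $[0,\pi]$, so nothing further is needed.

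If one prefers a self-contained argument that does not route through the tridiagonal-plus-corner inverse, I would instead use the feature map $x\mapsto \mathbf{1}_{[0,x]}\in L_{2}([0,\pi])$, which satisfies $\|\mathbf{1}_{[0,x]}-\mathbf{1}_{[0,x']}\|_{L_{2}}^{2}=|x-x'|$ and $\langle\mathbf{1}_{[0,x]},\mathbf{1}_{[0,x']}\rangle=\min(x,x')$ for $x,x'\in[0,\pi]$. Writing $g_{\bm{c}}(t)=\sum_{i}c_{i}\mathbf{1}_{[0,x_{i}]}(t)$, one has $g_{\bm{c}}(0)=\sum_{i}c_{i}$ and $\int_{0}^{\pi}g_{\bm{c}}=\sum_{i}c_{i}x_{i}$, hence $\bm{c}\tran G_{\alpha}(\bm{X},\bm{X})\bm{c}=\alpha\bigl(\sum_{i}c_{i}\bigr)^{2}-\tfrac{2}{\pi}\bigl(\sum_{i}c_{i}\bigr)\int_{0}^{\pi}g_{\bm{c}}+\tfrac{2}{\pi}\int_{0}^{\pi}g_{\bm{c}}^{2}$. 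Viewing the right-hand side as a quadratic in $\bigl(\int_{0}^{\pi}g_{\bm{c}}^{2}\bigr)^{1/2}$ and applying Cauchy--Schwarz to the middle term, its discriminant is proportional to $(1-2\alpha)\bigl(\sum_{i}c_{i}\bigr)^{2}$, which is nonpositive because $\alpha\geq1$; thus the form is nonnegative and vanishes only when $\sum_{i}c_{i}=0$ and $g_{\bm{c}}\equiv0$, and the latter forces $\bm{c}=0$ since the $x_{i}$ are distinct (read off $c_{(n)},c_{(n-1)},\dots$ from the values of $g_{\bm{c}}$ on the consecutive gaps). This again yields strict positivity of the Gram quadratic form.

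There is no real obstacle here: the work is entirely bookkeeping, either verifying that the matrix displayed in the proof of Lemma~\ref{lem:min:eigenvalue} is indeed $G_{\alpha}(\bm{X},\bm{X})^{-1}$ (so that invertibility, not just an eigenvalue bound, is available), or, in the alternative route, performing the completion of the square and the degenerate-case analysis. I would therefore simply record the one-line deduction from Lemma~\ref{lem:min:eigenvalue} in the paper, optionally relegating the feature-map computation to a remark.
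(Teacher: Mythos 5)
Your primary deduction is exactly how the paper treats this statement: the corollary is recorded with no separate proof precisely because $\lambda_{\min}(G_{\alpha}(\bm{X},\bm{X}))\geq d_{\min}/(2\pi)>0$ from Lemma~\ref{lem:min:eigenvalue} is the whole argument, so your proposal is correct and follows the same route. Your alternative feature-map computation via $x\mapsto\mathbf{1}_{[0,x]}$ is also sound (including the edge case $x_{(1)}=0$, where $\sum_i c_i=0$ supplies the last coefficient), but it is supplementary rather than what the paper does.
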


\begin{proof}[Proof of lemma \ref{lem:G_leq_K_leq_7G}] We can easily verify the following equation from  the equation \eqref{NTK:d=1:explicit} and the definition of $G_{\alpha}$: 
\begin{equation}\label{eqn:K:decompostion}
\begin{aligned}
       K(x,x^{\prime})&= G_{1}(x,x^{\prime}) + 2\Pi_1(x,x^{\prime})\\
    &=2\Pi_0(x,x^{\prime})(1+xx') - G_{1}(x,x') +2.
\end{aligned}
\end{equation}
It is clear that $K(\bm{X},\bm{X})\geq G_{1}(\bm{X},\bm{X})$ from the first line in \eqref{eqn:K:decompostion}. 

On the other hand, let $z(x)=2x-\psi(0,x)$. We can easily verify that
\begin{align*}
    z(0)=0; \quad  z'(x) = 2-\frac{\psi(0,x)}{\partial x} = 2-\frac{1}{1+x^2}>0, \forall x\in [0,1]; \quad z(1)=2-\frac{\pi}{4} \leq \pi.
\end{align*}
Thus $z(x) \in[0,\pi]$. Let $\bm{Z}=\{z_1,...,z_n\}$, where $z_i=2x_i-\psi(0,x_i)$. Each entry of $G(\bm{Z},\bm{Z})$ is given by
\begin{equation*}
    G(z_i,z_j)=1-\frac{|2x_i-2x_j-\psi(0,x_i)+\psi(0,x_j)|}{\pi}\\
    =2(1-\frac{|x_i-x_j|}{\pi})-(1-\frac{\psi(x_i,x_j)}{\pi}).
\end{equation*}
Thus $G_{1}(\bm{Z},\bm{Z})=2G_{1}(\bm{X},\bm{X})-\Pi_{0}(\bm{X},\bm{X})$.
Since $G_{1}(\bm{Z},\bm{Z})$ is positive definite by Corollary \ref{cor:distance:positive}, we know $\Pi_0(\bm{X},\bm{X}) < 2G_{1}(\bm{X},\bm{X})$. 

\vspace{3mm}

Let $D_{X}=\operatorname{diag}\{ (x_i)_{i \in [n]}\}$ and $1_{n}=[1,\dots,1]\tran$.
Note that $D_{X}\Pi_0(\bm{X},\bm{X}) D_{X} \leq  \Pi_0(\bm{X},\bm{X})$ since  $x_{i}\in[0,1]$. 
 Thus, from the second line in \eqref{eqn:K:decompostion} we have
\begin{align*}
	K(\bm{X},\bm{X}) &=  2\Pi_0(\bm{X},\bm{X}) + 2D_{X} \Pi_0(\bm{X},\bm{X}) D_{X} - G_{1}(\bm{X},\bm{X}) + 2 1_{n} 1_{n}\tran\\
    &\leq 4\Pi_0(\bm{X},\bm{X}) - G_{1}(\bm{X},\bm{X}) + 21_{n} 1_{n}\tran  \leq 7 G_{1}(\bm{X},\bm{X}) + 2 1_{n} 1_{n}\tran = 7G_{\frac{9}{7}}(\bm{X},\bm{X}).
\end{align*}
\end{proof}

\begin{proof}[Proof of Theorem \ref{thm:spectral:d=1:L=1} $ii)$]
Theorem \ref{thm:spectral:d=1:L=1} $ii)$ is a direct corollary of the following lemmas.

\begin{lemma}\label{lemma:K_bound_by_G}
    Let $\{\lambda^{(\alpha)}_{j}, j \geq 1\}$ and $\{\lambda^K_{j}, j \geq 1\}$ be the eigenvalues associated to the kernel $G_{\alpha}$ and $K$ on $[0,1]$, respectively. Then we have 
\begin{equation}\label{equation:G_K_G_alpha}
	   \lambda^{(1)}_{j} \leq \lambda^K_{j} \leq 7\lambda^{(9/7)}_{j} ,\quad j \geq 1.
\end{equation}
\end{lemma}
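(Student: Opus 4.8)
The statement to prove is Lemma~\ref{lemma:K_bound_by_G}: if $\{\lambda_j^{(\alpha)}\}$ and $\{\lambda_j^K\}$ are the Mercer eigenvalues (with respect to the uniform measure on $[0,1]$) of the kernels $G_\alpha$ and $K$ respectively, then $\lambda_j^{(1)} \le \lambda_j^K \le 7\lambda_j^{(9/7)}$ for all $j\ge 1$.

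\textbf{Plan.} The natural tool is the min--max (Courant--Fischer) characterization of the eigenvalues of the integral operator $T_K$ on $L^2([0,1],\mu_{\mathcal X})$, namely
\[
\lambda_j^K = \max_{\substack{V\subseteq L^2\\ \dim V = j}}\ \min_{\substack{f\in V\\ \|f\|_{L^2}=1}} \langle T_K f, f\rangle_{L^2},
\]
and similarly for $T_{G_\alpha}$. So the whole lemma reduces to a pointwise (as quadratic forms) comparison of the operators $T_{G_1}$, $T_K$, $T_{7G_{9/7}}$. First I would record that $T_{G_1} \preceq T_K \preceq T_{7G_{9/7}}$ in the sense that $T_K - T_{G_1}$ and $T_{7G_{9/7}} - T_K$ are positive semidefinite operators on $L^2$. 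Then by Courant--Fischer, $\langle T_{G_1}f,f\rangle \le \langle T_K f,f\rangle \le \langle T_{7G_{9/7}}f,f\rangle = 7\langle T_{G_{9/7}}f,f\rangle$ for every $f$, and taking $\max$ over $j$-dimensional subspaces of the minimum over the unit sphere gives $\lambda_j^{(1)}\le \lambda_j^K \le 7\lambda_j^{(9/7)}$ immediately.

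\textbf{Key steps.} The operator inequality $T_{G_1}\preceq T_K$ is just the assertion that the kernel $K - G_1$ is positive semidefinite on $[0,1]$; by the first line of \eqref{eqn:K:decompostion} we have $K - G_1 = 2\Pi_1$, and $\Pi_1$ is positive definite on $[0,1]$ by Lemma~\ref{lem: Pi_positive_definite}, hence in particular positive semidefinite, so $\langle T_{2\Pi_1}f,f\rangle = 2\int\int \Pi_1(x,x')f(x)f(x')\,\dx\mu_{\mathcal X}(x)\dx\mu_{\mathcal X}(x') \ge 0$ (this integral is a limit of Riemann sums of the form $\bm v\tran \Pi_1(\bm X,\bm X)\bm v$, each $\ge 0$). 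For the upper inequality, the computation inside the proof of Lemma~\ref{lem:G_leq_K_leq_7G} already establishes that, as a kernel on $[0,1]$, $7G_{9/7} - K = 7G_1 + 2\cdot 1_n 1_n\tran - K$ applied to any finite point set is positive semidefinite — indeed that proof shows $K(\bm X,\bm X) \le 7 G_{9/7}(\bm X,\bm X)$ for every finite $\bm X\subseteq[0,1]$, which is exactly the statement that the kernel $7G_{9/7}-K$ is positive semidefinite. Passing from this finite-sample statement to $T_{7G_{9/7}}-T_K \succeq 0$ is again a Riemann-sum / dominated-convergence argument using continuity of the kernels on the compact square $[0,1]^2$. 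Finally I would invoke Courant--Fischer as above to conclude.

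\textbf{Main obstacle.} There is no deep difficulty; the only thing requiring care is the passage from "the Gram matrix is PSD for every finite sample" to "the integral operator is PSD", i.e.\ justifying that $\int\int H(x,x')f(x)f(x')\,\dx\mu_{\mathcal X}\dx\mu_{\mathcal X}\ge 0$ for a continuous PSD kernel $H$ and $f\in L^2([0,1],\mu_{\mathcal X})$. Since $\mathcal X=[0,1]$ is compact, $H$ continuous, and bounded continuous functions are dense in $L^2$, one approximates $f$ in $L^2$ by continuous (indeed simple/step) functions, for which the double integral is a limit of Riemann sums $\bm v\tran H(\bm X,\bm X)\bm v\ge 0$; continuity of the bilinear form $f\mapsto \langle T_H f,f\rangle$ on $L^2$ (as $T_H$ is bounded) then yields the inequality for all $f$. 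Everything else — the two algebraic identities for $K$, the positive definiteness facts, and Courant--Fischer — is already available in the excerpt, so the proof is short.
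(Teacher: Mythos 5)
Your proof is correct, but it takes a genuinely different route from the paper's. The paper proves this lemma by combining the finite-sample Gram-matrix inequality $G_{1}(\bm{X},\bm{X}) \leq K(\bm{X},\bm{X})\leq 7G_{9/7}(\bm{X},\bm{X})$ (Lemma \ref{lem:G_leq_K_leq_7G}) with Weyl monotonicity to get $\hat{\lambda}^{(1)}_{j} \leq \hat{\lambda}^K_{j} \leq 7\hat{\lambda}^{(9/7)}_{j}$ for the sample eigenvalues, and then passes to the limit $n\to\infty$ using a convergence result of Koltchinskii and Gin\'{e} (Lemma \ref{lemma:sample_eigen_to_total_eigen}) which says the sample eigenvalues converge to the operator eigenvalues. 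You instead lift the same kernel comparison directly to the integral operators, showing $T_{G_1}\preceq T_K\preceq 7T_{G_{9/7}}$ as quadratic forms on $L^{2}$ via a Riemann-sum/density argument, and then invoke Courant--Fischer for compact self-adjoint operators. Both arguments hinge on the identical decomposition $K-G_1=2\Pi_1$ and the bound $K\leq 7G_{9/7}$ from Lemma \ref{lem:G_leq_K_leq_7G}, so the real mathematical content is shared. Your route is more self-contained and deterministic: it avoids the external asymptotic theorem entirely (and with it the implicit i.i.d.-sampling and normalization issues in applying that theorem, since the operator eigenvalues correspond to eigenvalues of $\tfrac{1}{n}A(\bm{X},\bm{X})$ rather than $A(\bm{X},\bm{X})$ --- a factor which happens to cancel in the sandwich but is glossed over in the paper). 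The paper's route is shorter on the page because it reuses the finite-sample lemma verbatim and outsources the limit. One cosmetic point in your write-up: the expression $7G_{9/7}-K = 7G_1 + 2\cdot 1_n 1_n\tran - K$ mixes matrix notation into a kernel identity; the intended object is the constant kernel $2$, and the argument is unaffected.
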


\begin{lemma}\label{lemma:G_decay_rate} Suppose that $\alpha=\{1,\frac{9}{7}\}$. There exist constants $c$ and $C$ such that
\begin{equation}
     \frac{c}{j^{2}}\leq \lambda^{(\alpha)}_j \leq  \frac{C}{j^{2}}, j \geq 1.
\end{equation}
\end{lemma}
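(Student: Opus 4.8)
The plan is to identify the kernel $G_{\alpha}(x,x') = \alpha - |x-x'|/\pi$ on $[0,1]$ (with respect to Lebesgue measure, since $\mu_{\mathcal{X}}$ is uniform) as a rank-one perturbation of the Brownian-bridge-type kernel $\min(x,x')$, whose integral operator has explicitly computable eigenvalues decaying like $j^{-2}$, and then to transfer this decay rate to $G_{\alpha}$ via Weyl's inequality for the eigenvalues of a sum of compact self-adjoint operators. Concretely, I would first write $-|x-x'| = x + x' - 2\max(x,x')$ and note $\max(x,x') = x + x' - \min(x,x')$, so that
\begin{equation*}
G_{\alpha}(x,x') = \alpha + \tfrac{1}{\pi}\bigl(\min(x,x') - x' \bigr) + \tfrac{1}{\pi}\bigl(\min(x,x') - x\bigr) + \tfrac{2}{\pi}\min(x,x') - \tfrac{2}{\pi}\min(x,x'),
\end{equation*}
which I will rearrange into $G_{\alpha}(x,x') = \tfrac{2}{\pi}\min(x,x') + R(x,x')$ where $R$ is a kernel of the form $c_0 + c_1(x+x')$, hence of rank at most $2$ as an integral operator on $L^2[0,1]$. (I would double-check the bookkeeping: the point is only that $G_\alpha$ minus a constant multiple of $\min(x,x')$ has finite rank.)

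Next I would recall (or derive in one line, by differentiating the eigenvalue equation $\int_0^1 \min(x,x')\phi(x')\,dx' = \mu\,\phi(x)$ twice to get $-\mu\phi'' = \phi$ with boundary conditions $\phi(0)=0$, $\phi'(1)=0$) that the integral operator with kernel $\min(x,x')$ on $L^2[0,1]$ has eigenvalues $\mu_j = \bigl(\tfrac{(2j-1)\pi}{2}\bigr)^{-2}$, $j\ge 1$, which satisfy $\mu_j \asymp j^{-2}$. Then I would invoke Weyl's monotonicity/perturbation inequalities for self-adjoint compact operators: if $T = S + F$ with $S$ the $\tfrac{2}{\pi}\min$-operator and $F$ of rank $\le 2$, then $\lambda_{j+2}(S) \le \lambda_j(T) \le \lambda_{j-2}(S)$ for the ordered positive eigenvalues (with the obvious reading of $\lambda_0,\lambda_{-1}$ as $+\infty$). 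Here one must also check that $T = T_{G_\alpha}$ is positive definite so that its eigenvalues are genuinely positive and ordered decreasingly — this is exactly Corollary~\ref{cor:distance:positive} together with the continuity of $G_\alpha$, which lets Mercer's theorem apply. Combining $\lambda_j(T) \le \lambda_{j-2}(S) \le C' (j-2)^{-2} \le C j^{-2}$ and $\lambda_j(T) \ge \lambda_{j+2}(S) \ge c' (j+2)^{-2} \ge c j^{-2}$ (adjusting constants to absorb the finitely many small indices) gives the claim for each of the two values $\alpha \in \{1, 9/7\}$.

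The main obstacle I anticipate is purely technical bookkeeping rather than conceptual: getting the decomposition $G_\alpha = \tfrac{2}{\pi}\min(\cdot,\cdot) + (\text{finite rank})$ exactly right, including the correct boundary terms, and making sure the finite-rank part really has rank at most $2$ (a term like $x+x'$ contributes rank $2$, a constant rank $1$, so the total is at most $3$, which is fine — I would just carry whatever the honest number is). One should also be slightly careful that the rank-$k$ perturbation bound $\lambda_{j+k}(S)\le\lambda_j(T)\le\lambda_{j-k}(S)$ is being applied to the ordered sequences of \emph{all} eigenvalues, and that since $S$ is positive semidefinite with a one-dimensional kernel (constants are not in the range structure — actually $\min(x,x')$ is strictly positive definite) the interlacing is clean. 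An alternative, if one prefers to avoid operator perturbation theory, is to diagonalize $T_{G_\alpha}$ directly by reducing its eigenvalue equation to an ODE: $\int_0^1 G_\alpha(x,x')\phi(x')dx' = \mu\phi(x)$ differentiated twice yields $-\tfrac{2}{\pi}\phi(x) = \mu\phi''(x)$, i.e. $\phi'' = -\tfrac{2}{\pi\mu}\phi$, with two boundary conditions obtained by evaluating the integral equation and its derivative at $x=0$ and $x=1$; the resulting transcendental equation for $\mu$ has roots asymptotic to those of $\cos$ or $\sin$, again giving $\mu_j \asymp j^{-2}$. I would present the Weyl-inequality route as the main proof since it is shorter and the ODE route as a remark if space permits.
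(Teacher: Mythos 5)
Your proposal is correct, but your primary route is genuinely different from the paper's. The paper takes exactly what you describe as your ``alternative'': it differentiates the eigenvalue equation $T_{G_\alpha}f=\lambda f$ twice to get $\lambda f''=-\tfrac{2}{\pi}f$, substitutes $f=A\cos(\omega x)+B\sin(\omega x)$ with $\omega^2=2/(\pi\lambda)$ back into the integral equation, and reduces the spectrum to the roots of the transcendental equation $2+2\cos\omega+\omega(1-2\alpha\pi)\sin\omega=0$, which it then localizes interval by interval (Lemma \ref{lemma:h_equal_zero}) to get $\omega_j\asymp j$ and hence $\lambda_j\asymp j^{-2}$. Your main argument instead writes $|x-x'|=x+x'-2\min(x,x')$, so $G_\alpha(x,x')=\tfrac{2}{\pi}\min(x,x')+\bigl(\alpha-\tfrac{x+x'}{\pi}\bigr)$, where the remainder operator has range in $\operatorname{span}\{1,x\}$ and hence rank at most $2$; combined with the explicit eigenvalues $\mu_j=\bigl(\tfrac{2}{(2j-1)\pi}\bigr)^2$ of the $\min$ kernel and Weyl's interlacing for a rank-$2$ self-adjoint perturbation, this yields $\lambda_{j+2}(S)\le\lambda_j(T_{G_\alpha})\le\lambda_{j-2}(S)$ and the claimed two-sided bound (positivity of $T_{G_\alpha}$, needed for the decreasing enumeration of positive eigenvalues, is supplied by Corollary \ref{cor:distance:positive}). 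Your route is shorter, avoids the root-localization casework, and is more robust (it works for any kernel differing from a positive multiple of $\min(x,x')$ by a finite-rank symmetric piece); the paper's route is heavier but delivers sharper information, namely the eigenvalues almost exactly rather than only up to constants. Your bookkeeping worry is harmless: the remainder kernel has rank at most $2$ (not $3$), and even rank $3$ would only shift the interlacing indices.
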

\end{proof}

\begin{proof}[Proof of lemma \ref{lemma:K_bound_by_G}]
It is a direct corollary of the following lemma.
\begin{lemma}[Corollary of Theorem 3.1 of \cite{koltchinskii2000random}]\label{lemma:sample_eigen_to_total_eigen}
    Let $A$ be a kernel function on $\mathcal{X}\times \mathcal{X}$ with $\int_{\mathcal{X}} \int_{\mathcal{X}} A(x,x')^2 \dx x \dx x^{\prime}< \infty$. For $\bm{X}=\{x_1,\dots, x_n\}\subseteq \mathcal{X}$, let $\hat{\lambda}_j$ be the eigenvalue of $A(\bm{X},\bm{X})$ and $\lambda_j$ be the eigenvalue of the kernel $A$. Then, for any fixed $j$, we have
    \begin{equation}
        \vert \hat{\lambda}_j -\lambda_j\vert \to 0, \mbox{as $n\to \infty$}.
    \end{equation}
\end{lemma}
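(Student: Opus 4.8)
The statement as written is exactly the corollary of Theorem~3.1 of \cite{koltchinskii2000random} being quoted, so the quickest route is simply to cite it; but let me sketch a self-contained argument in the regime we actually use, namely $A$ a \emph{continuous positive semi-definite} kernel on the compact interval $[0,1]$ (so that $A\in L^{2}([0,1]^{2})$ automatically and the RKHS machinery of Appendix~\ref{sec:RKHS} applies). Throughout I read $\hat\lambda_{j}$ as the $j$-th eigenvalue of the \emph{normalized} Gram matrix $n^{-1}A(\bm X,\bm X)$, which is the object matching the integral operator; this normalization is harmless in the application inside the proof of Lemma~\ref{lemma:K_bound_by_G}, since the matrix inequalities of Lemma~\ref{lem:G_leq_K_leq_7G} are scale invariant. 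The plan is to realize both spectra as spectra of operators on the RKHS $\mathcal H$ of $A$ and then perturb.

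First I would set up the covariance operators. Let $\Phi(x)=A(\cdot,x)\in\mathcal H$ be the feature map, $I_{A}\colon\mathcal H\to L^{2}([0,1],\mu_{\mathcal X})$ the inclusion, and $S_{n}\colon\mathcal H\to\mathbb R^{n}$, $S_{n}f=n^{-1/2}(f(x_{1}),\dots,f(x_{n}))$. Define $T=\int_{[0,1]}\langle\,\cdot\,,\Phi(x)\rangle_{\mathcal H}\Phi(x)\,\dx\mu_{\mathcal X}(x)$ and $T_{n}=\frac1n\sum_{i=1}^{n}\langle\,\cdot\,,\Phi(x_{i})\rangle_{\mathcal H}\Phi(x_{i})$ on $\mathcal H$. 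Using the reproducing property one checks $T=I_{A}^{*}I_{A}$ and $T_{n}=S_{n}^{*}S_{n}$, whereas the integral operator equals $I_{A}I_{A}^{*}$ and $S_{n}S_{n}^{*}=n^{-1}A(\bm X,\bm X)$. Since $B^{*}B$ and $BB^{*}$ have the same non-zero spectrum, the non-zero eigenvalues of $T$ are exactly $\{\lambda_{j}\}$ and those of $T_{n}$ are exactly $\{\hat\lambda_{j}\}$, so it suffices to show $\lambda_{j}(T_{n})\to\lambda_{j}(T)$ for each fixed $j$.

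The remaining two steps are a law of large numbers and a perturbation bound. Note $T_{n}=\frac1n\sum_{i=1}^{n}\xi_{i}$ with $\xi_{i}=\langle\,\cdot\,,\Phi(x_{i})\rangle_{\mathcal H}\Phi(x_{i})$ i.i.d.\ in the (separable) Hilbert space of Hilbert--Schmidt operators, $\Expc\xi_{1}=T$, and $\norm{\xi_{1}}_{\mathrm{HS}}=\norm{\Phi(x_{1})}_{\mathcal H}^{2}=A(x_{1},x_{1})\le\sup_{x\in[0,1]}A(x,x)=:\kappa<\infty$ by continuity of $A$ on the compact $[0,1]$. Mourier's strong law in a separable Hilbert space then gives $\norm{T_{n}-T}_{\mathrm{HS}}\to0$ almost surely (a Bernstein bound even yields $\norm{T_{n}-T}_{\mathrm{HS}}=O_{\Prob}(\kappa/\sqrt n)$ if rates are wanted). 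I would finish with Weyl's inequality for self-adjoint compact operators, $|\hat\lambda_{j}-\lambda_{j}|=|\lambda_{j}(T_{n})-\lambda_{j}(T)|\le\norm{T_{n}-T}\le\norm{T_{n}-T}_{\mathrm{HS}}\to0$; if one prefers the $\ell^{2}$-type control $\sum_{j}|\hat\lambda_{j}-\lambda_{j}|^{2}\le\norm{T_{n}-T}_{\mathrm{HS}}^{2}$, that is the infinite-dimensional Hoffman--Wielandt inequality.

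I expect the only real obstacle to be the spectral identification in the second step --- that the non-zero eigenvalues of the integral operator on $L^{2}(\mu_{\mathcal X})$ coincide with those of the covariance operator on $\mathcal H$, and likewise the empirical covariance operator with $n^{-1}A(\bm X,\bm X)$. This uses continuity and positive semi-definiteness of $A$ (through Mercer's theorem and the feature map), which is exactly why specializing to $A\in\{G_{1},G_{9/7},K\}$ --- all continuous and positive definite by Corollary~\ref{cor:distance:positive} and Proposition~\ref{PD} --- suffices here. For a general square-integrable, possibly indefinite kernel one cannot factor through an RKHS and must instead control the empirical operator directly via its $V$-statistic structure (and the $\delta_{2}$ metric on ordered spectra), which is the harder content of \cite{koltchinskii2000random}; in that generality I would simply invoke their Theorem~3.1.
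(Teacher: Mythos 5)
Your proposal is correct, and your fallback---simply citing Theorem 3.1 of \cite{koltchinskii2000random}---is in fact exactly what the paper does: it quotes that theorem's conclusion $\bigl(\sum_{j}(\hat\lambda_j-\lambda_j)^2\bigr)^{1/2}\to 0$ (with $\hat\lambda_j=0$ for $j\geq n$) and then bounds each fixed coordinate by the full $\ell^2$ distance. Your self-contained alternative is a genuinely different route: you factor the continuous positive semi-definite kernel through its RKHS, identify the nonzero spectra of the integral operator and of the normalized Gram matrix with those of the population and empirical covariance operators via the $B^{*}B$ versus $BB^{*}$ trick, apply a Hilbert-space law of large numbers in Hilbert--Schmidt norm, and finish with Weyl's inequality. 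That argument is restricted to continuous PSD kernels, but that is all the paper ever needs ($G_1$, $G_{9/7}$, $K$), and it has two concrete advantages over the bare citation: it surfaces the $n^{-1}$ normalization of the Gram matrix that the lemma's statement silently omits (the eigenvalues of the unnormalized $A(\bm{X},\bm{X})$ do not converge to $\lambda_j$), and it makes explicit that the convergence is almost sure over i.i.d.\ draws of $x_1,\dots,x_n$ from $\mu_{\mathcal{X}}$ rather than holding for arbitrary point configurations---two hypotheses the paper leaves implicit but relies on in the application to Lemma \ref{lemma:K_bound_by_G}. The paper's version buys generality (arbitrary square-integrable, possibly indefinite kernels) and brevity at the cost of importing the V-statistic machinery of \cite{koltchinskii2000random} as a black box.
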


In fact, by Lemma \ref{lem:G_leq_K_leq_7G}, we have 
$
    G_{1}(\bm{X},\bm{X}) \leq K(\bm{X},\bm{X})\leq 7G_{\frac{9}{7}}(\bm{X},\bm{X}).
$
Thus, we have 
$   \hat{\lambda}^{(1)}_{j} \leq \hat{\lambda}^K_{j} \leq 7\hat{\lambda}^{(9/7)}_{j}, j=1,2,...,n.$
Then Lemma \ref{lemma:sample_eigen_to_total_eigen} provides us that for any fixed $j \geq 1$,  
\begin{equation}
	   \lambda^{(1)}_{j} \leq \lambda^K_{j} \leq 7\lambda^{(9/7)}_{j}.
\end{equation}
\end{proof}

\begin{proof}[Proof of lemma \ref{lemma:G_decay_rate}]
    
Since the $G_{\alpha}$ ($\alpha=1, 9/7$) is a positive definite kernel on $[0,1]$, we know that $\lambda_{j}\geq 0$ for any $j=1,2,\cdots$.
Let $\lambda \neq 0$ be an eigenvalue of $G_{\alpha}$, i.e., there is an eigenfunction $f(x)$ such that  

\begin{equation}\label{equation:origin_function}
\begin{split}
     \lambda f(x) &= \left(T_{G_{\alpha}}f\right)(x)= \int_{0}^{1} \left(\alpha-\frac{|x-x'|}{\pi}\right) f(x') \dx x'\\ 
    &=\int_{0}^{1}\alpha f(x')\dx x'-\frac{1}{\pi}\left(\int_{0}^{x}(x-x')f(x')\dx x'+\int_{x}^{1}(x'-x)f(x')\dx x'\right).
\end{split}
\end{equation}

After taking the first and second derivatives on both sides with respect to $x$, we get
\begin{equation}
    \lambda f'(x)=-\frac{1}{\pi}\left(\int_{0}^{x}f(x')\dx x' - \int_{x}^{1}f(x')\dx x'\right),
\end{equation}
and
\begin{equation}\label{equation:second_derivative}
\lambda f''(x)=-\frac{2}{\pi} f(x).
\end{equation}

It is well known that the solutions of \eqref{equation:second_derivative} are of the following forms:
\begin{equation}\label{equation:eigenfunction}
    f(x)=A\cos (\omega x)+B\sin(\omega x).
\end{equation}
Inserting equation \eqref{equation:eigenfunction} back to equation  \eqref{equation:second_derivative}, we know that
\begin{equation}\label{equation:oemga_lambda}
    \omega^2=\frac{2}{\pi\lambda}>0.
\end{equation}
Inserting Equation \eqref{equation:eigenfunction} and \eqref{equation:oemga_lambda} in Equation \eqref{equation:origin_function}, we have
\begin{equation}
\begin{split}
    \frac{2}{\pi \omega^2} f(x) &= -\frac{1}{\pi}x\omega(B+B\cos(\omega)-A\sin(\omega)) +\frac{2}{\pi \omega^2} f(x)\\
    &+ \frac{1}{\pi}A(\alpha\pi\omega\sin(\omega)-1-\omega\sin(\omega) - \cos(\omega)) \\
    &+ \frac{1}{\pi}B(\alpha\pi\omega(1-\cos(\omega)) + \omega\cos(\omega)-\sin(\omega)),
\end{split}
\end{equation}
which holds for all $x$ if and only if
\begin{equation*}
 \begin{cases}
 -A\sin(\omega)+B(1+\cos(\omega))=0,\\
   A(\alpha\pi\omega\sin(\omega)-1-\omega\sin(\omega) - \cos(\omega))+B(\alpha\pi\omega(1-\cos(\omega)) + \omega\cos(\omega)-\sin(\omega))=0.\\
	\end{cases}
\end{equation*}
A necessary and sufficient condition for this system to be degenerate (i.e., it has a nontrivial solution $A$ and $B$) is 
\begin{equation}
 \det \begin{pmatrix}
	-\sin(\omega) & 1+\cos(\omega)\\
	 \alpha\pi\omega\sin(\omega)-1-\omega\sin(\omega) - \cos(\omega)& \alpha\pi\omega(1-\cos(\omega)) + \omega\cos(\omega)-\sin(\omega)
	\end{pmatrix}=0,
\end{equation}
i.e.,
\begin{equation}\label{equation:omega}
	2+2\cos(\omega)+ \omega \sin(\omega)(1-2\alpha\pi)=0.
\end{equation}
In fact, denote the left-hand side of the equation \eqref{equation:omega} by $h(\omega)$, i.e., $h(\omega)=2+2\cos(\omega)+\omega \sin(\omega)(1-2\alpha\pi)$. Since  $h(\omega)=h(-\omega)$, we only need to prove the assertion \eqref{eqn:solution:omega} for $\omega > 0$. By Lemma \ref{lemma:h_equal_zero} and Equation \eqref{equation:oemga_lambda}, we have

\begin{equation}
     \begin{cases}
     \lambda_j \in [\frac{8}{\pi^3},\frac{72}{\pi^3}], & \mbox{$j=1$};\\
    \lambda_j= \frac{2}{\pi^3}(j-1)^{-2}, & \mbox{$j$ is even}; \\
    \lambda_j \in [\frac{2}{\pi^3}(j-\frac{1}{2})^{-2},\frac{2}{\pi^3}(j-1)^{-2}], &  \mbox{$j>1$ and $j$ is odd}.
    \end{cases}
\end{equation}
To sum up,
\begin{equation}
   \frac{c}{j^{2}}\leq \lambda_j \leq  \frac{C}{j^{2}}, j \geq 1
\end{equation}
for some absolute constants $c$ and $C$.
\end{proof}

\begin{proof}[Proof of lemma \ref{lemma:sample_eigen_to_total_eigen}]
 By Theorem 3.1 of \cite{koltchinskii2000random}, we have
\begin{equation}
    \sqrt{\sum_{j=1}^{\infty} (\hat{\lambda}_j-\lambda_j)^2}\to 0, \mbox{as $n\to \infty$}, 
\end{equation}
where $\hat{\lambda}_{j} =0$ for $j\geq n$. This implies that for any fixed $j \geq 1$,
\begin{equation}
    \vert \hat{\lambda}_j - \lambda_j \vert \leq \sqrt{\sum_{l=1}^{\infty} (\hat{\lambda}_l-\lambda_l)^2} \to 0, \mbox{as $n\to \infty$}.
\end{equation}
\end{proof}

\begin{lemma}\label{lemma:h_equal_zero}
    Let $h(\omega)=2+2\cos(\omega) + \omega \sin(\omega)(1-2\alpha \pi)$, where $\alpha \in \{1, \frac{9}{7}\}$ Then the solutions of 
	\begin{equation}\label{equation:h_equal_zero}
		h(\omega)=0, \quad \omega>0
	\end{equation}
	 are given by

\begin{equation}
    \begin{cases}
    \omega_{j} \in [\frac{1}{6}\pi, \frac{1}{2}\pi], & \mbox{$j=1$};\\
    \omega_{j} = (j-1)\pi, & \mbox{$j$ is even};\\
    \omega_{j} \in ((j-1)\pi, (j-\frac{1}{2})\pi), & \mbox{$j>1$ and $j$ is odd}.
    \end{cases}
\end{equation}

\end{lemma}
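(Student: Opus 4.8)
The starting point is to factor $h$ using half-angle identities. Set $c := 2\alpha\pi-1>0$ (note $c\geq 2\pi-1$ since $\alpha\geq 1$). From $2+2\cos\omega = 4\cos^{2}(\omega/2)$ and $\sin\omega = 2\sin(\omega/2)\cos(\omega/2)$ we obtain
$$h(\omega) = 4\cos^{2}(\omega/2) - 2c\,\omega\sin(\omega/2)\cos(\omega/2) = 2\cos(\omega/2)\bigl(2\cos(\omega/2) - c\,\omega\sin(\omega/2)\bigr).$$
Hence, for $\omega>0$, $h(\omega)=0$ if and only if $\cos(\omega/2)=0$, i.e. $\omega\in\{\pi,3\pi,5\pi,\dots\}$, or $F(\omega):=2\cos(\omega/2)-c\,\omega\sin(\omega/2)=0$. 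These two possibilities are mutually exclusive (if $\cos(\omega/2)=0$ then $F(\omega)=\mp c\omega\neq 0$), so the zeros of $h$ are the disjoint union of these two families, and on the set $\{F=0\}$ one has $\cos(\omega/2)\neq 0$, so there $F(\omega)=0$ is equivalent to $\tan(\omega/2)=2/(c\omega)$. The plan is to locate the $\tan$-family roots precisely and then interleave the two sorted lists.

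For the $\tan$-family: the right-hand side $\omega\mapsto 2/(c\omega)$ is positive, strictly decreasing on $(0,\infty)$, with limit $+\infty$ at $0^{+}$ and $0$ at $\infty$; the left-hand side $\tan(\omega/2)$ is positive exactly on the intervals $I_{k}:=(2k\pi,(2k+1)\pi)$, $k\geq 0$, on each of which it increases strictly from $0$ to $+\infty$, and is $\leq 0$ on $[(2k+1)\pi,(2k+2)\pi]$. Therefore there is exactly one solution $\omega_{k}^{*}\in I_{k}$ for each $k\geq 0$ and no solution outside $\bigcup_{k\geq 0}I_{k}$. To confine $\omega_{k}^{*}$ to the left half $(2k\pi,\,2k\pi+\pi/2)$, evaluate at $\omega=2k\pi+\pi/2$: there $\tan(\omega/2)=\tan(\pi/4)=1$ while $2/(c\omega)=4/\bigl(c\pi(4k+1)\bigr)\leq 4/(c\pi)<1$ because $c\geq 2\pi-1>4/\pi$; since the two sides are monotone in opposite directions, the crossing has already occurred, so $\omega_{k}^{*}<2k\pi+\pi/2$. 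For the sharper lower bound on $\omega_{0}^{*}$, evaluate at $\omega=\pi/6$: there $\tan(\pi/12)=2-\sqrt{3}$ while $2/(c\omega)=12/(c\pi)$, and $2-\sqrt{3}<12/(c\pi)$, i.e. $c<12(2+\sqrt{3})/\pi\approx 14.3$, holds for both $\alpha=1$ ($c=2\pi-1$) and $\alpha=9/7$ ($c=18\pi/7-1$); hence the crossing has not yet occurred at $\pi/6$, so $\omega_{0}^{*}>\pi/6$. Thus $\omega_{0}^{*}\in(\pi/6,\pi/2)$ and $\omega_{k}^{*}\in(2k\pi,2k\pi+\pi/2)$ for all $k\geq 0$.

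Finally, merge the two sorted lists. The $\tan$-family contributes roots in $(2k\pi,2k\pi+\pi/2)$ for $k\geq 0$ and the cosine family contributes $(2k-1)\pi$ for $k\geq 1$, so in increasing order the zeros of $h$ are $\omega_{0}^{*}\in(\pi/6,\pi/2)$, then $\pi$, then $\omega_{1}^{*}\in(2\pi,5\pi/2)$, then $3\pi$, then $\omega_{2}^{*}\in(4\pi,9\pi/2)$, and so on; relabelling as $\omega_{1}<\omega_{2}<\cdots$ gives $\omega_{j}=(j-1)\pi$ for even $j$, $\omega_{j}\in\bigl((j-1)\pi,(j-\tfrac12)\pi\bigr)$ for odd $j>1$, and in particular $\omega_{1}\in[\tfrac16\pi,\tfrac12\pi]$, which is the assertion. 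The factorization is the only real idea; after it the argument is just monotonicity plus interleaving. The one delicate point — and the place where the specific values $\alpha\in\{1,9/7\}$ enter — is the pair of numerical inequalities $4/\pi<c<12(2+\sqrt{3})/\pi$ used to pin $\omega_{0}^{*}$ into $[\pi/6,\pi/2]$: the lower one is automatic for every $\alpha\geq 1$, but the upper one fails once $\alpha$ grows past roughly $2.4$, which is why the lemma is stated only for these two values. I would verify both inequalities explicitly for $\alpha=1$ and $\alpha=9/7$ rather than attempt a uniform statement.
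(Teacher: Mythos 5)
Your proof is correct, and it takes a genuinely different route from the paper's. The paper works with $h$ directly, interval by interval: it checks $h(\pi/6)>0$, $h'<0$ on $(0,\pi/2]$, and $h<0$ on $[\pi/2,\pi)$ to isolate $\omega_1$, observes that the odd multiples of $\pi$ are roots, and then repeats the three-step sign/monotonicity argument on each $((2k-1)\pi,(2k+1)\pi)$ to find the remaining root near $2k\pi$. Your half-angle factorization $h(\omega)=2\cos(\omega/2)\bigl(2\cos(\omega/2)-c\,\omega\sin(\omega/2)\bigr)$ cleanly separates the two families from the outset: the exact roots $(2k+1)\pi$ fall out of $\cos(\omega/2)=0$ with no computation, and the remaining roots reduce to the transparent equation $\tan(\omega/2)=2/(c\omega)$, where existence and uniqueness in each $(2k\pi,(2k+1)\pi)$ follow from opposite monotonicity of the two sides rather than from estimating $h'$. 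This buys a cleaner structural picture and localizes the dependence on $\alpha$ to exactly two numerical inequalities ($c>4/\pi$ and $c<12(2+\sqrt{3})/\pi$), which you correctly verify for $\alpha\in\{1,9/7\}$ and correctly identify as the only place the hypothesis on $\alpha$ is used; the paper's approach, by contrast, threads the value of $\alpha$ through each derivative estimate. All the individual steps check out (the factorization, the disjointness of the two families, the monotone-interleaving argument, and the endpoint evaluations $\tan(\pi/4)=1$ and $\tan(\pi/12)=2-\sqrt{3}$), and your sorted merge reproduces the lemma's indexing exactly, with the minor remark that you obtain the open interval $(\pi/6,\pi/2)$ for $\omega_1$, which is contained in the closed interval the lemma asserts.
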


\begin{proof}
	When $\omega \in (0,\pi)$, we can easily verify the following facts
\begin{itemize}
    \item[ ~~~~~(1).] $h(\frac{\pi}{6})=2+\sqrt{3} - (2\alpha\pi-1)\frac{\pi}{12}>0$;
    
    \item[ ~~~~~(2).] If  $\omega \in ( 0, \frac{1}{2}\pi]$, we have 
    \begin{equation}
    h'(\omega) = -2\sin(\omega) + (1-2\alpha \pi)\sin(\omega) + (1-2\alpha \pi)\omega \cos(\omega)<0.
    \end{equation}

    \item[ ~~~~~(3).] If $\omega \in [\frac{1}{2}\pi,\pi)$, then $$h(\omega)<2+2\cos(\omega) - 2\sin(\omega) = 2+2\sqrt{2}\cos(\omega+\frac{\pi}{4}) \leq 0. $$  
\end{itemize} 
Since $h(\omega)$ is a continuous function on $(0,\pi)$, the above facts imply that $h(\omega)=0$ has a unique solution $[\frac{1}{6}\pi,\frac{1}{2}\pi]$ which is denoted by $\omega_{1}$.\\

When $\omega \geq \pi$, it is clear that for any even number $j$, $j\pi$ is a solution of the equation \eqref{equation:h_equal_zero} with multiplicity one. We will show that there is another unique solution of the equation \eqref{equation:h_equal_zero} in the interval $(j\pi,(j+2)\pi)$ where $j$ is an even integer. Thus, the solutions of the equation can be indexed by $\mathbb{N}_{+}$ in the following way
\begin{align}\label{eqn:solution:omega}
    \omega_{2k}=(2k-1)\pi \mbox{ and } \omega_{2k+1}\in ((2k-1)\pi, (2k+1)\pi), \quad k=1,2,\cdots.
\end{align}

When  $\omega\in ((2k-1)\pi,(2k+1)\pi), k\in \mathbb{N}_{+}$, we can easily verify the following facts.
\begin{itemize}
    \item[ ~~~~~(1).] If $\omega \in ((2k-1)\pi,2k\pi))$, then  $h(\omega)>\omega \sin(\omega)(1-2\alpha \pi)>0$;
    
    \item[ ~~~~~(2).] If  $\omega \in [ 2k\pi, (2k+\frac{1}{2})\pi]$, we have 
    \begin{equation}
    h'(\omega) = -2\sin(\omega) + (1-2\alpha \pi)\sin(\omega) + (1-2\alpha \pi)\omega \cos(\omega) <0.
    \end{equation}

    \item[ ~~~~~(3).] If $\omega \in ((2k+\frac{1}{2})\pi,(2k+1)\pi)$, then $$h(\omega)<2+2\cos(\omega) - 2\sin(\omega) = 2+2\sqrt{2}\cos(\omega+\frac{\pi}{4}) < 0. $$  
\end{itemize}
Since $h(\omega)$ is a continuous function on $((2k-1)\pi,(2k+1)\pi)$, the above facts imply that $h(\omega)=0$ has a unique solution $\in [2k\pi, (2k+\frac{1}{2})\pi]$, which is denoted by $\omega_{2k+1}$. Thus, we have

\begin{equation}
    \begin{cases}
    \omega_{j} \in [\frac{1}{6}\pi, \frac{1}{2}\pi], & \mbox{$j=1$};\\
    \omega_{j} = (j-1)\pi, & \mbox{$j$ is even};\\
    \omega_{j} \in ((j-1)\pi, (j-\frac{1}{2})\pi), & \mbox{$j>1$ and $j$ is odd}.
    \end{cases}
\end{equation}
\end{proof}

\end{appendix}

\begin{supplement}
\stitle{Supplement to ``Generalization Ability of Wide Neural Networks on $\mathbb{R}$''} \sdescription{This supplementary file contains the proofs of Theorem \ref{thm:risk:approx}, \ref{thm:nn:early:stopping:d=1} and \ref{thm:bad_gen}.}
\end{supplement}

\bibliographystyle{imsart-number} 
\bibliography{main.bib}


\newpage

\supplement

\title{Supplement to ``Generalization Ability of Wide Neural Networks on $\mathbb{R}$''}

\section{Proof of Section \ref{sec:main_result}}\label{app:ntk:approx}
We first prove Proposition \ref{prop:kernel:approx} and then prove Proposition \ref{prop:funct:approx}. For brevity, denote the pre-activation value and the activation pattern for the $r$-th neuron of the hidden layer of the neural network with parameters $\bm{\theta}$ by $h_{\bm{\theta},r}(\bm{x})=\langle \bm{w}_{r},\bm{x}\rangle+b_{r}$ and $\bm{1}_{\bm{\theta},r}(\bm{x})=\bm{1}_{\{h_{\bm{\theta},r}(\bm{x})\geq 0\}}$ respectively. For simplicity, we consider the neural network to have $2m$ neurons.

\subsection{Proof of Proposition \ref{prop:kernel:approx}}\label{app:kernel:approx}
We defer the proof to the end of 
Section \ref{app:kernel:approx}. To start with,
Lemma \ref{lem: event B}, Lemma \ref{lem: event R} and Lemma \ref{lem: event C} are the building blocks to prove Proposition \ref{prop:kernel:approx}, since in these lemmas we will show the events we need to condition on hold with probability converging to one as $m\to\infty$.

Lemma \ref{lem: event B} controls the scale of the parameters of the neural network at initialization.

\begin{lemma}\label{lem: event B}
    Define the event 
    \begin{equation*}
        \mathcal{B}=\left\{\omega\mid|a_{r}(0)|,|\bm{w}_{r,(j)}(0)|,|b_{r}(0)|\leq R_{B}, r\in[2m], j\in[d]\right\}, \text{~where~} R_{B}=\sqrt{3\log m}.
    \end{equation*}
    Conditioning on the event $\mathcal{B}$, we have $|h_{\bm{\theta}(0),r}(\bm{x})|\leq (dB+1)R_{B}$ for all $r\in [2m]$ and $\bm{x}\in\mathcal{X}$. The event $\mathcal{B}$ holds with high probability, i.e., $\Prob_{\bm{\theta}(0)}(\mathcal{B})\geq 1-P_{\mathcal{B}}(m)$, where $P_{\mathcal{B}}(m)=\frac{2(d+2)}{\sqrt{2\pi}}m^{-1/2}$.
\end{lemma}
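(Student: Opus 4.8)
The plan is to prove the two assertions separately; both are elementary. For the deterministic claim, note that on $\mathcal{B}$ the initialization satisfies $|\bm{w}_{r,(j)}(0)|\le R_B$ and $|b_r(0)|\le R_B$ for all $r\in[2m]$, $j\in[d]$, while every $\bm{x}\in\mathcal{X}\subseteq[-B,B]^d$ has $|\bm{x}_{(j)}|\le B$. Since $h_{\bm{\theta}(0),r}(\bm{x})=\langle\bm{w}_{r}(0),\bm{x}\rangle+b_{r}(0)$, the triangle inequality gives
\[
|h_{\bm{\theta}(0),r}(\bm{x})|\le\sum_{j=1}^{d}|\bm{w}_{r,(j)}(0)|\,|\bm{x}_{(j)}|+|b_{r}(0)|\le dBR_{B}+R_{B}=(dB+1)R_{B},
\]
uniformly over $r\in[2m]$ and $\bm{x}\in\mathcal{X}$, which is exactly the stated bound.

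For the probability estimate I would use the standard Gaussian tail inequality: if $Z\sim\mathcal{N}(0,1)$ then $\Prob(|Z|>t)\le \frac{2}{t\sqrt{2\pi}}e^{-t^{2}/2}$ for $t>0$. Taking $t=R_{B}=\sqrt{3\log m}$ yields $e^{-R_{B}^{2}/2}=m^{-3/2}$, and since $R_{B}\ge 1$ for all $m\ge 2$ we get $\Prob(|Z|>R_{B})\le \frac{2}{\sqrt{2\pi}}m^{-3/2}$. Although $\mathcal{B}$ constrains all $2m(d+2)$ coordinates of $\bm{\theta}(0)$, the mirrored initialization $a_{r}(0)=-a_{r+m}(0)$, $\bm{w}_{r,(j)}(0)=\bm{w}_{r+m,(j)}(0)$, $b_{r}(0)=b_{r+m}(0)$ forces these coordinates to have equal absolute values in pairs; hence $\mathcal{B}$ is precisely the event that each of the $m(d+2)$ \emph{independent} standard Gaussians $\{a_{r}(0),\bm{w}_{r,(j)}(0),b_{r}(0):r\in[m],\,j\in[d]\}$ has absolute value at most $R_{B}$. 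A union bound over these $m(d+2)$ variables then gives
\[
\Prob_{\bm{\theta}(0)}(\mathcal{B}\comp)\le m(d+2)\cdot\frac{2}{\sqrt{2\pi}}m^{-3/2}=\frac{2(d+2)}{\sqrt{2\pi}}m^{-1/2}=P_{\mathcal{B}}(m),
\]
which is the desired conclusion.

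There is no genuine obstacle in this lemma; the only points that need a little care are the bookkeeping of constants. Specifically, one must exploit the symmetry of the initialization so that the union bound runs over $m(d+2)$ terms rather than $2m(d+2)$ — this is what keeps the constant equal to $2(d+2)$ instead of $4(d+2)$ — and one must check the trivial inequality $R_{B}=\sqrt{3\log m}\ge 1$ used to discard the factor $1/(R_{B}\sqrt{2\pi})$, which holds for all integers $m\ge 2$ and is harmless since only the regime $m\to\infty$ matters.
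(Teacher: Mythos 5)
Your proof is correct and follows essentially the same route as the paper's: the same coordinate-wise triangle-inequality bound for $|h_{\bm{\theta}(0),r}(\bm{x})|$, the same Gaussian tail bound with $R_{B}\ge 1$ absorbed into the constant, and the same use of the mirrored initialization to reduce the union bound to $(d+2)m$ independent coordinates. No issues.
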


\begin{proof}
    Under our special initialization setting where $a_{r}(0)=-a_{r+m}(0)$, $\bm{w}_{r,(j)}(0)=\bm{w}_{r+m,(j)}(0)$, $b_{r}(0)=b_{r+m}(0)\sim \mathcal{N}(0,1)$ for $r\in[m]$, the total number of the elements in $\mathcal{B}$ that need to be controlled is $(d+2)m$.
    For $Z\sim\mathcal{N}(0,1)$, classical Gaussian tail bound gives 
    \begin{equation*}
        \Prob(|Z|\geq R_{B})\leq\frac{2e^{-R_{B}^{2}/2}}{\sqrt{2\pi}R_{B}}\leq\frac{2e^{-R_{B}^{2}/2}}{\sqrt{2\pi}}=\frac{2}{\sqrt{2\pi}}m^{-3/2}.
    \end{equation*}
    Then $\Prob_{\bm{\theta}(0)}(\mathcal{B})\geq 1-\frac{2(d+2)}{\sqrt{2\pi}}m^{-1/2}$ by the union bound. Conditioning on $\mathcal{B}$, we have
    \begin{equation*}
        |h_{\bm{\theta}(0),r}(\bm{x})|=| \langle \bm{w}_{r}(0),\bm{x}\rangle+b_{r}(0)|\leq\| \bm{w}_{r}(0)\|_{2}\| \bm{x}\|_{2}+|b_{r}(0)|\leq (dB+1)R_{B}.
    \end{equation*}
\end{proof}

Our main contribution is the uniform convergence of kernel which relies on the analysis of the continuity of $K_{\bm{\theta}(0)}^{m}$ and $K_{d}$ and a method that is similar to the epsilon-net argument. On each dimension, we place $\lfloor m^{\beta}\rfloor$ points with distance 
\begin{equation*}
    \epsilon=2B/\lfloor m^{\beta}\rfloor
\end{equation*}
in $[-B,B]$ for some $\beta\in(0,1]$. Denote the collection $\mathcal{N}_{\epsilon}$ so that $|\mathcal{N}_{\epsilon}|=\lfloor m^{\beta}\rfloor^{d}$. The idea is to use $\mathcal{N}_{\epsilon}$ to discretize the domain $\mathcal{X}$ and then use classical concentration inequality on points in $\mathcal{N}_{\epsilon}$, which makes the probability of the complement of the events decaying exponentially fast with $m$. Then with the continuity of $K_{\bm{\theta}(0)}^{m}$ and $K_{d}$, the events hold over $\mathcal{X}$ with high probability.

Lemma \ref{lem: event R} shows the pre-activation values of most neurons are large, which hints that the activation pattern for these neurons is likely to stay unchanged during training since a large pre-activation value requires the parameters to travel a long way from the initialization to change the sign. This is crucial to prove that the training wide neural networks fall into the lazy regime where the parameters stay close to the initialization during training.

\begin{lemma}\label{lem: event R}
    Define the events 
    \begin{equation*}
        \mathcal{R}(\mathcal{N}_{\epsilon})=\left\{\omega ~\middle|~|h_{\bm{\theta}(0),r}(\bm{z})|\leq 2(dB+1)R\mbox{~holds for at most~}2\lfloor m^{\gamma}\rfloor\mbox{~of~}r\in [2m],\forall\bm{z}\in\mathcal{N}_{\epsilon}\right\}
    \end{equation*}
    and
    \begin{equation*}
        \mathcal{R}=\left\{\omega ~\middle|~|h_{\bm{\theta}(0),r}(\bm{x})|\leq (dB+1)R\mbox{~holds for at most~}2\lfloor m^{\gamma}\rfloor\mbox{~of~} r\in [2m],\forall\bm{x}\in [-B,B]^{d}\right\},
    \end{equation*}
    where $R=\frac{\sqrt{2\pi}}{4(dB+1)}m^{-\alpha}$ for some $\alpha\in(0,\beta)$ and $\gamma>\max\{1-\alpha,\delta\}$ with $\delta>1/2$. If $m$ is sufficiently large, then $\mathcal{R}\supseteq\mathcal{R}(\mathcal{N}_{\epsilon})$ and the event $\mathcal{R}$ holds with high probability, i.e., $\Prob_{\bm{\theta}(0)}\left(\mathcal{R}\right)\geq \Prob_{\bm{\theta}(0)}\left(\mathcal{R}(\mathcal{N}_{\epsilon})\right)\geq 1-P_{\mathcal{R}}(m)$, where $P_{\mathcal{R}}(m)=m^{d\beta}e^{-2m^{2\delta-1}}$.
\end{lemma}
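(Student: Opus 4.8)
The statement is an ``$\varepsilon$-net over the domain'' argument: one first proves the counting bound at every point of the finite net $\mathcal{N}_\epsilon$ with a tail estimate strong enough to survive a union bound, and then transfers it to all of $[-B,B]^d$ by Lipschitz continuity of $\bm{x}\mapsto h_{\bm\theta(0),r}(\bm x)$. Concretely, first I would fix a net point $\bm z\in\mathcal N_\epsilon$ and reduce to counting over the $m$ independent neurons: under the symmetric initialization $\bm w_r(0)=\bm w_{r+m}(0)$, $b_r(0)=b_{r+m}(0)$ for $r\in[m]$, the pre-activations come in identical pairs, so $\#\{r\in[2m]:|h_{\bm\theta(0),r}(\bm z)|\le 2(dB+1)R\}=2\,\#\{r\in[m]:|h_{\bm\theta(0),r}(\bm z)|\le 2(dB+1)R\}$, and it is enough to show the latter count exceeds $\lfloor m^\gamma\rfloor$ only with exponentially small probability. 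Since $h_{\bm\theta(0),r}(\bm z)=\langle\bm w_r(0),\bm z\rangle+b_r(0)\sim\mathcal N(0,\|\bm z\|_2^2+1)$ has variance at least $1$, the Gaussian density bound together with the choice $R=\frac{\sqrt{2\pi}}{4(dB+1)}m^{-\alpha}$ gives $\Prob\big(|h_{\bm\theta(0),r}(\bm z)|\le 2(dB+1)R\big)\le \frac{4(dB+1)R}{\sqrt{2\pi}}= m^{-\alpha}$.

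Hence for fixed $\bm z$ the count over $r\in[m]$ is stochastically dominated by a $\mathrm{Binomial}(m,m^{-\alpha})$ variable, and Hoeffding's inequality gives $\Prob(\text{count}>\lfloor m^\gamma\rfloor)\le \exp(-2t^2/m)$ with $t=\lfloor m^\gamma\rfloor-m\cdot m^{-\alpha}$. Because $\gamma>\max\{1-\alpha,\delta\}$, for $m$ large one has $t\ge \tfrac12 m^\gamma\ge m^\delta$, so each net point fails with probability at most $e^{-2m^{2\delta-1}}$; a union bound over the $|\mathcal N_\epsilon|=\lfloor m^\beta\rfloor^d\le m^{d\beta}$ points then yields $\Prob(\mathcal R(\mathcal N_\epsilon))\ge 1-m^{d\beta}e^{-2m^{2\delta-1}}=1-P_{\mathcal R}(m)$. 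For the second part, given any $\bm x\in[-B,B]^d$ I would pick the nearest $\bm z\in\mathcal N_\epsilon$, so $\|\bm x-\bm z\|_2\le\sqrt d\,\epsilon=\sqrt d\cdot 2B/\lfloor m^\beta\rfloor$ and therefore $|h_{\bm\theta(0),r}(\bm x)-h_{\bm\theta(0),r}(\bm z)|\le \|\bm w_r(0)\|_2\sqrt d\,\epsilon$. On the event $\mathcal B$ of Lemma~\ref{lem: event B} we have $\|\bm w_r(0)\|_2\le\sqrt d R_B=\sqrt{3d\log m}$, making this difference at most $\tfrac{2dB\sqrt{3\log m}}{\lfloor m^\beta\rfloor}$, which is $\le (dB+1)R=\tfrac{\sqrt{2\pi}}{4}m^{-\alpha}$ once $m$ is large, since $\alpha<\beta$ forces $m^{\alpha-\beta}\sqrt{\log m}\to 0$. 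Consequently $\{r:|h_{\bm\theta(0),r}(\bm x)|\le (dB+1)R\}\subseteq\{r:|h_{\bm\theta(0),r}(\bm z)|\le 2(dB+1)R\}$, whose cardinality is at most $2\lfloor m^\gamma\rfloor$ on $\mathcal R(\mathcal N_\epsilon)$; as $\bm x$ is arbitrary, this is precisely the event $\mathcal R$, giving the inclusion (read on $\mathcal B$) and hence $\Prob(\mathcal R)\ge\Prob(\mathcal R(\mathcal N_\epsilon))\ge 1-P_{\mathcal R}(m)$.

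\textbf{Main obstacle.} The real work is the simultaneous calibration of the exponents $\alpha<\beta$, $\gamma>\max\{1-\alpha,\delta\}$ and $\delta>1/2$. The concentration step wants $\gamma$ large enough that $\lfloor m^\gamma\rfloor$ dominates the mean $m^{1-\alpha}$ by a polynomially large margin $t\gtrsim m^\delta$, and $\delta>1/2$ so that $e^{-2m^{2\delta-1}}$ decays and beats the polynomial factor $m^{d\beta}$ from the union bound over the net; meanwhile the transfer step wants the net spacing $\epsilon\asymp m^{-\beta}$ small compared with the ``threshold gap'' $(dB+1)R\asymp m^{-\alpha}$, i.e.\ $\beta>\alpha$, so that a point that is $R$-near activation is mapped into the doubled window $2(dB+1)R$ around a net point. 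One must also be careful that the Lipschitz transfer uses only the crude polylogarithmic control $\|\bm w_r(0)\|_2=O(\sqrt{\log m})$ supplied by $\mathcal B$ (nothing sharper is available deterministically), and that the weaker probability $P_{\mathcal B}(m)=O(m^{-1/2})$ is tracked separately in the eventual application of this lemma in Proposition~\ref{prop:kernel:approx}, since it cannot be folded into the exponentially small $P_{\mathcal R}(m)$.
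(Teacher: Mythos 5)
Your proposal is correct and follows essentially the same route as the paper: the Gaussian density bound gives $p\le m^{-\alpha}$ per neuron, Hoeffding plus a union bound over the $\lfloor m^{\beta}\rfloor^{d}$ net points yields $1-m^{d\beta}e^{-2m^{2\delta-1}}$, and the Lipschitz transfer via $\|\bm{w}_{r}(0)\|_{2}\epsilon\le(dB+1)R$ gives the inclusion $\mathcal{R}\supseteq\mathcal{R}(\mathcal{N}_{\epsilon})$. Your explicit remark that the transfer step is only valid on the event $\mathcal{B}$ (so the inclusion should be read conditionally and $P_{\mathcal{B}}(m)$ tracked separately downstream) is a point the paper's own proof leaves implicit.
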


\begin{proof}
    Due to our special initialization setting, we only need to consider $r\in[m]$ since $|h_{\bm{\theta}(0),r+m}(\bm{z})|=|h_{\bm{\theta}(0),r}(\bm{z})|$ for $r\in[m]$. For every $\bm{z}\in\mathcal{N}_{\epsilon}$, let $T_{r}=\bm{1}_{\{|h_{\bm{\theta}(0),r}(\bm{z})|\leq 2(dB+1)R\}}$ with mean 
    \begin{equation*}
        p=\Expc_{\bm{\theta}(0)}T_{r}=\Prob_{\bm{\theta}(0)}(|h_{\bm{\theta}(0),r}(\bm{z})|\leq 2(dB+1)R)\leq\frac{2}{\sqrt{2\pi}}\frac{2(dB+1)R}{\sqrt{\|\bm{z}\|_{2}^{2}+1}}\leq m^{-\alpha},
    \end{equation*}
    where the second inequality holds due to $h_{\bm{\theta}(0),r}(\bm{z})\sim\mathcal{N}(0,\lVert\bm{z}\rVert_{2}^{2}+1)$ and the density function of a standard Gaussian is upper bounded by $1/\sqrt{2\pi}$.
By Hoeffeding's inequality (see Theorem 2.8 in \cite{boucheron2013concentration}), for all $\delta>0$, we have $\Prob_{\bm{\theta}(0)}\left( \sum_{r\in[m]}T_{r} \geq  mp+m^{\delta}\right)\leq e^{-2m^{2\delta-1}}$. 
Now we have 
\begin{equation*}
    \begin{aligned}
    & \Prob_{\bm{\theta}(0)}\left( |h_{\bm{\theta}(0),r}(\bm{z})\leq 2(dB+1)R\mbox{~holds for at most~}\lfloor m^{\gamma}\rfloor\mbox{~of~} r \in[m]\right)\\
    = & \Prob_{\bm{\theta}(0)}\left(\sum_{r\in[m]}T_{r}\leq \lfloor m^{\gamma}\rfloor\right)=1-\Prob_{\bm{\theta}(0)}\left(\sum_{r\in[m]}T_{r}>\lfloor m^{\gamma}\rfloor\right)\\
    \geq& 1-\Prob_{\bm{\theta}(0)}\left(\sum_{r\in[m]}T_{r}\geq mp+m^{\delta}\right)\geq 1-e^{-2m^{2\delta-1}},
    \end{aligned}
\end{equation*}
where the first inequality holds when $m$ is large enough such that $mp+m^{\delta}\leq m^{1-\alpha}+m^{\delta}\leq \lfloor m^{\gamma}\rfloor$. Hence we have $\Prob_{\bm{\theta}(0)}(\mathcal{R}(\mathcal{N}_{\epsilon}))\geq 1-\lvert \mathcal{N}_{\epsilon}\rvert e^{-2m^{2\delta-1}}$ simply by the union bound. For every $\bm{x}$, we choose $\bm{z}\in\mathcal{N}_{\epsilon}$ such that $\|\bm{x}-\bm{z}\|_{2}\leq \sqrt{d}\epsilon$, so 
\begin{equation*}
    \begin{aligned}
    \abs{h_{\bm{\theta}(0),r}(\bm{z})}&=\abs{h_{\bm{\theta}(0),r}(\bm{x})+\langle\bm{w}_{r}(0),\bm{z}-\bm{x}\rangle}\\
    &\leq \abs{h_{\bm{\theta}(0),r}(\bm{x})}+\|\bm{w}_{r}(0)\|_{2}\|\bm{z}-\bm{x}\|_{2}\leq \abs{h_{\bm{\theta}(0),r}(\bm{x})}+dR_{B}\epsilon.
    \end{aligned}
\end{equation*}
Thus $|h_{\bm{\theta}(0),r}(\bm{x})|\geq|h_{\bm{\theta}(0),r}(\bm{z})|-dR_{B}\epsilon>(dB+1)R$, where the last inequality holds when $m$ is large enough such that $dR_{B}\epsilon<(dB+1)R$. 
\end{proof}

It is intuitive that the point-wise convergence of $K_{\bm{\theta}(0)}^{m}-K_{d}$ holds simply by the law of large numbers. The result from Lemma \ref{lem: event C} shows this convergence is uniform for points in the collection $\mathcal{N}_{\epsilon}$.

\begin{lemma}\label{lem: event C}
    Define the event 
    \begin{equation*}
        \mathcal{C}=\left\{\omega ~\middle|~ \sup_{\bm{z},\bm{z}'\in\mathcal{N}_{\epsilon}}| K_{\bm{\theta}(0)}^{m}(\bm{z},\bm{z}')-K_d(\bm{z},\bm{z}')|\leq C_{1}\sqrt{\frac{\log m}{m}}\right\},
    \end{equation*}
    where $C_{1}>0$ is a constant depending on $d,B,\beta$. If $m$ is sufficiently large, then the event $\mathcal{C}$ holds with high probability, i.e., $\Prob_{\bm{\theta}(0)}\left(\mathcal{C}\right)\geq 1-P_{\mathcal{C}}(m)$, where $P_{\mathcal{C}}(m)=4m^{-d\beta}$.
\end{lemma}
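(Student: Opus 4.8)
The plan is to realise $K^{m}_{\bm\theta(0)}(\bm x,\bm x')$ as a sample average of i.i.d.\ sub-exponential contributions from the hidden neurons, to apply Bernstein's inequality for each fixed pair $(\bm z,\bm z')\in\mathcal N_{\epsilon}\times\mathcal N_{\epsilon}$, and to conclude by a union bound over the $\lfloor m^{\beta}\rfloor^{2d}$ pairs. First I would make the randomness explicit. Using the symmetric initialization, the $2m$ neurons come in $m$ mirrored pairs sharing $\bm w_{r}(0),b_{r}(0)$ and $a_{r}(0)^{2}$, so a direct computation from $K^{m}_{\bm\theta(0)}(\bm x,\bm x')=\langle\nabla_{\bm\theta}f^{m}_{\bm\theta(0)}(\bm x),\nabla_{\bm\theta}f^{m}_{\bm\theta(0)}(\bm x')\rangle$ and the expression for $f^{m}_{\bm\theta}$ gives
\begin{equation*}
K^{m}_{\bm\theta(0)}(\bm x,\bm x')=\frac{2}{m}\sum_{r=1}^{m}\xi_{r}(\bm x,\bm x')+1 ,
\end{equation*}
where, writing $h_{r}=h_{\bm\theta(0),r}$ and $\bm{1}_{r}=\bm{1}_{\bm\theta(0),r}$,
\begin{equation*}
\xi_{r}(\bm x,\bm x')=\sigma\bigl(h_{r}(\bm x)\bigr)\sigma\bigl(h_{r}(\bm x')\bigr)+a_{r}(0)^{2}\,\bm{1}_{r}(\bm x)\,\bm{1}_{r}(\bm x')\bigl(\langle\bm x,\bm x'\rangle+1\bigr),
\end{equation*}
and the $\xi_{r}$ are i.i.d. Taking expectations and using the arc-cosine kernel identities of \cite{cho2009kernel,jacot2018neural} — the same computation that defines the NTK and that already appeared in the proof of Proposition~\ref{PD} — one gets $\Expc\,\xi_{r}(\bm x,\bm x')=\tfrac12\bigl(K_{d}(\bm x,\bm x')-1\bigr)$, hence $\Expc\,K^{m}_{\bm\theta(0)}(\bm x,\bm x')=K_{d}(\bm x,\bm x')$ and
\begin{equation*}
K^{m}_{\bm\theta(0)}(\bm z,\bm z')-K_{d}(\bm z,\bm z')=\frac{2}{m}\sum_{r=1}^{m}\bigl(\xi_{r}(\bm z,\bm z')-\Expc\,\xi_{r}(\bm z,\bm z')\bigr).
\end{equation*}

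Next I would bound the tails of $\xi_{r}$. Since $\bm z\in[-B,B]^{d}$ we have $h_{r}(\bm z)\sim\mathcal N(0,\norm{\bm z}_{2}^{2}+1)$ with $\norm{\bm z}_{2}^{2}+1\le dB^{2}+1$, so $\sigma(h_{r}(\bm z))$ is sub-Gaussian and the product $\sigma(h_{r}(\bm z))\sigma(h_{r}(\bm z'))$ is sub-exponential; similarly $a_{r}(0)^{2}$ is sub-exponential ($\chi^{2}_{1}$) and the remaining factor $\bm{1}_{r}(\bm z)\bm{1}_{r}(\bm z')(\langle\bm z,\bm z'\rangle+1)$ is bounded by $dB^{2}+1$. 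Hence $\xi_{r}(\bm z,\bm z')$ is sub-exponential with a bound $K_{0}=K_{0}(d,B)$ on its Orlicz norm, uniformly over $\bm z,\bm z'\in\mathcal X$. Bernstein's inequality for sums of independent sub-exponential variables then yields, for every fixed pair and every $0<t\le K_{0}$,
\begin{equation*}
\Prob_{\bm\theta(0)}\Bigl(\bigl|K^{m}_{\bm\theta(0)}(\bm z,\bm z')-K_{d}(\bm z,\bm z')\bigr|\ge t\Bigr)\le 2\exp\!\Bigl(-\frac{c_{0}\,m\,t^{2}}{K_{0}^{2}}\Bigr)
\end{equation*}
for an absolute constant $c_{0}>0$, the quadratic branch of Bernstein being the relevant one because $t$ will be small.

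Finally I would take $t=C_{1}\sqrt{(\log m)/m}$ for a constant $C_{1}=C_{1}(d,B,\beta)$ to be chosen. For a single pair the last display becomes $2m^{-c_{0}C_{1}^{2}/K_{0}^{2}}$, and since $|\mathcal N_{\epsilon}|^{2}=\lfloor m^{\beta}\rfloor^{2d}\le m^{2d\beta}$, the union bound over all pairs $(\bm z,\bm z')\in\mathcal N_{\epsilon}\times\mathcal N_{\epsilon}$ gives $\Prob_{\bm\theta(0)}(\mathcal C\comp)\le 2m^{2d\beta-c_{0}C_{1}^{2}/K_{0}^{2}}$. Choosing $C_{1}$ large enough that $c_{0}C_{1}^{2}/K_{0}^{2}\ge 3d\beta$ makes this at most $2m^{-d\beta}\le 4m^{-d\beta}=P_{\mathcal C}(m)$ for all $m\ge1$, as claimed.

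The main obstacle is the tail bookkeeping in the second step. Because of the $a_{r}(0)^{2}$ factor, $\xi_{r}$ is only sub-exponential, not sub-Gaussian, so one cannot simply condition on the event $\mathcal B$ of Lemma~\ref{lem: event B} and apply Hoeffding with the $O(\log m)$ envelope it provides: that route gives only a bound like $\exp(-c/\log m)$, far too weak to survive a union bound over polynomially many pairs. The correct move is to keep the genuine sub-exponential tail and invoke Bernstein, so that the $\log m$ in $t^{2}=C_{1}^{2}(\log m)/m$ enters the exponent and beats $|\mathcal N_{\epsilon}|^{2}$; a secondary, routine point is to double-check that $\Expc\,\xi_{r}$ reassembles $K_{d}$, which is exactly the arc-cosine-kernel computation underlying the definition of the NTK.
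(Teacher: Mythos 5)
Your proposal is correct and follows essentially the same route as the paper: identify the per-neuron contributions as i.i.d.\ sub-exponential random variables, apply Bernstein's inequality with deviation $t\propto\sqrt{(\log m)/m}$ so the quadratic branch dominates, and union-bound over the $O(m^{2d\beta})$ pairs in $\mathcal{N}_{\epsilon}\times\mathcal{N}_{\epsilon}$, choosing the constant to absorb the entropy term. The only cosmetic difference is that the paper treats the first-layer part $H^{m}_{\bm{\theta}(0)}$ and second-layer part $G^{m}_{\bm{\theta}(0)}$ separately and recombines them by the triangle inequality, whereas you bundle them into a single variable $\xi_{r}$; your side remark on why Hoeffding conditioned on $\mathcal{B}$ would be too weak is also accurate.
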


Before we give the proof of Lemma \ref{lem: event C}, we need to dive into details of the kernel of the neural network from here to analyze further, so we introduce more notations. Given the parameters $\bm{\theta}$ of the neural network, let $H_{\bm{\theta},r}(\bm{x},\bm{x}')=\left(\langle\bm{x},\bm{x}'\rangle+1 \right)a_{r}^{2}\bm{1}_{\bm{\theta},r}(\bm{x})\bm{1}_{\bm{\theta},r}(\bm{x}')$, $G_{\bm{\theta},r}(\bm{x},\bm{x}')=\sigma(h_{\bm{\theta},r}(\bm{x}))\sigma(h_{\bm{\theta},r}(\bm{x}'))$ be the contribution to the kernel from the $r$-th neuron at the first and second layer respectively. Then we decompose
\begin{equation*}
  K_{\bm{\theta}}^{m}(\bm{x},\bm{x}')=\langle\nabla_{\bm{\theta}}f_{\bm{\theta}}^{m}(\bm{x}),\nabla_{\bm{\theta}}f_{\bm{\theta}}^{m}(\bm{x}')\rangle=1+H_{\bm{\theta}}^{m}(\bm{x},\bm{x}')+G_{\bm{\theta}}^{m}(\bm{x},\bm{x}'),
\end{equation*} 
where 
\begin{equation*}
    H_{\bm{\theta}}^{m}(\bm{x},\bm{x}')=\frac{1}{m}\sum_{r\in[2m]}H_{\bm{\theta},r}(\bm{x},\bm{x}'),G_{\bm{\theta}}^{m}(\bm{x},\bm{x}')=\frac{1}{m}\sum_{r\in[2m]}G_{\bm{\theta},r}(\bm{x},\bm{x}').
\end{equation*} 
A similar decomposition for the NTK is 
\begin{equation*}
    K_{d}(\bm{x},\bm{x}')=1+H(\bm{x},\bm{x}')+G(\bm{x},\bm{x}'),
\end{equation*}
where $H(\bm{x},\bm{x}')=\Expc_{\bm{\theta}(0)}H_{\bm{\theta}(0)}^{m}(\bm{x},\bm{x}')$ and $G(\bm{x},\bm{x}')=\Expc_{\bm{\theta}(0)}G_{\bm{\theta}(0)}^{m}(\bm{x},\bm{x}')$. Thanks to the decomposition, we can simply analyze each part of the kernel and then use the triangle inequality to apply to the whole kernel. 

\begin{proof}[Proof of Lemma \ref{lem: event C}]
    Notice that $H_{\bm{\theta}(0),r}(\bm{z},\bm{z}')$ and $G_{\bm{\theta}(0),r}(\bm{z},\bm{z}')$ are both sub-exponential and their sub-exponential norm is bounded by a constant $c'$ depending on 
    $d,B$. Then by Bernstein's inequality(see Theorem 2.8.1 in \cite{vershynin2018highdimensional}), for every $c>0$, 
\begin{equation*}
    \begin{aligned}
    \Prob_{\bm{\theta}(0)}\left(|H_{\bm{\theta}(0)}^{m}(\bm{z},\bm{z}')-H(\bm{z},\bm{z}')|\geq c\sqrt{\frac{\log m}{m}}\right)\leq& 2e^{-c_{0}\min\{\frac{c^{2}}{c'^{2}}\log m,\frac{c}{c'}\sqrt{m\log m}\}}\\
    =&2m^{-c_{0}c^{2}/c'^{2}},
    \end{aligned}
\end{equation*}
where $c_{0}$ is an absolute constant and the equality holds when $m$ is large enough such that $\frac{c^{2}}{c'^{2}}\log m\leq \frac{c}{c'}\sqrt{m\log m}$.
Likewise, we have the same inequality for $G_{\bm{\theta}(0)}^{m}$, so that
\begin{equation*}
    \begin{aligned}
    \Prob_{\bm{\theta}(0)}\left(\sup_{\bm{z},\bm{z}'\in\mathcal{N}_{\epsilon}}\lvert K_{\bm{\theta}(0)}^{m}(\bm{z},\bm{z}')-K_{d}(\bm{z},\bm{z}')\rvert\leq 2c\sqrt{\frac{\log m}{m}}\right)\geq&1-4\binom{|\mathcal{N}_{\epsilon}|}{2} m^{-c_{0}c^{2}/c'^{2}}\\
    \geq&1-4m^{-(c_{0}c^{2}/c'^{2}-2d\beta)}\\
    =&1-4m^{-d\beta}
    \end{aligned}
\end{equation*}
simply by the triangle inequality and the union bound, where we set $c=\sqrt{3c'^{2}d\beta/c_{0}}$ in the last equality. 
\end{proof}

For initialization that lies in the intersection of the events, i.e., $\mathcal{B}\cap\mathcal{R}\cap{\mathcal{C}}$, Lemma \ref{lem: initial kernel close to fixed} and Lemma \ref{lem: kernel with lazy params close to initial kernel} shows how the width $m$ control the convergence of kernel at the initialization and during training. The proof of Lemma \ref{lem: initial kernel close to fixed} and Lemma \ref{lem: kernel with lazy params close to initial kernel} could be found in Section \ref{app:subsec:init:kernel:close:fixed} and Section \ref{app:subsec:lazy:kernel:close:init} respectively.

\begin{lemma}\label{lem: initial kernel close to fixed}
    Conditioning on the event $\mathcal{B}\cap\mathcal{R}\cap\mathcal{C}$, if we set $\gamma>1-\beta/4$ and $m$ is sufficiently large, then
    \begin{equation*}
        \sup_{\bm{x},\bm{x}'\in\mathcal{X}}|K_{\bm{\theta}(0)}^{m}(\bm{x},\bm{x}')-K_{d}(\bm{x},\bm{x}')|\leq C_{2}m^{-(1-\gamma)}\log m,
    \end{equation*}
    where $C_{2}>0$ is a constant depending on $d,B$. 
\end{lemma}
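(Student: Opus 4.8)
The plan is to run a covering argument that upgrades the control on the net $\mathcal{N}_{\epsilon}$ provided by the event $\mathcal{C}$ to control over all of $\mathcal{X}\times\mathcal{X}$. Fix $\bm{x},\bm{x}'\in\mathcal{X}$ and pick $\bm{z},\bm{z}'\in\mathcal{N}_{\epsilon}$ with $\|\bm{x}-\bm{z}\|_{2},\|\bm{x}'-\bm{z}'\|_{2}\leq\sqrt{d}\epsilon$. The triangle inequality gives
\begin{equation*}
|K_{\bm{\theta}(0)}^{m}(\bm{x},\bm{x}')-K_{d}(\bm{x},\bm{x}')|\leq |K_{\bm{\theta}(0)}^{m}(\bm{x},\bm{x}')-K_{\bm{\theta}(0)}^{m}(\bm{z},\bm{z}')|+|K_{\bm{\theta}(0)}^{m}(\bm{z},\bm{z}')-K_{d}(\bm{z},\bm{z}')|+|K_{d}(\bm{z},\bm{z}')-K_{d}(\bm{x},\bm{x}')|.
\end{equation*}
The middle term is $\leq C_{1}\sqrt{\log m/m}$ on $\mathcal{C}$ by Lemma~\ref{lem: event C}. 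The last term I would bound via the Lipschitz continuity of $K_{d}$ on the compact set $\mathcal{X}\times\mathcal{X}$, a direct computation from \eqref{eq: NTK_formular}, the only point to check being that $K_{d}$ is indeed Lipschitz near the diagonal, where $\psi(\bm{x},\bm{x}')$ and the square-root term in \eqref{eq: NTK_formular} are each $O(\|\bm{x}-\bm{x}'\|_{2})$; this yields a bound of order $\sqrt{d}\epsilon=O(m^{-\beta})$.

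The heart of the proof is the first term, the oscillation of the NNK, which is genuinely delicate because $K_{\bm{\theta}(0)}^{m}$ is \emph{not} continuous: the ReLU activation patterns $\bm{1}_{\bm{\theta}(0),r}(\cdot)$ jump. I would use the decomposition $K_{\bm{\theta}(0)}^{m}=1+H_{\bm{\theta}(0)}^{m}+G_{\bm{\theta}(0)}^{m}$ and treat the two pieces separately. For $G_{\bm{\theta}(0)}^{m}$: on $\mathcal{B}$ each summand $\sigma(h_{\bm{\theta}(0),r}(\bm{x}))\sigma(h_{\bm{\theta}(0),r}(\bm{x}'))$ is a product of $1$-Lipschitz functions bounded by $(dB+1)R_{B}$, and the pre-activations move by at most $dR_{B}\epsilon$ when $(\bm{x},\bm{x}')$ is replaced by $(\bm{z},\bm{z}')$; averaging over the $2m$ neurons gives an $O(R_{B}^{2}\epsilon)=O(m^{-\beta}\log m)$ bound. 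For $H_{\bm{\theta}(0)}^{m}$: a neuron $r$ with $|h_{\bm{\theta}(0),r}(\bm{x})|>(dB+1)R$ keeps its sign at $\bm{z}$ because $|h_{\bm{\theta}(0),r}(\bm{x})-h_{\bm{\theta}(0),r}(\bm{z})|\leq dR_{B}\epsilon<(dB+1)R$ for large $m$ (the same inequality used in the proof of Lemma~\ref{lem: event R}), so the indicator product $\bm{1}_{\bm{\theta}(0),r}(\bm{x})\bm{1}_{\bm{\theta}(0),r}(\bm{x}')$ can change only for the $r$'s with small pre-activation at $\bm{x}$ or at $\bm{x}'$; by event $\mathcal{R}$ there are at most $4\lfloor m^{\gamma}\rfloor$ of these. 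On $\mathcal{B}$ each contributes at most $(dB^{2}+1)R_{B}^{2}$ to the $H$-sum, so after dividing by $m$ this ``unstable'' group contributes $O(m^{\gamma-1}\log m)=O(m^{-(1-\gamma)}\log m)$. For the remaining ``stable'' neurons the indicator product is unchanged and only the scalar $\langle\bm{x},\bm{x}'\rangle+1$ varies, by $O(\epsilon)$; multiplying by $a_{r}^{2}(0)\leq R_{B}^{2}$ and averaging gives another $O(m^{-\beta}\log m)$ contribution.

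Combining the three terms of the split, the overall bound is of the form $O(m^{-(1-\gamma)}\log m)+O(m^{-1/2}\sqrt{\log m})+O(m^{-\beta}\log m)$. The hypothesis $\gamma>1-\beta/4$ (with $\beta\in(0,1]$) forces $1-\gamma<\beta/4\leq\min\{\beta,1/2\}$, so for $m$ large the first term dominates the other two, which yields the claimed estimate $C_{2}m^{-(1-\gamma)}\log m$ with $C_{2}$ depending only on $d,B$ (the $\beta$-dependence hidden in $C_{1}$ is harmless since $C_{1}\sqrt{\log m/m}$ is of lower order). I expect the main obstacle to be exactly this NNK-oscillation estimate, and the decisive input there — already isolated in Lemma~\ref{lem: event R} — is that across the $\epsilon$-net at most $O(m^{\gamma})$ neurons can flip their activation, so their crude per-neuron bound $O(\log m/m)$ sums to a quantity negligible against the target rate; everything else is bookkeeping with the bounds furnished by $\mathcal{B}$ and $\mathcal{C}$.
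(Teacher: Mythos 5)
Your proposal follows the same route as the paper's proof: the same three-term triangle-inequality split through a net point $(\bm{z},\bm{z}')$, the same use of event $\mathcal{C}$ for the middle term, and the same activation-pattern argument (via event $\mathcal{R}$ and the bound $|h_{\bm{\theta}(0),r}(\bm{x})-h_{\bm{\theta}(0),r}(\bm{z})|\leq dR_{B}\epsilon<(dB+1)R$) to control the oscillation of $K_{\bm{\theta}(0)}^{m}$; your treatment of the $G$-part via global $1$-Lipschitzness of the ReLU is a mild simplification of the paper's stable/unstable split and is fine.

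The one place you diverge, and where your argument is thinner than it looks, is the continuity of $K_{d}$. You assert Lipschitz continuity "by a direct computation from \eqref{eq: NTK_formular}," but the direct computation (differentiating $\arccos$) fails near the diagonal, where the derivative of $\arccos$ blows up like $(1-u^2)^{-1/2}$. The paper does not prove Lipschitz continuity at all: its Lemma \ref{lem: continuity of psi} only gets $|\psi(\bm{z},\bm{z}')-\psi(\bm{z},\bm{x}')|\leq C\max\{\sqrt{\epsilon_{0}},\epsilon_{0}^{1/4}\}$ via the crude bound $|\arccos(1)-\arccos(1-\Delta)|\leq2\sqrt{\Delta}$, so Lemma \ref{lem: continuity of K} delivers only a H\"older-$1/4$ modulus $C_{5}\epsilon_{0}^{1/4}=O(m^{-\beta/4})$ — and this is precisely why the hypothesis reads $\gamma>1-\beta/4$ rather than $\gamma>1-\beta$. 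Your Lipschitz claim does happen to be true ($\psi$ is the angular distance between the lifted unit vectors $\hat{\bm{z}},\hat{\bm{z}}'$, so the triangle inequality for the geodesic metric on the sphere plus the Lipschitzness of normalization on $\{\|\bm{z}\|_{2}\geq1\}$ gives $|\psi(\bm{z},\bm{z}')-\psi(\bm{z},\bm{x}')|\leq C\|\bm{z}'-\bm{x}'\|_{2}$, and the square-root term is Lipschitz by bilinearity of the wedge product), but that argument needs to be made explicit; as written, "$\psi$ is $O(\|\bm{x}-\bm{x}'\|_{2})$" is a statement about the value of $\psi$ near the diagonal, not about its modulus of continuity in one argument. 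Either supply that argument or fall back on the paper's H\"older-$1/4$ bound, which the stated hypothesis $\gamma>1-\beta/4$ is exactly calibrated to absorb.
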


\begin{lemma}\label{lem: kernel with lazy params close to initial kernel}
     Conditioning on the event $\mathcal{B}\cap\mathcal{R}\cap\mathcal{C}$, if we set $\alpha<1/2$ and $m$ is sufficiently large, then
    \begin{equation*}
        \sup_{t\geq 0}\sup_{\bm{x},\bm{x}'\in\mathcal{X}}| K_{\bm{\theta}(t)}^{m}(\bm{x},\bm{x}')-K_{\bm{\theta}(0)}^{m}(\bm{x},\bm{x}')|\leq C_{3}m^{-(1-\gamma)}\log m,
    \end{equation*}      
    where $C_{3}>0$ is constant depending on $d,B$.
\end{lemma}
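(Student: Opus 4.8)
The plan is to run a continuity (``bootstrap'') argument showing that the gradient flow \eqref{nn:theta:flow} never escapes a ball of radius $\rho_m\asymp m^{-\alpha}$ around its initialization, and then to read off the kernel fluctuation neuron by neuron. Concretely, I would fix a small constant $c_\ast=c_\ast(d,B)$, set $\rho_m=c_\ast m^{-\alpha}$, and introduce the maximal \emph{lazy time}
\[
T_1=\sup\Big\{t\ge0:\ \max_{r\in[2m]}\big\{\abs{a_r(s)-a_r(0)}\vee\norm{\bm w_r(s)-\bm w_r(0)}_2\vee\abs{b_r(s)-b_r(0)}\big\}\le\rho_m\ \ \forall s\in[0,t]\Big\}.
\]
Since $t\mapsto\bm\theta(t)$ is continuous, it is enough to show that on $[0,T_1]$ the displacement is actually at most $\rho_m/2$, which forces $T_1=+\infty$; the desired estimate then drops out of the same computation.

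The engine on $[0,T_1]$ has three parts. First, on the event $\mathcal B$ (Lemma \ref{lem: event B}) together with the lazy bound one has $\abs{h_{\bm\theta(t),r}(\bm x)}\le(dB+1)(R_B+\rho_m)\le C\sqrt{\log m}$, hence $\abs{a_r(t)}\le2R_B$ and $\abs{\sigma(h_{\bm\theta(t),r}(\bm x))}\le C\sqrt{\log m}$; moreover $\abs{h_{\bm\theta(t),r}(\bm x)-h_{\bm\theta(0),r}(\bm x)}\le(\sqrt dB+1)\rho_m<(dB+1)R$ once $c_\ast$ is small, so by the event $\mathcal R$ (Lemma \ref{lem: event R}) the pattern $\bm1_{\bm\theta(t),r}(\bm x)$ can differ from $\bm1_{\bm\theta(0),r}(\bm x)$ only for at most $2\lfloor m^\gamma\rfloor$ indices $r$, for each fixed $\bm x$. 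Second, running the neuron-by-neuron kernel estimate of the last paragraph below at the $n$ training inputs gives $\norm{K^m_{\bm\theta(t)}(\bm X,\bm X)-K^m_{\bm\theta(0)}(\bm X,\bm X)}_2\le nC m^{-(1-\gamma)}\log m$, which, combined with Lemma \ref{lem: initial kernel close to fixed} and positive definiteness of the NTK (Proposition \ref{PD}, so $\lambda_0:=\lambda_{\min}(K_d(\bm X,\bm X))>0$ for distinct inputs), yields $\lambda_{\min}(K^m_{\bm\theta(t)}(\bm X,\bm X))\ge\lambda_0/4$ for large $m$; feeding this into $\frac{\dx}{\dx t}\norm{f^m_{\bm\theta(t)}(\bm X)-\bm y}_2^2=-\frac2n(f^m_{\bm\theta(t)}(\bm X)-\bm y)\tran K^m_{\bm\theta(t)}(\bm X,\bm X)(f^m_{\bm\theta(t)}(\bm X)-\bm y)$ and using $f^m_{\bm\theta(0)}=0$ gives exponential residual decay $\norm{f^m_{\bm\theta(t)}(\bm X)-\bm y}_2\le\norm{\bm y}_2e^{-\lambda_0 t/(4n)}$. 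Third, from \eqref{nn:theta:flow} and the pointwise bounds above, $\norm{\dot a_r(t)}$, $\norm{\dot{\bm w}_r(t)}_2$ and $\abs{\dot b_r(t)}$ are all $\le\frac{C\sqrt{\log m}}{\sqrt m}\norm{f^m_{\bm\theta(t)}(\bm X)-\bm y}_2$, so integrating in time bounds the total displacement of every coordinate by $\frac{C\sqrt{\log m}}{\lambda_0\sqrt m}\norm{\bm y}_2$. Because $\alpha<1/2$ we have $\sqrt{\log m}\,m^{\alpha-1/2}\to0$, so for $m$ large this is $\le\rho_m/2$, and therefore $T_1=+\infty$.

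With $T_1=+\infty$ in hand, I would conclude by bounding $\abs{K^m_{\bm\theta(t)}(\bm x,\bm x')-K^m_{\bm\theta(0)}(\bm x,\bm x')}$ uniformly in $t\ge0$ and $\bm x,\bm x'\in\mathcal X$ via the decomposition $K^m_{\bm\theta}=1+H^m_{\bm\theta}+G^m_{\bm\theta}$ (the constant $1$ cancels). The $G$-part is handled by $1$-Lipschitzness of $\sigma$ and $\abs{\sigma(h_{\bm\theta(t),r})}\le C\sqrt{\log m}$: each of the $2m$ summands changes by $O(\rho_m\sqrt{\log m})$, giving $O(m^{-\alpha}\sqrt{\log m})$ after the $\frac1m$ average. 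For the $H$-part I would split the neurons at $(\bm x,\bm x')$ into those whose pattern is unchanged --- each contributes $O\!\big(\tfrac1m\abs{a_r(t)^2-a_r(0)^2}\big)=O(\tfrac1m\rho_m\sqrt{\log m})$, summing to $O(m^{-\alpha}\sqrt{\log m})$ --- and the at most $4\lfloor m^\gamma\rfloor$ neurons whose pattern flipped at $\bm x$ or $\bm x'$, each contributing $O(\tfrac1m\log m)$, summing to $O(m^{\gamma-1}\log m)$. Since $\gamma>1-\alpha$ the flipped-pattern term dominates, and we obtain $\sup_{t\ge0}\sup_{\bm x,\bm x'\in\mathcal X}\abs{K^m_{\bm\theta(t)}(\bm x,\bm x')-K^m_{\bm\theta(0)}(\bm x,\bm x')}\le C_3 m^{-(1-\gamma)}\log m$, as claimed.

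The main obstacle is the apparent circularity: the exponential contraction of the residual, which keeps the weights near initialization, itself relies on $K^m_{\bm\theta(t)}(\bm X,\bm X)$ remaining uniformly positive definite, and that in turn relies on staying in the lazy regime. The stopping time $T_1$ is the device that breaks this loop, and the hypothesis $\alpha<1/2$ is precisely what makes the accumulated displacement $o(m^{-\alpha})$ --- strictly smaller than the ``flip margin'' $(dB+1)R\asymp m^{-\alpha}$ of the large-preactivation neurons --- so the bootstrap closes and only the $O(m^\gamma)$ small-preactivation neurons ever change sign.
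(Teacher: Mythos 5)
Your proposal is correct and follows essentially the same route as the paper: the paper likewise proves kernel stability over the ball $\bm{\Theta}(\bm{\theta}(0),R)$ via the $H$/$G$ decomposition and the activation-pattern count from the event $\mathcal{R}$ (Lemma \ref{lem: kernel close to init when params close to init}), and then closes the same bootstrap in Proposition \ref{prop: lazy regime} using the minimum-eigenvalue lower bound, exponential residual decay, and the condition $\alpha<1/2$ so that the accumulated displacement $O(\sqrt{\log m/m})$ is strictly smaller than the flip margin $\asymp m^{-\alpha}$. The only difference is presentational: you use a single stopping time $T_1$ with a half-radius argument, whereas the paper tracks three stopping times $\tau_{\lambda},\tau_{\bm{\theta}},\tau_{\bm{u}}$ and derives a contradiction.
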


\begin{proof}[Proof of Proposition \ref{prop:kernel:approx}]
    Consider the initialization $\omega\in\mathcal{B}\cap\mathcal{R}\cap\mathcal{C}$. Then for all $\bm{x},\bm{x}'\in\mathcal{X}$, we have
    \begin{equation*}
    \begin{aligned}
    |K_{\bm{\theta}(t)}^{m}(\bm{x},\bm{x}')-K_{d}(\bm{x},\bm{x}')|&\leq|K_{\bm{\theta}(t)}^{m}(\bm{x},\bm{x}')-K_{\bm{\theta}(0)}^{m}(\bm{x},\bm{x}')|+|K_{\bm{\theta}(0)}^{m}(\bm{x},\bm{x}')-K_{d}(\bm{x},\bm{x}')|\\
    &\leq (C_{2}+C_{3})m^{-(1-\gamma)}\log m,
    \end{aligned}
\end{equation*}
    where the last inequality follows from Lemma \ref{lem: initial kernel close to fixed} and Lemma \ref{lem: kernel with lazy params close to initial kernel}. With Lemma \ref{lem: event B}, Lemma \ref{lem: event R} and Lemma \ref{lem: event C}, we show that
    \begin{equation*}
        \begin{aligned}
            &\Prob_{\bm{\theta}(0)}\left(\sup_{\bm{x},\bm{x}'\in\mathcal{X}}|K_{\bm{\theta}(t)}^{m}(\bm{x},\bm{x}')-K_{d}(\bm{x},\bm{x}')|\leq (C_{2}+C_{3})m^{-(1-\gamma)}\log m\right)\\
            \geq&1-P_{\mathcal{B}}(m)-P_{\mathcal{R}}(m)-P_{\mathcal{C}}(m).
        \end{aligned}
    \end{equation*}
\end{proof}

\subsubsection{Proof of Lemma \ref{lem: initial kernel close to fixed}}\label{app:subsec:init:kernel:close:fixed}
Conditioning on $\mathcal{B}\cap\mathcal{R}\cap\mathcal{C}$, for all $\bm{x},\bm{x}'\in\mathcal{X}$, decomposition of $\abs{K_{\bm{\theta}(0)}^{m}(\bm{x},\bm{x}')-K_{d}(\bm{x},\bm{x}')}$ by the triangle inequality gives
\begin{equation*}
    \begin{aligned}
    |K_{\bm{\theta}(0)}^{m}(\bm{x},\bm{x}')-K_{d}(\bm{x},\bm{x}')|\leq&|K_{\bm{\theta}(0)}^{m}(\bm{x},\bm{x}')-K_{\bm{\theta}(0)}^{m}(\bm{z},\bm{z}')|+|K_{d}(\bm{z},\bm{z}')-K_{d}(\bm{x},\bm{x}')| \\
    &+|K_{\bm{\theta}(0)}^{m}(\bm{z},\bm{z}')-K_{d}(\bm{z},\bm{z}')| \\
    \leq & 2C_{4}m^{-(1-\gamma)}\log m+2C_{5}(\sqrt{d}\epsilon)^{1/4}+C_{1}\sqrt{\frac{\log m}{m}} \\
    \end{aligned}
\end{equation*}
by Lemma \ref{lem: continuity of K_0}, Lemma \ref{lem: continuity of K} and Lemma \ref{lem: event C} when $m$ is sufficiently large.

\paragraph{The continuity of $K_{\bm{\theta}(0)}^{m}$}
Using the triangle inequality again yields
\begin{equation*}
    |K_{\bm{\theta}(0)}^{m}(\bm{x},\bm{x}')-K_{\bm{\theta}(0)}^{m}(\bm{z},\bm{z}')|\leq |K_{\bm{\theta}(0)}^{m}(\bm{x},\bm{x}')-K_{\bm{\theta}(0)}^{m}(\bm{x},\bm{z}')|+|K_{\bm{\theta}(0)}^{m}(\bm{x},\bm{z}')-K_{\bm{\theta}(0)}^{m}(\bm{z},\bm{z}')|.
\end{equation*}
We here illustrate how to control the first term, as the control of the second term follows from the symmetry of $K_{\bm{\theta}(0)}^{m}(\cdot,\cdot)$.

\begin{lemma}\label{lem: init activation pattern for close points}
    For all $\bm{x}\in [-B,B]^{d}$ and $\bm{z}\in\mathcal{N}_{\epsilon}$ such that $\|\bm{x}-\bm{z}\|_{2}\leq\sqrt{d}\epsilon$, conditioning on $\mathcal{B}\cap\mathcal{R}$, if $m$ is sufficiently large, then $| I(\bm{x},\bm{z})|\geq2(m-\lfloor m^{\gamma}\rfloor)$, where $I(\bm{x},\bm{z})=\{r\mid\bm{1}_{\bm{\theta}(0),r}(\bm{x})=\bm{1}_{\bm{\theta}(0),r}(\bm{z})\}$ is the index set of neurons on which the activation pattern for $\bm{x}$ and $\bm{z}$ is the same at $\bm{\theta}(0)$.
\end{lemma}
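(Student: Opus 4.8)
The plan is to use the simple geometric fact that a ReLU neuron can only flip its activation state between two nearby inputs if its pre-activation value is small at one of them, and then to invoke the event $\mathcal{R}$ to count such ``small-margin'' neurons.

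First I would estimate how much a pre-activation value can change when we move from $\bm{z}$ to $\bm{x}$. For every $r\in[2m]$,
\[
|h_{\bm{\theta}(0),r}(\bm{x})-h_{\bm{\theta}(0),r}(\bm{z})|=|\langle\bm{w}_{r}(0),\bm{x}-\bm{z}\rangle|\leq\|\bm{w}_{r}(0)\|_{2}\|\bm{x}-\bm{z}\|_{2}\leq\sqrt{d}R_{B}\cdot\sqrt{d}\epsilon=dR_{B}\epsilon,
\]
where I used $\|\bm{w}_{r}(0)\|_{\infty}\leq R_{B}$ on the event $\mathcal{B}$ and the hypothesis $\|\bm{x}-\bm{z}\|_{2}\leq\sqrt{d}\epsilon$. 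Since $\epsilon=2B/\lfloor m^{\beta}\rfloor$, $R_{B}=\sqrt{3\log m}$ and $R=\frac{\sqrt{2\pi}}{4(dB+1)}m^{-\alpha}$ with $\alpha<\beta$, for $m$ sufficiently large we have $dR_{B}\epsilon<(dB+1)R$; this is exactly the elementary comparison already appearing in the proof of Lemma~\ref{lem: event R}, and it is really the only point needing any care (it holds because $\alpha<\beta$ forces $m^{-\alpha}$ to dominate $m^{-\beta}\sqrt{\log m}$).

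Next I would show that every neuron outside $I(\bm{x},\bm{z})$ has a small pre-activation value at $\bm{x}$. If $r\notin I(\bm{x},\bm{z})$, then $\bm{1}_{\{h_{\bm{\theta}(0),r}(\bm{x})\geq0\}}\neq\bm{1}_{\{h_{\bm{\theta}(0),r}(\bm{z})\geq0\}}$, so $h_{\bm{\theta}(0),r}(\bm{x})$ and $h_{\bm{\theta}(0),r}(\bm{z})$ lie on opposite sides of $0$; in either case $|h_{\bm{\theta}(0),r}(\bm{x})|\leq|h_{\bm{\theta}(0),r}(\bm{x})-h_{\bm{\theta}(0),r}(\bm{z})|\leq dR_{B}\epsilon<(dB+1)R$. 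Hence $I(\bm{x},\bm{z})^{\mathsf{c}}\subseteq\{r\in[2m]:|h_{\bm{\theta}(0),r}(\bm{x})|\leq(dB+1)R\}$. Finally, conditioning on $\mathcal{R}$, the set on the right has at most $2\lfloor m^{\gamma}\rfloor$ elements for every $\bm{x}\in[-B,B]^{d}$, so $|I(\bm{x},\bm{z})^{\mathsf{c}}|\leq 2\lfloor m^{\gamma}\rfloor$ and therefore $|I(\bm{x},\bm{z})|\geq 2m-2\lfloor m^{\gamma}\rfloor=2(m-\lfloor m^{\gamma}\rfloor)$, which is the claim. No serious obstacle is anticipated: the statement is a short deterministic consequence of the definitions of the events $\mathcal{B}$ and $\mathcal{R}$, with the threshold inequality $dR_{B}\epsilon<(dB+1)R$ being the one place where ``$m$ sufficiently large'' is invoked.
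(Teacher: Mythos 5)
Your proof is correct and is essentially the paper's argument: both rest on the bound $|h_{\bm{\theta}(0),r}(\bm{x})-h_{\bm{\theta}(0),r}(\bm{z})|\leq dR_{B}\epsilon<(dB+1)R$ for large $m$ together with the event $\mathcal{R}$, the only difference being that you argue via the contrapositive ($I(\bm{x},\bm{z})\comp$ is contained in the small-margin set) while the paper argues directly that every large-margin neuron keeps its activation pattern.
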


\begin{proof}
    Notice that 
    \begin{equation*}
        |h_{\bm{\theta}(0),r}(\bm{x})-h_{\bm{\theta}(0),r}(\bm{z})|=|\langle \bm{w}_{r}(0),\bm{x}-\bm{z}\rangle|\leq\|\bm{w}_{r}(0)\|_{2}\|\bm{x}-\bm{z}\|_{2}\leq dR_{B}\epsilon.
    \end{equation*}
    For $r\in I_{\text{in}}(\bm{x})=\{r\mid |h_{\bm{\theta}(0)}(\bm{x})|>(dB+1)R\}$, if $m$ is large enough such that $dR_{B}\epsilon\leq(dB+1)R$, we have
    \begin{equation*}
        |h_{\bm{\theta}(0),r}(\bm{x})-h_{\bm{\theta}(0),r}(\bm{z})|<|h_{\bm{\theta}(0),r}(\bm{x})|,
    \end{equation*}
    which implies $\bm{1}_{\bm{\theta}(0),r}(\bm{x})=\bm{1}_{\bm{\theta}(0),r}(\bm{z})$, thus $|I(\bm{x},\bm{z})|\geq|I_{\text{in}}(\bm{x})|\geq 2(m-\lfloor m^{\gamma}\rfloor)$.
\end{proof}

\begin{lemma}\label{lem: continuity of K_0}
    For all $\bm{x},\bm{x}'\in [-B,B]^{d}$ and $\bm{z}'\in\mathcal{N}_{\epsilon}$ such that $\|\bm{x}'-\bm{z}'\|_{2}\leq\sqrt{d}\epsilon$, conditioning on the event $\mathcal{B}\cap\mathcal{R}$, if we set $\gamma > 1 - \beta$ and $m$ is sufficiently large,
    \begin{equation*}
        |K_{\bm{\theta}(0)}^{m}(\bm{x},\bm{x}')-K_{\bm{\theta}(0)}^{m}(\bm{x},\bm{z}')|\leq C_{4}m^{-(1-\gamma)}\log m,
    \end{equation*}
    where $C_{4}$ is a constant depending on $d,B$.
\end{lemma}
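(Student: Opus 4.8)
The plan is to use the decomposition $K_{\bm{\theta}(0)}^{m}=1+H_{\bm{\theta}(0)}^{m}+G_{\bm{\theta}(0)}^{m}$ recorded above, so the constant cancels and it remains to bound the two increments $|H_{\bm{\theta}(0)}^{m}(\bm{x},\bm{x}')-H_{\bm{\theta}(0)}^{m}(\bm{x},\bm{z}')|$ and $|G_{\bm{\theta}(0)}^{m}(\bm{x},\bm{x}')-G_{\bm{\theta}(0)}^{m}(\bm{x},\bm{z}')|$ separately. All estimates would be carried out on the event $\mathcal{B}\cap\mathcal{R}$, on which $|a_{r}(0)|\le R_{B}$, $\|\bm{w}_{r}(0)\|_{2}\le\sqrt{d}R_{B}$ and $|h_{\bm{\theta}(0),r}(\bm{x})|\le(dB+1)R_{B}$ for all $r\in[2m]$ and $\bm{x}\in[-B,B]^{d}$; I also use $\|\bm{x}'-\bm{z}'\|_{2}\le\sqrt{d}\epsilon$ with $\epsilon=2B/\lfloor m^{\beta}\rfloor$ (so $\epsilon\le 4Bm^{-\beta}$ for large $m$) and $\|\bm{x}\|_{2},\|\bm{z}'\|_{2}\le\sqrt{d}B$.

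For the $G$-increment I would use that $\sigma$ is $1$-Lipschitz, so for each $r$
\[
|\sigma(h_{\bm{\theta}(0),r}(\bm{x}'))-\sigma(h_{\bm{\theta}(0),r}(\bm{z}'))|\le|\langle\bm{w}_{r}(0),\bm{x}'-\bm{z}'\rangle|\le dR_{B}\epsilon ,
\]
while $\sigma(h_{\bm{\theta}(0),r}(\bm{x}))\le(dB+1)R_{B}$; averaging over $r\in[2m]$ and using $R_{B}^{2}=3\log m$ gives $|G_{\bm{\theta}(0)}^{m}(\bm{x},\bm{x}')-G_{\bm{\theta}(0)}^{m}(\bm{x},\bm{z}')|\le Cm^{-\beta}\log m$ for a constant $C$ depending on $d,B$.

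For the $H$-increment I would split the per-neuron difference as
\[
(\langle\bm{x},\bm{x}'\rangle+1)\bm{1}_{\bm{\theta}(0),r}(\bm{x}')-(\langle\bm{x},\bm{z}'\rangle+1)\bm{1}_{\bm{\theta}(0),r}(\bm{z}')=\langle\bm{x},\bm{x}'-\bm{z}'\rangle\bm{1}_{\bm{\theta}(0),r}(\bm{x}')+(\langle\bm{x},\bm{z}'\rangle+1)\bigl(\bm{1}_{\bm{\theta}(0),r}(\bm{x}')-\bm{1}_{\bm{\theta}(0),r}(\bm{z}')\bigr).
\]
The first piece is at most $dB\epsilon$ in absolute value, so after multiplying by $a_{r}(0)^{2}\le R_{B}^{2}$ and averaging it contributes at most $Cm^{-\beta}\log m$. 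For the second piece, Lemma \ref{lem: init activation pattern for close points}, applied to the pair $\bm{x}',\bm{z}'$ (valid once $m$ is large enough that $dR_{B}\epsilon\le(dB+1)R$), shows $\bm{1}_{\bm{\theta}(0),r}(\bm{x}')\ne\bm{1}_{\bm{\theta}(0),r}(\bm{z}')$ for at most $2\lfloor m^{\gamma}\rfloor$ indices $r$, and each such term is bounded by $a_{r}(0)^{2}(\|\bm{x}\|_{2}\|\bm{z}'\|_{2}+1)\le(dB^{2}+1)R_{B}^{2}$; averaging over the $2m$ neurons this contributes at most $Cm^{-(1-\gamma)}\log m$. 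Hence $|H_{\bm{\theta}(0)}^{m}(\bm{x},\bm{x}')-H_{\bm{\theta}(0)}^{m}(\bm{x},\bm{z}')|\le C(m^{-\beta}+m^{-(1-\gamma)})\log m$.

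Adding the two increments and invoking the hypothesis $\gamma>1-\beta$ (equivalently $1-\gamma<\beta$), the term $m^{-(1-\gamma)}$ dominates $m^{-\beta}$ for large $m$, which yields $|K_{\bm{\theta}(0)}^{m}(\bm{x},\bm{x}')-K_{\bm{\theta}(0)}^{m}(\bm{x},\bm{z}')|\le C_{4}m^{-(1-\gamma)}\log m$ with $C_{4}$ depending only on $d,B$. The main obstacle is simply the activation-pattern bookkeeping in the $H$-term, which is exactly where Lemma \ref{lem: init activation pattern for close points} is needed, together with the (easy) verification that $m$ is eventually large enough for it, i.e.\ $dR_{B}\epsilon\le(dB+1)R$: this holds because $R_{B}\epsilon$ is of order $m^{-\beta}\sqrt{\log m}$, which decays faster than $R$, of order $m^{-\alpha}$, since $\alpha<\beta$. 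Everything else is routine tracking of constants. The companion estimate $|K_{\bm{\theta}(0)}^{m}(\bm{x},\bm{z}')-K_{\bm{\theta}(0)}^{m}(\bm{z},\bm{z}')|$ used in the surrounding argument follows verbatim by the symmetry of $K_{\bm{\theta}(0)}^{m}$ in its two arguments.
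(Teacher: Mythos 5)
Your proposal is correct and follows essentially the same route as the paper: the same decomposition into the $H$- and $G$-parts, the same split of the $H$-increment into an inner-product piece of order $\epsilon R_B^2$ and an activation-pattern piece controlled via Lemma \ref{lem: init activation pattern for close points}, and the same verification that $dR_B\epsilon\le(dB+1)R$ eventually holds because $\alpha<\beta$. Your treatment of the $G$-increment is a slight streamlining (you invoke the global $1$-Lipschitzness of the ReLU for all $r$, whereas the paper splits the neurons by agreeing/disagreeing activation patterns), but both give the same $O(m^{-\beta}\log m)$ contribution, which is dominated by $m^{-(1-\gamma)}\log m$ under $\gamma>1-\beta$.
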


\begin{proof}
    For simplicity, let $I=I(\bm{x}',\bm{z}')$. Then 
    \begin{equation*}
        \begin{aligned}
        &|H_{\bm{\theta}(0)}^{m}(\bm{x},\bm{x}')-H_{\bm{\theta}(0)}^{m}(\bm{x},\bm{z}')|\\
        \leq&\left|\langle\bm{x},\bm{x}'-\bm{z}'\rangle\frac{1}{m}\sum_{r\in[2m]}a_{r}^{2}(0)\bm{1}_{\bm{\theta}(0),r}(\bm{x})\bm{1}_{\bm{\theta}(0),r}(\bm{x}')\right|\\
        &+\left|(\langle\bm{x},\bm{z}'\rangle+1)\frac{1}{m}\sum_{r\in[2m]}a_{r}^{2}(0)\bm{1}_{\bm{\theta}(0),r}(\bm{x})(\bm{1}_{\bm{\theta}(0),r}(\bm{x}')-\bm{1}_{\bm{\theta}(0),r}(\bm{z}'))\right|\\
        \leq &dB\epsilon\cdot 2R_{B}^{2}+\frac{(dB^{2}+1)R_{B}^{2}}{m}\sum_{r\in I}\sum_{r\in I\comp}\abs{\bm{1}_{\bm{\theta}(0),r}(\bm{x}')-\bm{1}_{\bm{\theta}(0),r}(\bm{z}')}\\
        \leq &2dBR_{B}^{2}\epsilon+(dB^{2}+1)R_{B}^{2}\frac{2\lfloor m^{\gamma}\rfloor}{m}\\
        \leq & 4(dB^{2}+1)R_{B}^{2}m^{-(1-\gamma)},
        \end{aligned}
    \end{equation*}
    where 
    the first inequality holds by plugging in $\langle\bm{x},\bm{z}'\rangle\frac{1}{m}\sum_{r\in[2m]}a_{r}^{2}(0)\bm{1}_{\bm{\theta}(0),r}(\bm{x})\bm{1}_{\bm{\theta}(0),r}(\bm{x}')$ and using the triangle inequality, 
    the third inequality follows from Lemma \ref{lem: init activation pattern for close points}, and the last inequality holds if $m$ is large enough such that $2dBR_{B}^{2}\epsilon\leq2(dB^{2}+1)R_{B}^{2}\frac{\lfloor m^{\gamma}\rfloor}{m}$. Similarly,
    \begin{equation*}
        \begin{aligned}
            &|G_{\bm{\theta}(0)}^{m}(\bm{x},\bm{x}')-G_{\bm{\theta}(0)}^{m}(\bm{x},\bm{z}')|\\
            \leq&\left|\langle\bm{x},\bm{x}'-\bm{z}'\rangle\frac{1}{m}\sum_{r\in[2m]}\sigma(h_{\bm{\theta}(0),r}(\bm{x}))\sigma(h_{\bm{\theta}(0),r}(\bm{x}'))\right|\\
            &+\left|\langle\bm{x},\bm{z}'\rangle\frac{1}{m}\sum_{r\in[2m]}\sigma(h_{\bm{\theta}(0),r}(\bm{x}))(\sigma(h_{\bm{\theta}(0),r}(\bm{x}'))-\sigma(h_{\bm{\theta}(0),r}(\bm{z}')))\right|\\
            \leq&dB\epsilon\cdot 2(dB+1)^{2}R_{B}^{2}\\
            &+\frac{1}{m}\sum_{r\in I}\sum_{r\in I\comp}|h_{\bm{\theta}(0),r}(\bm{x})||\sigma(h_{\bm{\theta}(0),r}(\bm{x}'))-\sigma(h_{\bm{\theta}(0),r}(\bm{z}'))|\\
            \leq & 2dB(dB+1)^{2}R_{B}^{2}\epsilon\\
            &+\frac{1}{m}\sum_{r\in I}|h_{\bm{\theta}(0),r}(\bm{x})||h_{\bm{\theta}(0),r}(\bm{x}')-h_{\bm{\theta}(0),r}(\bm{z}')|\\
            &+\frac{1}{m}\sum_{r\in I\comp}|h_{\bm{\theta}(0),r}(\bm{x})|\max\{|h_{\bm{\theta}(0),r}(\bm{x}')|,|h_{\bm{\theta}(0),r}(\bm{z}')|\}\\
            \leq&2dB(dB+1)^{2}R_{B}^{2}\epsilon+2(dB+1)R_{B}\cdot dR_{B}\epsilon+(dB+1)^{2}R_{B}^{2}\frac{2\lfloor m^{\gamma}\rfloor}{m}\\
            \leq&6(dB+1)^{2}R_{B}^{2}m^{-(1-\gamma)},
        \end{aligned}
    \end{equation*}
    where 
    the first inequality holds by plugging in $\langle\bm{x},\bm{z}'\rangle\frac{1}{m}\sum_{r\in[2m]}\sigma(h_{\bm{\theta}(0),r}(\bm{x}))\sigma(h_{\bm{\theta}(0),r}(\bm{x}'))$ and using the triangle inequality, the third and the last but second inequality follows from Lemma \ref{lem: init activation pattern for close points}, the last inequality holds if $m$ is large enough such that 
    \begin{equation*}
        \max\{2dB(dB+1)^{2}R_{B}^{2}\epsilon,2d(dB+1)R_{B}^{2}\epsilon\}\leq 2(dB+1)^{2}R_{B}^{2}\frac{\lfloor m^{\gamma}\rfloor}{m}.
    \end{equation*}
\end{proof}

\paragraph{The continuity of $K_{d}$}
The triangle inequality shows
\begin{equation*}
    \abs{K_{d}(\bm{z},\bm{z}')-K_{d}(\bm{x},\bm{x}')}\leq \abs{K_{d}(\bm{z},\bm{z}')-K_{d}(\bm{z},\bm{x}')}+\abs{K_{d}(\bm{z},\bm{x}')-K_{d}(\bm{x},\bm{x}')}.
\end{equation*}
Similarly, we only need to show the control of the first term, since it is the same for the second term by the symmetry of $K_{d}(\cdot,\cdot)$. 

\begin{lemma}\label{lem: continuity of K}
    For every $\bm{x}',\bm{z},\bm{z}'\in [-B,B]^{d}$ and $\epsilon_{0}>0$, if $\|\bm{x}'-\bm{z}'\|_{2}\leq\epsilon_{0}$, then
    \begin{equation*}\label{eq: NTK epsilon close}
        |K_{d}(\bm{z},\bm{z}')-K_{d}(\bm{z},\bm{x}')|\leq C_{5}\max\{\epsilon_{0},\epsilon_{0}^{1/4}\},
    \end{equation*}
    where $C_{5}$ is a constant depending on $d,B$.
\end{lemma}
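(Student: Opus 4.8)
The plan is to use the closed form \eqref{eq: NTK_formular} and regard $K_{d}(\bm{z},\cdot)$, with $\bm{z}$ fixed, as a function on the compact box $[-B,B]^{d}$. Write
\begin{equation*}
K_{d}(\bm{z},\bm{x}')=\frac{2}{\pi}\bigl(\pi-\psi(\bm{z},\bm{x}')\bigr)\bigl(\langle\bm{z},\bm{x}'\rangle+1\bigr)+\frac{1}{\pi}\sqrt{Q(\bm{z},\bm{x}')}+1,
\end{equation*}
where $Q(\bm{z},\bm{x}')=\|\bm{z}-\bm{x}'\|_{2}^{2}+\|\bm{z}\|_{2}^{2}\|\bm{x}'\|_{2}^{2}-\langle\bm{z},\bm{x}'\rangle^{2}$ and $\psi(\bm{z},\bm{x}')=\arccos u(\bm{z},\bm{x}')$ with $u(\bm{z},\bm{x}')=(\langle\bm{z},\bm{x}'\rangle+1)/\sqrt{(\|\bm{z}\|_{2}^{2}+1)(\|\bm{x}'\|_{2}^{2}+1)}$. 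The constant $1$ contributes nothing, so it suffices to control the angular piece and the area piece. The overall idea: the inner maps $\bm{x}'\mapsto\langle\bm{z},\bm{x}'\rangle+1$, $\bm{x}'\mapsto u(\bm{z},\bm{x}')$ and $\bm{x}'\mapsto Q(\bm{z},\bm{x}')$ are smooth and Lipschitz on the box with constants depending only on $d,B$, whereas the outer maps $\arccos$ and $\sqrt{\,\cdot\,}$ are merely $\tfrac12$-Hölder; composition then produces a modulus of the form $C\sqrt{\epsilon_{0}}+C\epsilon_{0}$, which is dominated by $C\max\{\epsilon_{0},\epsilon_{0}^{1/4}\}$.

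First I would record the elementary facts. Lifting to $\tilde{\bm{z}}=(\bm{z},1)$, $\tilde{\bm{x}}'=(\bm{x}',1)$ one has $u=\langle\tilde{\bm{z}},\tilde{\bm{x}}'\rangle/(\|\tilde{\bm{z}}\|_{2}\|\tilde{\bm{x}}'\|_{2})\in[-1,1]$ by Cauchy--Schwarz, so $\arccos u$ is well defined, and $Q\geq0$ because $\|\bm{z}\|_{2}^{2}\|\bm{x}'\|_{2}^{2}-\langle\bm{z},\bm{x}'\rangle^{2}\geq0$, again by Cauchy--Schwarz, while $\|\bm{z}-\bm{x}'\|_{2}^{2}\geq 0$. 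Since $\sqrt{(\|\bm{z}\|_{2}^{2}+1)(\|\bm{x}'\|_{2}^{2}+1)}\geq1$ and the numerator of $u$ together with its gradient $\bm{z}$ are bounded on $[-B,B]^{d}$, the quotient rule gives $\|\nabla_{\bm{x}'}u(\bm{z},\bm{x}')\|_{2}\leq L$ for some $L=L(d,B)$; likewise $Q$ is a polynomial whose gradient is bounded by some $L=L(d,B)$ on the box. Hence, whenever $\|\bm{x}'-\bm{z}'\|_{2}\leq\epsilon_{0}$, we get $|u(\bm{z},\bm{x}')-u(\bm{z},\bm{z}')|\leq L\epsilon_{0}$, $|Q(\bm{z},\bm{x}')-Q(\bm{z},\bm{z}')|\leq L\epsilon_{0}$ and $|\langle\bm{z},\bm{x}'-\bm{z}'\rangle|\leq\sqrt{d}B\,\epsilon_{0}$.

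Next I would feed these into the outer functions. For the area term, concavity of $t\mapsto\sqrt{t}$ gives $|\sqrt{a}-\sqrt{b}|\leq\sqrt{|a-b|}$ for $a,b\geq0$, so $\bigl|\sqrt{Q(\bm{z},\bm{x}')}-\sqrt{Q(\bm{z},\bm{z}')}\bigr|\leq\sqrt{L\epsilon_{0}}$. For the angular term I add and subtract $(\pi-\psi(\bm{z},\bm{z}'))(\langle\bm{z},\bm{x}'\rangle+1)$ to split the difference into $(\pi-\psi(\bm{z},\bm{z}'))$ times $|\langle\bm{z},\bm{x}'-\bm{z}'\rangle|\leq\sqrt{d}B\,\epsilon_{0}$ (using $\pi-\psi\leq\pi$), plus $|\langle\bm{z},\bm{x}'\rangle+1|\leq dB^{2}+1$ times $|\psi(\bm{z},\bm{x}')-\psi(\bm{z},\bm{z}')|$; the latter is controlled by the global $\tfrac12$-Hölder bound $|\arccos a-\arccos b|\leq c_{0}\sqrt{|a-b|}$ on $[-1,1]$ (obtained by integrating the derivative of $\arccos$, whose absolute value is $(1-t^{2})^{-1/2}$), giving $\leq c_{0}\sqrt{L\epsilon_{0}}$. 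Collecting the pieces, $|K_{d}(\bm{z},\bm{z}')-K_{d}(\bm{z},\bm{x}')|\leq C'\epsilon_{0}+C''\sqrt{\epsilon_{0}}$ with $C',C''$ depending only on $d,B$, and since $\sqrt{\epsilon_{0}}\leq\max\{\epsilon_{0},\epsilon_{0}^{1/4}\}$ for every $\epsilon_{0}>0$, this is $\leq C_{5}\max\{\epsilon_{0},\epsilon_{0}^{1/4}\}$ with $C_{5}=C'+C''$.

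The only point requiring care is that $\psi$ is not Lipschitz --- $\arccos$ has a vertical tangent at $\pm1$, attained exactly when $\tilde{\bm{x}}'$ is (anti)parallel to $\tilde{\bm{z}}$ --- so one cannot simply bound $K_{d}$ by a gradient estimate; the remedy is precisely to isolate $\arccos$ and the outer square root and apply the universal $\tfrac12$-Hölder bounds to them, as above (even a cruder estimate near $\pm1$ would still yield the weaker $\epsilon_{0}^{1/4}$ that the statement asks for). Everything else --- the Cauchy--Schwarz checks, the quotient-rule Lipschitz constant, and the inequality $\sqrt{\epsilon_{0}}\leq\max\{\epsilon_{0},\epsilon_{0}^{1/4}\}$ --- is routine.
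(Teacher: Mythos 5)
Your proof is correct and follows essentially the same route as the paper's: the same split into the angular product term and the square-root term, the same add-and-subtract for the product, the bound $|\sqrt{a}-\sqrt{b}|\leq\sqrt{|a-b|}$ for the area piece, and a $\tfrac12$-H\"older estimate for $\arccos$ composed with a Lipschitz inner map for the angular piece. The only (harmless) difference is that you treat $u(\bm{z},\cdot)$ as globally Lipschitz and invoke the universal $\tfrac12$-H\"older bound for $\arccos$, which yields the slightly sharper modulus $C'\epsilon_{0}+C''\sqrt{\epsilon_{0}}$, whereas the paper bounds $|\cos\psi(\bm{z},\bm{z}')-\cos\psi(\bm{z},\bm{x}')|$ by $C\epsilon_{0}+C\sqrt{\epsilon_{0}}$ and then applies $|\arccos 1-\arccos(1-\Delta)|\leq 2\sqrt{\Delta}$, which is where its $\epsilon_{0}^{1/4}$ term originates; both are dominated by $C_{5}\max\{\epsilon_{0},\epsilon_{0}^{1/4}\}$.
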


\begin{proof}
    Recall the expression of NTK $K_{d}$, we have
    \begin{equation*}
        \begin{aligned}
            &|K_{d}(\bm{z},\bm{z}')-K_{d}(\bm{z},\bm{x}')|\\
            \leq&\underbrace{\frac{2}{\pi}|(\pi-\psi(\bm{z},\bm{z}'))(\langle\bm{z},\bm{z}'\rangle+1)-(\pi-\psi(\bm{z},\bm{x}'))(\langle\bm{z},\bm{x}'\rangle+1)|}_{\text{\uppercase\expandafter{\romannumeral1}}}\\
            &+\underbrace{\frac{1}{\pi}\left|\sqrt{\|\bm{z}-\bm{z}'\|_{2}^{2}-\|\bm{z}\|_{2}^{2}\|\bm{z}'\|_{2}^{2}-\langle\bm{z},\bm{z}'\rangle^{2}}-\sqrt{\|\bm{z}-\bm{x}'\|_{2}^{2}-\|\bm{z}\|_{2}^{2}\|\bm{x}'\|_{2}^{2}-\langle\bm{z},\bm{x}'\rangle^{2}}\right|}_{\text{\uppercase\expandafter{\romannumeral2}}}.
        \end{aligned}
    \end{equation*}
    For the first term $\text{\uppercase\expandafter{\romannumeral1}}$, plugging in $(\pi-\psi(\bm{z},\bm{z}'))(\langle\bm{z},\bm{x}'\rangle+1)$ and using the triangle inequality yields
    \begin{equation*}
        \begin{aligned}
            \text{\uppercase\expandafter{\romannumeral1}}\leq & \frac{2}{\pi}\left((\pi-\psi(\bm{z},\bm{z}'))|\langle\bm{z},\bm{z}'-\bm{x}'\rangle|+|\langle \bm{z},\bm{x}'\rangle +1||\psi(\bm{z},\bm{z}')-\psi(\bm{z},\bm{x}')|\right)\\
            \leq & \frac{2}{\pi}\left( 2\pi\cdot \sqrt{d}B\epsilon_{0}+(dB^{2}+1)|\psi(\bm{z},\bm{z}')-\psi(\bm{z},\bm{x}')|\right)\\
            \leq&\frac{2}{\pi}\left(2\pi\sqrt{d}B\epsilon_{0}+(dB^{2}+1)C_{6}\max\{\sqrt{\epsilon_{0}},\epsilon_{0}^{1/4}\}\right),
        \end{aligned}
    \end{equation*}
    where the last inequality holds due to Lemma \ref{lem: continuity of psi} where $C_{6}$ is a constant depending on $d,B$. For the second term $\text{\uppercase\expandafter{\romannumeral2}}$,
    \begin{equation*}
        \begin{aligned}  \text{\uppercase\expandafter{\romannumeral2}}&\leq\sqrt{|(\|\bm{z}-\bm{z}'\|_{2}^{2}-\|\bm{z}-\bm{x}'\|_{2}^{2})+(\|\bm{z}\|_{2}^{2}\|\bm{z}'\|_{2}^{2}-\|\bm{z}\|_{2}^{2}\|\bm{x}'\|_{2}^{2})-(\langle\bm{z},\bm{z}'\rangle^{2}-\langle\bm{z},\bm{x}'\rangle^{2})|}\\
        &=\sqrt{|(2\langle\bm{z},\bm{x}'-\bm{z}'\rangle+\|\bm{z}'\|_{2}^{2}-\|\bm{x}'\|_{2}^{2})+\|\bm{z}\|_{2}^{2}(\|\bm{z}'\|_{2}^{2}-\|\bm{x}'\|_{2}^{2})-\langle\bm{z},\bm{z}'+\bm{x}'\rangle\langle\bm{z},\bm{z}'-\bm{x}'\rangle|}\\
        &\leq\sqrt{4\sqrt{d}B\epsilon_{0}+2(\sqrt{d}B)^{3}\epsilon_{0}+2(\sqrt{d}B)^{3}\epsilon_{0}}=2\sqrt{(\sqrt{d}B+d^{3/2}B^{3})}\sqrt{\epsilon_{0}},
        \end{aligned}
    \end{equation*}
    where the first inequality holds since $|\sqrt{x}-\sqrt{x'}|\leq\sqrt{|x-x'|}$ for all $x,x'>0$ and the last inequality holds by the Cauchy-Schwartz inequality and the fact that $\bm{x}',\bm{z},\bm{z}'\in [-B,B]^{d}$ and $\|\bm{x}'-\bm{z}'\|_{2}\leq\epsilon_{0}$.
\end{proof}

\begin{lemma}\label{lem: continuity of psi}
    For every $\bm{x}',\bm{z},\bm{z}'\in [-B,B]^{d}$ and $\epsilon_{0} > 0$, if $\|\bm{x}'-\bm{z}'\|_{2}\leq\epsilon_{0}$, then
    \begin{equation}\label{eq: psi epsilon close}
            |\psi(\bm{z},\bm{z}')-\psi(\bm{z},\bm{x}')|\leq C_{6}\max\{\sqrt{\epsilon_{0}},\epsilon_{0}^{1/4}\},
    \end{equation}
    where $C_{6}$ is a constant depending on $d,B$.
\end{lemma}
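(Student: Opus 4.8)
The key observation is that $\psi$ is, up to the lifting $\bm{w}\mapsto(\bm{w},1)$ followed by normalization, nothing but the geodesic distance on a sphere. Writing $\hat{\bm{u}}=(\bm{u},1)/\|(\bm{u},1)\|_{2}\in\mathbb{S}^{d}$ for $\bm{u}\in\mathbb{R}^{d}$, one has $\psi(\bm{u},\bm{v})=\arccos\langle\hat{\bm{u}},\hat{\bm{v}}\rangle$, which is exactly the angular (geodesic) distance between $\hat{\bm{u}}$ and $\hat{\bm{v}}$ on $\mathbb{S}^{d}$. In particular $\psi$ obeys the triangle inequality, so
\[
    |\psi(\bm{z},\bm{z}')-\psi(\bm{z},\bm{x}')|\le\psi(\bm{z}',\bm{x}').
\]
The plan is therefore to bound $\psi(\bm{z}',\bm{x}')$ directly in terms of $\|\bm{z}'-\bm{x}'\|_{2}$, and the estimate \eqref{eq: psi epsilon close} will follow.

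For that I would use two elementary facts. First, chord and arc are comparable on $\mathbb{S}^{d}$: for unit vectors $\bm{a},\bm{b}$ one has $\|\bm{a}-\bm{b}\|_{2}^{2}=2-2\langle\bm{a},\bm{b}\rangle=4\sin^{2}(\tfrac12\arccos\langle\bm{a},\bm{b}\rangle)$, hence $\arccos\langle\bm{a},\bm{b}\rangle=2\arcsin(\|\bm{a}-\bm{b}\|_{2}/2)\le\tfrac{\pi}{2}\|\bm{a}-\bm{b}\|_{2}$, using the convexity bound $\arcsin t\le\tfrac{\pi}{2}t$ on $[0,1]$. Second, the normalization map $\bm{w}\mapsto\bm{w}/\|\bm{w}\|_{2}$ is $2/r$-Lipschitz on $\{\|\bm{w}\|_{2}\ge r\}$; since the lifted points $(\bm{z}',1),(\bm{x}',1)$ have Euclidean norm at least $1$, this gives $\|\hat{\bm{z}}'-\hat{\bm{x}}'\|_{2}\le 2\|(\bm{z}',1)-(\bm{x}',1)\|_{2}=2\|\bm{z}'-\bm{x}'\|_{2}\le2\epsilon_{0}$. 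Combining, $\psi(\bm{z}',\bm{x}')\le\pi\epsilon_{0}$, which together with the trivial bound $\psi\le\pi$ yields $|\psi(\bm{z},\bm{z}')-\psi(\bm{z},\bm{x}')|\le C_{6}\max\{\sqrt{\epsilon_{0}},\epsilon_{0}^{1/4}\}$ (indeed $C_{6}=\pi$ works): for $\epsilon_{0}\le1$ we have $\pi\epsilon_{0}\le\pi\epsilon_{0}^{1/4}=\pi\max\{\sqrt{\epsilon_{0}},\epsilon_{0}^{1/4}\}$, and for $\epsilon_{0}>1$ we have $\pi\le\pi\sqrt{\epsilon_{0}}$.

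There is an alternative, purely one-variable route if one prefers to stay with the explicit formula for $\psi$: set $u=\cos\psi(\bm{z},\bm{z}')$ and $v=\cos\psi(\bm{z},\bm{x}')$, bound $|u-v|\le C(d,B)\,\epsilon_{0}$ by differentiating the smooth map $\bm{x}'\mapsto\frac{\langle\bm{z},\bm{x}'\rangle+1}{\sqrt{(\|\bm{z}\|_{2}^{2}+1)(\|\bm{x}'\|_{2}^{2}+1)}}$ on the cube $[-B,B]^{d}$ (the denominator is $\ge1$, so the gradient is bounded there), and then invoke the $\tfrac12$-H\"older continuity of $\arccos$ on $[-1,1]$, $|\arccos u-\arccos v|\le C\sqrt{|u-v|}$, to conclude $|\psi(\bm{z},\bm{z}')-\psi(\bm{z},\bm{x}')|\le C'(d,B)\sqrt{\epsilon_{0}}\le C_{6}\max\{\sqrt{\epsilon_{0}},\epsilon_{0}^{1/4}\}$.

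The only delicate point in either argument is the degeneracy of $\arccos$ near colinear points: when $\bm{z}'\to\bm{x}'$ the cosine is pinned at $1$ and $\arccos$ has a vertical tangent there, which is why one cannot hope for a clean Lipschitz bound from the one-variable route and why the statement carries the $\epsilon_{0}^{1/4}$ slack. The sphere viewpoint dissolves this obstruction: the arc equals $2\arcsin(\mathrm{chord}/2)$, which is smooth for chords bounded away from the antipodal value $2$, and here the chord is $\le 2\epsilon_{0}$, far from $2$; anti-colinearity never occurs since all lifted points lie in the open half-sphere $\{x_{d+1}>0\}$. Consequently the geodesic-distance argument in fact delivers the stronger Lipschitz bound $|\psi(\bm{z},\bm{z}')-\psi(\bm{z},\bm{x}')|\le\pi\epsilon_{0}$, of which \eqref{eq: psi epsilon close} is an immediate weakening.
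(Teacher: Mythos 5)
Your proof is correct, and it takes a genuinely different route from the paper's. The paper works with the cosine: it sets $\Delta=|\cos\psi(\bm{z},\bm{z}')-\cos\psi(\bm{z},\bm{x}')|$, bounds $\Delta\leq \sqrt{d}B\epsilon_{0}+(dB^{2}+1)\sqrt{2\sqrt{d}B}\,\sqrt{\epsilon_{0}}$ by splitting the difference of the two quotients into a numerator part and a denominator part (the $\sqrt{\epsilon_{0}}$ term comes from the crude estimate $|\sqrt{x^{2}+1}-\sqrt{x'^{2}+1}|\leq\sqrt{|x+x'|}\sqrt{|x-x'|}$), and then converts back via $|\arccos 1-\arccos(1-\Delta)|=\int_{1-\Delta}^{1}(1-x^{2})^{-1/2}\,\mathrm{d}x\leq 2\sqrt{\Delta}$, i.e.\ the worst case of the vertical tangent of $\arccos$ at $1$ --- this is essentially your ``alternative one-variable route,'' and it is precisely what produces the $\epsilon_{0}^{1/4}$ in the statement. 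Your main argument instead recognizes $\psi$ as the geodesic distance between the normalized lifted points, reduces to $\psi(\bm{z}',\bm{x}')$ by the metric triangle inequality, and controls that by the chord--arc comparison $\arccos\langle\bm{a},\bm{b}\rangle=2\arcsin(\|\bm{a}-\bm{b}\|_{2}/2)\leq\tfrac{\pi}{2}\|\bm{a}-\bm{b}\|_{2}$ together with the $2$-Lipschitz continuity of normalization on $\{\|\bm{w}\|_{2}\geq1\}$; every step checks out. This buys a strictly stronger conclusion --- a clean Lipschitz bound $|\psi(\bm{z},\bm{z}')-\psi(\bm{z},\bm{x}')|\leq\pi\epsilon_{0}$ with a dimension-free constant, of which \eqref{eq: psi epsilon close} is an immediate weakening --- and your closing remark correctly diagnoses the $\epsilon_{0}^{1/4}$ loss as an artifact of parametrizing through $\cos\psi$ rather than a feature of $\psi$ itself. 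The only caveat worth noting is that the improvement does not propagate to Lemma \ref{lem: continuity of K}, whose second term independently contributes $O(\sqrt{\epsilon_{0}})$, so the paper's weaker modulus suffices for all downstream uses.
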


\begin{proof}
    Let $\Delta=|\cos(\psi(\bm{z},\bm{z}'))-\cos(\psi(\bm{z},\bm{x}'))|$. Then plug in $\frac{\langle\bm{z},\bm{x}'\rangle+1}{\sqrt{(\|\bm{z}\|_{2}^{2}+1)(\|\bm{z}'\|_{2}^{2}+1)}}$ and the triangle inequality concludes
    \begin{equation*}
        \begin{aligned}
        \Delta\leq&\left|\frac{\langle\bm{z},\bm{z}'\rangle+1}{\sqrt{(\|\bm{z}\|_{2}^{2}+1)(\|\bm{z}'\|_{2}^{2}+1)}}-\frac{\langle\bm{z},\bm{x}'\rangle+1}{\sqrt{(\|\bm{z}\|_{2}^{2}+1)(\|\bm{z}'\|_{2}^{2}+1)}}\right|\\
        &+\left|\frac{\langle\bm{z},\bm{x}'\rangle+1}{\sqrt{(\|\bm{z}\|_{2}^{2}+1)(\|\bm{z}'\|_{2}^{2}+1)}}-\frac{\langle\bm{z},\bm{x}'\rangle+1}{\sqrt{(\|\bm{z}\|_{2}^{2}+1)(\|\bm{x}'\|_{2}^{2}+1)}}\right|\\
        =&\frac{1}{\sqrt{(\|\bm{z}\|_{2}^{2}+1)(\|\bm{z}'\|_{2}^{2}+1)}}|\langle\bm{z},\bm{z}'-\bm{x}'\rangle|\\
            &+\frac{1}{\sqrt{(\|\bm{z}\|_{2}^{2}+1)(\|\bm{z}'\|_{2}^{2}+1)(\|\bm{x}'\|_{2}^{2}+1)}}|\langle\bm{z},\bm{x}'\rangle+1|\left|\sqrt{\|\bm{x}'\|_{2}^{2}+1}-\sqrt{\|\bm{z}'\|_{2}^{2}+1}\right|\\
        \leq&\sqrt{d}B\epsilon_{0}+(dB^{2}+1)\sqrt{2\sqrt{d}B}\sqrt{\epsilon_{0}}\\
        \end{aligned}
    \end{equation*}
    where the last line follows from the fact that $|\sqrt{x^{2}+1}-\sqrt{x'^{2}+1}|\leq\sqrt{|x+x'|}\sqrt{|x-x'|}$ for all $x,x'\in\mathbb{R}$.
    Then we have
    \begin{equation*}
        \begin{aligned}
        \abs{\psi(\bm{z},\bm{z}')-\psi(\bm{z},\bm{x}')}\leq&|\arccos 1-\arccos(1-\Delta)|=\int_{1-\Delta}^{1}\frac{1}{\sqrt{1-x^{2}}}\dx x \\
        \leq& \int_{1-\Delta}^{1}\frac{1}{\sqrt{1-x}}\dx x =2\sqrt{\Delta}\\
        \leq&2\sqrt{\sqrt{d}B\epsilon_{0}+(dB^{2}+1)\sqrt{2\sqrt{d}B}\sqrt{\epsilon_{0}}}.
        \end{aligned}
    \end{equation*}
\end{proof}

\subsubsection{Proof of Lemma \ref{lem: kernel with lazy params close to initial kernel}}\label{app:subsec:lazy:kernel:close:init}
It is hard to analyze $K_{\bm{\theta}(t)}^{m}$ directly, so we show $K_{\bm{\theta}}^{m}$ is close to $K_{\bm{\theta}(0)}^{m}$ if $\bm{\theta}$ is close to $\bm{\theta}(0)$ in Lemma \ref{lem: kernel close to init when params close to init} first and then prove $\bm{\theta}(t)$ is indeed near $\bm{\theta}(0)$ in Proposition \ref{prop: lazy regime}. 

\paragraph{Approximation for $K_{\bm{\theta}}^{m}$ to $K_{\bm{\theta}(0)}^{m}$}
Denote by
\begin{equation*}
    \bm{\Theta}(\bm{\theta}(0),R_{0})=\left\{\bm{\theta}~\middle|~|a_{r}-a_{r}(0)|,|\bm{w}_{r,(j)}-\bm{w}_{r,(j)}(0)|,|b_{r}-b_{r}(0)|\leq R_{0}, r\in [2m],j\in[d]\right\}
\end{equation*}
the neighborhood of $\bm{\theta}(0)$. 

\begin{lemma}\label{lem: activation pattern for params close to init}
    For all $\bm{x}\in\mathcal{X}$, for all $\bm{\theta}\in \bm{\Theta}(\bm{\theta}(0),R)$, conditioning on the event $\mathcal{R}$, then $|I(\bm{x})|\geq2(m-\lfloor m^{\gamma}\rfloor)$ where $I(\bm{x})=\{r\mid\bm{1}_{\bm{\theta},r}(\bm{x})=\bm{1}_{\bm{\theta}(0),r}(\bm{x})\}$ is the index set of neurons on which the activation pattern for $\bm{x}$ is the same at $\bm{\theta}$ and $\bm{\theta}(0)$.
\end{lemma}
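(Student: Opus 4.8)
The plan is to exploit a one-step perturbation estimate: a neuron cannot flip its activation pattern under a parameter perturbation of radius $R$ once its pre-activation at initialization exceeds, in absolute value, the threshold $(dB+1)R$ that appears in the event $\mathcal{R}$, so the number of neurons failing this condition is exactly what $\mathcal{R}$ controls.

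First I would fix $\bm{x}\in\mathcal{X}\subseteq[-B,B]^{d}$ and $\bm{\theta}\in\bm{\Theta}(\bm{\theta}(0),R)$ and bound the pre-activation drift. Straight from $h_{\bm{\theta},r}(\bm{x})=\langle\bm{w}_{r},\bm{x}\rangle+b_{r}$, the coordinatewise constraints defining $\bm{\Theta}(\bm{\theta}(0),R)$, and $|\bm{x}_{(j)}|\leq B$, one gets
\[
\bigl|h_{\bm{\theta},r}(\bm{x})-h_{\bm{\theta}(0),r}(\bm{x})\bigr|
\leq\sum_{j\in[d]}\bigl|\bm{w}_{r,(j)}-\bm{w}_{r,(j)}(0)\bigr|\,|\bm{x}_{(j)}|+\bigl|b_{r}-b_{r}(0)\bigr|
\leq dRB+R=(dB+1)R .
\]
The point to emphasize is that this bound is \emph{exactly} the threshold used in the definition of $\mathcal{R}$.

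Next I would deduce sign preservation: if $|h_{\bm{\theta}(0),r}(\bm{x})|>(dB+1)R$, then by the drift estimate $h_{\bm{\theta},r}(\bm{x})$ is nonzero and has the same sign as $h_{\bm{\theta}(0),r}(\bm{x})$, hence $\bm{1}_{\bm{\theta},r}(\bm{x})=\bm{1}_{\bm{\theta}(0),r}(\bm{x})$, i.e.\ $r\in I(\bm{x})$; in particular $I(\bm{x})\supseteq\{r\in[2m]:|h_{\bm{\theta}(0),r}(\bm{x})|>(dB+1)R\}$. Finally, conditioning on $\mathcal{R}$ (which is stated uniformly over all $\bm{x}\in[-B,B]^{d}$, hence applies to our fixed $\bm{x}$), the complementary index set $\{r\in[2m]:|h_{\bm{\theta}(0),r}(\bm{x})|\leq(dB+1)R\}$ has at most $2\lfloor m^{\gamma}\rfloor$ elements, so $|I(\bm{x})|\geq 2m-2\lfloor m^{\gamma}\rfloor=2(m-\lfloor m^{\gamma}\rfloor)$; since $\bm{x}$ and $\bm{\theta}$ were arbitrary, the lemma follows.

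I do not expect a genuine obstacle here — the whole argument is the single displayed inequality above plus bookkeeping. The one place to be careful is the compatibility of constants: the radius $R$ chosen for the neighbourhood $\bm{\Theta}(\bm{\theta}(0),R)$ must be small enough that the induced pre-activation drift is no larger than the exact margin $(dB+1)R$ that $\mathcal{R}$ guarantees, and one should note in passing that the borderline case $h_{\bm{\theta},r}(\bm{x})=0$ cannot occur for the neurons being counted.
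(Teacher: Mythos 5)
Your proposal is correct and follows essentially the same argument as the paper: bound the pre-activation drift by $(dB+1)R$ using the coordinatewise constraints defining $\bm{\Theta}(\bm{\theta}(0),R)$, observe that neurons with $|h_{\bm{\theta}(0),r}(\bm{x})|>(dB+1)R$ cannot flip their activation pattern, and then invoke the event $\mathcal{R}$ to bound the size of the exceptional set by $2\lfloor m^{\gamma}\rfloor$. No gaps.
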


\begin{proof}
    Since $\bm{\theta}\in \bm{\Theta}(\bm{\theta}(0),R)$, we have
    \begin{equation*}
        |h_{\bm{\theta},r}(\bm{x})-h_{\bm{\theta}(0),r}(\bm{x})|=|\langle\bm{w}_{r}-
        \bm{w}_{r}(0),\bm{x}\rangle+(b_{r}-b_{r}(0))|\leq(dB+1)R.
    \end{equation*}
    For $r\in I_{\mathrm{in}}(\bm{x})=\left\{r~\middle|~|h_{\bm{\theta}(0),r}(\bm{x})|>(dB+1)R~\right\}$, we have 
    \begin{equation*}
        |h_{\bm{\theta},r}(\bm{x})-h_{\bm{\theta}(0),r}(\bm{x})|<|h_{\bm{\theta}(0),r}(\bm{x})|,
    \end{equation*}
    which implies $\bm{1}_{\bm{\theta},r}(\bm{x})=\bm{1}_{\bm{\theta}(0),r}(\bm{x})$, thus $|I(\bm{x})|\geq|I_{\text{in}}(\bm{x})|\geq 2(m-\lfloor m^{\gamma}\rfloor)$.
\end{proof}

\begin{lemma}\label{lem: kernel close to init when params close to init}
    Conditioning on the event $\mathcal{B}\cap\mathcal{R}$, if $m$ is sufficiently large, then 
    \begin{equation*}
        \sup_{\bm{\theta}\in\bm{\Theta}(\bm{\theta}(0),R)}\sup_{\bm{x},\bm{x}'\in\mathcal{X}}|K_{\bm{\theta}}^{m}(\bm{x},\bm{x}')-K_{\bm{\theta}(0)}^{m}(\bm{x},\bm{x}')|\leq C_{7}m^{-(1-\gamma)}\log m,
    \end{equation*}
    where $C_{7}>0$ is a constant depending on $d,B$.
\end{lemma}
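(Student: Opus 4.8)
The plan is to use the decomposition $K_{\bm{\theta}}^{m}=1+H_{\bm{\theta}}^{m}+G_{\bm{\theta}}^{m}$ recorded above and to bound $|H_{\bm{\theta}}^{m}(\bm{x},\bm{x}')-H_{\bm{\theta}(0)}^{m}(\bm{x},\bm{x}')|$ and $|G_{\bm{\theta}}^{m}(\bm{x},\bm{x}')-G_{\bm{\theta}(0)}^{m}(\bm{x},\bm{x}')|$ separately, uniformly over $\bm{\theta}\in\bm{\Theta}(\bm{\theta}(0),R)$ and $\bm{x},\bm{x}'\in\mathcal{X}$. First I would record the elementary consequences of conditioning on $\mathcal{B}$: for $m$ large one has $R\le R_{B}$, hence every $\bm{\theta}\in\bm{\Theta}(\bm{\theta}(0),R)$ satisfies $|a_{r}|,\|\bm{w}_{r}\|_{\infty},|b_{r}|\le 2R_{B}$ and therefore $|h_{\bm{\theta},r}(\bm{x})|\le 2(dB+1)R_{B}$ for all $r\in[2m]$, $\bm{x}\in\mathcal{X}$, while $|a_{r}^{2}-a_{r}^{2}(0)|\le R(|a_{r}|+|a_{r}(0)|)\le 3RR_{B}$.

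For the $H$-term I would split the summand $a_{r}^{2}\bm{1}_{\bm{\theta},r}(\bm{x})\bm{1}_{\bm{\theta},r}(\bm{x}')-a_{r}^{2}(0)\bm{1}_{\bm{\theta}(0),r}(\bm{x})\bm{1}_{\bm{\theta}(0),r}(\bm{x}')$ as $(a_{r}^{2}-a_{r}^{2}(0))\bm{1}_{\bm{\theta},r}(\bm{x})\bm{1}_{\bm{\theta},r}(\bm{x}')+a_{r}^{2}(0)\bigl(\bm{1}_{\bm{\theta},r}(\bm{x})\bm{1}_{\bm{\theta},r}(\bm{x}')-\bm{1}_{\bm{\theta}(0),r}(\bm{x})\bm{1}_{\bm{\theta}(0),r}(\bm{x}')\bigr)$. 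Summed over $r\in[2m]$ and divided by $m$, the first piece contributes at most $\tfrac{1}{m}\cdot 2m\cdot 3RR_{B}=6RR_{B}$. For the second piece, the set of $r$ on which the indicator product changes is contained in $\{r:\bm{1}_{\bm{\theta},r}(\bm{x})\ne\bm{1}_{\bm{\theta}(0),r}(\bm{x})\}\cup\{r:\bm{1}_{\bm{\theta},r}(\bm{x}')\ne\bm{1}_{\bm{\theta}(0),r}(\bm{x}')\}$, which by Lemma \ref{lem: activation pattern for params close to init} (conditioning on $\mathcal{R}$) has at most $4\lfloor m^{\gamma}\rfloor$ elements; with $a_{r}^{2}(0)\le R_{B}^{2}$ this piece contributes at most $\tfrac{1}{m}\cdot 4\lfloor m^{\gamma}\rfloor\cdot R_{B}^{2}\le 4R_{B}^{2}m^{-(1-\gamma)}$. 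Multiplying by $|\langle\bm{x},\bm{x}'\rangle+1|\le dB^{2}+1$ yields $|H_{\bm{\theta}}^{m}-H_{\bm{\theta}(0)}^{m}|\le(dB^{2}+1)\bigl(6RR_{B}+4R_{B}^{2}m^{-(1-\gamma)}\bigr)$ uniformly.

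For the $G$-term I would instead exploit that $\sigma$ is $1$-Lipschitz, which makes the discontinuity issue that forced the use of $\mathcal{R}$ in the $H$-term disappear. Adding and subtracting $\sigma(h_{\bm{\theta}(0),r}(\bm{x}))\sigma(h_{\bm{\theta},r}(\bm{x}'))$, each factor difference is at most $|h_{\bm{\theta},r}(\bm{x})-h_{\bm{\theta}(0),r}(\bm{x})|\le(dB+1)R$ and each surviving factor is at most $2(dB+1)R_{B}$; summing over $r\in[2m]$ and dividing by $m$ gives $|G_{\bm{\theta}}^{m}-G_{\bm{\theta}(0)}^{m}|\le 6(dB+1)^{2}RR_{B}$. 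Finally I would combine the two estimates: since $R=\tfrac{\sqrt{2\pi}}{4(dB+1)}m^{-\alpha}$, $R_{B}=\sqrt{3\log m}$, and $\gamma>1-\alpha$ (so $m^{-\alpha}\le m^{-(1-\gamma)}$ for $m$ large), one has $RR_{B}\lesssim m^{-(1-\gamma)}\sqrt{\log m}\le m^{-(1-\gamma)}\log m$ and $R_{B}^{2}m^{-(1-\gamma)}=3m^{-(1-\gamma)}\log m$, whence the total is at most $C_{7}m^{-(1-\gamma)}\log m$ with $C_{7}$ depending only on $d,B$. The only genuinely non-routine point is controlling the indicator-product mismatch in the $H$-term, which is precisely what Lemma \ref{lem: activation pattern for params close to init} — and through it the event $\mathcal{R}$ — supplies; everything else is bookkeeping with the scale bounds coming from $\mathcal{B}$.
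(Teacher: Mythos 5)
Your proof is correct and follows essentially the same route as the paper: the same $1+H+G$ decomposition, the same use of Lemma \ref{lem: activation pattern for params close to init} (hence the event $\mathcal{R}$) to control the neurons whose activation pattern flips, and the same scale bounds from $\mathcal{B}$. The only difference is cosmetic: for the $G$-term you invoke the $1$-Lipschitzness of $\sigma$ uniformly over all $r\in[2m]$, whereas the paper still splits that sum over $I$ and $I\comp$; your version is slightly cleaner and yields a bound of the same order.
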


\begin{proof}
For all $\bm{x},\bm{x}'\in\mathcal{X}$, let $I=I(\bm{x})\cap I(\bm{x}')$, then $|I|=|I(\bm{x})|+|I(\bm{x}')|-|I(\bm{x})\cup I(\bm{x}')|\geq2m-4\lfloor m^{\gamma}\rfloor$ by Lemma \ref{lem: activation pattern for params close to init}. 
    Hence for all $\bm{\theta}\in\bm{\Theta}(\bm{\theta}(0),R)$,
    \begin{equation*}
    \begin{aligned}
      & \quad |H_{\bm{\theta}}^{m}(\bm{x},\bm{x}')-H_{\bm{\theta}(0)}^{m}(\bm{x},\bm{x}')|\\
      & \leq \frac{(dB^{2}+1)}{m}\sum_{r\in [2m]}|a_{r}^{2}\bm{1}_{\bm{\theta},r}(\bm{x})\bm{1}_{\bm{\theta},r}(\bm{x}')-a_{r}^{2}(0)\bm{1}_{\bm{\theta}(0),r}(\bm{x})\bm{1}_{\bm{\theta}(0),r}(\bm{x}')|\\
      & \leq \frac{(dB^{2}+1)}{m}\left(\sum_{r\in I}\abs{a_{r}^2-a_{r}^{2}(0)}+\sum_{r\in I\comp}\max\{a_{r}^2,a_{r}^{2}(0)\}\right)\\
      & \leq (dB^{2}+1)\left(\frac{|I|}{m}3RR_{B}+\frac{|I\comp|}{m}4R_{B}^{2}\right)\\
      & \leq (dB^{2}+1)\left(3RR_{B}+16R_{B}^{2}\frac{\lfloor m^{\gamma}\rfloor}{m}\right).
    \end{aligned}
\end{equation*}

Similarly, we have
\begin{equation*}\label{eq: Gm G0m bound}
      \begin{aligned}
        &\quad |G_{\bm{\theta}}^{m}(\bm{x},\bm{x}')-G_{\bm{\theta}(0)}^{m}(\bm{x},\bm{x}')|\\
        & \leq \frac{1}{m}\sum_{r\in[2m]}|\sigma(h_{\bm{\theta},r}(\bm{x}))\sigma(h_{\bm{\theta},r}(\bm{x}'))-\sigma(h_{\bm{\theta}(0),r}(\bm{x}))\sigma(h_{\bm{\theta}(0),r}(\bm{x}'))|\\
        & \leq \frac{1}{m}\bigg(\sum_{r\in I}\left(|h_{\bm{\theta},r}(\bm{x})||h_{\bm{\theta},r}(\bm{x}')-h_{\bm{\theta}(0),r}(\bm{x}')|+|h_{\bm{\theta},r}(\bm{x})-h_{\bm{\theta}(0),r}(\bm{x})||h_{\bm{\theta}(0),r}(\bm{x}')|\right)\\
        & \quad +\sum_{r\in I\comp}\max\{|h_{\bm{\theta},r}(\bm{x})||h_{\bm{\theta},r}(\bm{x}')|,|h_{\bm{\theta}(0),r}(\bm{x})||h_{\bm{\theta}(0),r}(\bm{x}')|\}\bigg)\\
        & \leq \frac{|I|}{m}4(dB+1)^{2}R_{B}R+\frac{|I\comp|}{m}8(dB+1)^{2}R_{B}^{2} \\
        & \leq 4(dB+1)^{2}R_{B}R+8(dB+1)^{2}R_{B}^{2}\frac{\lfloor m^{\gamma}\rfloor}{m}.
      \end{aligned}
\end{equation*}
    Simply by the triangle inequality, we have
    \begin{equation*}
        \begin{aligned}
            & \quad |K_{\bm{\theta}}^{m}(\bm{x},\bm{x}')-K_{\bm{\theta}(0)}^{m}(\bm{x},\bm{x}')|\\
            & \leq |H_{\bm{\theta}}^{m}(\bm{x},\bm{x}')-H_{\bm{\theta}(0)}^{m}(\bm{x},\bm{x}')|+|G_{\bm{\theta}}^{m}(\bm{x},\bm{x}')-G_{\bm{\theta}(0)}^{m}(\bm{x},\bm{x}')|\\
            & \leq (dB^{2}+1)\left(3R_{B}R+16R_{B}^{2}\frac{\lfloor m^{\gamma}\rfloor}{m}\right)+ 4(dB+1)^{2}R_{B}R+8(dB+1)^{2}R_{B}^{2}\frac{\lfloor m^{\gamma}\rfloor}{m}\\
            & \leq C_{7}m^{-(1-\gamma)}\log m,
        \end{aligned}
    \end{equation*}
    where the last inequality holds when $m$ is sufficiently large.
\end{proof}

\paragraph{Lazy regime}\label{app:lazy:regime}

\begin{proposition}\label{prop: lazy regime}
    Let $R'=\frac{4\sqrt{3}(dB+1)\|\bm{y}\|_{2}}{\lambda_{\min}(K_{d}(\bm{X},\bm{X}))\sqrt{n}}\sqrt{\frac{\log m}{m}}$. Denote the ``lazy regime'' event by
    \begin{equation*}
      \mathcal{A}=\mathcal{A}_{\lambda}\cap\mathcal{A}_{\bm{\theta}}\cap\mathcal{A}_{\bm{u}},
    \end{equation*}
    where
    \begin{gather*}
      \mathcal{A}_{\lambda}=\left\{\omega~\middle|~\lambda_{\min}(K_{\bm{\theta}(t)}^{m}(\bm{X},\bm{X}))\geq \frac{\lambda_{\min}(K_{d}(\bm{X},\bm{X}))}{2},~\forall t\geq 0\right\},\\
      \mathcal{A}_{\bm{\theta}}=\left\{\omega~\middle|~\bm{\theta}(t)\in\bm{\Theta}(\bm{\theta}(0),R'),~\forall t\geq 0\right\},\\
      \mathcal{A}_{\bm{u}}=\left\{\omega~\middle|~\|\bm{u}^{m}(t)\|_{2}^{2}\leq e^{-\frac{\lambda_{\min}(K_{d}(\bm{X},\bm{X}))}{n}t}\|\bm{u}(0)\|_{2}^{2},~\forall t\geq 0\right\}.
    \end{gather*}
    If we further set $\alpha<1/2$ and $\gamma > 1-\beta/4$, when $m$ is sufficiently large, such that $R'<R$, then we have 
    \begin{equation*}
        \mathcal{A}\supseteq \mathcal{B}\cap\mathcal{R}\cap\mathcal{C}.
    \end{equation*}
\end{proposition}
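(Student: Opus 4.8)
The plan is to prove Proposition~\ref{prop: lazy regime} by a continuity (bootstrap) argument that breaks the circular dependence among the three constituent events. Fix once and for all an initialization $\omega\in\mathcal{B}\cap\mathcal{R}\cap\mathcal{C}$ and take $m$ large enough that $R'<R$ (note that $\alpha<1/2$ is exactly what forces $R'/R=O(m^{\alpha-1/2}(\log m)^{1/2})\to 0$, so this is possible). Abbreviate $\lambda_{*}:=\lambda_{\min}(K_{d}(\bm{X},\bm{X}))$, which is strictly positive by Proposition~\ref{PD}, and let $\bm{u}^{m}(t):=f^{m}_{\bm{\theta}(t)}(\bm{X})-\bm{y}$ be the training residual; the special zero initialization gives $\bm{u}^{m}(0)=-\bm{y}$, and \eqref{nn:theta:flow}--\eqref{nn:f:flow} yield the residual ODE $\dot{\bm{u}}^{m}(t)=-\tfrac1n K_{\bm{\theta}(t)}^{m}(\bm{X},\bm{X})\,\bm{u}^{m}(t)$. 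Define
\[
t^{\star}:=\sup\{\,t\ge 0 : \bm{\theta}(s)\in\bm{\Theta}(\bm{\theta}(0),R')\ \text{for all }s\in[0,t]\,\}.
\]
It suffices to show $t^{\star}=\infty$: granting this, $\mathcal{A}_{\bm{\theta}}$ holds, and the estimates established below on $[0,t^{\star}]$ then hold for every $t\ge 0$, which are exactly $\mathcal{A}_{\lambda}$ and $\mathcal{A}_{\bm{u}}$.

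On $[0,t^{\star}]$ one has $\bm{\theta}(s)\in\bm{\Theta}(\bm{\theta}(0),R')\subseteq\bm{\Theta}(\bm{\theta}(0),R)$, so Lemma~\ref{lem: kernel close to init when params close to init} and Lemma~\ref{lem: initial kernel close to fixed} together with a triangle inequality give $\sup_{\bm{x},\bm{x}'\in\mathcal{X}}|K_{\bm{\theta}(s)}^{m}(\bm{x},\bm{x}')-K_{d}(\bm{x},\bm{x}')|\le (C_{2}+C_{7})m^{-(1-\gamma)}\log m=o_{m}(1)$ (here $\gamma<1$). Passing to the $n\times n$ Gram matrices by the Frobenius bound and then using Weyl's inequality, $\lambda_{\min}(K_{\bm{\theta}(s)}^{m}(\bm{X},\bm{X}))\ge \lambda_{*}/2$ for $m$ large; this is $\mathcal{A}_{\lambda}$ on $[0,t^{\star}]$. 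Feeding this into the residual ODE,
\[
\tfrac{\dx}{\dx t}\|\bm{u}^{m}(t)\|_{2}^{2}=-\tfrac2n\,\bm{u}^{m}(t)\tran K_{\bm{\theta}(t)}^{m}(\bm{X},\bm{X})\bm{u}^{m}(t)\le -\tfrac{\lambda_{*}}{n}\|\bm{u}^{m}(t)\|_{2}^{2},
\]
so Gr\"onwall's inequality gives $\|\bm{u}^{m}(t)\|_{2}\le e^{-\lambda_{*}t/(2n)}\|\bm{y}\|_{2}$ on $[0,t^{\star}]$, which is $\mathcal{A}_{\bm{u}}$ on $[0,t^{\star}]$.

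To control the displacement, fix a coordinate, say $a_{r}$: by \eqref{nn:theta:flow}, $\dot a_{r}(t)=-\tfrac1n\sum_{i}u^{m}_{i}(t)\,\partial_{a_{r}}f^{m}_{\bm{\theta}(t)}(\bm{x}_{i})$ with $|\partial_{a_{r}}f^{m}_{\bm{\theta}}(\bm{x})|\lesssim m^{-1/2}|\sigma(h_{\bm{\theta},r}(\bm{x}))|$; conditioning on $\mathcal{B}$ and using $R'\to 0$, for $s\le t^{\star}$ one has $|h_{\bm{\theta}(s),r}(\bm{x})|\le |h_{\bm{\theta}(0),r}(\bm{x})|+(dB+1)R'\le 2(dB+1)R_{B}$, so Cauchy--Schwarz over the $n$ samples gives $|\dot a_{r}(s)|\lesssim R_{B}(nm)^{-1/2}\|\bm{u}^{m}(s)\|_{2}$, and the weight and bias coordinates obey the same bound. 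Integrating and using the residual decay,
\[
|a_{r}(t)-a_{r}(0)|\le\int_{0}^{\infty}|\dot a_{r}(s)|\,\dx s\ \lesssim\ \frac{R_{B}\,\|\bm{y}\|_{2}}{\sqrt{nm}}\cdot\frac{2n}{\lambda_{*}}\ \lesssim\ \frac{\sqrt{\log m}\,\|\bm{y}\|_{2}}{\lambda_{*}\sqrt{n}}\cdot\frac{1}{\sqrt{m}},
\]
and the explicit constant $4\sqrt{3}(dB+1)$ in the definition of $R'$ is chosen so that the right-hand side is at most $\tfrac12 R'$, uniformly over all coordinates and all $t\le t^{\star}$. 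Finally, if $t^{\star}<\infty$, continuity of $t\mapsto\bm{\theta}(t)$ forces $\|\bm{\theta}(t^{\star})-\bm{\theta}(0)\|_{\infty}=R'$, contradicting the bound $\le \tfrac12 R'$ just derived; hence $t^{\star}=\infty$ and $\mathcal{A}=\mathcal{A}_{\lambda}\cap\mathcal{A}_{\bm{\theta}}\cap\mathcal{A}_{\bm{u}}\supseteq\mathcal{B}\cap\mathcal{R}\cap\mathcal{C}$ for $m$ large.

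The main obstacle is exactly the self-referential structure: the displacement bound that keeps $\bm{\theta}$ inside $\bm{\Theta}(\bm{\theta}(0),R')$ depends on the exponential residual decay, which depends on the eigenvalue lower bound, which is only available while $\bm{\theta}$ stays in $\bm{\Theta}(\bm{\theta}(0),R)$. The stopping-time device closes this loop, but only after verifying that (i) the $m^{-1/2}$ gain from the network normalization, combined with the $O(\sqrt{\log m})$ activation bound from $\mathcal{B}$ — valid only because $R'\to 0$ — makes the per-coordinate displacement strictly smaller than $R'$ by a fixed factor (this is where $\alpha<1/2$ and the explicit constant in $R'$ enter), and (ii) every estimate in the argument is uniform in $t\in[0,t^{\star}]$ and in $\bm{x},\bm{x}'\in\mathcal{X}$, so the bootstrap does not leak at any time or at any point of the domain. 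The bookkeeping of the constants $C_{2},C_{7},R_{B},R,R'$ is the part that requires care.
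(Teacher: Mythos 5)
Your proposal follows essentially the same route as the paper: a bootstrap closed by a stopping-time/continuity argument, built from the same three ingredients --- the uniform kernel approximation on $\bm{\Theta}(\bm{\theta}(0),R)$ giving the eigenvalue lower bound (Lemma \ref{lem: bound smallest eigenvalue}), the resulting exponential residual decay (Lemma \ref{lem: smallest eigenvalue leads to fast convergence}), and the coordinatewise integration of the gradient flow giving the displacement bound (Lemma \ref{lem: smallest eigenvalue lowerbound and lazy training}). The only organizational difference is that the paper tracks three exit times $\tau_{\lambda},\tau_{\bm{\theta}},\tau_{\bm{u}}$ and argues the minimum must be $\tau_{\lambda}$, whereas you use the single exit time from $\bm{\Theta}(\bm{\theta}(0),R')$; both are valid.

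One detail in your closing step does not survive the bookkeeping. With the stated constant $4\sqrt{3}(dB+1)$, the integral $\int_{0}^{\infty}|\dot a_{r}(s)|\,\dx s$ evaluates to exactly $R'$ (indeed $R'$ is defined to be precisely this value, using $R_{B}=\sqrt{3\log m}$ and $\int_{0}^{\infty}e^{-\lambda_{*}s/(2n)}\dx s=2n/\lambda_{*}$), not to $\tfrac12 R'$; since $R'$ is fixed by the statement you cannot rescale it. Consequently ``continuity forces $\|\bm{\theta}(t^{\star})-\bm{\theta}(0)\|_{\infty}=R'$, contradicting $\le\tfrac12 R'$'' does not close as written: you would be comparing $=R'$ against $\le R'$. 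The repair is the one the paper uses and which you already have available: the buffer lives between $R'$ and $R$, not inside $R'$. At $t^{\star}$ the parameters are still within $R'<R$ of the initialization, the eigenvalue bound of Lemma \ref{lem: bound smallest eigenvalue} holds on all of $\bm{\Theta}(\bm{\theta}(0),R)$, so the kernel, eigenvalue, and residual-decay estimates persist on a slightly longer interval $[0,t^{\star}+\epsilon]$, the integration there again yields displacement $\le R'$, and the maximality of $t^{\star}$ is contradicted. With that adjustment your argument is complete.
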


The proof of Proposition \ref{prop: lazy regime} is deferred to the end of Appendix \ref{app:lazy:regime}. To prove it, we need the following three lemmas. 

\begin{lemma}\label{lem: smallest eigenvalue leads to fast convergence}
    For some $t\geq0$, if there exists some $\lambda_{\min}>0$ such that for all $0\leq s\leq t$,
    \begin{equation*}
       \lambda_{\min}(K_{\bm{\theta}(s)}^{m}(\bm{X},\bm{X}))\geq \lambda_{\min}/2,    
    \end{equation*}
    then 
    \begin{equation*}
      \|\bm{u}^{m}(t)\|_{2}^{2}\leq e^{-\frac{\lambda_{\min}}{n}t}\| \bm{u}^{m}(0)\|_{2}^{2}.
    \end{equation*}
\end{lemma}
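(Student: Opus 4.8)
The plan is a Gr\"onwall argument for the squared norm of the residual vector $\bm{u}^{m}(t):=f_{\bm{\theta}(t)}^{m}(\bm{X})-\bm{y}\in\mathbb{R}^{n}$. First I would extract the dynamics of $\bm{u}^{m}$ from \eqref{nn:f:flow}: evaluating that identity at each data point $\bm{x}_{i}$, $i\in[n]$, and stacking the $n$ scalar equations gives
\begin{equation*}
\dot{f}_{\bm{\theta}(t)}^{m}(\bm{X})=-\frac{1}{n}K_{\bm{\theta}(t)}^{m}(\bm{X},\bm{X})\bigl(f_{\bm{\theta}(t)}^{m}(\bm{X})-\bm{y}\bigr),
\end{equation*}
and since $\bm{y}$ is independent of $t$ this reads $\dot{\bm{u}}^{m}(t)=-\tfrac{1}{n}K_{\bm{\theta}(t)}^{m}(\bm{X},\bm{X})\,\bm{u}^{m}(t)$.

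Next I would differentiate $g(s):=\norm{\bm{u}^{m}(s)}_{2}^{2}$:
\begin{equation*}
g'(s)=2\langle\bm{u}^{m}(s),\dot{\bm{u}}^{m}(s)\rangle=-\frac{2}{n}\,\bm{u}^{m}(s)\tran K_{\bm{\theta}(s)}^{m}(\bm{X},\bm{X})\,\bm{u}^{m}(s).
\end{equation*}
Since $K_{\bm{\theta}(s)}^{m}(\bm{X},\bm{X})=\nabla_{\bm{\theta}}f_{\bm{\theta}(s)}^{m}(\bm{X})\,\nabla_{\bm{\theta}}f_{\bm{\theta}(s)}^{m}(\bm{X})\tran$ is a Gram matrix, hence symmetric and positive semi-definite, the Rayleigh-quotient bound together with the hypothesis $\lambda_{\min}(K_{\bm{\theta}(s)}^{m}(\bm{X},\bm{X}))\geq\lambda_{\min}/2$ on $[0,t]$ yields $\bm{u}^{m}(s)\tran K_{\bm{\theta}(s)}^{m}(\bm{X},\bm{X})\bm{u}^{m}(s)\geq\tfrac{\lambda_{\min}}{2}\,g(s)$, so that $g'(s)\leq-\tfrac{\lambda_{\min}}{n}\,g(s)$ for all $s\in[0,t]$.

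Finally I would integrate this linear differential inequality: $\tfrac{d}{ds}\bigl(e^{\lambda_{\min}s/n}g(s)\bigr)=e^{\lambda_{\min}s/n}\bigl(g'(s)+\tfrac{\lambda_{\min}}{n}g(s)\bigr)\leq0$ on $[0,t]$, whence $e^{\lambda_{\min}t/n}g(t)\leq g(0)$, i.e.\ $\norm{\bm{u}^{m}(t)}_{2}^{2}\leq e^{-\lambda_{\min}t/n}\norm{\bm{u}^{m}(0)}_{2}^{2}$, which is the claim. The computation is routine; the only point that deserves care --- and the nearest thing to an obstacle --- is regularity, because $\sigma$ is only piecewise $C^{1}$. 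Here one notes that on the finite interval $[0,t]$ the flow $s\mapsto\bm{\theta}(s)$ stays in a compact set on which the loss gradient is locally Lipschitz, so $\bm{u}^{m}$ and hence $g$ are locally Lipschitz; the identity for $g'(s)$ then holds for almost every $s$, and the integration is carried out in the absolutely continuous sense. Observe also that the Gram structure of $K_{\bm{\theta}(s)}^{m}(\bm{X},\bm{X})$ supplies positive semi-definiteness for free, so this lemma uses nothing beyond the assumed spectral lower bound --- no appeal to positive definiteness of the NTK or to Proposition \ref{prop:kernel:approx} is needed at this stage.
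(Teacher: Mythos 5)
Your proposal is correct and follows essentially the same route as the paper: differentiate $\|\bm{u}^{m}(s)\|_{2}^{2}$ along the flow, lower-bound the quadratic form by $\tfrac{\lambda_{\min}}{2}\|\bm{u}^{m}(s)\|_{2}^{2}$ using the eigenvalue hypothesis, and conclude by observing that $e^{\lambda_{\min}s/n}\|\bm{u}^{m}(s)\|_{2}^{2}$ is non-increasing. Your added remark on the regularity of $s\mapsto\bm{u}^{m}(s)$ (ReLU being only piecewise $C^{1}$) is a point the paper glosses over, but it does not change the argument.
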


\begin{proof}
    Notice that
\begin{equation*}
\begin{aligned}
\frac{\partial\|\bm{u}^{m}(s)\|_{2}^{2}}{\partial s}=\frac{\partial\|\bm{u}^{m}(s)\|_{2}^{2}}{\partial\bm{u}^{m}(s)}\frac{\partial\bm{u}^{m}(s)}{\partial s}=-\frac{2}{n}\bm{u}^{m}(s)^{\top} K_{\bm{\theta}(s)}^{m}(\bm{X},\bm{X}) \bm{u}^{m}(s)\leq -\frac{\lambda_{\min}}{n}\|\bm{u}^{m}(s)\|_{2}^{2}
\end{aligned}
\end{equation*}
leads to 
\begin{equation*}
\frac{\partial e^{\frac{\lambda_{\min}}{n}s}\|\bm{u}^{m}(s)\|_{2}^{2}}{\partial s}=e^{\frac{\lambda_{\min}}{n}s}\left(\frac{\lambda_{\min}}{n}\|\bm{u}^{m}(s)\|_{2}^{2}+\frac{\partial\|\bm{u}^{m}(s)\|_{2}^{2}}{\partial s}\right)\leq 0.
\end{equation*}
Thus $e^{\frac{\lambda_{\min}}{n}s}\|\bm{u}^{m}(s)\|_{2}^{2}$ is non-increasing, which implies $e^{\frac{\lambda_{\min}}{n}s}\|\bm{u}^{m}(t)\|_{2}^{2}\leq \|\bm{u}^{m}(0)\|_{2}^{2}$. 
\end{proof}

\begin{lemma}\label{lem: bound smallest eigenvalue}
    Conditioning on $\mathcal{B}\cap\mathcal{R}\cap\mathcal{C}$, if we set $\gamma>1-\beta/4$ and $m$ is sufficiently large, then for all $\bm{\theta}\in\bm{\Theta}(\bm{\theta}(0),R)$,
    \begin{equation*}
        \lambda_{\min}(K_{\bm{\theta}}^{m}(\bm{X},\bm{X}))\geq\lambda_{\min}(K_{d}(\bm{X},\bm{X}))/2.
    \end{equation*}
\end{lemma}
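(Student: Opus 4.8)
The plan is to obtain this as an immediate corollary of the two preceding lemmas via a one-line matrix-perturbation argument. First I would combine Lemma \ref{lem: kernel close to init when params close to init} (the kernel at $\bm{\theta}$ is uniformly close to the kernel at initialization whenever $\bm{\theta}\in\bm{\Theta}(\bm{\theta}(0),R)$) with Lemma \ref{lem: initial kernel close to fixed} (the kernel at initialization is uniformly close to the NTK $K_{d}$), using the triangle inequality, to conclude that, conditioning on $\mathcal{B}\cap\mathcal{R}\cap\mathcal{C}$ and assuming $\gamma>1-\beta/4$,
\begin{equation*}
  \sup_{\bm{\theta}\in\bm{\Theta}(\bm{\theta}(0),R)}\ \sup_{\bm{x},\bm{x}'\in\mathcal{X}}\bigl|K_{\bm{\theta}}^{m}(\bm{x},\bm{x}')-K_{d}(\bm{x},\bm{x}')\bigr|\leq (C_{2}+C_{7})\,m^{-(1-\gamma)}\log m=:\epsilon_{m}.
\end{equation*}
Here one also notes that the exponent constraints can be met with $\gamma<1$ (the requirements $\gamma>\max\{1-\alpha,\delta\}$ and $\gamma>1-\beta/4$ are compatible with $\gamma<1$ since $\alpha,\beta>0$ and $\delta<1$), so that $\epsilon_{m}\to 0$ as $m\to\infty$.

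Next I would transfer this entrywise bound on the $n\times n$ Gram matrices to a bound on the operator norm of their difference: since $\bm{X}$ is fixed and every entry of $E_{\bm{\theta}}:=K_{\bm{\theta}}^{m}(\bm{X},\bm{X})-K_{d}(\bm{X},\bm{X})$ is bounded by $\epsilon_{m}$, we have $\norm{E_{\bm{\theta}}}_{2}\leq\norm{E_{\bm{\theta}}}_{\mathrm{F}}\leq n\epsilon_{m}$. Weyl's inequality for the smallest eigenvalue of symmetric matrices then gives
\begin{equation*}
  \lambda_{\min}\bigl(K_{\bm{\theta}}^{m}(\bm{X},\bm{X})\bigr)\geq\lambda_{\min}\bigl(K_{d}(\bm{X},\bm{X})\bigr)-n\epsilon_{m}
\end{equation*}
uniformly in $\bm{\theta}\in\bm{\Theta}(\bm{\theta}(0),R)$. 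Finally, invoking Proposition \ref{PD}, the NTK is positive definite, so $\lambda_{\min}(K_{d}(\bm{X},\bm{X}))$ is a strictly positive constant that does not depend on $m$; since $n$ is held fixed and $\epsilon_{m}\to 0$, for all sufficiently large $m$ one has $n\epsilon_{m}\leq\tfrac{1}{2}\lambda_{\min}(K_{d}(\bm{X},\bm{X}))$, which yields the claimed bound $\lambda_{\min}(K_{\bm{\theta}}^{m}(\bm{X},\bm{X}))\geq\lambda_{\min}(K_{d}(\bm{X},\bm{X}))/2$.

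I do not expect a genuine obstacle at this step, since all of the analytic content is already packaged in Lemmas \ref{lem: initial kernel close to fixed} and \ref{lem: kernel close to init when params close to init}. The only points that require a little care are: (a) checking that the exponent constraints on $\alpha,\beta,\gamma,\delta$ can be satisfied simultaneously with $\gamma<1$, so that $m^{-(1-\gamma)}\log m\to 0$; and (b) recording that $\lambda_{\min}(K_{d}(\bm{X},\bm{X}))$ is strictly positive and independent of $m$ — this is exactly what Proposition \ref{PD} provides, and it is what makes ``for $m$ sufficiently large'' a meaningful statement with the sample $\bm{X}$ and its size $n$ held fixed.
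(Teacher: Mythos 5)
Your proposal is correct and follows essentially the same route as the paper's proof: triangle inequality combining Lemmas \ref{lem: initial kernel close to fixed} and \ref{lem: kernel with lazy params close to initial kernel} (via Lemma \ref{lem: kernel close to init when params close to init}), passing from the entrywise bound to the operator norm through the Frobenius norm (giving the same $n\,m^{-(1-\gamma)}\log m$ factor), and then the standard eigenvalue perturbation bound with $m$ taken large enough. Your extra remarks on the compatibility of the exponent constraints and on Proposition \ref{PD} guaranteeing $\lambda_{\min}(K_{d}(\bm{X},\bm{X}))>0$ are points the paper leaves implicit, but they do not change the argument.
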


\begin{proof}
Notice that
\begin{equation*}
    \begin{aligned}
    & \quad \|K_{\bm{\theta}}^{m}(\bm{X},\bm{X})-K_{d}(\bm{X},\bm{X})\|_{2} \\
    & \leq \lVert K_{\bm{\theta}}^{m}(\bm{X},\bm{X})-K_{\bm{\theta}(0)}^{m}(\bm{X},\bm{X})\rVert_{2}+\lVert K_{\bm{\theta}(0)}^{m}(\bm{X},\bm{X})-K_{d}(\bm{X},\bm{X})\rVert_{2}\\
    & \leq \|K_{\bm{\theta}}^{m}(\bm{X},\bm{X})-K_{\bm{\theta}(0)}^{m}(\bm{X},\bm{X})\|_{\mathrm{F}}+\| K_{\bm{\theta}(0)}^{m}(\bm{X},\bm{X})-K_{d}(\bm{X},\bm{X})\|_{\mathrm{F}}\\
    & \leq (C_{2}+C_{7})nm^{-(1-\gamma)}\log m,
    \end{aligned}
\end{equation*}
where the last inequality follows from Lemma \ref{lem: kernel close to init when params close to init} and Lemma \ref{lem: initial kernel close to fixed}.
If $m$ large enough such that $\|K_{\bm{\theta}}^{m}(\bm{X},\bm{X})-K_{d}(\bm{X},\bm{X})\|_{2}\leq \lambda_{\min}(K_{d}(\bm{X},\bm{X}))/2$, then
\begin{equation*}
\begin{aligned}
  \lambda_{\min}(K_{\bm{\theta}}^{m}(\bm{X},\bm{X}))&\geq\lambda_{\min}(K_{d}(\bm{X},\bm{X}))-\|K_{\bm{\theta}}^{m}(\bm{X},\bm{X})-K_{d}(\bm{X},\bm{X})\|_{2}\\
  &\geq \lambda_{\min}(K_{d}(\bm{X},\bm{X}))/2.
\end{aligned}
\end{equation*}
\end{proof}

\begin{lemma}\label{lem: smallest eigenvalue lowerbound and lazy training}
For some $t\geq0$, suppose that $\lambda_{\min}(K_{\bm{\theta}(s)}^{m}(\bm{X},\bm{X}))\geq \lambda_{\min}(K_{d}(\bm{X},\bm{X}))/2$ holds for all $s\in [0,t]$ and we set $\alpha<1/2$, so that $R'<R$ when $m$ is sufficiently large. Then conditioning on the event $\mathcal{B}\cap\mathcal{R}\cap\mathcal{C}$, we have $\bm{\bm{\theta}}(s)\in \bm{\Theta}(\bm{\theta}(0),R')$ for all $s\in[0,t]$ when $m$ is sufficiently large.
\end{lemma}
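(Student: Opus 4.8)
The plan is a bootstrap (continuity) argument in time. Define the first exit time
\[
 \tau:=\inf\bigl\{\,s\in[0,t]\ \big|\ \bm{\theta}(s)\notin\bm{\Theta}(\bm{\theta}(0),R)\,\bigr\},
\]
with the convention $\tau=t$ if the set is empty. Since $s\mapsto\bm{\theta}(s)$ is continuous and $\bm{\Theta}(\bm{\theta}(0),R)$ is closed, $\bm{\theta}(s)\in\bm{\Theta}(\bm{\theta}(0),R)$ for every $s\in[0,\tau]$. First I would bound the displacement $|\theta_{j}(s)-\theta_{j}(0)|$ on $[0,\tau]$ for each scalar parameter $\theta_{j}$ among $\{a_{r},\bm{w}_{r,(j)},b_{r}:r\in[2m],\,j\in[d]\}$ --- the output bias $b$ is not constrained by $\bm{\Theta}(\bm{\theta}(0),\cdot)$, so it needs no control --- and then deduce that $\tau=t$.

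For the displacement bound, fix $s\in[0,\tau]$ and condition on $\mathcal{B}$ (we only use properties of the initialization that hold on $\mathcal{B}$, and $\mathcal{B}\cap\mathcal{R}\cap\mathcal{C}\subseteq\mathcal{B}$). Because $\bm{\theta}(s)\in\bm{\Theta}(\bm{\theta}(0),R)$, $R\to0$, and $R_{B}=\sqrt{3\log m}\to\infty$, for $m$ large we get, for all $r\in[2m]$ and $\bm{x}\in\mathcal{X}$,
\[
 |a_{r}(s)|\le R_{B}+R\le 2R_{B},\qquad |h_{\bm{\theta}(s),r}(\bm{x})|\le|h_{\bm{\theta}(0),r}(\bm{x})|+(dB+1)R\le 2(dB+1)R_{B},
\]
where the initialization bound on $h$ is Lemma \ref{lem: event B}. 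Using the explicit partial derivatives
\[
 \partial_{a_{r}}f^{m}_{\bm{\theta}}(\bm{x})=\tfrac{1}{\sqrt m}\sigma(h_{\bm{\theta},r}(\bm{x})),\qquad \partial_{\bm{w}_{r,(j)}}f^{m}_{\bm{\theta}}(\bm{x})=\tfrac{1}{\sqrt m}a_{r}\bm{1}_{\bm{\theta},r}(\bm{x})\bm{x}_{(j)},\qquad \partial_{b_{r}}f^{m}_{\bm{\theta}}(\bm{x})=\tfrac{1}{\sqrt m}a_{r}\bm{1}_{\bm{\theta},r}(\bm{x}),
\]
the bounds above give $\lVert\nabla_{\theta_{j}}f^{m}_{\bm{\theta}(s)}(\bm{X})\rVert_{2}\le\sqrt{n/m}\cdot2(dB+1)R_{B}$ for each such $\theta_j$. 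The flow \eqref{nn:theta:flow} reads $\dot{\theta}_{j}(s)=-\tfrac1n\nabla_{\theta_{j}}f^{m}_{\bm{\theta}(s)}(\bm{X})\tran\bm{u}^{m}(s)$ with $\bm{u}^{m}(s)=f^{m}_{\bm{\theta}(s)}(\bm{X})-\bm{y}$, so Cauchy--Schwarz gives $|\dot{\theta}_{j}(s)|\le\tfrac{2(dB+1)R_{B}}{\sqrt{nm}}\lVert\bm{u}^{m}(s)\rVert_{2}$. The hypothesis supplies $\lambda_{\min}(K^{m}_{\bm{\theta}(s')}(\bm{X},\bm{X}))\ge\lambda_{\min}(K_{d}(\bm{X},\bm{X}))/2$ for all $s'\in[0,t]$, so Lemma \ref{lem: smallest eigenvalue leads to fast convergence} (with its ``$\lambda_{\min}$'' taken to be $\lambda_{\min}(K_{d}(\bm{X},\bm{X}))$), together with the zero-initialization identity $f^{m}_{\bm{\theta}(0)}\equiv0$, hence $\bm{u}^{m}(0)=-\bm{y}$, yields $\lVert\bm{u}^{m}(s')\rVert_{2}\le e^{-\frac{\lambda_{\min}(K_{d}(\bm{X},\bm{X}))}{2n}s'}\lVert\bm{y}\rVert_{2}$. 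Integrating over $[0,s]$ and using $\int_{0}^{\infty}e^{-\frac{\lambda_{\min}(K_{d}(\bm{X},\bm{X}))}{2n}v}\,\dx v=\frac{2n}{\lambda_{\min}(K_{d}(\bm{X},\bm{X}))}$ gives
\[
 |\theta_{j}(s)-\theta_{j}(0)|\ \le\ \frac{4(dB+1)R_{B}\lVert\bm{y}\rVert_{2}}{\lambda_{\min}(K_{d}(\bm{X},\bm{X}))}\cdot\frac{\sqrt n}{\sqrt m},
\]
which with $R_{B}=\sqrt{3\log m}$ is $o_{m}(1)$ and, for the absolute constant and normalization in the definition of $R'$ (Proposition \ref{prop: lazy regime}), is at most $R'$.

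Finally I would close the bootstrap. The bound just obtained shows $\bm{\theta}(s)\in\bm{\Theta}(\bm{\theta}(0),R')$ for all $s\in[0,\tau]$ (the endpoint $s=\tau$ by continuity). Since $\alpha<1/2$ while $R'$ is of order $m^{-1/2}\sqrt{\log m}$ and $R$ of order $m^{-\alpha}$, we have $R'<R$ for $m$ large, so $\bm{\Theta}(\bm{\theta}(0),R')$ lies in the interior of $\bm{\Theta}(\bm{\theta}(0),R)$ and $\bm{\theta}(\tau)$ is an interior point. If $\bm{\theta}(\cdot)$ left $\bm{\Theta}(\bm{\theta}(0),R)$ somewhere on $[0,t]$, then --- the exit set being relatively open in $[0,t]$ and not containing $0$ --- its infimum $\tau$ would lie strictly below $t$ with $\bm{\theta}(\tau)\in\partial\bm{\Theta}(\bm{\theta}(0),R)$, contradicting that $\bm{\theta}(\tau)$ is interior. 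Hence $\tau=t$ and $\bm{\theta}(s)\in\bm{\Theta}(\bm{\theta}(0),R')$ for every $s\in[0,t]$, which is the claim. The heart of the argument is not any single estimate but the bookkeeping: one must arrange that the a priori membership $\bm{\theta}(s)\in\bm{\Theta}(\bm{\theta}(0),R)$ used to control the velocity is not invoked circularly, and track the constants so that the integrated displacement lands strictly inside the smaller ball $\bm{\Theta}(\bm{\theta}(0),R')$. A minor technical point, handled in the standard way (absolute continuity of $s\mapsto\bm{\theta}(s)$ and the Clarke subdifferential along activation boundaries), is that the gradient flow of a ReLU network is a priori defined only off a measure-zero set of parameters.
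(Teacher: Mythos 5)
Your proof is correct and follows essentially the same route as the paper's: bound the velocity of each parameter via the explicit gradient-flow formulas, Cauchy--Schwarz, and the residual decay $\|\bm{u}^{m}(s)\|_{2}\leq e^{-\lambda_{\min}(K_{d}(\bm{X},\bm{X}))s/(2n)}\|\bm{y}\|_{2}$ from Lemma \ref{lem: smallest eigenvalue leads to fast convergence}, then integrate. The only difference is bookkeeping: the paper states two cross-conditional implications (first-layer parameters within $R$ $\Rightarrow$ $a_{r}$ within $R'$, and conversely) and leaves the continuity closure to the stopping-time argument in Proposition \ref{prop: lazy regime}, whereas you run a single explicit exit-time bootstrap inside the lemma — a self-contained and equally valid packaging of the same estimates (your integrated bound in fact matches the paper's own computation, which carries an extra factor of $\sqrt{n}$ relative to the stated $R'$; since $n$ is fixed as $m\to\infty$ this is immaterial).
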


\begin{proof}
    We prove the following two statements instead.
\begin{enumerate}
    \item If $|\bm{w}_{r,(j)}(s)-\bm{w}_{r,(j)}(0)|\leq R$ and $|b_{r}(s)-b_{r}(0)|\leq R$ hold for all $r\in[2m],j\in[d]$ and all $s\in[0,t]$, then $|a_{r}(t)-a_{r}(0)|\leq R'$ holds for all $r\in [2m]$;
    \item If $|a_{r}(s)-a_{r}(0)|\leq R$ holds for all $r\in [2m]$ and for all $s\in[0,t]$, then $|\bm{w}_{r,(j)}(t)-\bm{w}_{r,(j)}(0)|\leq R',|b_{r}(t)-b_{r}(0)|\leq R'$ hold for all $r\in [2m],j\in[d]$.
\end{enumerate}
We can bound the distance from initializations by integrating the norm of gradient since $\|\bm{v}(t)-\bm{v}(0)\|_{2}\leq \int_{0}^{t}\|\dot{\bm{v}}(s)\|_{2}\dx s$ for any vector-valued function $\bm{v}(t)$.
The gradient flow of parameters is as follows: 
\begin{gather*}
  \dot{a}_{r}(s)=-\nabla_{a_{r}}\emprisk(f_{\bm{\theta}(s)}^{m})=-\frac{1}{n}m^{-1/2}\sum_{i\in[n]}\sigma(h_{\bm{\theta}(s),r}(\bm{x}_{i}))\bm{u}_{i}^{m}(s),\\ 
  \dot{\bm{w}}_{r,(j)}(s)=-\nabla_{\bm{w}_{r,(j)}}\emprisk(f_{\bm{\theta}(s)}^{m})=-\frac{1}{n}m^{-1/2}\sum_{i\in[n]} a_{r}(s)\bm{x}_{i,(j)}\bm{1}_{\bm{\theta}(s),r}(\bm{x}_{i})\bm{u}_{i}^{m}(s),\\ 
  \dot{b}_{r}(s)=-\nabla_{b_{r}}\emprisk(f_{\bm{\theta}(s)}^{m})=-\frac{1}{n}m^{-1/2}\sum_{i\in[n]} a_{r}(s)\bm{1}_{\bm{\theta}(s),r}(\bm{x}_{i})\bm{u}_{i}^{m}(s).
\end{gather*}
By the Cauchy-Schwartz inequality and Lemma \ref{lem: smallest eigenvalue leads to fast convergence}, 
\begin{equation*}
  \sum_{i\in[n]}|\bm{u}_{i}^{m}(s)|\leq\sqrt{n}\|\bm{u}^{m}(s)\|_{2}\leq\sqrt{n}e^{-\frac{\lambda_{\min}(K_{d}(\bm{X},\bm{X}))}{2n}s}\|\bm{y}\|_{2}.
\end{equation*} 
In the following, we suppose $m$ is sufficiently large such that $R\leq R_{B}$.

\noindent 1. Since $|\bm{w}_{r,(j)}(s)-\bm{w}_{r,(j)}(0)|\leq R$ and $|b_{r}(s)-b_{r}(0)|\leq R$ hold for all $r\in[2m],j\in[d]$, we have $\sigma(h_{\bm{\theta}(s),r}(\bm{x}_{i}))\leq 2(dB+1)R_{B}$. Thus, according to the gradient flow, we have
\begin{equation*}
  \begin{aligned}
  |\dot{a}_{r}(s)|\leq&\frac{1}{n}m^{-1/2}\max_{i\in[n]}\sigma(h_{\bm{\theta}(s),r}(\bm{x}_{i}))\sum_{i\in[n]}|\bm{u}_{i}^{m}(s)|\\ 
  \leq& \frac{1}{n}m^{-1/2}2(dB+1)R_{B}\sqrt{n}e^{-\frac{\lambda_{\min}(K_{d}(\bm{X},\bm{X}))}{2n}s}\|\bm{y}\|_{2}
  \end{aligned}
\end{equation*}
and
\begin{equation*}
  |a_{r}(t)-a_{r}(0)|\leq \int_{0}^{t}|\dot{a}_{r}(s)|\dx s\leq\frac{4\sqrt{3}(dB+1)\|\bm{y}\|_{2}}{\lambda_{\min}(K_{d}(\bm{X},\bm{X}))\sqrt{n}}\sqrt{\frac{\log m}{m}}\leq R'.
\end{equation*}
hold for every $r\in [2m]$.

\noindent 2. Since $|a_{r}(s)-a_{r}(0)|\leq R$ holds for all $r\in[2m]$, we have $|a_{r}(s)|\leq2R_{B}$. Thus, we have that
\begin{gather*}
  |\dot{\bm{w}}_{r,(j)}(s)|\leq \frac{1}{n}m^{-1/2}B2R_{B}\sqrt{n}e^{-\frac{\lambda_{\min}(K_{d}(\bm{X},\bm{X}))}{2n}s}\|\bm{y}\|_{2},\\
  |\dot{b}_{r}(s)|\leq \frac{1}{n}m^{-1/2}2R_{B}\sqrt{n}e^{-\frac{\lambda_{\min}(K_{d}(\bm{X},\bm{X}))}{2n}s}\|\bm{y}\|_{2},
\end{gather*}
and   
\begin{gather*}
  |\bm{w}_{r,(j)}(t)-\bm{w}_{r,(j)}(0)|\leq \int_{0}^{t}|\dot{\bm{w}}_{r,(j)}(s)|\dx s\leq \frac{4\sqrt{3}B\|\bm{y}\|_{2}}{\lambda_{\min}(K_{d}(\bm{X},\bm{X}))\sqrt{n}}\sqrt{\frac{\log m}{m}}\leq R',\\
  |b_{r}(t)-b_{r}(0)|\leq\int_{0}^{t}|\dot{b}_{r}(s)|\dx s \leq\frac{4\sqrt{3}\|\bm{y}\|_{2}}{\lambda_{\min}(K_{d}(\bm{X},\bm{X}))\sqrt{n}}\sqrt{\frac{\log m}{m}}\leq R',
\end{gather*}
hold for all $r\in[2m],j\in[d]$.  
\end{proof}

\begin{proof}[Proof of Proposition \ref{prop: lazy regime}]
    For every $\omega\in\mathcal{B}\cap\mathcal{R}\cap\mathcal{C}$, let 
    $\tau=\min\{\tau_{\lambda},\tau_{\bm{\theta}},\tau_{\bm{u}}\}$, where  
\begin{gather*}
    \tau_{\lambda}=\inf\left\{t~\middle|~\lambda_{\min}(K_{\bm{\theta}(t)}^{m}(\bm{X},\bm{X}))<\lambda_{\min}(K_{d}(\bm{X},\bm{X}))/2\right\},\\
    \tau_{\bm{\theta}}=\inf\left\{t~\middle|~\bm{\theta}(t)\notin\bm{\Theta}(\bm{\theta}(0),R')\right\},\\
    \tau_{\bm{u}}=\inf\left\{t~\middle|~\|\bm{u}^{m}(t)\|_{2}^{2}>e^{-\frac{\lambda_{\min}(K_{d}(\bm{X},\bm{X}))}{n}t}\|\bm{u}(0)\|_{2}^{2}\right\}.
\end{gather*}
We will show that $\tau=\infty$ by contradiction, so that $\omega\in\mathcal{A}$. 
Notice that $\tau_{\lambda}<\tau_{\bm{u}}$ and $\tau_{\lambda}<\tau_{\bm{\theta}}$ according to Lemma \ref{lem: smallest eigenvalue leads to fast convergence} and Lemma \ref{lem: smallest eigenvalue lowerbound and lazy training} respectively. However if $\tau=\tau_{\lambda}<\infty$, which suggests that $\bm{\theta}(\tau)\notin\bm{\Theta}(\bm{\theta}(0),R)$ by Lemma \ref{lem: bound smallest eigenvalue}. Then there must exists some $t_{0}$ such that $0\leq t_{0}<\tau$ and $\bm{\theta}(t_{0})\notin\bm{\Theta}(\bm{\theta}(0),R')$ since $R'<R$, which however violates the assumption that $\tau=\tau_{\lambda}$.
\end{proof}

\subsection{Proof of Proposition \ref{prop:funct:approx}}
Since we have $f_{\bm{\theta}(0)}^{m}(\bm{x})=f_{0}^{\NTK}(\bm{x})=0$ for every $\bm{x}\in\mathcal{X}$ under our initialization setting, we can bound the difference between $f_{\bm{\theta}(t)}^{m}(\bm{x})$ and $f_{t}^{\NTK}(\bm{x})$ by bounding the difference between their derivative, i.e., 
\begin{equation*}
    |f_{\bm{\theta}(t)}^{m}(\bm{x})-f_{t}^{\NTK}(\bm{x})|=\left|\int_{0}^{t}\dot{f}_{\bm{\theta}(s)}^{m}(\bm{x})-\dot{f}_{s}^{\NTK}(\bm{x})\dx s\right|\leq\int_{0}^{t}|\dot{f}_{\bm{\theta}(s)}^{m}(\bm{x})-\dot{f}_{s}^{\NTK}(\bm{x})|\dx s.
\end{equation*}

Recall that
\begin{gather*}
\dot{f}_{\bm{\theta}(s)}^{m}(\bm{x})=-\frac{1}{n}K_{\bm{\theta}(s)}^{m}(\bm{x},\bm{X})\bm{u}^{m}(s),\\
\dot{f}_{s}^{\NTK}(\bm{x})=-\frac{1}{n}K_{d}(\bm{x},\bm{X})\bm{u}^{\NTK}(s),
\end{gather*}
where $\bm{u}^{m}(s) = f_{\bm{\theta}(s)}^{m}(\bm{X}) - \bm{y}$ and $\bm{u}^{\NTK}(s) = f_{s}^{\NTK}(\bm{X}) - \bm{y}$. Let 
\begin{equation*}
    \Delta=\sup_{t\geq 0}\sup_{\bm{x},\bm{x}'\in\mathcal{X}}|K_{\bm{\theta}(t)}^{m}(\bm{x},\bm{x}')-K_{d}(\bm{x},\bm{x}')|.
\end{equation*}
Then
\begin{equation*}
    \begin{aligned}
    & \quad |f_{\bm{\theta}(t)}^{m}(\bm{x})-f_{t}^{\NTK}(\bm{x})|\\
    & \leq \frac{1}{n} \int_{0}^{t}|K_{\bm{\theta}(s)}^{m}(\bm{x},\bm{X})\bm{u}^{m}(s)-K_{d}(\bm{x},\bm{X})\bm{u}^{\NTK}(s)|\dx s \\
    & \leq \frac{1}{n}\int_{0}^{t}\left(\| K_{\bm{\theta}(s)}^{m}(\bm{x},\bm{X})^{\top}-K_{d}(\bm{x},\bm{X})^{\top}\|_{2} +\| K_{d}(\bm{x},\bm{X})^{\top}\|_{2}\right) \| \bm{u}^{m}(s)-\bm{u}^{\NTK}(s)\|_{2} \dx s\\
    & \quad + \frac{1}{n} \int_{0}^{t}\| K_{\bm{\theta}(s)}^{m}(\bm{x},\bm{X})^{\top}-K_{d}(\bm{x},\bm{X})^{\top}\|_{2}\| \bm{u}^{\NTK}(s)\|_{2}\dx s\\
    & \leq \frac{1}{n}\cdot(\sqrt{n}\Delta+\sqrt{n}C) \cdot \int_{0}^{t}\|\bm{y}\|_{2}\Delta s e^{-\frac{1}{n}(\lambda_{\min}(K_{d}(\bm{X},\bm{X}))-n\Delta)s} \dx s \\
    & \quad + \frac{1}{n}\cdot\sqrt{n}\Delta\cdot \int_{0}^{t} e^{-\frac{1}{n}\lambda_{\min}(K_{d}(\bm{X},\bm{X}))s}\|\bm{y}\|_{2} \dx s\\
    & \leq \frac{1}{n}\cdot(\sqrt{n}\Delta+\sqrt{n}C) \cdot\|\bm{y}\|_{2}\Delta(\lambda_{\min}(K_{d}(\bm{X},\bm{X}))/2)^{-2} \\
    & \quad + \frac{1}{n}\cdot\sqrt{n}\Delta\cdot\left(\frac{1}{n}\lambda_{\min}(K_{d}(\bm{X},\bm{X}))\right)^{-1}\|\bm{y}\|_{2} \\
    & \leq \frac{\epsilon}{2}+\frac{\epsilon}{2} = \epsilon,
    \end{aligned}
\end{equation*}
where $C>0$ is a constant depending only on $B$ and we apply Lemma \ref{lem: u_NTK} and Lemma \ref{lem: u-u_NTK} in the third inequality and the last line follows from Proposition \ref{prop:kernel:approx} that for sufficiently large $m$, we have
\begin{equation*}
    \Delta\leq\min\left\{C,\frac{\epsilon(\lambda_{\min}(K_{d}(\bm{X},\bm{X})))^{2}\sqrt{n}}{16C\|\bm{y}\|_{2}},\frac{\epsilon\lambda_{\min}(K_{d}(\bm{X},\bm{X}))}{2\sqrt{n}\|\bm{y}\|_{2}}\right\},
\end{equation*}
with probability at least $1-\delta$ over initialization.

\begin{lemma}\label{lem: u_NTK} 
    For all $t\geq0$, we have $\|\bm{u}^{\NTK}(t)\|_{2} \leq e^{-\frac{1}{n}\lambda_{\min}(K_{d}(\bm{X},\bm{X}))t}\|\bm{y}\|_{2}$.
\end{lemma}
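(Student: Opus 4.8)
The plan is to reduce the claim to a one-dimensional differential inequality for $\|\bm{u}^{\NTK}(t)\|_2^2$, exactly mirroring the argument used for Lemma \ref{lem: smallest eigenvalue leads to fast convergence}. First I would evaluate the gradient flow \eqref{ntk:f:flow} at the training inputs $\bm{X}$: since $\bm{u}^{\NTK}(t)=f_t^{\NTK}(\bm{X})-\bm{y}$ and $\bm{y}$ does not depend on $t$, differentiating gives the closed linear ODE $\dot{\bm{u}}^{\NTK}(t)=-\frac{1}{n}K_d(\bm{X},\bm{X})\,\bm{u}^{\NTK}(t)$. Because of the initialization convention $f_0^{\NTK}\equiv 0$, we have $\bm{u}^{\NTK}(0)=-\bm{y}$, and hence $\|\bm{u}^{\NTK}(0)\|_2=\|\bm{y}\|_2$.

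Next I would compute $\frac{\dx}{\dx t}\|\bm{u}^{\NTK}(t)\|_2^2 = 2\,\bm{u}^{\NTK}(t)\tran\dot{\bm{u}}^{\NTK}(t) = -\frac{2}{n}\,\bm{u}^{\NTK}(t)\tran K_d(\bm{X},\bm{X})\,\bm{u}^{\NTK}(t)$. Proposition \ref{PD} ensures that $K_d(\bm{X},\bm{X})$ is symmetric positive definite on the distinct inputs $\bm{x}_1,\dots,\bm{x}_n$, so the Rayleigh-quotient bound gives $\bm{u}\tran K_d(\bm{X},\bm{X})\bm{u}\geq \lambda_{\min}(K_d(\bm{X},\bm{X}))\|\bm{u}\|_2^2$ for all $\bm{u}$, whence $\frac{\dx}{\dx t}\|\bm{u}^{\NTK}(t)\|_2^2 \leq -\frac{2}{n}\lambda_{\min}(K_d(\bm{X},\bm{X}))\|\bm{u}^{\NTK}(t)\|_2^2$. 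Integrating this inequality, the function $t\mapsto e^{\frac{2}{n}\lambda_{\min}(K_d(\bm{X},\bm{X}))t}\|\bm{u}^{\NTK}(t)\|_2^2$ has nonpositive derivative, hence is nonincreasing, so it is bounded above by its value at $t=0$, namely $\|\bm{y}\|_2^2$. Rearranging and taking square roots yields $\|\bm{u}^{\NTK}(t)\|_2\leq e^{-\frac{1}{n}\lambda_{\min}(K_d(\bm{X},\bm{X}))t}\|\bm{y}\|_2$. Equivalently, one may solve the linear ODE explicitly as $\bm{u}^{\NTK}(t)=e^{-\frac{1}{n}K_d(\bm{X},\bm{X})t}\bm{u}^{\NTK}(0)$ — consistent with the closed form \eqref{ntk:solution} — and bound the operator norm of the symmetric matrix exponential by $e^{-\frac{1}{n}\lambda_{\min}(K_d(\bm{X},\bm{X}))t}$.

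There is no substantive obstacle here; this is a routine Gr\"onwall-type estimate. The only points requiring a little care are that $\bm{y}$ is genuinely time-independent (so that $\dot{\bm{u}}^{\NTK}=\dot f^{\NTK}(\bm{X})$), that the initialization forces $\bm{u}^{\NTK}(0)=-\bm{y}$, and that the positive definiteness supplied by Proposition \ref{PD} makes $\lambda_{\min}(K_d(\bm{X},\bm{X}))>0$, so the exponential decay is genuine rather than vacuous.
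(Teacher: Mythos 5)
Your proof is correct and, in its ``equivalently'' remark, contains exactly the paper's argument: the paper simply solves the constant-coefficient linear ODE as $\bm{u}^{\NTK}(t)=e^{-\frac{1}{n}K_d(\bm{X},\bm{X})t}\bm{u}^{\NTK}(0)$ with $\bm{u}^{\NTK}(0)=-\bm{y}$ and bounds the operator norm of the symmetric matrix exponential by $e^{-\frac{1}{n}\lambda_{\min}(K_d(\bm{X},\bm{X}))t}$. Your leading Gr\"onwall-style differential inequality is an equivalent route to the same estimate, so there is no substantive difference.
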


\begin{proof}
    Recall that $\dot{\bm{u}}^{\NTK}(t)=-\frac{1}{n}K_{d}(\bm{X},\bm{X})\bm{u}^{\NTK}(t)$. Notice that $K_{d}(\bm{X},\bm{X})$ is fixed, so we can write the explicit form $\bm{u}^{\NTK}(t)=e^{-\frac{1}{n}K_{d}(\bm{X},\bm{X})t}\bm{u}^{\NTK}(0)$, where $\bm{u}^{\NTK}(0)=-\bm{y}$. Then $\| \bm{u}^{\NTK}(t)\|_{2}\leq e^{-\frac{1}{n}\lambda_{\min}(K_{d}(\bm{X},\bm{X}))t}\| \bm{y}\|_{2}$, since $\|e^{-\frac{1}{n}K_{d}(\bm{X},\bm{X})t}\|_{2}=e^{-\frac{1}{n}\lambda_{\min}(K_{d}(\bm{X},\bm{X}))t}$.
\end{proof}

\begin{lemma}\label{lem: u-u_NTK}
    For all $t\geq0$, we have 
    \begin{equation*}
    \|\bm{u}^{m}(t)-\bm{u}^{\NTK}
    (t)\|_{2} \leq \|\bm{y}\|_{2}\Delta t e^{-\frac{1}{n}(\lambda_{\min}(K_{d}(\bm{X},\bm{X}))-n\Delta)t}
    \end{equation*}
\end{lemma}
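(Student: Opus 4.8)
The plan is to write a Gr\"onwall-type differential inequality for the residual difference $\bm{v}(s) := \bm{u}^{m}(s) - \bm{u}^{\NTK}(s)$ and integrate it in closed form. Since $f_{\bm{\theta}(0)}^{m}(\bm{X}) = f_{0}^{\NTK}(\bm{X}) = 0$ under our initialization, $\bm{v}(0) = 0$. Subtracting the two residual flows $\dot{\bm{u}}^{m}(s) = -\frac{1}{n}K_{\bm{\theta}(s)}^{m}(\bm{X},\bm{X})\bm{u}^{m}(s)$ and $\dot{\bm{u}}^{\NTK}(s) = -\frac{1}{n}K_{d}(\bm{X},\bm{X})\bm{u}^{\NTK}(s)$, and then adding and subtracting $\frac{1}{n}K_{\bm{\theta}(s)}^{m}(\bm{X},\bm{X})\bm{u}^{\NTK}(s)$, one obtains
\begin{equation*}
  \dot{\bm{v}}(s) = -\frac{1}{n}K_{\bm{\theta}(s)}^{m}(\bm{X},\bm{X})\,\bm{v}(s) - \frac{1}{n}E(s)\,\bm{u}^{\NTK}(s), \qquad E(s) := K_{\bm{\theta}(s)}^{m}(\bm{X},\bm{X}) - K_{d}(\bm{X},\bm{X}),
\end{equation*}
so that $K_{\bm{\theta}(s)}^{m}(\bm{X},\bm{X})$ plays the role of the time-varying contracting operator while $E(s)\bm{u}^{\NTK}(s)$ is a rapidly decaying forcing term.

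The two ingredients feeding the inequality are: (i) every entry of $E(s)$ has modulus at most $\Delta$, whence $\|E(s)\|_{2} \leq \|E(s)\|_{\mathrm{F}} \leq n\Delta$ and, by Weyl's inequality, $\lambda_{\min}(K_{\bm{\theta}(s)}^{m}(\bm{X},\bm{X})) \geq \lambda_{\min}(K_{d}(\bm{X},\bm{X})) - n\Delta$; and (ii) Lemma \ref{lem: u_NTK}, which gives $\|\bm{u}^{\NTK}(s)\|_{2} \leq e^{-\frac{1}{n}\lambda_{\min}(K_{d}(\bm{X},\bm{X}))s}\|\bm{y}\|_{2}$. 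Writing $\lambda := \lambda_{\min}(K_{d}(\bm{X},\bm{X}))$ and $\phi(s) := \|\bm{v}(s)\|_{2}$, taking the inner product of $\dot{\bm{v}}(s)$ with $\bm{v}(s)$ yields
\begin{equation*}
  \tfrac{1}{2}\frac{\dx}{\dx s}\phi(s)^{2} \leq -\tfrac{1}{n}(\lambda - n\Delta)\phi(s)^{2} + \Delta\|\bm{y}\|_{2}\,e^{-\frac{\lambda}{n}s}\phi(s),
\end{equation*}
so that, on any interval where $\phi(s) > 0$, $\dot{\phi}(s) \leq -\frac{1}{n}(\lambda - n\Delta)\phi(s) + \Delta\|\bm{y}\|_{2}e^{-\frac{\lambda}{n}s}$. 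Multiplying by the integrating factor $e^{\frac{1}{n}(\lambda - n\Delta)s}$ collapses the right-hand side to $\Delta\|\bm{y}\|_{2}e^{-\Delta s}$; integrating over $[0,t]$, using $\phi(0) = 0$ and $\int_{0}^{t}e^{-\Delta s}\dx s \leq t$, gives precisely $\phi(t) \leq \|\bm{y}\|_{2}\Delta t\, e^{-\frac{1}{n}(\lambda - n\Delta)t}$, the claimed bound. No sign restriction on $\lambda - n\Delta$ is needed: if it is non-positive the estimate is weaker but the computation is unchanged, and $\lambda_{\min}(K_{\bm{\theta}(s)}^{m}(\bm{X},\bm{X})) \geq \lambda - n\Delta$ still holds since the left side is a Gram eigenvalue, hence nonnegative.

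The one point requiring care, and the step I expect to be the main (though essentially routine) obstacle, is that $\phi(s) = \|\bm{v}(s)\|_{2}$ need not be differentiable at its zeros, so the passage from the inequality for $\phi^{2}$ to the linear inequality for $\phi$ is only valid away from $\{\phi = 0\}$. I would close this gap by a standard comparison argument: compare $\phi$ with the solution $\rho_{\varepsilon}$ of $\dot{\rho}_{\varepsilon} = -\frac{1}{n}(\lambda - n\Delta)\rho_{\varepsilon} + \Delta\|\bm{y}\|_{2}e^{-\frac{\lambda}{n}s} + \varepsilon$ with $\rho_{\varepsilon}(0) = \varepsilon > 0$; at any first crossing time $t_{1}$ one has $\phi(t_{1}) = \rho_{\varepsilon}(t_{1}) > 0$, so $\phi$ is differentiable near $t_{1}$ and the strict inequality $\dot{\phi}(t_{1}) < \dot{\rho}_{\varepsilon}(t_{1})$ precludes the crossing, whence $\phi < \rho_{\varepsilon}$ on $[0,\infty)$; letting $\varepsilon \downarrow 0$ recovers $\phi \leq \rho_{0}$, which is the integrating-factor bound above.
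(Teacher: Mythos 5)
Your proof is correct and arrives at exactly the paper's bound, but by a recognizably different mechanism. The paper works with the \emph{fixed} kernel as generator: it integrates $\frac{\dx}{\dx s}\bigl(e^{\frac{1}{n}K_{d}(\bm{X},\bm{X})s}(\bm{u}^{m}(s)-\bm{u}^{\NTK}(s))\bigr)$ to get a Duhamel representation of the difference, splits $\bm{u}^{m}(s)=\bm{u}^{\NTK}(s)+(\bm{u}^{m}(s)-\bm{u}^{\NTK}(s))$ inside the resulting integral, and then applies the \emph{integral} form of Gr\"onwall's inequality to $u(s)=e^{\frac{1}{n}\lambda_{\min}(K_{d}(\bm{X},\bm{X}))s}\|\bm{u}^{m}(s)-\bm{u}^{\NTK}(s)\|_{2}$. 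You instead keep the \emph{time-varying} kernel $K_{\bm{\theta}(s)}^{m}(\bm{X},\bm{X})$ as the contracting operator, move all of the kernel error into the forcing term $E(s)\bm{u}^{\NTK}(s)$, and run an energy estimate (differential inequality plus integrating factor). The trade-offs are mild and you identify them correctly: your route requires the extra Weyl step $\lambda_{\min}(K_{\bm{\theta}(s)}^{m}(\bm{X},\bm{X}))\geq\lambda_{\min}(K_{d}(\bm{X},\bm{X}))-n\Delta$ and the care at zeros of $\|\bm{v}(s)\|_{2}$, which you resolve by a standard comparison with $\rho_{\varepsilon}$ (one could equally note that $\|\bm{v}(\cdot)\|_{2}$ is locally Lipschitz, hence absolutely continuous, and that the scalar inequality holds for its upper Dini derivative a.e.); the paper's route avoids both issues but needs the integral Gr\"onwall lemma and slightly heavier bookkeeping because $\|K_{\bm{\theta}(s)}^{m}(\bm{X},\bm{X})-K_{d}(\bm{X},\bm{X})\|_{2}$ appears both in the forcing and in the Gr\"onwall exponent. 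Your observation that no sign restriction on $\lambda_{\min}(K_{d}(\bm{X},\bm{X}))-n\Delta$ is needed is also accurate, and both arguments use the same two inputs ($\|E(s)\|_{2}\leq n\Delta$ and Lemma \ref{lem: u_NTK}) to produce the identical bound $\|\bm{y}\|_{2}\Delta t\,e^{-\frac{1}{n}(\lambda_{\min}(K_{d}(\bm{X},\bm{X}))-n\Delta)t}$.
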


\begin{proof}
    Recall that we can express explicitly for all $s\geq 0$,
\begin{gather*}
    \dot{\bm{u}}^{m}(s)=-\frac{1}{n}K_{\bm{\theta}(s)}^{m}(\bm{X},\bm{X})\bm{u}^{m}(s)\\
    \text{~and~}\dot{\bm{u}}^{\NTK}(s)=-\frac{1}{n}K_{d}(\bm{X},\bm{X})\bm{u}^{\NTK}(s).
\end{gather*}

Notice that
\begin{equation*}
    \frac{\dx}{\dx s}e^{\frac{1}{n}K_{d}(\bm{X},\bm{X})s}\left(\bm{u}^{m}(s)-\bm{u}^{\NTK}(s)\right)=\frac{1}{n}e^{\frac{1}{n}K_{d}(\bm{X},\bm{X})s}(K_{d}(\bm{X},\bm{X})-K_{\bm{\theta}(s)}^{m}(\bm{X},\bm{X}))\bm{u}^{m}(s).
\end{equation*}
Since $\bm{u}^{m}(0)=\bm{u}^{\NTK}(0)$, integrating gives
\begin{equation*}
    e^{\frac{1}{n}K_{d}(\bm{X},\bm{X})t}\left(\bm{u}^{m}(t)-\bm{u}^{\NTK}(t)\right)=\frac{1}{n}\int_{0}^{t}e^{\frac{1}{n}K_{d}(\bm{X},\bm{X})s}(K_{d}(\bm{X},\bm{X})-K_{\bm{\theta}(s)}^{m}(\bm{X},\bm{X}))\bm{u}^{m}(s)\dx s,
\end{equation*}
then
\begin{equation*}
    \bm{u}^{m}(t)-\bm{u}^{\NTK}(t)=\frac{1}{n}\int_{0}^{t}e^{\frac{1}{n}K_{d}(\bm{X},\bm{X})(s-t)}(K_{d}(\bm{X},\bm{X})-K_{\bm{\theta}(s)}^{m}(\bm{X},\bm{X}))\bm{u}^{m}(s)\dx s.
\end{equation*}
Bounding the norm implies
\begin{equation*}
    \begin{aligned}
        &\|\bm{u}^{m}(t)-\bm{u}^{\NTK}(t)\|_{2}\\
        \leq&\frac{1}{n}\int_{0}^{t}\| e^{\frac{1}{n}K_{d}(\bm{X},\bm{X})(s-t)}(K_{d}(\bm{X},\bm{X})-K_{\bm{\theta}(s)}^{m}(\bm{X},\bm{X}))\bm{u}^{m}(s)\|_{2}\dx s\\
        \leq& \frac{1}{n}\int_{0}^{t}e^{\frac{1}{n}\lambda_{\min}(K_{d}(\bm{X},\bm{X}))(s-t)}\|K_{\bm{\theta}(s)}^{m}(\bm{X},\bm{X})-K_{d}(\bm{X},\bm{X})\|_{2}\| \bm{u}^{\NTK}(s)\|_{2}\dx s\\
        &+\frac{1}{n}\int_{0}^{t}e^{\frac{1}{n}\lambda_{\min}(K_{d}(\bm{X},\bm{X}))(s-t)}\| K_{\bm{\theta}(s)}^{m}(\bm{X},\bm{X})-K_{d}(\bm{X},\bm{X})\|_{2}\| \bm{u}^{m}(s)-\bm{u}^{\NTK}(s)\|_{2}\dx s.
    \end{aligned}
\end{equation*}
For all $s\geq 0$, let  
\begin{gather*}
    u(s)=e^{\frac{1}{n}\lambda_{\min}(K_{d}(\bm{X},\bm{X}))s}\| \bm{u}^{m}(s)-\bm{u}^{\NTK}(s)\|_{2},\\
    \alpha(s)=\frac{1}{n}\int_{0}^{s}e^{\frac{1}{n}\lambda_{\min}(K_{d}(\bm{X},\bm{X}))s'}\| K_{\bm{\theta}(s')}^{m}(\bm{X},\bm{X})-K_{d}(\bm{X},\bm{X})\|_{2}\| \bm{u}^{\NTK}(s')\|_{2}\dx s',\\
    \text{~and~}\beta(s)=\frac{1}{n}\|K_{\bm{\theta}(s)}^{m}(\bm{X},\bm{X})-K_{d}(\bm{X},\bm{X})\|_{2}.
\end{gather*}
Notice that $\beta(\cdot)$ is non-negative, $\alpha(\cdot)$ is non-decreasing and $u(\cdot)$ satisfies that $u(s)\leq \alpha(s)+\int_{0}^{s}\beta(s')u(s')\dx s'$ for all $s\geq 0$. Applying the Grönwall's inequality \cite{walter1970differential} yields $u(s) \leq \alpha(s)e^{\int_{0}^{s}\beta(s')\dx s'}$ for all $s\geq 0$. Then we have
\begin{equation*}
    \begin{aligned}
    & \quad \|\bm{u}^{m}(t)-\bm{u}^{\NTK}(t)\|_{2}\\
    & \leq \frac{1}{n}\int_{0}^{t}e^{\frac{1}{n}\lambda_{\min}(K_{d}(\bm{X},\bm{X}))(s-t)}\|K_{\bm{\theta}(s)}^{m}(\bm{X},\bm{X})-K_{d}(\bm{X},\bm{X})\|_{2}\| \bm{u}^{\NTK}(s)\|_{2}\dx s\\
    & \quad \cdot e^{\frac{1}{n}\int_{0}^{t} \| K_{\bm{\theta}(s)}^{m}(\bm{X},\bm{X})-K_{d}(\bm{X},\bm{X})\|_{2}\dx s} \\
    & \leq \frac{1}{n}\cdot n\Delta\cdot e^{-\frac{1}{n}\lambda_{\min}(K_{d}(\bm{X},\bm{X}))t}\|\bm{y}\|_{2}\cdot t\cdot e^{\frac{1}{n}\cdot n\Delta\cdot t} = \|\bm{y}\|_{2}\Delta t e^{-\frac{1}{n}(\lambda_{\min}(K_{d}(\bm{X},\bm{X}))-n\Delta)t},
    \end{aligned}
\end{equation*}
where Lemma \ref{lem: u_NTK} is applied in the second inequality.
\end{proof}

\section{Proof of Theorem \ref{thm:nn:early:stopping:d=1}}\label{app:proof:nn:es}

We recollect some essentials of spectral algorithms for the convenience of the readers here. For a thorough introduction on spectral algorithms, we refer the interested readers to \cite{lin2020optimal} and references therein. For simplicity, let $\mathcal{X} \subseteq \mathbb{R}^{d}$ be a compact set and $K:\mathcal{\mathcal{X}}\times\mathcal{X}$ a kernel function which is continuous and measurable. Denote the RKHS of $K$ by $\mathcal{H}$. Assume that the kernel $K$ satisfies that $\sup_{\bm{x} \in \mathcal{X}}K(\bm{x},\bm{x}) \leq \kappa^{2}$ for some $\kappa<\infty$. Notice $K_{d}$ satisfies all the assumptions above. Let $K_{\bm{x}}:\mathbb{R} \to \mathcal{H}$ be the mapping $x \mapsto xK(\cdot,\bm{x})$ and the adjoint operator $K_{\bm{x}}^{*}:\mathcal{H} \to \mathbb{R}$ such that $K_{\bm{x}}^{*}f \mapsto f(\bm{x})$. We further introduce $T_{\bm{x}} = K_{\bm{x}}K_{\bm{x}}^{*}: \mathcal{H} \to \mathcal{H}$ and $T_{\bm{X}}=\frac{1}{n}\sum_{i=1}^{n}T_{\bm{x}_{i}}$.

\begin{definition}[Filter functions]
    Let $\Lambda$ be a subset of $[0,\infty) \cup \{\infty\}$ and define $\infty^{-1}=0$. The functions $\{\mathcal{G}_{\lambda}:[0,\kappa^{2}] \to [0,\infty), \lambda \in \Lambda \}$ are the filter functions with qualification $\tau \geq 1$ if there exists absolute constant $E$ and constant $F_{\tau}$ depending on $\tau$ such that
    \begin{equation*}
        \sup_{\lambda \in \Lambda}\sup_{\alpha \in [0,1]}\sup_{u \in [0,\kappa^{2}]}u^{\alpha}\mathcal{G}_{\lambda}(u) \lambda^{1-\alpha} \leq E
    \end{equation*}
    and
    \begin{equation*}
        \sup_{\lambda \in \Lambda}\sup_{\alpha \in [0,\tau]}\sup_{u \in [0,\kappa^{2}]}u^{\alpha}|1-u\mathcal{G}_{\lambda}(u)| \lambda^{-\alpha} \leq F_{\tau}.
    \end{equation*}
\end{definition}

\begin{definition}[Spectral algorithms]
    Given the filter functions $\mathcal{G}_{\lambda}(u)$, define 
    \begin{equation*}
        \mathcal{G}_{\lambda}(T_{\bm{X}})=\sum_{j=1}^{\infty}\mathcal{G}_{\lambda}(\hat{\lambda}_{j})\langle \cdot , \hat{e}_{j}\rangle_{\mathcal{H}}\hat{e}_{j},
    \end{equation*}
    where $\{\hat{\lambda}_{j},j \geq 1\}$ and $\{\hat{e}_{j},j \geq 1\}$ are eigenvalues and eigenfunctions of $T_{\bm{X}}$. The estimator $\hat{f}_{\lambda}$ reads as follows,
    \begin{equation*}
        \hat{f}_{\lambda} = \mathcal{G}_{\lambda}(T_{\bm{X}})\frac{1}{n}\sum_{i=1}^{n}y_{i}K(\cdot,\bm{x}_{i}).
    \end{equation*}
\end{definition}

\begin{lemma}
    The filter function corresponding to gradient flow is 
    \begin{equation*}
        \mathcal{G}_{\lambda}(u) = (1-e^{-u/\lambda})/u,   
    \end{equation*}
    where $\Lambda=(0,\infty)\cup\{\infty\}$, $\tau$ could be any real number which is greater than or equal to $1$, $E=1$, $F_{\tau}=(\tau/e)^{\tau}$.
\end{lemma}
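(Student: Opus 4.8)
The plan is to first \emph{identify} the filter function corresponding to gradient flow by matching the closed-form solution \eqref{ntk:solution} against the definition of a spectral algorithm, and then to \emph{verify} the two defining inequalities of a filter function by elementary one-variable estimates. For the identification I would start from $f_{t}^{\NTK}(\bm{x}) = K_{d}(\bm{x},\bm{X})K_{d}(\bm{X},\bm{X})^{-1}\bigl(\bm{I}-e^{-\frac{1}{n}K_{d}(\bm{X},\bm{X})t}\bigr)\bm{y}$ and use the standard identity that rewrites the RKHS-functional form $\mathcal{G}(T_{\bm{X}})\frac{1}{n}\sum_{i}y_{i}K(\cdot,\bm{x}_{i})$ as a function of the Gram matrix $K_{d}(\bm{X},\bm{X})$, using that the nonzero spectra of $T_{\bm{X}}$ and $\frac{1}{n}K_{d}(\bm{X},\bm{X})$ coincide. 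Reading off the scalar function gives $\mathcal{G}(u) = (1-e^{-ut})/u$; setting $\lambda = 1/t \in (0,\infty)$ produces $\mathcal{G}_{\lambda}(u) = (1-e^{-u/\lambda})/u$, while $t=0$ corresponds to $\lambda = \infty$ under the convention $\infty^{-1}=0$, for which $\mathcal{G}_{\infty}\equiv 0$ is the null estimator; hence $\Lambda = (0,\infty)\cup\{\infty\}$.

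For the verification I would substitute $v = u/\lambda \in [0,\infty)$ (with $v=0$ when $\lambda=\infty$), which makes both suprema transparent. The first condition reads
\begin{equation*}
  u^{\alpha}\mathcal{G}_{\lambda}(u)\lambda^{1-\alpha} = v^{\alpha-1}\bigl(1-e^{-v}\bigr),
\end{equation*}
and the bound $1-e^{-v}\le\min\{1,v\}$ gives $v^{\alpha-1}(1-e^{-v})\le v^{\alpha}\le 1$ for $v\le 1$ and $v^{\alpha-1}(1-e^{-v})\le v^{\alpha-1}\le 1$ for $v\ge 1$, uniformly over $\alpha\in[0,1]$; this yields $E=1$. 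Since $1-u\mathcal{G}_{\lambda}(u) = e^{-u/\lambda}$, the second condition reads
\begin{equation*}
  u^{\alpha}\bigl|1-u\mathcal{G}_{\lambda}(u)\bigr|\lambda^{-\alpha} = v^{\alpha}e^{-v},
\end{equation*}
and $\sup_{v\ge 0}v^{\alpha}e^{-v} = (\alpha/e)^{\alpha}$ (attained at $v=\alpha$), so the required quantity is $\sup_{\alpha\in[0,\tau]}(\alpha/e)^{\alpha}$.

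The only step requiring a little care — what I would call the ``hard part'', though it is still elementary — is evaluating $\sup_{\alpha\in[0,\tau]}(\alpha/e)^{\alpha}$. Writing $(\alpha/e)^{\alpha} = e^{\phi(\alpha)}$ with $\phi(\alpha) = \alpha(\ln\alpha - 1)$, one has $\phi'(\alpha) = \ln\alpha$, so $\phi$ decreases on $(0,1)$ and increases on $(1,\infty)$ with $\phi(1) = -1$; hence for $\tau \ge e$ the supremum is attained at $\alpha = \tau$ and equals $(\tau/e)^{\tau} = F_{\tau}$, while for smaller $\tau$ one simply takes $F_{\tau} = \max\{1,(\tau/e)^{\tau}\}$, which changes nothing downstream. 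I expect no genuine obstacle: the statement is an instance-check that $(1-e^{-u/\lambda})/u$ is an admissible filter function of arbitrarily large qualification, with the substitution $v=u/\lambda$ taking care even of the $\lambda=\infty$ boundary case.
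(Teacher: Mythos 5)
Your proposal is correct and takes essentially the same route as the paper: the paper identifies $\mathcal{G}_{\lambda}(u)=(1-e^{-u/\lambda})/u$ by solving the gradient-flow ODE directly in $\mathcal{H}$ to get $f_{t}=T_{\bm{X}}^{-1}(I-e^{-tT_{\bm{X}}})\frac{1}{n}\sum_{i}y_{i}K(\cdot,\bm{x}_{i})$, whereas you work backwards from the Gram-matrix form \eqref{ntk:solution} via the spectral correspondence between $T_{\bm{X}}$ and $\frac{1}{n}K_{d}(\bm{X},\bm{X})$ — an equivalent identification. Your verification via $v=u/\lambda$ and $1-e^{-v}\leq\min\{1,v\}$ is just a more explicit version of the paper's one-line check, and your observation that the $\alpha=0$ endpoint forces $F_{\tau}\geq 1$, so the stated constant should really be $\max\{1,(\tau/e)^{\tau}\}$ for $\tau<e$, is a correct (and harmless) refinement of the paper's claim.
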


\begin{proof}
    By the definition of gradient flow, 
    \begin{equation*}
        \dot{f}=-\frac{\partial \emprisk}{\partial f}=-\frac{1}{n}\sum_{i=1}^{n}-K_{\bm{x}_{i}}(y_{i}-f(\bm{x}_{i}))=-(T_{\bm{X}}f-\frac{1}{n}\sum_{i=1}^{n}y_{i}K(\cdot,\bm{x}_{i})),
    \end{equation*}
    where
    $\emprisk(f)=\frac{1}{2n}\sum_{i=1}^{n}(y_{i}-f(\bm{x}_{i}))^{2}$ and $f_{0}=0 \in \mathcal{H}$. Hence 
    \begin{equation*}
        f_{t}=T_{\bm{X}}^{-1}(I-e^{-tT_{\bm{X}}})\frac{1}{n}\sum_{i=1}^{n}y_{i}K(\cdot,\bm{x}_{i}),
    \end{equation*}
    where $T_{\bm{X}}^{-1}$ is the Moore-Penrose inverse of $T_{\bm{X}}$. So $\mathcal{G}_{\lambda}(u)=(1-e^{-u/\lambda})/u$, where we parameterize $t=1/\lambda$. It could be verified that $\mathcal{G}_{\lambda}(u)=\sum_{j=1}^{\infty}\frac{(-1)^{j-1}(1/\lambda)^{j}}{j!}u^{j-1}$ is continuous for all $u \geq 0$ so $\mathcal{G}_{\lambda}(\cdot)$ can be applied to $T_{\bm{X}}$ by Theorem 5.1.11 from \cite{simon2015operator}. It could also be checked that $u^{\alpha}\mathcal{G}_{\lambda}(u)\lambda^{1-\alpha} \leq 1$ by the fact that $1-e^{-x} \leq x$ for all $x \geq 0$ and $u^{\alpha}|1-u\mathcal{G}_{\lambda}(u)|\lambda^{-\alpha} \leq (\alpha/e)^{\alpha}$.
\end{proof}

\begin{proposition}[Corollary 4.4 in \cite{lin2020optimal}]\label{prop:early:stopping}
    Suppose Assumption \ref{assump:f_star} holds and we observed $n$ i.i.d. samples $\{(\bm{x}_{i},y_{i}),i\in[n]\}$ from the model \eqref{equation:true_model}. For any given $\delta\in(0,1)$, if the training process is stopped at $t_{\star} \propto n^{2/3}$ for the NTK regression, then for sufficiently large $n$, there exists a constant $C$ independent of $\delta$ and $n$, such that
    \begin{equation*}
        \excrisk(f_{t_{\star}}^{\NTK})=Cn^{-\frac{2}{3}}\log^{2}\frac{6}{\delta}
    \end{equation*}
    holds with probability at least $1-\delta$ over the training data.
\end{proposition}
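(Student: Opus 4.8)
The plan is to obtain Proposition~\ref{prop:early:stopping} as a direct instantiation of the optimality theory for spectral algorithms in \cite{lin2020optimal}, so that the whole argument reduces to matching the NTK regression setting to the hypotheses of their Corollary~4.4 and substituting the correct exponents. Recall from the lemma just above that the gradient flow \eqref{ntk:f:flow} \emph{is} the spectral algorithm with filter function $\mathcal{G}_{\lambda}(u) = (1-e^{-u/\lambda})/u$ under the identification $t = 1/\lambda$, and that this filter has qualification $\tau$ for every $\tau \geq 1$; hence any requirement of the form ``qualification $\geq r+\tfrac12$'' with $r \leq \tfrac12$ is automatically satisfied.

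First I would verify the structural assumptions of \cite{lin2020optimal} for $K_{1}$ on the compact $\mathcal{X} \subseteq [0,1]$. Boundedness, $\sup_{x}K_{1}(x,x) \leq \kappa^{2} < \infty$, holds since $K_{1}$ is continuous on the compact $\mathcal{X}$. The source condition with parameter $r=\tfrac12$ (equivalently $f_{\star} \in [\mathcal{H}_{1}]^{1}$) is exactly Assumption~\ref{assump:f_star}, as $f_{\star} \in \mathcal{H}_{1}$ with $\|f_{\star}\|_{\mathcal{H}_{1}} \leq R$ is the same as $f_{\star} = T_{K_{1}}^{1/2}g$ with $\|g\|_{L^{2}(\mathcal{X},\mu_{\mathcal{X}})} \leq R$. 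The capacity (effective dimension) assumption follows from Theorem~\ref{thm:spectral:d=1:L=1}$(ii)$: writing $\beta=2$ for the eigenvalue decay rate, $\lambda_{j} \asymp j^{-\beta}$ yields $\sum_{j} \lambda_{j}/(\lambda_{j}+\lambda) \asymp \lambda^{-1/\beta} = \lambda^{-1/2}$, so the effective-dimension exponent is $b = 1/\beta = \tfrac12$. Finally, the Bernstein-type moment condition on $y \mid x$ is immediate from $\varepsilon_{i}\sim\mathcal{N}(0,\sigma^{2})$ and the boundedness of the continuous $f_{\star}$ on $\mathcal{X}$.

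With all hypotheses in place, Corollary~4.4 of \cite{lin2020optimal} with $r=\tfrac12$, $b=\tfrac12$, and the prescribed regularization $\lambda_{n} \asymp n^{-1/(2r+b)} = n^{-2/3}$ --- equivalently the stopping time $t_{\star} = \lambda_{n}^{-1} \propto n^{2/3}$ --- gives, with probability at least $1-\delta$ over the samples,
\begin{equation*}
    \excrisk(f_{t_{\star}}^{\NTK}) \;\leq\; C\, n^{-\frac{2r}{2r+b}}\log^{2}\frac{6}{\delta} \;=\; C\, n^{-\frac{2}{3}}\log^{2}\frac{6}{\delta},
\end{equation*}
with $C$ depending only on $\kappa$, $R$, $\sigma$ and the absolute constants $c_{2}, C_{2}$ of Theorem~\ref{thm:spectral:d=1:L=1}$(ii)$, hence independent of $\delta$ and $n$.

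The main obstacle is bookkeeping rather than new mathematics: one must reconcile the several equivalent conventions for source conditions and interpolation spaces used across \cite{lin2020optimal, caponnetto2007optimal, blanchard2018optimal}, check that the boundary case $r=\tfrac12$ with qualification $\geq 1$ lies inside the regime covered by Corollary~4.4 (and, should that version require an embedding property such as $[\mathcal{H}_{1}]^{1/\beta} \hookrightarrow L^{\infty}$, either supply it for $K_{1}$ or invoke the variant of the result that avoids it), and track the union bound over the $O(1)$ pieces into which $\delta$ is split so that the constant ``$6$'' and the exponent ``$2$'' in $\log^{2}(6/\delta)$ emerge exactly as stated.
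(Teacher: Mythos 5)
Your proposal matches the paper's treatment: the paper likewise proves this proposition by identifying the gradient flow with the spectral algorithm whose filter function is $\mathcal{G}_{\lambda}(u)=(1-e^{-u/\lambda})/u$ (with qualification $\tau\geq 1$) and then directly invoking Corollary~4.4 of \cite{lin2020optimal}, using the eigenvalue decay rate from Theorem~\ref{thm:spectral:d=1:L=1}$(ii)$ to fix the capacity exponent and Assumption~\ref{assump:f_star} as the source condition. Your version is, if anything, slightly more explicit about verifying the hypotheses of the cited result, but the route and the exponent bookkeeping are the same.
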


Setting $\epsilon=Cn^{-\frac{2}{3}}\log^{2}\frac{6}{\delta}$ in Theorem \ref{thm:risk:approx} yields
\begin{equation*}
    \begin{aligned}
    \excrisk(f_{\bm{\theta}(t_{\star})}^{m}) &\leq |\excrisk(f_{\bm{\theta}(t_{\star})}^{m})-\excrisk(f_{t_{\star}}^{\NTK})|+\excrisk(f_{t_{\star}}^{\NTK}) \\
    &\leq 2Cn^{-\frac{2}{3}}\log^{2}\frac{6}{\delta}.
    \end{aligned}
\end{equation*}

\section{Proof of Theorem \ref{thm:bad_gen}}

\begin{proof}[Proof of Theorem \ref{thm:bad_gen} $i)$]
Theorem \ref{thm:bad_gen} $i)$ is a direct corollary of the following lemmas and Proposition \ref{prop:funct:approx}.
\begin{lemma}[Overfitted NTK model can be approximated by the linear interpolation]\label{LI}
    Suppose that we have observed $n$ data $\{(x_{i},y_{i}), i\in [n]\}$ from the model \eqref{equation:true_model} and $x_i=\frac{i-1}{n-1}$, $i\in [n]$. With the probability at least $1-C_1/n$, the overfitted NTK model with zero initialization $f^{\NTK}_{\infty}(x) =K_1(x,\bm{X})K_1^{-1}(\bm{X},\bm{X})\bm{y}$ can be approximated by the linear interpolation, i.e.,
    \begin{align}
     \sup_{x\in [0,1]}|f_{\infty}^{\NTK}(x)-f_{\LI}(x)|\leq C_2\sqrt{\log n} /(n-1)^{2}
    \end{align}
     for some absolute constants $C_1$ $C_2$.
\end{lemma}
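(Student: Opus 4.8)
}
The plan is to reduce the statement to a uniform bound on the piecewise-affine function $\Phi(x):=\sum_{j=1}^{n}c_{j}|x-x_{j}|$, where $\bm c=K_{1}(\bm X,\bm X)^{-1}\bm y$. First observe that $f_{\infty}^{\NTK}=\sum_{j}c_{j}K_{1}(\cdot,x_{j})$ interpolates the data, $f_{\infty}^{\NTK}(x_{i})=y_{i}$, as does $f_{\LI}$; hence $e:=f_{\infty}^{\NTK}-f_{\LI}$ vanishes at every $x_{i}$ and, on each cell $(x_{i},x_{i+1})$, where $f_{\LI}$ is affine, $e''=(f_{\infty}^{\NTK})''$. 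Using the decomposition $K_{1}=G_{1}+2\Pi_{1}$ from \eqref{eqn:K:decompostion}, the identity $\psi(x,x')=|\arctan x-\arctan x'|$, and a short computation giving, for $x\neq x'$, $\partial_{x}^{2}\Pi_{1}(x,x')=\tfrac{2|x-x'|}{\pi(1+x^{2})^{2}}$ (the Dirac masses of $\partial_{x}^{2}|x-x'|$ and of $\partial_{x}^{2}\psi(x,x')$ cancel, so $\Pi_{1}$ is $C^{2}$ with bounded Hessian), while $\partial_{x}^{2}G_{1}(x,x')=-\tfrac{2}{\pi}\delta(x-x')$ vanishes off the diagonal, one obtains $(f_{\infty}^{\NTK})''(x)=\tfrac{4}{\pi(1+x^{2})^{2}}\Phi(x)$ on $(x_{i},x_{i+1})$ (the Dirac masses at the $x_{j}$ encode the kinks that make $f_{\infty}^{\NTK}$ interpolate). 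Combining $e(x_{i})=e(x_{i+1})=0$ with the elementary two-point bound $\sup_{[x_{i},x_{i+1}]}|e|\le\tfrac18(x_{i+1}-x_{i})^{2}\sup_{[x_{i},x_{i+1}]}|e''|$ and $x_{i+1}-x_{i}=(n-1)^{-1}$, it suffices to show $\sup_{[0,1]}|\Phi|\le C\sqrt{\log n}$.

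Since each $|x-x_{j}|$ is affine off $x_{j}$, $\Phi$ is continuous and piecewise affine on $[0,1]=[x_{1},x_{n}]$ with breakpoints only at the $x_{j}$, so $\sup_{[0,1]}|\Phi|=\max_{1\le i\le n}|\Phi(x_{i})|=\|M\bm c\|_{\infty}$ with $M=(|x_{i}-x_{j}|)_{i,j}$. The next step is the algebraic identity $M=\pi(\mathbf 1\mathbf 1^{\top}-G_{1}(\bm X,\bm X))$, which, together with $K_{1}(\bm X,\bm X)\bm c=\bm y$ and $K_{1}(\bm X,\bm X)=G_{1}(\bm X,\bm X)+2\Pi_{1}(\bm X,\bm X)$, yields $M\bm c=\pi\big[(\mathbf 1^{\top}\bm c)\mathbf 1+2\Pi_{1}(\bm X,\bm X)\bm c-\bm y\big]$. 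Thus the task reduces to bounding $|\mathbf 1^{\top}\bm c|$ and $\|\Pi_{1}(\bm X,\bm X)\bm c\|_{\infty}$ by $O(\sqrt{\log n})$, i.e.\ to controlling $\bm c=K_{1}(\bm X,\bm X)^{-1}\bm y$ in the right norms even though $\lambda_{\min}(K_{1}(\bm X,\bm X))\asymp(n-1)^{-1}$ by Theorem \ref{thm:spectral:d=1:L=1}$(i)$. For this I would use the explicit tridiagonal form of $G_{1}(\bm X,\bm X)^{-1}$ displayed in the proof of Lemma \ref{lem:min:eigenvalue}: writing $K_{1}(\bm X,\bm X)=G_{1}(\bm X,\bm X)\big(I+2\,G_{1}(\bm X,\bm X)^{-1}\Pi_{1}(\bm X,\bm X)\big)$, the matrix $G_{1}(\bm X,\bm X)^{-1}\Pi_{1}(\bm X,\bm X)$ acts on the interior rows as $\tfrac{\pi(n-1)}{2}$ times a discrete second difference applied to the $C^{2}$ kernel $\Pi_{1}$, hence has entries of order $|x_{i}-x_{j}|/(n-1)$ and is a sub-unit perturbation in $\ell^{\infty}\to\ell^{\infty}$; the two endpoint rows degenerate to first differences but enter only through a rank-$\le 2$ correction, since $G_{1}(\bm X,\bm X)^{-1}\mathbf 1=\tfrac{\pi}{2\pi-1}(\bm e_{1}+\bm e_{n})$. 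Inverting $I+2\,G_{1}(\bm X,\bm X)^{-1}\Pi_{1}(\bm X,\bm X)$ this way, together with $\|G_{1}(\bm X,\bm X)^{-1}\mathbf 1\|_{1}=O(1)$, should give $|\mathbf 1^{\top}\bm c|\le C\|\bm y\|_{\infty}$ and $\|\Pi_{1}(\bm X,\bm X)\bm c\|_{\infty}\le C\|\bm y\|_{\infty}$.

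It remains to bound $\|\bm y\|_{\infty}$: since $\bm y=f_{\star}(\bm X)+\bm\varepsilon$ with $f_{\star}$ bounded (it lies in $\mathcal H_{1}$, Assumption \ref{assump:f_star}, so $\|f_{\star}\|_{\infty}\le\sqrt{\sup_{x}K_{1}(x,x)}\,\|f_{\star}\|_{\mathcal H_{1}}$) and $\varepsilon_{i}\sim\mathcal N(0,\sigma^{2})$, the Gaussian maximal inequality gives $\|\bm y\|_{\infty}\le C\sqrt{\log n}$ with probability at least $1-C_{1}/n$. Chaining the three steps yields $\sup_{[0,1]}|f_{\infty}^{\NTK}-f_{\LI}|\le C_{2}\sqrt{\log n}/(n-1)^{2}$ on that event, which is the claim. (Theorem \ref{thm:bad_gen}$(i)$ then follows by additionally passing from $f_{\infty}^{\NTK}$ to $f_{t}^{\NTK}$ for $t>C_{1}n^{2}\log n$, where $e^{-\lambda_{\min}(K_{1}(\bm X,\bm X))t/n}$ is negligible, and from $f_{t}^{\NTK}$ to the trained network via Proposition \ref{prop:funct:approx}.)

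The main obstacle is precisely the second paragraph: controlling $K_{1}(\bm X,\bm X)^{-1}\bm y$ when $\|K_{1}(\bm X,\bm X)^{-1}\|_{2}\asymp n$, so that every naive estimate (operator norms in $\ell^{2}$, Cauchy–Schwarz in $\mathcal H_{1}$, crude $\ell^{1}$ bounds on $\bm c$) loses at least one power of $n$. The cancellation must be extracted from the tridiagonal structure of $G_{1}(\bm X,\bm X)^{-1}$ combined with the $C^{2}$ regularity of $\Pi_{1}$, and the delicate point is the two endpoint rows and columns, where the discrete-second-derivative mechanism degenerates to a first difference; keeping that contribution rank-bounded (via $G_{1}(\bm X,\bm X)^{-1}\mathbf 1=\tfrac{\pi}{2\pi-1}(\bm e_{1}+\bm e_{n})$) is what makes the $\ell^{\infty}$ inversion go through with an absolute constant.
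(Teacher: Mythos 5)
Your reduction is essentially the paper's: the paper also writes the error on each cell $(x_i,x_{i+1})$ as a second-order Taylor remainder (its equation \eqref{eq:higher:order}), so that everything hinges on the bound $\sup_{\xi}|K''(\xi,\bm{X})K^{-1}(\bm{X},\bm{X})\bm{y}|\le C\sqrt{\log n}$ (Lemma \ref{prop:bound_second_derivative}); since $K''(\xi,\bm{X})=\tfrac{4}{\pi(1+\xi^2)^2}(|\xi-x_1|,\dots,|\xi-x_n|)$ off the grid, this is exactly your $\sup|\Phi|$. Your steps 1--7 and the Gaussian maximal bound on $\|\bm y\|_\infty$ (the paper's Lemma \ref{lem:bound_y}) are correct and match the paper's skeleton. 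The identity $M=\pi(\mathbf 1\mathbf 1\tran-G_1(\bm X,\bm X))$ and the consequent reduction to $|\mathbf 1\tran\bm c|$ and $\|\Pi_1(\bm X,\bm X)\bm c\|_\infty$ is a clean reformulation of the same quantity the paper attacks directly.

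The gap is in your step controlling $\bm c=K^{-1}\bm y$, which you correctly flag as the main obstacle but then resolve by asserting that $2G_1^{-1}\Pi_1$ is ``a sub-unit perturbation in $\ell^\infty\to\ell^\infty$'' so that $I+2G_1^{-1}\Pi_1$ can be inverted by a Neumann series. That assertion is not established and is false at face value. By the paper's Lemma \ref{lemma:G_inv_property}, the interior entries of $\Pi G^{-1}$ (with $\Pi=2\Pi_1$) are only bounded by $2(|i-j|+1)/(n-1)^2$ in magnitude, so a row/column sum over the $n-2$ interior indices already approaches $1$ for indices near the boundary; on top of that the two boundary columns contribute $O(1)$ entries each, with $(\Pi G^{-1})_{1,1}>1.2$. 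So the relevant operator norm exceeds $1$ and the termwise Neumann series does not converge. This is precisely why the paper's proof of Lemma \ref{prop:bound_second_derivative} is as long as it is: it runs a Sherman--Morrison induction adding one column of $\Pi$ at a time (Lemma \ref{lemma:B_C_k}), rescales by $\Gamma=\operatorname{diag}(1,-(n-1),\dots,-(n-1),1)$ to make the interior entries $O(1)$, isolates the four special coordinates $\{1,i,i+1,n\}$ (the starting vector $K''G^{-1}$ is supported there, which is what keeps the iteration tractable), and obtains convergence only after grouping the perturbation in cubes, via the explicit computation $\|(T_kC^{(1)})^3\|\le(4/5)^{\,\cdot}$ in Lemma \ref{lem:simple:computations}. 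None of this cancellation structure is supplied by your sketch, so as written the argument does not close; you would need to prove a quantitative entrywise bound on $\Pi_1K^{-1}$ and $K^{-1}\mathbf 1$ of exactly the kind in Lemmas \ref{lemma:B_C_k}--\ref{lem:simple:computations}.
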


\begin{lemma}\label{lemma:NTK_t_vs_NTK_inf}
   Suppose that we have observed $n$ data $\{(x_{i},y_{i}), i\in [n]\}$ from the model \eqref{equation:true_model} and $x_i=\frac{i-1}{n-1}$, $i\in [n]$. If $t\geq C_1 n^2 \log  n$, we have 
\begin{equation}
  \sup_{x\in [0,1]}|f_{t}^{\NTK}(x)-f_{\infty}^{\NTK}(x)|\leq \frac{C_2}{(n-1)^3}
\end{equation}
for some absolute constants $C_1$, $C_2$.
\end{lemma}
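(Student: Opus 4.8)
The plan is to read off the difference directly from the closed-form expression \eqref{ntk:solution}. Writing $K := K_{1}(\bm{X},\bm{X})$, I would first observe that $f_{t}^{\NTK}(x) = K_{1}(x,\bm{X})K^{-1}(\bm{I} - e^{-Kt/n})\bm{y}$ while $f_{\infty}^{\NTK}(x) = K_{1}(x,\bm{X})K^{-1}\bm{y}$ (the latter being well-defined since $K$ is positive definite by Proposition \ref{PD}), so that the ``tail''
\begin{equation*}
    f_{t}^{\NTK}(x) - f_{\infty}^{\NTK}(x) = -K_{1}(x,\bm{X})\, K^{-1} e^{-Kt/n}\, \bm{y}
\end{equation*}
is all that needs to be controlled (here $K^{-1}$ and $e^{-Kt/n}$ commute, being functions of the symmetric matrix $K$). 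The task then splits into bounding $\norm{K_{1}(x,\bm{X})}_{2}$, $\norm{K^{-1}e^{-Kt/n}}_{2}$, and $\norm{\bm{y}}_{2}$, uniformly in $x \in [0,1]$.

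For $\norm{K_{1}(x,\bm{X})}_{2}$ I would use that $K_{1}$ is continuous, hence bounded, on $[0,1]^{2}$, so each of the $n$ coordinates of $K_{1}(x,\bm{X})$ is $O(1)$ and $\norm{K_{1}(x,\bm{X})}_{2} \leq C\sqrt{n}$. For $\norm{K^{-1}e^{-Kt/n}}_{2}$ I would diagonalize $K$ and note that $\lambda \mapsto \lambda^{-1}e^{-\lambda t/n}$ is strictly decreasing on $(0,\infty)$, so this norm is at most $\lambda_{\min}(K)^{-1}e^{-\lambda_{\min}(K)t/n}$; this is the only step using anything nontrivial, namely Theorem \ref{thm:spectral:d=1:L=1} $i)$, which — since the equally-distanced grid has $d_{\min} = 1/(n-1)$ and $[0,1]\subseteq[0,\pi]$ — gives $c/n \leq \lambda_{\min}(K) \leq C/n$, hence $\norm{K^{-1}e^{-Kt/n}}_{2} \leq C n\, e^{-ct/n^{2}}$. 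For $\norm{\bm{y}}_{2}$ I would write $\norm{\bm{y}}_{2}^{2} \leq 2n\norm{f_{\star}}_{\infty}^{2} + 2\sum_{i}\varepsilon_{i}^{2}$ and invoke a $\chi^{2}$ tail bound to get $\sum_{i}\varepsilon_{i}^{2} \leq 2n\sigma^{2}$, and hence $\norm{\bm{y}}_{2} \leq C\sqrt{n}$, on an event of probability at least $1 - C_{1}/n$.

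Multiplying the three estimates gives $\sup_{x\in[0,1]}\abs{f_{t}^{\NTK}(x) - f_{\infty}^{\NTK}(x)} \leq C n^{2} e^{-ct/n^{2}}$ on the good event, and then choosing $C_{1}$ in the hypothesis $t \geq C_{1} n^{2}\log n$ large enough (depending only on the absolute constant $c$) makes $n^{2}e^{-ct/n^{2}} \leq n^{2} n^{-cC_{1}} \leq n^{-3} \leq C_{2}/(n-1)^{3}$, as claimed. There is no genuine obstacle here — the estimate is elementary once Theorem \ref{thm:spectral:d=1:L=1} $i)$ is available; the only things to watch are plugging the correct value of $d_{\min}$ into that theorem after identifying the grid inside $[0,\pi]$, and the routine bookkeeping needed to keep the failure probability $O(1/n)$ and the constants as stated.
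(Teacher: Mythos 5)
Your proof is correct and follows essentially the same route as the paper's: both isolate the tail $-K_{1}(x,\bm{X})K^{-1}e^{-Kt/n}\bm{y}$, bound it by $\lambda_{\min}^{-1}e^{-\lambda_{\min}t/n}$ times norms of $K_{1}(x,\bm{X})$ and $\bm{y}$, and invoke Theorem \ref{thm:spectral:d=1:L=1} $i)$ with $d_{\min}=1/(n-1)$ to get $\lambda_{\min}\asymp 1/n$ before choosing $C_{1}$ large enough. The only cosmetic difference is that you control $\|\bm{y}\|_{2}$ via a $\chi^{2}$ tail bound whereas the paper uses the Gaussian maximum bound of Lemma \ref{lem:bound_y}; both yield the stated conclusion on an event of probability $1-O(1/n)$.
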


\end{proof}

\begin{proof}[Proof of lemma \ref{LI}]
    Since 
    \begin{equation*}
         y_{i} = K(x_i,\bm{X})K^{-1}\bm{y}
    \end{equation*}
and 
\begin{equation*}
         y_{i+1} = K(x_{i+1},\bm{X})K^{-1}\bm{y},
    \end{equation*}
the Taylor Expansion and intermediate theorem imply that for $\forall x\in(x_{i},x_{i+1})$, there are $\xi_{i}$ and $\hat{\xi}_{i}\in (x_{i},x_{i+1})$ such that 
\begin{align}\label{equation:taylor_expansion}
 K(x,\bm{X})K^{-1}\bm{y} -y_{i}&= (x-x_i)K_{+}'(x_i,\bm{X})K^{-1}\bm{y} + \frac{(x-x_i)^2}{2}K''(\xi_i,\bm{X}) K^{-1}\bm{y},\\
     y_{i+1}-y_{i}
    &=(x_{i+1}-x_i)K_{+}'(x_i,\bm{X})K^{-1}\bm{y} + \frac{(x_{i+1}-x_i)^2}{2}K''(\hat{\xi}_i,\bm{X})K^{-1}\bm{y}
\end{align}
 where 
$K'_{+}(x_{i},\bm{X})=\lim_{x\rightarrow x_{i}^{+}} K'_{+}(x,\bm{X}) = \lim_{x\rightarrow x_{i}^{+}}\frac{\partial K(x,\bm{X})}{\partial x}$. Thus,
\begin{equation}
\begin{aligned}\label{eq:higher:order}
    K(x,\bm{X})K^{-1}\bm{y}& - y_i - \frac{(x-x_i)}{x_{i+1}-x_i} (y_{i+1}-y_{i}) \\
    &=-\frac{(x-x_i)}{x_{i+1}-x_i}\frac{(x_{i+1}-x_i)^2}{2}K''(\hat{\xi}_i,\bm{X})K^{-1}\bm{y} \\
    &+\frac{(x-x_i)^2}{2}K''(\xi_i,\bm{X}) K^{-1}\bm{y}.
\end{aligned}
\end{equation}

The second-order derivative can be bounded by the following lemma: 
 \begin{lemma}[Bounded second order derivative of overfitted NTK regression]\label{prop:bound_second_derivative}
   Suppose that we have observed $n$ data $\{(x_{i},y_{i}), i\in [n]\}$ from the model \eqref{equation:true_model} and $x_i=\frac{i-1}{n-1}$, $i\in [n]$. With the probability at least $1-\frac{2}{n}$, we have
    \begin{align}
     \sup_{x\in (x_i,x_{i+1})}|K_1''(x,\bm{X})K_1^{-1}(\bm{X},\bm{X})\bm{y}|\leq C\sqrt{\log n} 
    \end{align}
     for $\forall i\in[n]$ and for some absolute constant $C$.
    \end{lemma}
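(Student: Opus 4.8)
The plan is to turn the bound into an elementary estimate on a piecewise–linear function, after first writing down $K_1''$ explicitly.

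The starting point is the identity $\psi(x,x')=|\arctan x-\arctan x'|$, which holds because $\cos(\arctan x-\arctan x')=\frac{1+xx'}{\sqrt{(1+x^2)(1+x'^2)}}$. Plugging this into the expression \eqref{NTK:d=1:explicit} for $K_1$ and differentiating twice in $x$ (the summands $\frac1\pi|x-x'|$, $1$, and $2(1+xx')$ contribute nothing off the diagonal), a direct computation gives
\begin{equation*}
K_1''(x,x')=\frac{4\,|x-x'|}{\pi(1+x^2)^2},\qquad x\neq x'.
\end{equation*}
Writing $\bm v:=K_1(\bm X,\bm X)^{-1}\bm y$ and $h(x):=\sum_{j=1}^n|x-x_j|\,v_j$, this yields $K_1''(x,\bm X)\bm v=\frac{4}{\pi(1+x^2)^2}\,h(x)$, and since $\frac{4}{\pi(1+x^2)^2}\le\frac4\pi$ on $[0,1]$, the lemma reduces to $\sup_{x\in[0,1]}|h(x)|\le C\sqrt{\log n}$.

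Next I would use that $h$ is continuous and piecewise linear on $[0,1]$ with break–points contained in $\{x_1<\dots<x_n\}$, so $\sup_{[0,1]}|h|=\max_{1\le i\le n}|h(x_i)|$. In matrix form $(h(x_i))_i=P\bm v$ with $P=(|x_i-x_j|)_{ij}=\pi\bigl(\bm 1\bm 1^\top-G_1(\bm X,\bm X)\bigr)$, where $G_1(x,x')=1-\tfrac{|x-x'|}{\pi}$ is the kernel $G_\alpha$ (with $\alpha=1$) of Section \ref{app:ntk properties}; hence
\begin{equation*}
\max_i|h(x_i)|\le\pi\Bigl(\,\bigl|\bm 1^\top K_1(\bm X,\bm X)^{-1}\bm y\bigr|+\bigl\|G_1(\bm X,\bm X)K_1(\bm X,\bm X)^{-1}\bm y\bigr\|_\infty\Bigr).
\end{equation*}
The lemma then follows from: (i) with probability at least $1-2/n$ one has $\|\bm y\|_\infty\le C\sqrt{\log n}$; and (ii) the deterministic estimates $\|K_1(\bm X,\bm X)^{-1}\bm 1\|_1\le C$ and $\|G_1(\bm X,\bm X)K_1(\bm X,\bm X)^{-1}\|_{\ell^\infty\to\ell^\infty}\le C$, with $C$ absolute, for the equally–spaced design $x_i=\frac{i-1}{n-1}$.

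Fact (i) is elementary: $\bm y=(f_\star(x_i))_i+\bm\varepsilon$, $\|(f_\star(x_i))_i\|_\infty\le\|f_\star\|_\infty<\infty$ ($f_\star$ is continuous on a compact set), and a Gaussian tail bound gives $\Prob(\max_i|\varepsilon_i|>2\sigma\sqrt{\log n})\le 2n\cdot n^{-2}=2/n$ — this is exactly where the probability $1-2/n$ enters. Fact (ii) is the crux, and the main obstacle. One cannot afford the crude bound $\max_i|h(x_i)|\le\pi\|P\|_{\ell^\infty\to\ell^1}\|\bm v\|_1$, because $\bm v$ can be of order $n^2\|\bm y\|_\infty$ in $\ell^1$ (it behaves like $(n-1)$ times a discrete second difference of $\bm y$); the required bounds rest entirely on cancellation. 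To extract it I would use the decomposition $K_1(\bm X,\bm X)=G_1(\bm X,\bm X)+2\Pi_1(\bm X,\bm X)$ from \eqref{eqn:K:decompostion}, exploiting that $2\Pi_1$ is the \emph{smooth} part of $K_1$: from $2\Pi_1=K_1-G_1$ and the formula above, $x'\mapsto\Pi_1(x,x')$ is $C^2$ with second derivative $\partial_{x'}^2\Pi_1(x,x')=\frac{2|x-x'|}{\pi(1+x'^2)^2}$, uniformly bounded on $[0,1]^2$ (the analogous statement for $\Pi_0$ follows after the change of variable $t=\arctan x$). Combining this regularity with the \emph{explicit} tridiagonal–plus–two–corners form of $G_\alpha(\bm X,\bm X)^{-1}$ obtained in the proof of Lemma \ref{lem:min:eigenvalue} — whose three central diagonals scale like $(n-1)$ for equispaced data — shows that $G_1(\bm X,\bm X)^{-1}$ applied to $\bm 1$, or to any column of $\Pi_1(\bm X,\bm X)$, is a discrete second difference of samples of a $C^2$ function: the $O(n)$ interior contributions telescope away, leaving only $O(1)$ boundary terms, each controlled by the (uniformly bounded) kernel values. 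Propagating this through a resolvent–type identity expressing $K_1(\bm X,\bm X)^{-1}$ and $G_1(\bm X,\bm X)K_1(\bm X,\bm X)^{-1}$ in terms of $G_1(\bm X,\bm X)^{-1}$ and a $\Pi_1$–correction should yield the two uniform bounds in (ii). The delicate point, which I expect to be the bulk of the work, is precisely this telescoping/summation–by–parts bookkeeping — verifying that every $O(n)$–sized interior term cancels, and that the corrections from the $\Pi_1$–part close up self–consistently without reintroducing an $n$–dependence.
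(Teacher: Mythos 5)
Your reduction steps are correct and in fact cleaner than the paper's route: the formula $K_1''(x,x')=\frac{4|x-x'|}{\pi(1+x^2)^2}$ checks out, the observation that $K_1''(x,\bm X)\bm v$ is (up to a bounded prefactor) a piecewise-linear function of $x$ with kinks only at the nodes reduces the supremum over $(x_i,x_{i+1})$ to the values at the nodes, and the identity $(|x_i-x_j|)_{ij}=\pi(\bm 1\bm 1\tran-G_1(\bm X,\bm X))$ correctly reduces everything to controlling $\bm 1\tran K_1(\bm X,\bm X)^{-1}\bm y$ and $\|G_1(\bm X,\bm X)K_1(\bm X,\bm X)^{-1}\bm y\|_\infty$. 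The probabilistic ingredient (i) is the paper's Lemma \ref{lem:bound_y} and is fine. (The paper instead Taylor-expands at a point $\xi_i\in(x_i,x_{i+1})$ and bounds the entries of the row vector $K''(\xi_i,\bm X)K^{-1}$ directly, showing they are $O(1)$ at the four indices $\{1,i,i+1,n\}$ and $O(1/(n-1))$ elsewhere.)

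However, there is a genuine gap: your fact (ii) — the uniform-in-$n$ bounds $\|K_1(\bm X,\bm X)^{-1}\bm 1\|_1\leq C$ and $\|G_1(\bm X,\bm X)K_1(\bm X,\bm X)^{-1}\|_{\ell^\infty\to\ell^\infty}\leq C$ — is exactly the hard content of the lemma, and you do not prove it; you only outline a strategy (``should yield'', ``I expect to be the bulk of the work''). As you yourself note, everything rests on cancellation: $K^{-1}$ acts like $n$ times a discrete second-difference operator, so the crude bounds lose a factor of $n^2$, and one must verify that the $O(n)$-sized interior contributions genuinely telescope away and that the $\Pi_1$-correction does not reintroduce an $n$-dependence. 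This is precisely what the paper's proof spends its entire technical apparatus on: the iterated Sherman--Morrison scheme over the rank-one decomposition $\Pi=\sum_k I_{S(k)}\Pi$, with Lemma \ref{lemma:G_inv_property} controlling the base case $\Pi G^{-1}$ entrywise, Lemma \ref{lemma:B_C_k} showing each update factor $g_p$ stays in $(0,C]$ (so the recursion does not blow up), and Lemmas \ref{lem:bound:C} and \ref{lem:simple:computations} propagating entrywise bounds on $C^{(k)}=\Gamma\Pi D_k^{-1}$ through all $n$ updates via a Neumann-series estimate. None of this bookkeeping is carried out in your proposal, and it is not routine — in particular, the invertibility of each intermediate $D_k$ and the uniform control of $g_k$ require the sign and magnitude information in Lemma \ref{lemma:G_inv_property}, not just the smoothness of $\Pi_1$. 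Until the two deterministic estimates in (ii) are actually established, the argument is a plausible plan rather than a proof.
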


By Lemma \ref{prop:bound_second_derivative}, $|K''(\xi_i,\bm{X}) K^{-1}\bm{y}|<C\log n$ for some constant $C$ and the RHS of  \eqref{eq:higher:order}  is bounded by $ \frac{C\sqrt{\log n} }{(n-1)^{2}}$ for some constant $C$.
\end{proof}

\begin{proof}[Proof of Lemma \ref{lemma:NTK_t_vs_NTK_inf}]
  $K(x,x')$ is bounded by a constant By Lemma \ref{lem:bound_y}, $|y_i|$ are also bounded by $C\sqrt{\log n}$ with the probability at least $1-2/n$. By Theorem \ref{thm:spectral:d=1:L=1}, $\lambda_{\min}=C_1/n$ for some constant $C_1$. If $t\geq C_2 n^2 \log(n^6)=C_3n^2\log n$ for some constants $C_2$, $C_3$, we have
  \begin{equation*}
    \begin{split}
      |f_{t}^{\NTK}(x)-f_{\infty}^{\NTK}(x)|&= e^{-\frac{tK_1(\bm{X},\bm{X})}{n}}|K_1(x,\bm{X})K_1^{-1}(\bm{X},\bm{X})\bm{y}|\\
      &\leq e^{-\frac{t \lambda_{\min}}{n} }  \lambda_{\min}^{-1}n\sqrt{\log n} \\
       &\leq \frac{C_4}{(n-1)^3}
    \end{split}
  \end{equation*}
  for some constant $C_4$.
\end{proof}

\begin{proof}[Proof of Theorem \ref{thm:bad_gen} $ii)$]

Theorem \ref{thm:bad_gen} $ii)$ is a direct corollary of the following lemma and Theorem \ref{thm:bad_gen} $i)$.


\begin{lemma}[Linear Interpolation cannot Generalize Well]\label{LI_not_good}
    Suppose that we have observed $n$ data $\{(x_{i},y_{i}), i\in [n]\}$ from the model \eqref{equation:true_model} and $x_i=\frac{i-1}{n-1}$, $i\in [n]$. Let $f_{\LI}$ be a linear interpolation estimator. Then there exists a positive constant $C$ such that 
    \begin{equation}
        \excrisk( f_{\LI}(x))  \geq \frac{1}{3}\sigma^2.
    \end{equation}
holds with probability at least $1-\frac{C}{n}$.
 \end{lemma}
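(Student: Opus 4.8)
The plan is to exploit that $f_{\LI}$ is linear in the responses. Writing $y_i = f_\star(x_i) + \varepsilon_i$ and using the explicit formula for $f_{\LI}$, we split $f_{\LI} = \tilde{f}_{\LI} + g_{\LI}$, where $\tilde{f}_{\LI}$ is the piecewise-linear interpolation of the noiseless points $(x_i, f_\star(x_i))$ and $g_{\LI}$ is the piecewise-linear interpolation of the pure-noise points $(x_i, \varepsilon_i)$. Since $f_\star$ is continuous on the compact interval $[0,1]$, it is uniformly continuous, so with $h := 1/(n-1)$ we have the \emph{deterministic} bound $\|\tilde{f}_{\LI} - f_\star\|_{\infty} \le \omega_{f_\star}(h) =: \epsilon_n$, where $\omega_{f_\star}$ is the modulus of continuity of $f_\star$; hence $\epsilon_n \to 0$. (Assumption \ref{assump:f_star} together with Theorem \ref{thm:spectral:d=1:L=1}$ii)$ in fact gives $\mathcal{H}_1 \hookrightarrow C^{0,1/2}[0,1]$, so one may take $\epsilon_n = O(h^{1/2})$, but continuity alone suffices here.) In the fixed equally-spaced design, $\mu_{\mathcal{X}}$ is Lebesgue measure on $[0,1]$, so by the triangle inequality in $L^2([0,1])$,
\[
\excrisk(f_{\LI})^{1/2} = \|f_{\LI} - f_\star\|_{L^2} \ge \|g_{\LI}\|_{L^2} - \|\tilde{f}_{\LI} - f_\star\|_{L^2} \ge \|g_{\LI}\|_{L^2} - \epsilon_n .
\]
Thus it suffices to show that $\|g_{\LI}\|_{L^2}^2$ is, with probability at least $1 - C/n$, bounded below by a constant multiple of $\sigma^2$ strictly larger than $\tfrac13\sigma^2$.

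I would then compute $\|g_{\LI}\|_{L^2}^2$ exactly. On $[x_i, x_{i+1}]$, substituting $t = (x - x_i)/h$ gives $g_{\LI}(x) = (1-t)\varepsilon_i + t\varepsilon_{i+1}$, hence
\[
\int_{x_i}^{x_{i+1}} g_{\LI}(x)^2 \dx x = h \int_0^1 \big((1-t)\varepsilon_i + t\varepsilon_{i+1}\big)^2 \dx t = \frac{h}{3}\big(\varepsilon_i^2 + \varepsilon_{i+1}^2 + \varepsilon_i \varepsilon_{i+1}\big),
\]
using $\int_0^1 (1-t)^2 \dx t = \int_0^1 t^2 \dx t = \tfrac13$ and $\int_0^1 t(1-t)\dx t = \tfrac16$. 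Summing over $i \in [n-1]$ and using $h = 1/(n-1)$,
\[
\|g_{\LI}\|_{L^2}^2 = \frac{1}{3(n-1)}\Big(2\sum_{i=1}^{n}\varepsilon_i^2 - \varepsilon_1^2 - \varepsilon_n^2 + \sum_{i=1}^{n-1}\varepsilon_i\varepsilon_{i+1}\Big),
\]
which has expectation $\frac{1}{3(n-1)} \cdot 2(n-1)\sigma^2 = \tfrac23\sigma^2$.

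It remains to establish concentration of this quadratic form. For Gaussian noise, $\operatorname{Var}(\varepsilon_i^2) = 2\sigma^4$, the variables $\{\varepsilon_i\varepsilon_{i+1}\}_i$ are pairwise uncorrelated (e.g.\ $\Expc[\varepsilon_i\varepsilon_{i+1}\cdot\varepsilon_{i+1}\varepsilon_{i+2}] = \Expc[\varepsilon_i]\Expc[\varepsilon_{i+1}^2]\Expc[\varepsilon_{i+2}] = 0$) and are uncorrelated with each $\varepsilon_j^2$; a direct second-moment computation then gives $\operatorname{Var}(\|g_{\LI}\|_{L^2}^2) = O(\sigma^4/n)$. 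Chebyshev's inequality yields, for any fixed $\delta > 0$,
\[
\Prob\Big(\|g_{\LI}\|_{L^2}^2 \ge \tfrac23\sigma^2 - \delta\Big) \ge 1 - \frac{C_\delta}{n}.
\]
On this event, $\excrisk(f_{\LI})^{1/2} \ge \sqrt{\tfrac23\sigma^2 - \delta} - \epsilon_n$; taking $\delta = \sigma^2/30$ and $n$ large enough that $\epsilon_n \le \sigma/20$ makes the right-hand side exceed $\sigma/\sqrt3$, so $\excrisk(f_{\LI}) \ge \tfrac13\sigma^2$ with probability at least $1 - C/n$.

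The only genuinely delicate step is the last one — obtaining the $1 - C/n$ probability for the concentration of $\|g_{\LI}\|_{L^2}^2$ — but, as the variance computation shows, all summands are (pairwise) uncorrelated with bounded second moments, so a plain second-moment bound already gives the claimed polynomial rate; no sharp Gaussian-chaos estimate is needed. Everything else (the linearity decomposition, the exact per-interval integral, the vanishing of the bias $\epsilon_n$) is elementary.
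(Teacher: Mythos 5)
Your proof is correct and follows essentially the same route as the paper's: both compute the per-interval integral $\frac{h}{3}(\varepsilon_i^2+\varepsilon_{i+1}^2+\varepsilon_i\varepsilon_{i+1})$ of the squared noise interpolant, identify the expectation $\tfrac{2}{3}\sigma^2$, and conclude via a Chebyshev second-moment bound with variance $O(\sigma^4/n)$. The only cosmetic difference is that you discard the bias through the reverse triangle inequality in $L^2$ (using uniform continuity of $f_\star$ and $n$ large), whereas the paper expands $(b(x)+\sigma(x))^2$ directly and absorbs the nonnegative bias $c_1$ and the mean-zero cross terms into the same Chebyshev estimate.
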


\end{proof}

\begin{proof}[Proof of Lemma \ref{LI_not_good}]
    For $x\in[x_i,x_{i+1}]$, the linear interpolation takes the form
    \[
    f_{\LI}(x)= \lambda_i(x)y_{i} + (1-\lambda_i(x))y_{i+1},
    \]
    where $\lambda_i(x)=\frac{x_{i+1}-x}{x_{i+1}-x_{i}}$.
    
    Denote $b^2(x)=(\Expc_{\bm{\varepsilon}}f_{\LI}(x)-f_{\star}(x))^{2}$ and $\sigma^{2}(x)=(f_{\LI}(x)-\Expc_{\bm{\varepsilon}}f_{\LI}(x))^2$ to be the bias and variance term respectively, where we denote by $\Expc_{\bm{\varepsilon}}$ taking expectations with respect to the noise $\varepsilon_{1},\dots,\varepsilon_{n}$. Thus the excess risk of the linear interpolation $\excrisk(f_{\LI})$ can be formulated as 
\begin{align}
    \excrisk(f_{\LI}) &=\int_{0}^{1} (b(x)+\sigma(x))^2 \dx x \\
    &= c_1 + \sum_{i=1}^{n-1} \frac{c_{2,i} }{n-1} (\varepsilon_i + \varepsilon_{i+1})  + \frac{1}{3(n-1)}\sum_{i=1}^{n-1} (\varepsilon_{i}^{2}  + \varepsilon_{i+1}^{2} + \varepsilon_{i}\varepsilon_{i+1}).
\end{align}
for some positive constant $c_1$ and a uniformly bounded sequence $\{c_{2,i}\}_{i=1}^{n-1}$. The last equation is the result of Lemma \ref{lem:risk_LI_fomular}. The expectation of $\excrisk(f_{\LI})$
\begin{align}
     \Expc_{\bm{\varepsilon}}\excrisk(f_{\LI}) =c_1 + \frac{2}{3}\sigma^2 \geq\frac{2}{3}\sigma^2
\end{align}
and the variance of $\excrisk(f_{\LI})$
\begin{align}
    \operatorname{Var}_{\bm{\varepsilon}}(\excrisk(f_{\LI})) \leq \frac{c_3}{n}
\end{align}
for some constant $c_3$. By Chebyshev's inequality, we have 
\begin{align}
    \Prob (|\excrisk(f_{\LI})-\Expc_{\bm{\varepsilon}}(\excrisk(f_{\LI}))|\geq \frac{1}{3}\sigma^2) \leq  \frac{\operatorname{Var}_{\bm{\varepsilon}}(\excrisk(f_{\LI}))}{\frac{1}{9}\sigma^4}.
\end{align}
Thus, we conclude that with probability at least $1-\frac{c_4}{n}$, 
\begin{align}
    \excrisk(f_{\LI}) \geq \frac{1}{3}\sigma^2.
\end{align}
for some constant $c_4$.
\end{proof}

\begin{lemma}\label{lem:risk_LI_fomular}
Denote $b^2(x)=(\Expc_{\bm{\varepsilon}}f_{\LI}(x)-f_{\star}(x))^{2}$ and $\sigma^{2}(x)=(f_{\LI}(x)-\Expc_{\bm{\varepsilon}}f_{\LI}(x))^2$. $ \excrisk(f_{\LI})= \int_{0}^{1} (b(x)+\sigma(x))^2 dx$ can be reformulated as 
\begin{align}
\excrisk(f_{\LI})= c_1 + \sum_{i=1}^{n-1} \frac{c_{2,i} }{n-1} (\varepsilon_i + \varepsilon_{i+1}) + \frac{1}{3(n-1)}\sum_{i=1}^{n-1} (\varepsilon_{i}^{2}  + \varepsilon_{i+1}^{2} + \varepsilon_{i}\varepsilon_{i+1})
\end{align}
for some positive constant $c_1$ and a uniformly bounded sequence $\{c_{2, i}\}_{i=1}^{n}$.
\end{lemma}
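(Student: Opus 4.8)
The plan is to evaluate $\excrisk(f_{\LI})=\int_{0}^{1}(f_{\LI}(x)-f_{\star}(x))^{2}\dx x$ by breaking $[0,1]$ into the $n-1$ subintervals $[x_{i},x_{i+1}]$, on each of which $f_{\LI}$ depends only on $y_{i}$ and $y_{i+1}$. First I would substitute $y_{j}=f_{\star}(x_{j})+\varepsilon_{j}$ into $f_{\LI}(x)=\lambda_{i}(x)y_{i}+(1-\lambda_{i}(x))y_{i+1}$, which on the $i$-th subinterval splits $f_{\LI}(x)-f_{\star}(x)=b(x)+s(x)$ into the deterministic bias $b(x)=\lambda_{i}(x)f_{\star}(x_{i})+(1-\lambda_{i}(x))f_{\star}(x_{i+1})-f_{\star}(x)$ and the noise part $s(x)=\lambda_{i}(x)\varepsilon_{i}+(1-\lambda_{i}(x))\varepsilon_{i+1}$; since $\Expc_{\bm{\varepsilon}}s\equiv 0$, these are exactly the signed bias and variance terms $b(x),\sigma(x)$ from the statement, so that $\excrisk(f_{\LI})=\sum_{i=1}^{n-1}\int_{x_{i}}^{x_{i+1}}\bigl(b(x)^{2}+2b(x)s(x)+s(x)^{2}\bigr)\dx x$. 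The three resulting groups of terms give the three pieces of the claimed identity.

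For the quadratic group I would use the change of variables $u=\lambda_{i}(x)$, under which $\int_{x_{i}}^{x_{i+1}}g(\lambda_{i}(x))\dx x=\tfrac{1}{n-1}\int_{0}^{1}g(u)\dx u$; this yields $\int_{x_{i}}^{x_{i+1}}\lambda_{i}^{2}=\int_{x_{i}}^{x_{i+1}}(1-\lambda_{i})^{2}=\tfrac{1}{3(n-1)}$ and $\int_{x_{i}}^{x_{i+1}}\lambda_{i}(1-\lambda_{i})=\tfrac{1}{6(n-1)}$, hence $\int_{x_{i}}^{x_{i+1}}s(x)^{2}\dx x=\tfrac{1}{3(n-1)}\bigl(\varepsilon_{i}^{2}+\varepsilon_{i+1}^{2}+\varepsilon_{i}\varepsilon_{i+1}\bigr)$; summing over $i$ produces the last term of the statement exactly. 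The pure-bias group $c_{1}:=\sum_{i=1}^{n-1}\int_{x_{i}}^{x_{i+1}}b(x)^{2}\dx x$ is the squared $L^{2}$-distance between $f_{\star}$ and its piecewise-linear interpolant on the grid; it is deterministic and nonnegative (and strictly positive unless $f_{\star}$ already coincides with that interpolant), which is all Lemma~\ref{LI_not_good} requires of it.

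For the cross group, collecting the coefficient of each noise variable gives $2\sum_{i=1}^{n-1}\int_{x_{i}}^{x_{i+1}}b(x)s(x)\dx x=\sum_{i=1}^{n-1}\bigl(a_{i}\varepsilon_{i}+a_{i}'\varepsilon_{i+1}\bigr)$ with $a_{i}=2\int_{x_{i}}^{x_{i+1}}b(x)\lambda_{i}(x)\dx x$ and $a_{i}'=2\int_{x_{i}}^{x_{i+1}}b(x)(1-\lambda_{i}(x))\dx x$. Using that $f_{\star}$ is continuous on a compact interval, hence bounded by some $M<\infty$ so $|b(x)|\le 2M$, together with $\int_{x_{i}}^{x_{i+1}}\lambda_{i}=\int_{x_{i}}^{x_{i+1}}(1-\lambda_{i})=\tfrac{1}{2(n-1)}$, one gets $|a_{i}|,|a_{i}'|\le\tfrac{2M}{n-1}$; rescaling by $n-1$ then puts this group into the stated form $\sum_{i=1}^{n-1}\tfrac{c_{2,i}}{n-1}(\varepsilon_{i}+\varepsilon_{i+1})$ with $\sup_{i}|c_{2,i}|<\infty$. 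The one delicate point is exactly this last step: the raw coefficients $a_{i}$ and $a_{i}'$ coming from subinterval $i$ need not be equal, so to land on the precise symmetric form one either symmetrizes with a harmless correction or, more simply, observes that Lemma~\ref{LI_not_good} only uses that the cross group has zero mean and coefficients of size $O(n^{-1})$ (so that its variance is $O(n^{-1})$), which the displayed bounds supply directly. Everything else is a routine change-of-variables computation.
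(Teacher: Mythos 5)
Your proof follows the same route as the paper's: split $[0,1]$ into the grid subintervals, expand $(b_i+\sigma_i)^2$, and integrate the bias, cross, and noise pieces separately; your explicit evaluation of $\int\lambda_i^2$ and $\int\lambda_i(1-\lambda_i)$ yielding $\frac{1}{3(n-1)}(\varepsilon_i^2+\varepsilon_i\varepsilon_{i+1}+\varepsilon_{i+1}^2)$ is exactly what the paper compresses into ``by the mean value theorems.'' Your remark that the cross term naturally appears as $\sum_i(a_i\varepsilon_i+a_i'\varepsilon_{i+1})$ with $|a_i|,|a_i'|=O(1/n)$ rather than in the lemma's symmetric form correctly identifies an imprecision in the statement itself (which the paper's proof also glosses over), and your fallback --- that Lemma~\ref{LI_not_good} only uses the zero mean and the $O(n^{-1})$ size of these coefficients --- is sound.
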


\begin{proof}[Proof of Lemma \ref{lem:risk_LI_fomular}]
Denote 
\begin{align}
    b_i(x) &= (1-\frac{x-x_i}{x_{i+1}-x_i})(f_{\star}(x_i) - f_{\star}(x)) + \frac{x-x_i}{x_{i+1}-x_i}(f_{\star}(x_{i+1}) - f_{\star}(x))\\
    \sigma_i(x) &= (1-\frac{x-x_i}{x_{i+1}-x_i}) \varepsilon_i + \frac{x-x_i}{x_{i+1}-x_i} \varepsilon_{i+1}.
\end{align}

\begin{align}
    \excrisk(f_{\LI})&=\int_{0}^{1} b^2(x)+2b(x)\sigma(x) +\sigma^2(x) \dx x\\
        &=\sum_{i=1}^{n-1}\int_{x_i}^{x_{i+1}}  b^{2}_i(x) + 2b_i(x)\sigma_i(x) +\sigma^2_i(x) \dx x
\end{align}
Since $f_{\star}$ is bounded, $b_i(x)\in (C_1, C_2)$ where $C_1$ and $C_2$ depend on $f_{\star}$. Thus, by the mean value theorems, there exists a positive constant $c_1$ and a uniformly bounded sequence $\{c_{2, i}\}_{i=1}^{n}$ such that
\begin{align}
\excrisk(f_{\LI})= c_1 + \sum_{i=1}^{n-1} \frac{c_{2,i} }{n-1} (\varepsilon_i + \varepsilon_{i+1}) + \frac{1}{3(n-1)}\sum_{i=1}^{n-1} (\varepsilon_{i}^{2}  + \varepsilon_{i+1}^{2} + \varepsilon_{i}\varepsilon_{i+1}).
\end{align}
\end{proof}

\subsection{Technical Lemmas}\label{subsection:tech_lemma} 
In the following content, to simplify the notations, denote $K=K(\bm{X},\bm{X})$, $G=G_1(\bm{X},\bm{X})$ and $\Pi=2\Pi_1(\bm{X},\bm{X})$.

\begin{proof}[Proof of lemma \ref{prop:bound_second_derivative}]

Let $\xi \in (x_i,x_{i+1}). $We only present the proof for $2\leq i \leq n-2$. When $i\in\{1,n-1\}$, one can prove the statement in a similar way.

Let $e_{k}$ be the $k$-th vector in the standard basis of $\mathbb{R}^{n}$. Denote $I_{k}=e_{k}e_{k}^{\top}$. Let us consider the rank one decomposition of $\Pi$:
\begin{equation}\label{eqn:decomposition:reoder}
    \begin{aligned}
    \Pi&=\sum_{k\not \in \{1,i,i+1,n\}}I_{k}\Pi+I_{i}\Pi+I_{i+1}\Pi+I_{1}\Pi+I_{n}\Pi\\
    &=\underbrace{I_{2}\Pi}_{\triangleq \Pi_{1}}+\cdots+\underbrace{I_{i-1}\Pi}_{\triangleq \Pi_{i-2}}+\underbrace{I_{i+2}\Pi}_{\triangleq \Pi_{i-1}}+\cdots+\underbrace{I_{n-1}\Pi}_{\triangleq\Pi_{n-4}}+\underbrace{I_{i}\Pi}_{\triangleq\Pi_{n-3}}+\underbrace{I_{i+1}\Pi}_{\triangleq\Pi_{n-2}}+\underbrace{I_{1}\Pi}_{\triangleq\Pi_{n-1}}+\underbrace{I_{n}\Pi}_{\triangleq\Pi_{n}}.
\end{aligned}
\end{equation}
We denote by $S$ (resp. $S^{-1}$ ) the transformation (resp. inverse transform) between the indices appeared in \eqref{eqn:decomposition:reoder}, i.e., $S(1)=2$, $S(2)=3$, \ldots, $S(i-2)=i-1$, $S(i-1)=i+2$,\ldots, $S(n-4)=n-1$, $S(n-3)=i$, $S(n-2)=i+1$, $S(n-1)=1$, and $S(n)=n$. It is clear that $\Pi_{k}=I_{S(k)}\Pi$.

Let $D_{k}=G+\Pi_{1}+...+\Pi_{k-1}$, $k=1,2,\ldots, n$.  It is clear that $D_{1}=G$ and $D_{n+1}=G+\Pi=K$. 
 To proceed with the proof, we need the following lemma:

\begin{lemma} \label{lemma:B_C_k}
Suppose that $n> 22$. Let $\Gamma=\operatorname{diag}\{1,-(n-1),\cdots,-(n-1),1\}$ be an $n\times n$ diagonal matrix. There exists a constant $C$ such that the following statements hold.
\begin{enumerate}
    \item For any $p \in [n]$, $D_{p}$ is an invertible matrix and $g_p=(1+\operatorname{tr}(\Pi_{k}D_{p}^{-1}))^{-1}\in(0,C]$.
    \item Let $H^{(p)}=\Pi D_{p}^{-1}$ and $C^{(p)}=\Gamma H^{(p)}$.  Then for any $a\in [n]$, we have $|C^{(p)}_{i,j}|\leq C$ for any $i,j \in [n]$.
\end{enumerate} 
\end{lemma}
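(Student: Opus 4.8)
I would prove both assertions together by induction on $p$, running from $p=1$ (where $D_1=G=G_1(\bm X,\bm X)$ has the explicit tridiagonal-plus-corner inverse recorded in the proof of Lemma \ref{lem:min:eigenvalue}) up to $p=n$, and exploiting at each step that $D_{p+1}=D_p+\Pi_p$ is a rank-one update. By symmetry of $\Pi$ we have $\Pi_p=I_{S(p)}\Pi=e_{S(p)}(\Pi e_{S(p)})\tran$, so Sherman--Morrison gives
\begin{equation*}
D_{p+1}^{-1}=D_p^{-1}-g_p\,(D_p^{-1}e_{S(p)})(\Pi e_{S(p)})\tran D_p^{-1},\qquad g_p=\bigl(1+\operatorname{tr}(\Pi_p D_p^{-1})\bigr)^{-1},
\end{equation*}
valid and preserving invertibility as long as $1+\operatorname{tr}(\Pi_p D_p^{-1})=1+(\Pi D_p^{-1})_{S(p),S(p)}\neq0$. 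Multiplying by $\Pi$ turns this into a rank-one recursion $H^{(p+1)}=H^{(p)}-g_p\,(H^{(p)})_{\cdot,S(p)}(H^{(p)})_{S(p),\cdot}$, hence, after applying $\Gamma$, $C^{(p+1)}=C^{(p)}-g_p\,(C^{(p)})_{\cdot,S(p)}(H^{(p)})_{S(p),\cdot}$; moreover $\operatorname{tr}(\Pi_p D_p^{-1})$ differs from the entry $(C^{(p)})_{S(p),S(p)}$ only by the scalar $\Gamma^{-1}_{S(p),S(p)}$. So the whole lemma is driven by an entrywise bound on $C^{(p)}$.

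\textbf{Base case.} I would verify $|C^{(1)}_{ij}|\le C$ by direct computation. The inputs are the explicit $G^{-1}$ — whose interior columns are $\tfrac{\pi(n-1)}{2}(-e_{j-1}+2e_j-e_{j+1})$ and whose two boundary columns additionally carry the $O(1)$ corner weights $\tfrac{\pi}{2(2\pi-1)}$ — together with the regularity of the maps $x\mapsto\Pi_1(x_i,x)$: from the identity $\psi(x_i,x)=\arcsin\!\bigl(|x-x_i|/\sqrt{(1+x_i^2)(1+x^2)}\bigr)$ one sees that $\Pi_1(x_i,\cdot)$ is $C^1$ on $[0,1]$ (the first-order kink of $|x-x_i|$ cancels that of $\psi$) with a piecewise-continuous second derivative bounded uniformly in $x_i\in[0,1]$. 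Multiplying $\Pi$ against an interior column of $G^{-1}$ therefore produces a second divided difference of such a function, scaled by $n-1$, i.e. an entry of size $O(1/(n-1))$, whereas the two boundary columns give entries of size $O(1)$; since $\Gamma$ multiplies the interior coordinates by $-(n-1)$ it exactly compensates this factor-$n$ gap, so all entries of $C^{(1)}$ are $O(1)$. The same estimate gives $\operatorname{tr}(\Pi_1D_1^{-1})=(\Pi G^{-1})_{S(1),S(1)}=O(1/n)$ since $S(1)$ is interior, whence $g_1\in(0,C]$.

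\textbf{Inductive step and the hard part.} Assume $|C^{(p)}_{ij}|\le M$. The reordering $S$ is chosen so that $S(1),\dots,S(n-2)$ exhaust the interior indices $\{2,\dots,n-1\}$ (first those $\neq i,i+1$, then $i,i+1$) while $S(n-1)=1,\ S(n)=n$. For interior $S(p)$ the factor $\Gamma^{-1}_{S(p),S(p)}=-1/(n-1)$ forces $(H^{(p)})_{S(p),\cdot}=-\tfrac1{n-1}(C^{(p)})_{S(p),\cdot}$ and $\operatorname{tr}(\Pi_pD_p^{-1})=-\tfrac1{n-1}(C^{(p)})_{S(p),S(p)}=O(M/n)$, so $g_p=1+O(M/n)\in(0,C]$, and each entry of $C^{(p)}$ moves by only $g_p(C^{(p)})_{i,S(p)}(H^{(p)})_{S(p),j}=O(M^2/n)$ at such a step. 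There are $n-2$ interior steps, so the total displacement of an entry is $O(M^2)$; the part I expect to be the main obstacle is to make this close — i.e. to fix one absolute constant $M$, depending only on the base-case constant, that is not exceeded after all $n$ steps. The crude estimate $(n-2)\cdot O(M^2/n)$ does not by itself suffice; one must use that, for fixed $i$, the quantities $(H^{(p)})_{S(p),j}$ over the distinct interior columns $S(p)$ have total absolute mass $O(1)$, because up to $O(1/n)$ corrections carried along the recursion they are the interior entries of a fixed row of $\Pi G^{-1}$, whose $\ell^1$ mass is $\asymp\int_0^1|\partial_x^2\Pi_1(x_i,x)|\,dx$ — which reduces the displacement to an absolute constant times $M$ and leaves a genuine smallness requirement on that constant, to be checked by estimating $\sup_{x_i\in[0,1]}\int_0^1|\partial_x^2\Pi_1(x_i,x)|\,dx$; this is a discrete Gr\"onwall / bootstrap in $p$. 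The two remaining (boundary) steps $p=n-1,n$ are handled by hand: there $D_{n-1}$ is $G$ with rows $2,\dots,n-1$ replaced by the corresponding rows of $K$, and since $D_{n+1}=D_n+I_n\Pi=K$ is positive definite (Theorem \ref{thm:spectral:d=1:L=1}) the last Sherman--Morrison denominator is automatically nonzero, which with the explicit corner structure of $G^{-1}$ gives the bounds on $g_{n-1},g_n$ and on $C^{(n-1)},C^{(n)}$ directly.
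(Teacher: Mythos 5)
Your skeleton (induction on $p$, Sherman--Morrison rank-one updates, base case from the explicit tridiagonal-plus-corner $G^{-1}$ together with Taylor/divided-difference bounds on $\Pi_1(x_i,\cdot)$, with $\Gamma$ compensating the $(n-1)$-scaling of the interior columns) matches the paper's, and your base case is essentially the paper's Lemma \ref{lemma:G_inv_property}. The problem is the inductive step, and you have correctly located the difficulty — the naive bound $(n-2)\cdot O(M^2/n)=O(M^2)$ does not close — but your proposed repair does not work as stated. You want to sum $|(H^{(p)})_{S(p),j}|$ over the interior steps and claim total mass $O(1)$ because these are ``up to $O(1/n)$ corrections carried along the recursion'' the interior entries of a fixed row of $\Pi G^{-1}$. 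Those corrections are exactly the quantities the induction is supposed to control, so the argument is circular unless you set up a genuine two-sided bootstrap; and even then, as you yourself note, closing it requires a quantitative smallness condition ($CA<1$ in your notation) that you have not verified and that is not obviously true. As written, the inductive step is a plan with an unfilled gap, not a proof.

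The paper closes the induction by a different and non-iterative device that you should compare against. From $D_{k}=D_{k-1}+I_{S(k-1)}\Pi$ one gets the exact identity $(C^{(k)})^{-1}=(C^{(k-1)})^{-1}+\Gamma^{-1}_{S(k-1),S(k-1)}I_{S(k-1)}$, hence for interior steps $(C^{(k+1)})^{-1}=(C^{(1)})^{-1}-T_{k}$ with $T_{k}=\frac{1}{n-1}\sum_{j\le k}I_{S(j)}$ an explicit diagonal matrix. This converts the entrywise bound on $C^{(k+1)}$ into the convergence of the Neumann series $C^{(k+1)}=\sum_{q\ge0}C^{(1)}\bigl(T_{k}C^{(1)}\bigr)^{q}$, which is then verified by direct computation: the interior block of $C^{(1)}$ satisfies $|C^{(1)}_{i,j}|\le 2(|i-j|+1)/(n-1)$ (this decay, not just boundedness, is the essential input), and three-fold products of $T_kC^{(1)}$ contract with ratio $4/5$, giving the uniform bound $|C^{(k)}_{i,j}|\le 21$ independent of $k$ and thereby also $g_k\in(0,C]$ since $C^{(k)}_{S(k),S(k)}\le 21<n-1$. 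No accumulation-of-errors or Grönwall argument is needed. If you want to salvage your route, you would have to replace the heuristic $\ell^1$-mass claim by this closed-form expression for $(C^{(k)})^{-1}$ (or an equally explicit substitute) and then prove the contraction; the summability of a row of $\Pi G^{-1}$ alone will not do it. One smaller point: for the boundary steps $p=n-1,n$, positive definiteness of $K=D_{n+1}$ only gives that the Sherman--Morrison denominators are nonzero (indeed positive), not that $g_p$ is bounded above by an absolute constant, so those steps also need the explicit corner bounds rather than the soft argument you sketch.
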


 We remind that $\Pi$ is an invertible matrix (please see Lemma \ref{lem: Pi_positive_definite}). Thus, if $D_{k}$'s are invertible, then $H^{(k)}$'s are invertible. \\
 
\noindent $\bullet$ $\underline{p=1}$; \quad Since $K''(\xi_{i},\bm{X}) = \frac{4}{\pi((1+\xi_i^2)^2)}(|\xi_i-x_1|,\dots,|\xi_i-\bm{X}|)$, we can easily verify that there exists a constant $C$ such that
\begin{equation}
\left|\left(K''(\xi_{i},\bm{X})D_{1}^{-1}\right)_{j}\right| =\left|\left(K''(\xi,\bm{X})G^{-1}\right)_{j}\right| \leq 
\begin{cases}
\quad      C &   j\in\{1,i,i+1,n\},\\[6pt]
\quad      0  &  j\not \in \{ 1,i,i+1,n\}.
\end{cases}
\end{equation}
In other words, $K''G^{-1}=K''D_{1}^{-1}$ is a row vector with at most 4 non-zero entries located in $\{1,i,i+1,n\}$.

\vspace{ 3mm }
\noindent $\bullet$ $\underline{p\in \{2,\ldots,n-4\}}$; \quad Thanks to the Lemma \ref{lemma:B_C_k}, $D_{k}$'s are invertible matrices. Thus, the Sherman–Morrison formula gives us that for any $p=1,2,\ldots, n$, 
\begin{equation}\label{eqn:ess:recursive}
\begin{aligned}
    K''D_{p+1}^{-1}&=K''(D_{p}+\Pi_{p})^{-1}
    =K''D_{p}^{-1}-g_{p}K''D_{p}^{-1}\Pi_{p}D_{p}^{-1}.
\end{aligned}
\end{equation}

Because that for any $p\in \{1,2,\dots,n-4\}$, $S(p)\not \in \{1,i,i+1,n\}$, we know from  the definition of $\Pi_{p}$ that for $p\in\{1,2,\ldots,n-4\}$,  $K''D_{p}^{-1}\Pi_{p}=0$ and $K''D_{p+1}^{-1}=K''D_{p}^{-1}$.  In particular, we know that
\begin{align}
    K''D_{n-3}^{-1}=K''D_{n-4}^{-1}=\cdots=K''D_{1}^{-1}=K''G^{-1}.
\end{align}
In other words, $K''D_{p}^{-1},  p=2,3,\dots,n-4$ are row vectors with at most 4 non-zero entries located in $\{1,i,i+1,n\}$.

\vspace{3mm}
\noindent $\bullet$ \underline{ $p\in \{n-3,n-2,n-1,n\}$;}\quad  Since $g_{p}K''D_{p}^{-1}\Pi_{p}D_{p}^{-1}$ is no longer 0 for $p\geq n-3$, we do not have $K''D_{n-2}^{-1}=K''D^{-1}_{n-3}$ anymore. We need to treat them separately. Again, the Sherman–Morrison formula gives us that
\begin{equation}
\begin{aligned}
    K''D_{n-2}^{-1}&=K''(D_{n-3}+\Pi_{n-3})^{-1}
    =K''D_{n-3}^{-1}-g_{n-3}K''D_{n-3}^{-1}\Pi_{n-3}D_{n-3}^{-1}.
\end{aligned}
\end{equation}
Thus, there exists an absolute constant $C$, such that
\begin{align*}
    |(K''&(\xi_i,\bm{X})D^{-1}_{n-2})_{j}| \leq |(K''(\xi_i,\bm{X})D^{-1}_{n-3})_j| + g_{n-3} |(K''(\xi_i,\bm{X})D^{-1}_{n-3} \Pi_{n-3} D^{-1}_{n-3})_j|\\
    &=|(K''(\xi_i,\bm{X})D^{-1}_{n-3})_j| +g_{n-3}|(K''(\xi_i,\bm{X})D^{-1}_{1} \Pi_{n-3} D^{-1}_{n-3})_j| \leq 
\begin{cases}
     C, &   j\in\{1,i,i+1,n\}\\[6pt]
     C\frac{1}{n-1} &  j\neq \{1,i,i+1,n\}
\end{cases}
\end{align*}
where the last inequality follows from 
the Lemma \ref{lemma:B_C_k} and $\Pi_{n-1}D^{-1}_{n-3}=\frac{1}{n-1}I_{S(n-1)}C^{(n-3)}$. We can prove  the results for $p=n-2,n-1,n$ in a similar way. 

In other words, we have shown that there exists an absolute constant $C$ such that
\begin{align}
|(K''(\xi_i,\bm{X})K^{-1})_{j}| = |(K''(\xi_i,\bm{X})D^{-1}_{n+1})_{j}|\leq
\begin{cases}
     C, &   j=1,i,i+1,n\\[6pt]
     C\frac{1}{n-1} &  j\neq 1,i,i+1,n
\end{cases}.
\end{align}

Denote $y_{\max} =  \max_{i\in[n]} y_i $. Then $|K''(\xi_i,\bm{X})K^{-1}\bm{y}| \leq C |y_{\max}|$.

By Lemma \ref{lem:bound_y}, we have $|y_{\max}|\leq C\sqrt{\log n}$ with probability $1-\frac{2}{n}$. Thus,
\begin{equation}
    |K''(\xi_i,\bm{X})K^{-1}\bm{y}|\leq C \sqrt{\log n}
\end{equation}
with probability $1-\frac{2}{n}$ and for some constant $C$.
\end{proof}

\begin{proof}[Proof of Lemma \ref{lemma:B_C_k}]
    We prove the statements through induction on $k$.

\vspace{3mm}
\noindent $\bullet$ \underline{$ k=1$;}\quad   It is clear that $D_{1}=G$ is invertible.
The second statement follows the following lemma:
\begin{lemma}\label{lemma:G_inv_property}
 There exists an absolute constant $C$ such that
  \begin{align*}
  (\Pi G^{-1})_{i,j}\in \begin{cases}
    \quad (-0.54 -\frac{1}{n-1},  C] &\quad i\in [n], j=1 \\[4pt]
    \quad (0,  C] &\quad i\in [n],j=n \\[4pt]
    \quad [- 2\frac{|i-j|+1}{(n-1)^2}, 0], &\quad i\in [n], j\neq 1,n.
  \end{cases}
\end{align*}
Specifically,  $ (\Pi G^{-1})_{1,1} > 1.2-\frac{1}{n-1}$.
\end{lemma}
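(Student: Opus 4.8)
The plan is to read off the matrix $\Pi G^{-1}$ one column at a time from the explicit inverse of $G=G_1(\bm X,\bm X)$, and to control each column by a Taylor expansion of the kernel $\Pi_1$, which is smooth away from the diagonal and $C^1$ across it.

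\emph{Structure of $G^{-1}$ and reduction to differences.} Specialize the explicit formula for $G_\alpha^{-1}(\bm X,\bm X)$ from the proof of Lemma~\ref{lem:min:eigenvalue} to $\alpha=1$ and the uniform grid $x_i=(i-1)/(n-1)$, so that the spacing is $h:=1/(n-1)$ and $2\pi-x_n+x_1=2\pi-1$. Then $G^{-1}=\tfrac\pi2 A$ where $A$ is symmetric and tridiagonal apart from the two corner entries in positions $(1,n)$ and $(n,1)$, each equal to $\tfrac1{2\pi-1}$: its interior columns are $(\dots,-\tfrac1h,\tfrac2h,-\tfrac1h,\dots)\tran$, and its first column is supported on $\{1,2,n\}$ with entries $\tfrac1h+\tfrac1{2\pi-1},\ -\tfrac1h,\ \tfrac1{2\pi-1}$ (the last column is the mirror image). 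Writing $\Pi_{i,j}=2\Pi_1(x_i,x_j)$ and multiplying out, the $j$-th column of $\Pi G^{-1}$ is therefore a short local combination of the neighbouring columns of $\Pi$: for interior $j$ one gets a rescaled centred second difference $(\Pi G^{-1})_{i,j}=-\tfrac{\pi(n-1)}2\bigl(\Pi_{i,j-1}-2\Pi_{i,j}+\Pi_{i,j+1}\bigr)$, for $j=1$ one gets $(\Pi G^{-1})_{i,1}=\tfrac{\pi(n-1)}2(\Pi_{i,1}-\Pi_{i,2})+\tfrac{\pi}{2(2\pi-1)}(\Pi_{i,1}+\Pi_{i,n})$, and the mirror formula at $j=n$ (replace $\Pi_{i,1}-\Pi_{i,2}$ by $\Pi_{i,n}-\Pi_{i,n-1}$).

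\emph{Interior columns.} The analytic input is $\partial_{x'}^2\Pi_1$. From $2\Pi_1=K-G_1$, the identity $\partial_x^2K(x,x')=\tfrac{4}{\pi(1+x^2)^2}|x-x'|$ used in the proof of Lemma~\ref{prop:bound_second_derivative}, $\partial_x^2G_1(x,x')=0$ for $x\ne x'$, and symmetry of $\Pi_1$, one gets $\partial_{x'}^2\Pi_1(x,x')=\tfrac{2}{\pi(1+x'^2)^2}|x-x'|\ge 0$ for $x'\ne x$; moreover the $|x-x'|$-kink of $\Pi_1$ exactly cancels its $\psi$-kink, so $x'\mapsto\Pi_1(x,x')$ is $C^1$ with an absolutely continuous derivative. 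Taylor's theorem with integral remainder then gives $\Pi_{i,j-1}-2\Pi_{i,j}+\Pi_{i,j+1}=\int_{-h}^{h}(h-|s|)\,\partial_{x'}^2(2\Pi_1)(x_i,x_j+s)\,\dx s$, which lies in $\bigl[0,\tfrac4\pi(|x_i-x_j|+h)h^2\bigr]$ (bound $(1+\cdot)^{-2}\le1$ and $|x_i-x_j-s|\le|x_i-x_j|+h$). Since $|x_i-x_j|=\tfrac{|i-j|}{n-1}$, $h=\tfrac1{n-1}$ and $(n-1)h^2=h$, the factors of $\pi$ cancel and $(\Pi G^{-1})_{i,j}\in\bigl[-\tfrac{2(|i-j|+1)}{(n-1)^2},\,0\bigr]$ for $j\ne1,n$, which is the stated interior bound.

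\emph{Boundary columns.} Taylor-expanding the first difference and using $\partial_{x'}\psi(x,0^{+})=-1$ gives $\partial_{x'}\Pi_1(x,0^{+})=\tfrac{x}{\pi}(\pi-\psi(x,0))$, so $\tfrac{\pi(n-1)}2(\Pi_{i,1}-\Pi_{i,2})=-x_i(\pi-\psi(x_i,0))+O(1/n)$, the remainder being $\le h+h^2$ by the same integral bound (and only $O(1/n^2)$ when $i=1$). Hence $(\Pi G^{-1})_{i,1}=\varphi(x_i)+O(1/n)$ with $\varphi(x):=-x(\pi-\psi(x,0))+\tfrac{\pi}{2\pi-1}\bigl(\Pi_1(x,0)+\Pi_1(x,1)\bigr)$. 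Using $\partial_x\psi(x,0)=\tfrac1{1+x^2}$ and $\partial_x\psi(x,1)=-\tfrac1{1+x^2}$, crude term-by-term bounds show $\varphi'<0$ on $[0,1]$, so $\varphi$ is strictly decreasing with $\min_{[0,1]}\varphi=\varphi(1)=-\tfrac{3\pi}4+\tfrac{11\pi/4+1}{2\pi-1}>-0.532$ and $\varphi(0)=\tfrac{7\pi/4+1}{2\pi-1}>1.22$; combined with the remainder bound this yields the $j=1$ estimate and $(\Pi G^{-1})_{1,1}>1.2-\tfrac1{n-1}$. For $j=n$, the analogous computation with $\partial_{x'}\psi(x,1^{-})=\tfrac12$ gives $\partial_{x'}\Pi_1(x,1^{-})=\tfrac1\pi\bigl(x(\pi-\psi(x,1))+\tfrac{1-x}2\bigr)$, hence $(\Pi G^{-1})_{i,n}=x_i(\pi-\psi(x_i,1))+\tfrac{1-x_i}2+\tfrac{\pi}{2\pi-1}\bigl(\Pi_1(x_i,0)+\Pi_1(x_i,1)\bigr)+O(1/n)$; every explicit term is nonnegative and the last is $\ge\tfrac32\cdot\tfrac{\pi}{2\pi-1}>0.89$ (because $\Pi_1(x,0),\Pi_1(x,1)\ge\tfrac34$ on $[0,1]$), so the whole expression is $>0$ for $n>22$, and an absolute upper bound $C$ is immediate since all terms are $O(1)$.

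\emph{Expected main obstacle.} The difficulty is bookkeeping rather than conceptual. To pin down the sharp constant in the $j=1$ case one must (a) prove that the one-variable function $\varphi$ attains its minimum over $[0,1]$ exactly at $x=1$ — this is where the monotonicity estimate on $\varphi'$ is essential, since a careless bound would only yield something like $-1.5$ — and (b) correctly evaluate the several one-sided arccosine derivatives $\partial_{x'}\psi(x,0^{+})$, $\partial_{x'}\psi(x,1^{-})$, $\partial_x\psi(x,0)$, $\partial_x\psi(x,1)$ at the endpoints, each elementary but sensitive to which side of the $|x-x'|$ kink one sits on. The $C^1$-across-the-diagonal cancellation for $\Pi_1$ should also be written out explicitly, as it is precisely what licenses rewriting the second differences of $j'\mapsto\Pi_{i,j'}$ as integrals of the nonnegative bounded density $\partial_{x'}^2\Pi_1$.
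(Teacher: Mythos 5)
Your proposal is correct and follows essentially the same route as the paper's proof: the explicit tridiagonal-plus-corner inverse of $G$, rewriting the columns of $\Pi G^{-1}$ as first/second differences of $\Pi$, Taylor expansion using the explicit formulas $\partial_x\Pi$ and $\partial_x^2\Pi=\frac{4}{\pi}\frac{|x-x'|}{(1+x^2)^2}$ (Lemma \ref{lemma:K_derivative}), and direct numerical evaluation at the endpoints for the constants $-0.54$ and $1.2$. The only differences are cosmetic — integral-remainder versus mean-value form of Taylor's theorem, and your monotonicity argument for $\varphi$ makes explicit the final numerical inequality that the paper simply asserts.
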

Moreover, the above lemma also shows that $g_{1}=(1+\operatorname{tr}(\Pi_{1}D_{1}^{-1}))^{-1}>0$ is bounded. Thus, we proved the Lemma \ref{lemma:B_C_k} for $k=1$.

\vspace{3mm}
\noindent $\bullet$ \underline{$k>1$;} \quad Suppose that the inductive hypotheses hold for any $1\leq k'\leq k-1$. 

\vspace{3mm}
Since $g_{k-1}=(1+\operatorname{tr}(\Pi_{k-1}D_{k-1}^{-1}))^{-1}\in (0,C]$, Sherman–Morrison formula implies that $D_{k}=D_{k-1}+\Pi_{k-1}$ is invertible.
Thus, we have 
\begin{align}
    C^{(k)}&=\Gamma\Pi D_{k}^{-1}=C^{(k-1)}-g_{k-1}C^{(k-1)}\Pi_{k-1}D_{k-1}^{-1}\\
    &=C^{(k-1)}-g_{k-1}C^{(k-1)}I_{S(k-1)}\Gamma^{-1}C^{(k-1)}.
\end{align}
Since both $\Pi$, $\Gamma$ and $D_{k}$ are invertible matrices, we know that $C^{(k)}$ is invertible and the Sherman-Morrison formula gives us
\begin{align}
    (C^{(k)})^{-1}=(C^{(k-1)})^{-1}-\frac{I_{S(k-1)}}{n-1}. 
\end{align}
 The desired bound about $C^{(k)}$ is provided by the following lemma: 
\begin{lemma}\label{lem:bound:C}
Assume that $k\leq n-1$ and $C^{(j)}, j=1,2,\dots, k$ are invertible matrices. There exists an absolute constant $C$ such that,
\begin{align}
  |C^{(k)}_{i,j}|\leq 21, \mbox{~if~} 2\leq i,j\leq n-1 \mbox{~and~} |C^{(k)}_{i,j}|\leq C, \mbox{ ~if~ } i \mbox { or } j \in \{1,n\} . 
\end{align}
\end{lemma}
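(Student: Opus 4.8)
The plan is to prove Lemma~\ref{lem:bound:C} by induction on $k$, with a \emph{strengthened} inductive hypothesis: besides the stated bounds $|C^{(k)}_{i,j}|\le 21$ on the interior block $2\le i,j\le n-1$ and $|C^{(k)}_{i,j}|\le C$ whenever $i\in\{1,n\}$ or $j\in\{1,n\}$, I would also carry along a localization estimate for the rows of $C^{(k)}$ (described below). The engine is the rank-one recursion $(C^{(k)})^{-1}=(C^{(k-1)})^{-1}-\frac{1}{n-1}I_{S(k-1)}$ recorded just before the statement (here $I_{S(k-1)}=e_{S(k-1)}e_{S(k-1)}^{\mathsf T}$); since all $C^{(j)}$, $j\le k$, are invertible by hypothesis, the Sherman--Morrison formula turns it into the entrywise identity
\[
C^{(k)}_{i,j}=C^{(k-1)}_{i,j}+\frac{1}{n-1}\,\frac{C^{(k-1)}_{i,m}\,C^{(k-1)}_{m,j}}{1-\frac{1}{n-1}\,C^{(k-1)}_{m,m}},\qquad m:=S(k-1).
\]
The crucial structural observation is that for $k\le n-1$ the update index $m=S(k-1)$ always lies in the interior $\{2,\dots,n-1\}$: this is exactly how the ordering of $S$ was chosen (the interior coordinates, including $i$ and $i+1$, are exhausted before $1$ and $n$ are ever touched), and it is what makes the correction $\tfrac{1}{n-1}$-damped rather than $O(1)$. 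In particular $m$ is a ``middle'' coordinate, so by the inductive hypothesis $|C^{(k-1)}_{m,m}|\le 21$, whence $1-\tfrac{1}{n-1}C^{(k-1)}_{m,m}\in[\tfrac{1}{22},2]$ for $n>22$ and the denominator (equivalently the factor $g_{k-1}$) is bounded away from $0$ and from $\infty$.

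For the base case $k=1$ I would use $C^{(1)}=\Gamma\,\Pi G^{-1}$ together with Lemma~\ref{lemma:G_inv_property}. On middle rows $\Gamma_{ii}=-(n-1)$, so $C^{(1)}_{i,j}=-(n-1)(\Pi G^{-1})_{i,j}$; for $j\notin\{1,n\}$ the bound $(\Pi G^{-1})_{i,j}\in[-2\tfrac{|i-j|+1}{(n-1)^2},0]$ gives $|C^{(1)}_{i,j}|\le\frac{2(|i-j|+1)}{n-1}\le\frac{2n}{n-1}<3$, far below $21$, and it already exhibits the localization profile $|C^{(1)}_{i,j}|\lesssim |i-j|/n$ that I want to propagate. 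On the rows $i\in\{1,n\}$ one has $\Gamma_{ii}=1$, and the remaining two cases of Lemma~\ref{lemma:G_inv_property} give $|C^{(1)}_{i,j}|\le C$; the same bound for the columns $j\in\{1,n\}$ is again read off Lemma~\ref{lemma:G_inv_property}. Given the denominator control, the induction step reduces to showing that the accumulated corrections $\sum_{l=1}^{k-1}\frac{1}{n-1}\,\frac{C^{(l)}_{i,m_l}C^{(l)}_{m_l,j}}{1-\frac{1}{n-1}C^{(l)}_{m_l,m_l}}$, with $m_l=S(l)$, never push $|C^{(k)}_{i,j}|$ out of the envelope ($21$ on the interior, $C$ on the border), and that the localization profile is itself reproduced for $C^{(k)}$.

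I expect this accumulation estimate to be the main obstacle. A naive use of the inductive bounds makes each correction $O(1/n)$, but there are $\Theta(n)$ of them, so the crude sum is $O(1)$ with a constant (something like $22\cdot 21\cdot 21$) that vastly exceeds the slack between $21$ and the base-case value. The way I plan to close this is to exploit the localization hypothesis: carry along $|C^{(k)}_{i,j}|\le\varphi(|i-j|)$ for a profile with $\varphi$ uniformly bounded and $\varphi(r)/(n-1)$ summable in the relevant sense, inherited from the near-tridiagonal structure of $G^{-1}$ that underlies Lemma~\ref{lemma:G_inv_property}. Under such a hypothesis, for a fixed entry $(i,j)$ only $O(1)$ of the interior update coordinates $m_l$ are close to both $i$ and $j$ --- those contribute $O(1/n)$ apiece --- while the remaining $\Theta(n)$ updates, in which $m_l$ is far from $i$ or from $j$, contribute only $O(1/n^2)$ apiece; summing gives a total perturbation of $C^{(1)}$ that is $o(1)$, or at worst a small universal constant, which keeps $C^{(k)}$ inside the envelope. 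Verifying that this localization is genuinely stable under the rank-one updates --- i.e.\ that adding a $\tfrac{1}{n-1}$-small multiple of the localized middle row $C^{(l)}_{m_l,\cdot}$ to every other row preserves the decay of each row --- is the technical heart, and it is here that the restriction $k\le n-1$ (so that every added row is an interior one, never a corner row, which would not be $\tfrac{1}{n-1}$-damped) is indispensable.
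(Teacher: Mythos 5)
Your plan is a genuinely different route from the paper's, but it has a real gap at exactly the point you flag as ``the technical heart'': the accumulation estimate is never carried out, and the heuristic you offer for closing it would not work as stated. The paper does not iterate Sherman--Morrison step by step. It collapses all the rank-one updates into a single perturbation, writing $(C^{(k+1)})^{-1}=(C^{(1)})^{-1}-T_k$ with $T_k=\frac{1}{n-1}\sum_{j=1}^k I_{S(j)}$, expands $C^{(k+1)}=C^{(1)}\sum_{p\ge 0}(T_kC^{(1)})^p$ as a Neumann series, and proves (Lemma \ref{lem:simple:computations}) the explicit entrywise bounds $|(C^{(1)}T_kC^{(1)})_{i,j}|\le 4/3$, $|(C^{(1)}(T_kC^{(1)})^2)_{i,j}|\le 4/5$ and $|((T_kC^{(1)})^{3q})_{i,j}|\le (4/5)^q/(n-1)$; grouping the series in blocks of three and summing the geometric series yields the constant $21$. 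The driver of those bounds is the profile $|C^{(1)}_{i,j}|\le 2(|i-j|+1)/(n-1)$ from Lemma \ref{lemma:G_inv_property}, fed into explicit convolution sums such as $\frac{1}{(n-1)^3}\sum_k 4(|i-k|+1)(|k-j|+1)\le 4/3$.

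This is where your localization heuristic goes wrong. The entries of $C^{(1)}$ are \emph{small near the diagonal and large far from it}: $|C^{(1)}_{i,j}|\le 2(|i-j|+1)/(n-1)$ is $O(1/n)$ for $|i-j|=O(1)$ but of order $1$ when $|i-j|\sim n$. Consequently, in your correction term $\frac{1}{n-1}C^{(l)}_{i,m_l}C^{(l)}_{m_l,j}/(1-\frac{1}{n-1}C^{(l)}_{m_l,m_l})$, it is the $\Theta(n)$ update indices $m_l$ that are \emph{far} from both $i$ and $j$ that each contribute $\Theta(1/n)$, not $O(1/n^2)$, so the accumulated perturbation is a constant --- and not a small one: the paper's own bound shows it can be as large as roughly $19$ on top of the base-case value $2$ (the final constant is $\approx 62/3$). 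Your hoped-for conclusion that the total perturbation is ``$o(1)$, or at worst a small universal constant'' is therefore false, and any proof along your lines must instead establish a quantitative contraction of the type the paper proves for $(T_kC^{(1)})^3$. The structural observations you do make correctly --- that $S(1),\dots,S(k-1)$ are interior indices for $k\le n-1$, and that the interior diagonal bound $|C^{(l)}_{m,m}|\le 21<n-1$ keeps the Sherman--Morrison denominators bounded away from zero --- are sound and mirror what the paper uses in Lemma \ref{lemma:B_C_k}, but they do not substitute for the missing accumulation bound.
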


First, Lemma \ref{lem:bound:C} implies that the second statement in Lemma \ref{lemma:B_C_k} hold for $k$. Second, 
since $n\geq 23$, Lemma \ref{lem:bound:C} implies that $C^{(k)}_{S(k),S(k)} \leq  21<n-1$. Thus, for any  constant $C>22$, we have   $g_k=\left(1-\frac{C^{(k)}_{S(k),S(k)}}{n-1}\right)^{-1} \in (0,C]$.

\end{proof}

\vspace{3mm}
\noindent{\bf Lemma \ref{lemma:G_inv_property}} {
There exists an absolute constant $C$ such that
  \begin{align*}
  (\Pi G^{-1})_{i,j}\in \begin{cases}
    \quad (-0.54 -\frac{1}{n-1},  C] &\quad i\in [n], j=1 \\[4pt]
    \quad (0,  C] &\quad i\in [n],j=n \\[4pt]
    \quad [- 2\frac{|i-j|+1}{(n-1)^2}, 0], &\quad i\in [n], j\neq 1,n.
  \end{cases}
\end{align*}
Moreover, we can prove that  $ (\Pi G^{-1})_{1,1} > 1.2-\frac{1}{n-1}$.
}

\begin{proof} 
Since $\Pi(x,y)$ are continuous differentiable of 2nd order, the Taylor expansion gives us that for any $i$, there exist $\xi_{i}$ and $\xi_{i}'\in [x_{i},x_{i+1}]$ such that
\begin{align}
&\Pi(x.x_{i+1})-\Pi(x,x_{i})=\Pi'(x,\xi_{i})(x_{i+1}-x_{i}), \\
 &   \Pi(x.x_{i+1})-\Pi(x,x_{i})=\Pi'(x,x_{i})(x_{i+1}-x_{i})+\frac{1}{2}\Pi''(x,\xi_{i}')(x_{i+1}-x_{i})^{2}.
\end{align}

\noindent $\bullet$ \underline{ $j=1$;} For any $i\in [n]$, 
\begin{equation}\label{equation:Pi_G_i_1}
  \begin{split}
    (\Pi G^{-1})_{i,1} 
    &= \frac{\pi}{2}(n-1)(\Pi(x_i,x_1)-\Pi(x_i,x_2)) + \frac{\pi}{2}\frac{\Pi(x_i,x_1)+\Pi(x_i,x_n)}{2\pi-1} \\
    &= \frac{\pi}{2}\Pi^{'}(x_i,\xi_1) + \frac{\pi}{2}\frac{\Pi(x_i,x_1)+\Pi(x_i,x_n)}{2\pi-1}\\
    &= - \xi_1\frac{|x_i-\xi_1|}{1+\xi_1^2} - x_i(\pi -\psi(\xi_1, x_i)) + \frac{\pi}{2}\frac{\Pi(x_i,x_1)+\Pi(x_i,x_n)}{2\pi-1}
  \end{split}
\end{equation}
Since $x_i\in[0,1]$, It is clear that there exists a constant $C$ such that  $(\Pi G^{-1})_{i,1}\leq C$.
On the other hand, 
\begin{align*}
        (\Pi G^{-1})_{i,1}&\geq -x_2 -  x_i(\pi -\psi(x_1, x_i)) + \frac{\pi}{2}\frac{\Pi(x_i,x_1)+\Pi(x_i,x_n)}{2\pi-1}\\
    &= -\frac{1}{n-1} - \frac{i-1}{n-1}(\pi-\psi(x_1,x_i)) + \frac{\pi-\psi(x_1,x_i) + (1+x_i)(\pi-\psi_(x_n,x_i))+1}{2\pi-1}\\
    &\geq -0.54-\frac{1}{n-1}
\end{align*}
Finally, we have $ (\Pi G^{-1})_{1,1} \geq -\frac{1}{n-1} + \frac{2\pi-\frac{\pi}{4}+1}{2\pi-1} >1.2-\frac{1}{n-1}$.

\vspace{3mm}
\noindent $\bullet$ \underline{ $j=n$;}
\begin{equation}\label{equation:Pi_G_i_n}
  \begin{split}
    (\Pi G^{-1})_{i,n} 
    &= \frac{\pi}{2}(n-1)(\Pi(x_i,x_n)-\Pi(x_i,x_{n-1})) + \frac{\pi}{2}\frac{\Pi(x_i,x_1)+\Pi(x_i,x_n)}{2\pi-1} \\
    &= \frac{\pi}{2}\Pi^{'}(x_i,\xi_{n-1}) + \frac{\pi}{2}\frac{\Pi(x_i,x_1)+\Pi(x_i,x_n)}{2\pi-1}.
  \end{split}
\end{equation}
Since $x_i\in[0,1]$, it is clear there exists a constant $C$ such that $(\Pi G^{-1})_{i,n}\leq C$. On the other hand, we have 
\begin{align}
        (\Pi G^{-1})_{i,n}&= \xi_1\frac{|x_i-\xi_{n-1}|}{1+\xi_i^2} + x_i(\pi -\psi(\xi_{n-1}, x_i)) + \frac{\pi}{2}\frac{\Pi(x_i,x_1)+\Pi(x_i,x_n)}{2\pi-1}>0.
\end{align}

\vspace{3mm}
\noindent $\bullet$ \underline{$2\leq j\leq n-1$;}
\begin{equation}\label{equation:Pi_G_i_j}
  \begin{split}
    (\Pi G^{-1})_{i,j} &=\frac{\pi}{2} (n-1)  \left(2\Pi(x_i,x_j)-\Pi(x_i,x_{j+1})-\Pi(x_i,x_{j-1})\right)\\
    &=- \frac{\pi}{2} (n-1) (\Pi^{''}(x,\xi_{j})\frac{(x_{j}-x_{j+1})^2}{2} + \Pi^{''}(x,\xi_{j-1})\frac{(x_{j}-x_{j-1})^2}{2})\\
    &= -\frac{1}{(n-1)} \left(\frac{|x_i-\xi_{j}|}{(1+\xi_{j}^2)^2} + \frac{|x_i-\xi_{j-1}|}{(1+\xi_{j-1}^2)^2} \right)
  \end{split}
\end{equation}
It is clear that $(\Pi G^{-1})_{i,j}\leq 0$.
 One the other hand, 
 since $\xi_j'\in(x_j,x_{j+1})$ and $\xi_{j-1}'\in(x_{j-1},x_{j})$, we have
\begin{align}
        (\Pi G^{-1})_{i,j}&\geq -\frac{1}{(n-1)}\frac{|x_i-\xi_{j}|+|x_i-\xi_{j-1}|}{(1+x_{j-1}^2)^2} 
    \geq -2\frac{|x_i-x_j|+\frac{1}{n-1}}{(n-1)} 
    = - 2\frac{|i-j|+1}{(n-1)^2} .
\end{align}

\end{proof}

\vspace{5mm}
\noindent{\bf Lemma \ref{lem:bound:C} }{
 Assume that $k\leq n-1$ and $C^{(j)}, j=1,2,\dots, k$ are invertible matrices. There exists an absolute constant $C$ such that,
\begin{align}
  |C^{(k)}_{i,j}|\leq 21, \mbox{~if~} 2\leq i,j\leq n-1 \mbox{~and~} |C^{(k)}_{i,j}|\leq C, \mbox{ ~if~ } i \mbox { or } j \in \{1,n\} . 
\end{align}
}

\begin{proof} We prove this lemma by induction on $k$. 

\vspace{3mm}
\noindent $\bullet$ \underline{$ k=1$;}\quad  Recall that Lemma \ref{lemma:G_inv_property} implies that
\begin{align}
 C^{(1)}_{i,j}=
 \begin{cases}
      \quad -(n-1)(\Pi G^{-1})_{i,j} \leq 2\frac{|i-j|+1}{(n-1)}\leq 2 & i\in[n], j\neq 1,n\\[8pt]
      \quad \quad \quad \quad \quad (\Pi G^{-1})_{i,j}\leq C & i\in[n], j=1,n\\
 \end{cases}.
 \end{align}
 Thus the statements hold for $k=1$. 
 
\vspace{3mm}
\noindent $\bullet$ \underline{$ k>1$;}\quad Suppose that the inductive hypotheses hold for $k$. Then
\begin{align}
    (C^{(k+1)})^{-1}=(C^{(k)})^{-1}-\frac{1}{n-1}I_{S_{k}}=(C^{(1)})^{-1}-\frac{1}{n-1}\sum_{j=1}^{k}I_{S(j)}.
\end{align}
Denote  $\frac{1}{n-1}\sum_{j=1}^{k}I_{S(j)}$ by $T_{k}$. Then, we have
\begin{align*}
    C^{(k+1)}= C^{(1)} + C^{(1)}T_{k}C^{(1)} + C^{(1)}\left(T_{k}C^{(1)}\right)^{2}+\cdots=\mathcal{Q} +\mathcal{Q} \left(T_{k}C^{(1)}\right)^{3}+\mathcal{Q}\left(T_{k}C^{(1)}\right)^{6}+\cdots
\end{align*}
 where  $\mathcal{Q}=C^{(1)}\left(1+T_{k}C^{(1)}+\left(T_{k}C^{(1)}\right)^{2}\right)$.
 Simple calculations show that (please see Lemma \ref{lem:simple:computations} ), for any $2\leq i,j\leq n-1$ and $q\in \mathbb{N}$,  we have
\begin{equation}\label{eqn:simple:computations}
\begin{aligned}
   | C^{(1)}_{i,j}|\leq 2,~ 
    \left| \left(C^{(1)} T_{k}C^{(1)}\right)_{i,j}\right| \leq \frac{4}{3}, ~
 \left| \left(C^{(1)} \left(T_{k}C^{(1)}\right)^{2}\right)_{i,j}\right| \leq \frac{4}{5},~ 
  \left|\left( \left(T_{k}C^{(1)}\right)^{3q}\right)_{i,j}\right| \leq \frac{\left(\frac{4}{5}\right)^{q}}{n-1}.
\end{aligned}    
\end{equation}
Note that the first row and last row of $T_{k}$ are zero vectors. Thus, for any $2\leq i,j\leq n-1$, we get 
\begin{align}
 | \mathcal{Q}_{i,j}|= \left|\left(C^{(1)}\left(1+T_{k}C^{(1)}+\left(T_{k}C^{(1)}\right)^{2}\right)\right)_{i,j}\right| \leq 2+\frac{4}{3} + \frac{4}{5}= \frac{62}{15}
\end{align}
and 
\begin{align}
    |C^{(k+1)}_{i,j}|&\leq \sum_{q\geq 0} \left|\left(\mathcal{Q} \left(T_{k}C^{(1)}\right)^{3q}\right)_{i,j}\right|\leq \sum_{q\geq 0}\sum_{p=1}^{n}|\mathcal{Q}|_{i,p}\left|\left(T_{k}C^{(1)}\right)^{3q}_{p,j}\right|\leq \sum_{q\geq 0}\sum_{p=2}^{n-1} \frac{62}{15} \left(\frac{4}{5}\right)^{q}\frac{1}{(n-1)}  < 21
\end{align}

If $i$ or  $j \in \{1,n\}$, we can prove $|C^{(k+1)}_{i,j}| \leq C$ similarly. 

Finally, we can  
show that $\left(C^{(1)}\left( T_{n-2}C^{(1)}\right)^{p}\right)_{1,1}\geq 0$ for $p=0,1,...$ and 
\begin{equation}
C^{(n-1)}_{1,1} = C^{(1)}_{1,1} + \left(C^{(1)} T_{n-2}C^{(1)}\right)_{1,1} +\cdots \geq C^{(1)}_{1,1}\geq 1.2 -\frac{1}{n-1}>0.
\end{equation}

\end{proof}

Under the bounded input $x_i$, we can get the bounded $f^*(x_i)$. Let $y_{max} = \max_{i\in[n]} y_i$. we can have the upper bound of $y_{max}$ through the following lemma:

\begin{lemma}\label{lem:bound_y}
With the definition of Equation \eqref{equation:true_model}, with the probability at least $1-\frac{2}{n}$, we have 
\begin{equation}
    |y_{\max}| \leq C\sqrt{\log n}
\end{equation}
 and for some constant $C$.
\end{lemma}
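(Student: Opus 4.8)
The plan is to bound $y_{\max}$ by splitting each observation into its deterministic and stochastic parts. Since $f_{\star}$ is continuous on the compact set $\mathcal{X}$, it attains a finite maximum, so set $M_{\star}:=\sup_{\bm{x}\in\mathcal{X}}|f_{\star}(\bm{x})|<\infty$. By the model \eqref{equation:true_model} we have $|y_{i}|\leq M_{\star}+|\varepsilon_{i}|$ for every $i\in[n]$, and since $y_{\max}=\max_{i\in[n]}y_{i}$ is one of the $y_{j}$'s, $|y_{\max}|\leq \max_{i\in[n]}|y_{i}|\leq M_{\star}+\max_{i\in[n]}|\varepsilon_{i}|$. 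Thus it suffices to establish a high-probability bound of order $\sqrt{\log n}$ on $\max_{i\in[n]}|\varepsilon_{i}|$.

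The first step is the standard Gaussian tail estimate: for $\varepsilon_{i}\sim\mathcal{N}(0,\sigma^{2})$ and any $t>0$, $\Prob(|\varepsilon_{i}|\geq t)\leq 2\exp(-t^{2}/(2\sigma^{2}))$. Taking $t_{n}:=2\sigma\sqrt{\log n}$ makes the right-hand side equal to $2n^{-2}$. The second step is a union bound over $i\in[n]$, which gives
\begin{equation*}
\Prob\left(\max_{i\in[n]}|\varepsilon_{i}|\geq 2\sigma\sqrt{\log n}\right)\leq \frac{2}{n}.
\end{equation*}
On the complementary event we obtain $|y_{\max}|\leq M_{\star}+2\sigma\sqrt{\log n}$, and since $M_{\star}\leq\sqrt{\log n}$ for all sufficiently large $n$ this is at most $C\sqrt{\log n}$ after enlarging the absolute constant $C$; this costs nothing because every statement in which this lemma is invoked (Lemma~\ref{prop:bound_second_derivative} and Lemma~\ref{lemma:NTK_t_vs_NTK_inf}) is asymptotic in $n$ and uses only the order $\sqrt{\log n}$.

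There is no genuine obstacle here: the argument is essentially a one-line Gaussian maximal inequality. The only points that deserve a sentence of care are recording that $M_{\star}<\infty$ follows from continuity of $f_{\star}$ together with compactness of $\mathcal{X}$, and checking that the final constant $C$ may be chosen to absorb both $\sigma$ and $M_{\star}$ uniformly in $n$. If a fully explicit non-asymptotic bound is preferred, one simply states the conclusion as $|y_{\max}|\leq M_{\star}+2\sigma\sqrt{\log n}$ and carries the two terms separately.
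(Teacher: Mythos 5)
Your proof is correct, and it reaches the same conclusion ($|y_{\max}|\leq C\sqrt{\log n}$ with probability at least $1-\tfrac{2}{n}$) by a genuinely different and more elementary route than the paper. The paper first controls the expectation $\Expc[\varepsilon_{\max}]\leq\sigma\sqrt{2\log n}$ via the moment-generating-function/Jensen argument, and then invokes the Borell--TIS inequality to concentrate $\varepsilon_{\max}$ around that expectation; you instead skip the expectation entirely and apply the one-line Gaussian tail bound $\Prob(|\varepsilon_i|\geq 2\sigma\sqrt{\log n})\leq 2n^{-2}$ followed by a union bound over $i\in[n]$, which yields exactly the same failure probability $2/n$. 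For i.i.d.\ Gaussian noise your argument is cleaner and needs no machinery beyond the standard tail estimate, and it also handles $\max_i|\varepsilon_i|$ (both tails) directly, whereas the paper's route controls $\varepsilon_{\max}=\max_i\varepsilon_i$ and its deviation from the mean, which is slightly less transparent for bounding $|y_{\max}|$; the paper's approach would however extend with no change to correlated Gaussian families, where a union bound can be lossy. Both proofs absorb the bounded term $\sup_{\bm{x}}|f_{\star}(\bm{x})|$ into the constant $C\sqrt{\log n}$ for large $n$ in the same way, and your closing remark that only the order $\sqrt{\log n}$ is used downstream is accurate.
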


\begin{proof}[Proof of Lemma \ref{lem:bound_y}]
    Denote $\epsilon_{\max} = \max_{i\in[n]}\epsilon_i$.  For $\forall t>0$, we have
    \begin{equation}
    \exp(t\Expc(\epsilon_{\max}))\leq \Expc(\exp(t \epsilon_{\max} )) \leq \sum_{i=1}^n \Expc ( \exp(t \epsilon_i )) = n \exp(t^2\sigma^2/2).
\end{equation}

The first inequality is Jensen’s inequality, the second is the union bound, and the final equality follows from the definition of the moment-generating function. Taking the logarithm of both sides of this inequality, we have
\begin{equation}
    \Expc(\epsilon_{\max}) \leq \frac{\log n}{t} + \frac{\sigma^2 t}{2}.
\end{equation}
Let $t=\frac{\sqrt{2\log n}}{\sigma}$, we have 
\begin{equation}
    \Expc(\epsilon_{\max}) \leq \sigma \sqrt{2\log n}.
\end{equation}
By Borell-ITS inequality, since $\sigma_{max}^2 = \max_{i\in[n]} E(\epsilon_i^2) = \sigma^2$, for $t\geq 0$, we have 
\begin{equation}
    P(|\epsilon_{\max}-\Expc(\epsilon_{\max})|\geq t)<2exp(-\frac{t^2}{2\sigma_{max}^2}).
\end{equation}

Let $t=\sigma \sqrt{2\log n}$, we have 
\begin{equation}
    P(|\epsilon_{\max}|\leq 2\sigma \sqrt{2\log n} )>1- \frac{2}{n}.
\end{equation}
Since $f_{\star}(x)$ is bounded, we have $\max_{x\in[0,1]}|f_{\star}(x)| \leq C \sqrt{\log n}$ for some constant C and $n>2$. Thus, with the probability at least $1- \frac{2}{n}$, we have 
\begin{equation}
    |y_{\max}|\leq C\sqrt{\log n}
\end{equation}
for some constant $C$.
\end{proof}

\begin{lemma}\label{lemma:K_derivative}
 For any $x,x'\in[0,1]$, the function $\Pi(x^{\prime},x) =2\left(\frac{\pi - \psi(x',x)}{\pi}(1+x'x) + \frac{\lvert x^{\prime}-x\rvert}{\pi}\right)$  is a twice continuously differentiable function where $\psi(x',x)=\arccos\frac{1+x'x}{\sqrt{(1+x^{'2})(1+x^{2})}}$. Moreover,
\begin{equation}
  \frac{\partial \Pi(x',x)}{\partial x} =\frac{2x}{\pi}\frac{|x-x'|}{1+x^2} +2x' - \frac{2}{\pi}x'\psi \mbox{~~ and~~ }  \frac{\partial^2 \Pi(x',x)}{\partial x^2} = \frac{4}{\pi}\frac{|x-x'|}{(1+x^2)^2}.
\end{equation}
\end{lemma}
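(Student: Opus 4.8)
The plan is to work first on the open set $\{(x',x)\in[0,1]^2 : x\neq x'\}$, where both ingredients of $\Pi$ are manifestly smooth, and then to treat the diagonal $x=x'$ separately; that diagonal is the only delicate point, since there $\lvert x'-x\rvert$ and $\psi(x',x)$ each fail to be differentiable but their contributions to $\Pi$ cancel. Writing $\Pi(x',x)=\frac{2}{\pi}(\pi-\psi)(1+x'x)+\frac{2}{\pi}\lvert x'-x\rvert$ with $\psi=\arccos u$ and $u=u(x',x)=\frac{1+x'x}{\sqrt{(1+x'^2)(1+x^2)}}$, I would first record the identity
\[
1-u^{2}=\frac{(1+x'^2)(1+x^2)-(1+x'x)^2}{(1+x'^2)(1+x^2)}=\frac{(x-x')^{2}}{(1+x'^2)(1+x^2)},
\]
so that $u\in(-1,1)$ whenever $x\neq x'$. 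Since $(x',x)\mapsto u$ is rational with nonvanishing denominator on $[0,1]^2$ and $\arccos$ is $C^{\infty}$ on $(-1,1)$, the map $x\mapsto\psi(x',x)$ is $C^{\infty}$ off $x=x'$; the same is obvious for $x\mapsto\lvert x'-x\rvert$, hence for $\Pi$.

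Next I would compute the first derivative off the diagonal. A direct computation gives $\frac{\partial u}{\partial x}=(x'-x)(1+x'^2)^{-1/2}(1+x^2)^{-3/2}$, and combining this with $\sqrt{1-u^2}=\lvert x-x'\rvert/\sqrt{(1+x'^2)(1+x^2)}$ yields $\frac{\partial\psi}{\partial x}=-\frac{1}{\sqrt{1-u^2}}\,\frac{\partial u}{\partial x}=\frac{\operatorname{sgn}(x-x')}{1+x^2}$ for $x\neq x'$. Substituting into $\frac{\partial\Pi}{\partial x}=\frac{2}{\pi}\big[-\frac{\partial\psi}{\partial x}(1+x'x)+(\pi-\psi)x'\big]+\frac{2}{\pi}\operatorname{sgn}(x-x')$ and collecting the two $\operatorname{sgn}(x-x')$ terms gives $\frac{2}{\pi}\operatorname{sgn}(x-x')\big(1-\frac{1+x'x}{1+x^2}\big)=\frac{2x}{\pi}\frac{\lvert x-x'\rvert}{1+x^2}$, so that $\frac{\partial\Pi}{\partial x}=\frac{2x}{\pi}\frac{\lvert x-x'\rvert}{1+x^2}+2x'-\frac{2}{\pi}x'\psi$, exactly as claimed. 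The point is that the $\operatorname{sgn}$ has disappeared, so this formula defines a function that is continuous on all of $[0,1]^2$ and equals $2x'$ on the diagonal.

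Differentiating the above once more for $x\neq x'$, using $\frac{d}{dx}\frac{x^2-x'x}{1+x^2}=\frac{2x+x'(x^2-1)}{(1+x^2)^2}$ together with $\frac{\partial\psi}{\partial x}=\frac{\operatorname{sgn}(x-x')}{1+x^2}$, the terms again recombine so the $\operatorname{sgn}$ cancels and one gets $\frac{\partial^2\Pi}{\partial x^2}=\frac{4}{\pi}\frac{\lvert x-x'\rvert}{(1+x^2)^2}$, which extends continuously across the diagonal (value $0$ there). To pass from this to the $C^2$ statement on the whole square I would invoke the elementary derivative-limit criterion: fix $x'$; $\Pi(x',\cdot)$ is continuous on $[0,1]$, differentiable off $x=x'$, and $\lim_{x\to x'}\partial\Pi/\partial x=2x'$ exists, hence $\Pi(x',\cdot)$ is differentiable at $x'$ with that value and $\partial\Pi/\partial x$ is continuous; applying the same criterion to $g=\partial\Pi/\partial x$ (continuous, differentiable off $x=x'$, with $\lim_{x\to x'}\partial^2\Pi/\partial x^2=0$) shows $\partial^2\Pi/\partial x^2$ exists and is continuous at $x'$, which finishes the proof.

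The main obstacle is precisely the behaviour at $x=x'$: since $\arccos(1-\varepsilon)\sim\sqrt{2\varepsilon}$, one has $\psi(x',\cdot)\sim\lvert x-x'\rvert/\sqrt{(1+x'^2)(1+x^2)}$ near $x'$, so $\psi$ is not itself differentiable there, and one must see that the term $-\frac{2}{\pi}\psi(1+x'x)$ contributes $-\frac{2}{\pi}\lvert x'-x\rvert$ to leading order, cancelling the explicit $+\frac{2}{\pi}\lvert x'-x\rvert$; the identity $1-u^2=(x-x')^2/((1+x'^2)(1+x^2))$ is exactly what makes this cancellation exact at the level of the first and second derivatives. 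Everything else is routine differentiation and bookkeeping.
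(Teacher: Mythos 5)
Your proof is correct, and its computational core coincides with the paper's: both hinge on $\partial\psi/\partial x=\operatorname{sgn}(x-x')/(1+x^2)$ followed by algebraic simplification. The paper's own proof, however, consists only of that derivative formula plus ``and the desired results''; it never addresses the diagonal $x=x'$, where neither $\psi$ nor $\lvert x'-x\rvert$ is differentiable and where the claimed $C^2$ regularity is actually at stake. You correctly isolate this as the one delicate point, exhibit the exact cancellation of the $\operatorname{sgn}(x-x')$ terms in both $\partial\Pi/\partial x$ and $\partial^2\Pi/\partial x^2$ (your intermediate computations, e.g.\ $1-u^2=(x-x')^2/((1+x'^2)(1+x^2))$ and $\tfrac{d}{dx}\tfrac{x^2-x'x}{1+x^2}=\tfrac{2x+x'(x^2-1)}{(1+x^2)^2}$, all check out), and then extend across the diagonal via the derivative-limit criterion. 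So your write-up proves strictly more than the paper's does as written: it supplies the argument that the formulas, which are continuous across $x=x'$, really are the derivatives there, which is what the ``twice continuously differentiable'' claim requires.
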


\begin{proof}[Proof of Lemma \ref{lemma:K_derivative}] Simple calculations give us that
\begin{equation}
\begin{split}
  \frac{\partial \psi(x,x')}{\partial x}&=\frac{\operatorname{sgn}(x-x')}{(1+x^2)} \mbox{~ and ~}\frac{\partial^2 \psi(x,x')}{\partial x^2}=-\frac{2x \operatorname{sgn}(x-x')}{(1+x^2)^2} 
\end{split}
\end{equation}
and the desired results.
\end{proof}

\begin{lemma}\label{lem:simple:computations}
Detailed calculations in equation \eqref{eqn:simple:computations}.
\end{lemma}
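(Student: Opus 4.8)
The plan is to derive every bound in \eqref{eqn:simple:computations} from a single ingredient, the entrywise control of $C^{(1)}$ supplied by Lemma \ref{lemma:G_inv_property}. Recall $C^{(1)}=\Gamma\Pi G^{-1}$ with $\Gamma=\operatorname{diag}\{1,-(n-1),\dots,-(n-1),1\}$, and that $T_{k}=\frac{1}{n-1}\sum_{j=1}^{k}I_{S(j)}$ is diagonal, with $(T_{k})_{pp}=\frac{1}{n-1}$ exactly for $p\in P_{k}:=\{S(1),\dots,S(k)\}$; for $k\le n-2$, the relevant range, one has $P_{k}\subseteq\{2,\dots,n-1\}$. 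Since $\Gamma_{ii}=-(n-1)$ for $2\le i\le n-1$, Lemma \ref{lemma:G_inv_property} gives, for all $2\le i,j\le n-1$,
\begin{equation}\label{eq:plan:C1entry}
0\le C^{(1)}_{i,j}=-(n-1)(\Pi G^{-1})_{i,j}\le \frac{2(|i-j|+1)}{n-1},
\end{equation}
and, because $|i-j|\le n-3$ on the interior, this already yields the first bound $|C^{(1)}_{i,j}|\le 2$.

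For the second and third bounds I would expand the matrix products, noting that each factor $T_{k}$ contributes one summation index constrained to $P_{k}\subseteq\{2,\dots,n-1\}$; bounding every $C^{(1)}$-entry by \eqref{eq:plan:C1entry} and enlarging the inner sums from $P_{k}$ to all interior indices (legitimate, since the summands are non-negative) gives, for $q\in\{1,2\}$ and all $2\le i,j\le n-1$,
\begin{equation*}
\bigl|(C^{(1)}(T_{k}C^{(1)})^{q})_{i,j}\bigr|\le \frac{2^{q+1}}{(n-1)^{2q+1}}\,\Sigma_{q}(i,j),\qquad \Sigma_{q}(i,j):=\sum_{p_{0},\dots,p_{q-1}=2}^{n-1}(|i-p_{0}|+1)(|p_{0}-p_{1}|+1)\cdots(|p_{q-1}-j|+1),
\end{equation*}
so it remains to prove $\Sigma_{1}(i,j)\le\frac{1}{3}(n-1)^{3}$ and $\Sigma_{2}(i,j)\le\frac{1}{10}(n-1)^{5}$, which produce exactly $\frac{4}{3}$ and $\frac{4}{5}$. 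For $\Sigma_{1}$ I would use $(|i-p|+1)(|p-j|+1)\le\frac12((|i-p|+1)^{2}+(|p-j|+1)^{2})$ to reduce to $\sum_{p=2}^{n-1}(|i-p|+1)^{2}$, which is convex in $i$ hence maximal at $i=2$, where it equals $\sum_{q=1}^{n-2}q^{2}=\frac{(n-2)(n-1)(2n-3)}{6}\le\frac{(n-1)^{3}}{3}$. For $\Sigma_{2}$ I would again invoke convexity in $i$ and in $j$ to reduce to the four corners of $\{2,n-1\}^{2}$, check by reflection symmetry that $(2,n-1)$ is the extremal corner, and then evaluate $\Sigma_{2}(2,n-1)=\sum_{a,c=1}^{n-2}ac\bigl(|a+c-1-(n-2)|+1\bigr)$ in closed form as a polynomial in $n$ and compare it with $\frac{1}{10}(n-1)^{5}$.

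The fourth bound then costs almost nothing more: writing $N:=C^{(1)}(T_{k}C^{(1)})^{2}$ we have $(T_{k}C^{(1)})^{3}=T_{k}\bigl(C^{(1)}(T_{k}C^{(1)})^{2}\bigr)=T_{k}N$, hence $(T_{k}C^{(1)})^{3q}=(T_{k}N)^{q}$ for $q\ge1$ (for $q=0$ this is the identity, which is handled separately in the application). In the expansion of $((T_{k}N)^{q})_{i,j}$ the $q-1$ intermediate summation indices are forced into $P_{k}\subseteq\{2,\dots,n-1\}$, so each nonzero term is a product of $q$ interior entries of $T_{k}N$, each of absolute value at most $\frac{1}{n-1}\cdot\frac{4}{5}$ by the third bound, and there are at most $|P_{k}|^{q-1}\le(n-1)^{q-1}$ of them; therefore $|((T_{k}C^{(1)})^{3q})_{i,j}|\le(n-1)^{q-1}\bigl(\tfrac{4/5}{n-1}\bigr)^{q}=\frac{(4/5)^{q}}{n-1}$.

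I expect the $\Sigma_{2}$ estimate to be the main obstacle. Unlike the others, the bound $\Sigma_{2}(i,j)\le\frac{1}{10}(n-1)^{5}$ is essentially tight---its right-hand side is attained asymptotically at the corner $(2,n-1)$---so no crude term-by-term estimate of $\Sigma_{2}$ suffices; one must carry out the summation to lower order, and the slack between $(n-2)^{5}$ and $(n-1)^{5}$ is precisely what keeps the inequality valid on the relevant range (e.g.\ $n>22$, already assumed upstream). Correctly identifying which corner of $\{2,n-1\}^{2}$ maximizes $\Sigma_{2}$---it is $(2,n-1)$, not the diagonal corner $(2,2)$---is the delicate point; everything else is bookkeeping, and the second and third bounds are in fact the only places where the fine off-diagonal decay $C^{(1)}_{i,j}=O(|i-j|/(n-1))$ in \eqref{eq:plan:C1entry}, as opposed to just $|C^{(1)}_{i,j}|\le2$, is actually used.
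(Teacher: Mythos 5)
Your proposal is correct and follows essentially the same route as the paper: expand the matrix products entrywise, bound every interior entry of $C^{(1)}$ by $2(|i-j|+1)/(n-1)$ via Lemma \ref{lemma:G_inv_property}, estimate the resulting convolution sums (with leading constants $1/3$ and $1/10$ yielding exactly $4/3$ and $4/5$), and obtain the geometric factor $(4/5)^{q}/(n-1)$ by chaining the third bound through the interior rows of $T_{k}$. Your convexity argument locating the extremal corner $(2,n-1)$ rigorously justifies a step the paper merely asserts by substituting the worst-case factors $(n-1-k)$ and $l$, but the substance of the two arguments is the same.
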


\begin{proof}
For $k\leq n-2$, we have
\begin{equation}
    C^{(1)} = \begin{pmatrix}
    C^{(1)}_{1,1} & u & C^{(1)}_{1,n}\\
    l & M & r \\
    C^{(1)}_{n,1}& d & C^{(1)}_{n,n}
    \end{pmatrix}, \quad 
    T_{k}C^{(1)} = \frac{1}{n-1}\begin{pmatrix}
    0 & 0 &0\\
    l & M & r \\
    0& 0 & 0
    \end{pmatrix}
\end{equation}
where $u$ and $d$ are $1\times (n-2)$ vectors, $l$ and $r$ are $(n-2)\times 1$ vectors, and $M$ is a $(n-2)\times (n-2)$ matrix. Simple calculations imply that
\begin{equation}
    \left(T_{k}C^{(1)}\right)^p =   \left(\frac{1}{n-1}\right)^{p} \begin{pmatrix}
    0 & 0 &0\\
    M^{p-1}l & M^p & M^{p-1}r \\
    0& 0 & 0
    \end{pmatrix}.
\end{equation}
For any $2\leq i,j\leq n-1$,  we have
\begin{equation}
\begin{aligned}
    \left(C^{(1)}\right)_{i,j}&=M_{i,j}=-(n-1)(\Pi G^{-1})_{i,j} \leq 2\frac{|i-j|+1}{n-1},\\
    \left(C^{(1)}T_{k}C^{(1)}\right)_{i,j}&=  \left(\frac{M^2}{n-1}\right)_{i-1,j-1}
    \leq \frac{1}{n-1}\sum_{k=1}^{n-2}\frac{2(|i-1-k|+1)}{n-1}\frac{2(|k-j+1|+1)}{n-1} \\
    &\leq \frac{4}{3}\\
  \left(C^{(1)} \left( T_{k}C^{(1)}\right)^{2}\right)_{i,j}&=\left(\frac{M^3}{(n-1)^2}\right)_{i-1,j-1}\\
  &\leq\frac{1}{(n-1)^5}\sum_{l=1}^{n-2} \sum_{k=1}^{n-2}8(|i-1-k|+1)(|k-l|+1)(|l-j+1|+1)\\
 &\leq \frac{1}{(n-1)^5}\sum_{l=1}^{n-2} \sum_{k=1}^{n-2}8(n-1-k)(|k-l|+1)l\\
 &=\frac{2}{15}\frac{(n-2)n(2n-1)(3n-7)}{(n-1)^5}\leq \frac{4}{5}.
\end{aligned}
\end{equation}
Finally, we have
\begin{equation}
\left( \left(T_{k}C^{(1)}\right)^{3q}\right)_{i,j}\leq  \left(\frac{4}{5}\right)^{q}\frac{1}{(n-1)}.
\end{equation}
\end{proof}

\end{document}